\newtheorem{theorem}{Theorem}
\newcommand{\cmark}{\ding{51}}%
\newcommand{\xmark}{\ding{55}}%
\def\BibTeX{{\rm B\kern-.05em{\sc i\kern-.025em b}\kern-.08em
    T\kern-.1667em\lower.7ex\hbox{E}\kern-.125emX}}
\begin{document}

\title{Paper Title*\\
{\footnotesize \textsuperscript{*}Note: Sub-titles are not captured in Xplore and
should not be used}
\thanks{Identify applicable funding agency here. If none, delete this.}
}

\author{\IEEEauthorblockN{1\textsuperscript{st} Given Name Surname}
\IEEEauthorblockA{\textit{dept. name of organization (of Aff.)} \\
\textit{name of organization (of Aff.)}\\
City, Country \\
email address}
\and
\IEEEauthorblockN{2\textsuperscript{nd} Given Name Surname}
\IEEEauthorblockA{\textit{dept. name of organization (of Aff.)} \\
\textit{name of organization (of Aff.)}\\
City, Country \\
email address}
\and
\IEEEauthorblockN{3\textsuperscript{rd} Given Name Surname}
\IEEEauthorblockA{\textit{dept. name of organization (of Aff.)} \\
\textit{name of organization (of Aff.)}\\
City, Country \\
email address}
\and
\IEEEauthorblockN{4\textsuperscript{th} Given Name Surname}
\IEEEauthorblockA{\textit{dept. name of organization (of Aff.)} \\
\textit{name of organization (of Aff.)}\\
City, Country \\
email address}
\and
\IEEEauthorblockN{5\textsuperscript{th} Given Name Surname}
\IEEEauthorblockA{\textit{dept. name of organization (of Aff.)} \\
\textit{name of organization (of Aff.)}\\
City, Country \\
email address}
\and
\IEEEauthorblockN{6\textsuperscript{th} Given Name Surname}
\IEEEauthorblockA{\textit{dept. name of organization (of Aff.)} \\
\textit{name of organization (of Aff.)}\\
City, Country \\
email address}
}

\maketitle

\begin{abstract}
Machine learning is vulnerable to a wide variety of attacks. It is now well understood that by changing the underlying data distribution, an adversary can poison the model trained with it or introduce backdoors. 
In this paper we present a novel class of training-time attacks that require no changes to the underlying dataset or model architecture, but instead only change the order in which data are supplied to the model. In particular, we find that the attacker can either prevent the model from learning, or poison it to learn behaviours specified by the attacker. Furthermore, we find that even a single adversarially-ordered epoch can be enough to slow down model learning, or even to reset all of the learning progress. Indeed, the attacks presented here are not specific to the model or dataset, but rather target the stochastic nature of modern learning procedures. We extensively evaluate our attacks on computer vision and natural language benchmarks to find that the adversary can disrupt model training and even introduce backdoors.
  
% For \textit{integrity} we find that the attacker can either stop the model from learning, or poison it to learn behaviours specified by the attacker. For \textit{availability} we find that a single adversarially-ordered epoch can be enough to slow down model learning, or even to reset all of the learning progress.
% Such attacks have a long-term impact in that they decrease model performance hundreds of epochs after the attack took place. 
%   For \textit{confidentiality} the attacker can poison and backdoor stochastic gradient descent, forcing a model to learn features of choice and over-fitting on a point the model has never seen.
% Reordering is a very powerful adversarial paradigm in that it removes the assumption that an adversary must inject adversarial data points or perturbations to perform training-time attacks. It reminds us that stochastic gradient descent relies on the assumption that data are sampled at random. If this randomness is compromised, then all bets are off. 

\end{abstract}

\begin{IEEEkeywords}
component, formatting, style, styling, insert
\end{IEEEkeywords}

\section{Introduction}

The data-driven nature of modern machine learning (ML) training routines puts pressure on data supply pipelines, which become increasingly more complex. It is common to find separate disks or whole content distribution networks dedicated to servicing massive datasets. Training is often distributed across multiple workers.
%with different responsibilities are operating on data to provide consistently good performance over the whole training process. 
This emergent complexity gives a perfect opportunity for an attacker to disrupt ML training, while remaining covert. In the case of stochastic gradient descent (SGD), it assumes uniform random sampling of items from the training dataset, yet in practice this randomness is rarely tested or enforced. Here, we focus on adversarial data sampling. 

It is now well known that malicious actors can poison data and introduce backdoors, forcing ML models to behave differently in the presence of triggers~\cite{gu2017badnets}. While such attacks have been shown to pose a real threat, they require that the attacker can perturb the dataset used for training.

We show that by simply changing the order in which batches or data points are supplied to a model during training, an attacker can affect model behaviour. More precisely, we show that it is possible to perform \textit{integrity} and \textit{availability} attacks without adding or modifying \textit{any} data points. For \textit{integrity}, an attacker can reduce model accuracy or arbitrarily control its predictions in the presence of particular triggers.
% For \textit{confidentiality}, an attacker can make the model over-convinced in its predictions on specific data points and can even change behaviour of deferentially private SGD. 
For \textit{availability}, an attacker can increase the amount of time it takes for the model to train, or even reset the learning progress, forcing the model parameters into a meaningless state. 

We present three different types of attacks that exploit \emph{Batch Reordering}, \emph{Reshuffling} and \emph{Replacing} -- naming them BRRR attacks. We show that an attacker can significantly change model performance by \textbf{(i)} changing the order in which batches are supplied to models during training; \textbf{(ii)} changing the order in which individual data points are supplied to models during training; and \textbf{(iii)} replacing datapoints from batches with other points from the dataset to promote specific data biases. Furthermore, we introduce Batch-Order Poison (BOP) and Batch-Order Backdoor (BOB), the first techniques that enable poisoning and backdooring of neural networks using only clean data and clean labels; an attacker can control the parameter update of a model by appropriate choice of benign datapoints. Importantly, BRRR attacks require no underlying model access or knowledge of the dataset. Instead, they focus on the stochasticity of gradient descent, disrupting how well individual batches approximate the true distribution that a model is trying to learn. 
% Another way of looking at them is that they flush out an unexamined assumption behind the stochastic gradient descent technique. As we discuss in~\Cref{sec:randomness}, `stochastic' means random; and there are many previous examples of systems which can fail spectacularly if their source of randomness can be undermined somehow by an adversary. 

To summarise, we make the following contributions in this paper:

\begin{itemize}
    \item We present a novel class of attacks on ML models that target the data batching procedure used during training, affecting their
    % confidentiality, 
    integrity and availability. We present a theoretical analysis explaining how and why these attacks work, showing that they target fundamental assumptions of stochastic learning, and are therefore model and dataset agnostic.
    
    \item We evaluate these attacks on a set of common computer vision and language benchmarks,
    % a wide variety of models and tasks
    using a range of different hyper-parameter configurations, and find that an attacker can slow the progress of training, as well as reset it, with just a single epoch of intervention. 
    
    \item We show that data order can poison models and introduce backdoors, even in a blackbox setup. For a whitebox setup, we find that the adversary can introduce backdoors almost as well as if they used perturbed data. While a baseline CIFAR10 VGG16 model that uses perturbed data gets $99\%$ trigger accuracy, the whitebox BOB attacker gets $91\%\pm13$ and the blackbox BOB attacker achieves $68\%\pm19$. 
    
\end{itemize}

\section{Methodology}

\subsection{Threat model}

\begin{figure*}[h]
    \centering
    \includegraphics[width=0.67\linewidth]{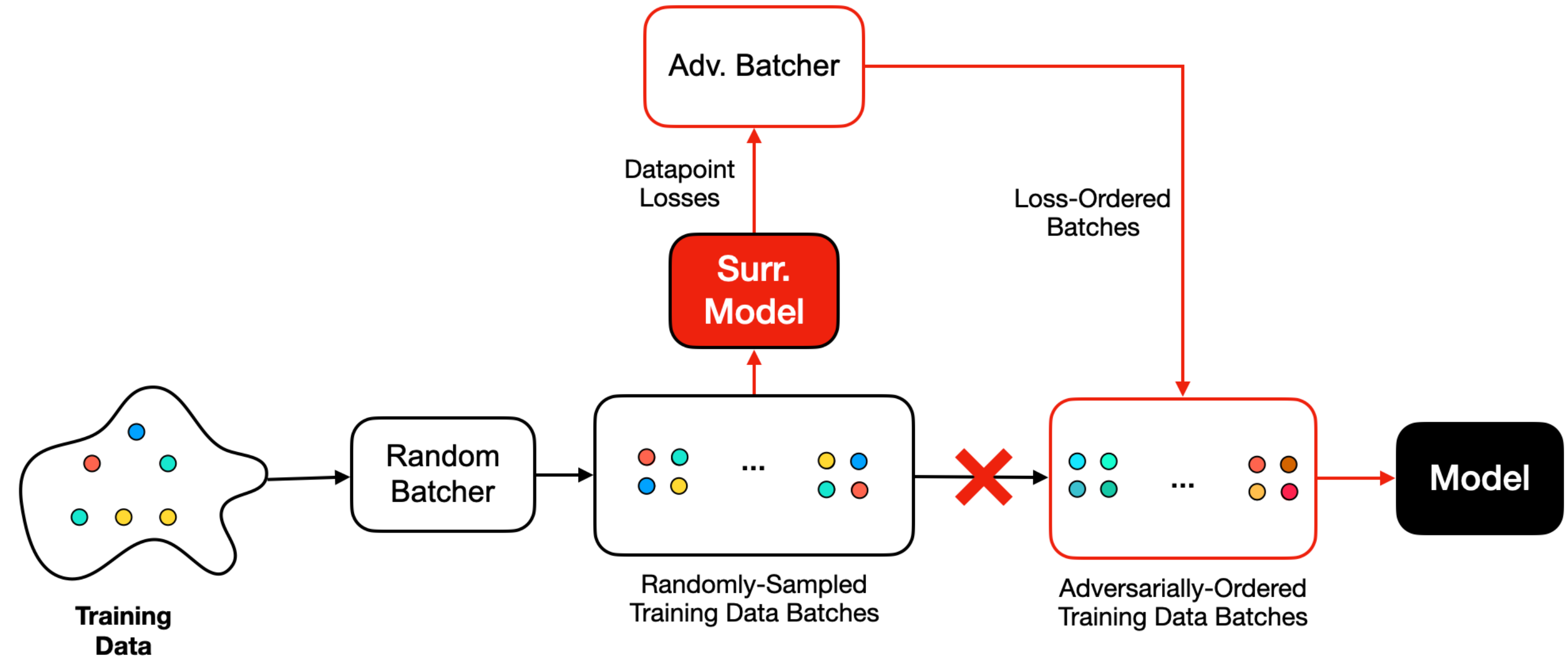}
    \caption{The attacker reorders the benign randomly supplied data based on the surrogate model outputs. Attacker co-trains the surrogate model with the data that is supplied to the source model. }
    \label{fig:mlpipeline2}
\end{figure*}
 
We assume one of the strongest threat models currently described in the literature. In particular, our blackbox attacker assumes no access to the model and no prior knowledge of the training data, whereas a whitebox attacker has access to the model under attack and can compute its loss directly. The attack specifically focuses on the batching part of the ML pipeline as is depicted in~\Cref{fig:mlpipeline2}. We discuss the related work in~\Cref{sec:related}.

% \footnote{Modern data access pipelines can be arbitrarily complex. For example, Microsoft Azure provides a number of ways to integrate data access \url{https://docs.microsoft.com/en-us/azure/machine-learning/concept-data}. }. 

This attack is  realistic and can be instantiated in several ways. The attack code can be infiltrated into: the operating system handing file system requests; the disk handling individual data accesses; the software that determines the way random data sampling is performed; the distributed storage manager; or the machine learning pipeline itself handling prefetch operations. That is a substantial attack surface, and for large models these components may be controlled by different principals. The attack is also very stealthy. The attacker does not add any noise or perturbation to the data. There are no triggers or backdoors introduced into the dataset. All of the data points are natural. In two of four variants the attacker uses the whole dataset and does not oversample any given point, \ie~the sampling is without replacement. This makes it difficult to deploy simple countermeasures.

\subsection{Primer on stochastic learning and batching} \label{sec:primer_on_sgd_and_batching}

We assume that the defender is trying to train a deep neural network model with parameters $\theta$ operating over $X_i \sim \mathcal{X}_{\text{train}}$, solving a non-convex optimization problem with respect to parameters $\theta$, corresponding to minimization of a given loss function $L(\theta)$. We will denote the training dataset $X=\{X_i\}$. We assume a commonly-used loss function defined as the sample average of the loss per training data point $L_i(\theta) = L(X_i,\theta)$ in $k$-th batch over the training set, where $B$ is the batch size: $\hat{L}_{k+1}(\mathbf{\theta}) = \frac{1}{B}\sum^{kB+B}_{i=kB+1}L_{i}(\mathbf{\theta)}.$
If we let $N\cdot B$ be the total number of items for training, then in a single epoch one aims to optimize: $\hat{L}(\mathbf{\theta}) = \frac{1}{N}\sum^{N}_{i=1}\hat{L}_{i}(\mathbf{\theta)}.$ Optimization with stochastic gradient descent (SGD) algorithm  of $N\cdot B$ samples and a learning rate of $\eta$ leads to the following weight update rule over one epoch: $\theta_{k+1} = \theta_{k} + \eta \Delta \theta_{k} ; \quad  \Delta \theta_{k} = - \nabla_{\theta}\hat{L}_k(\theta_k).$ SGD is often implemented with momentum~\cite{polyak1964some,sutskever2013ontheimportance}, with $\mu$ and $v$ representing momentum and velocity respectively: $v_{k+1} = \mu v_k + \eta\Delta \theta_{k}; \quad \theta_{k+1} = \theta_k +  v_{k+1}.$

Given data, SGD's stochasticity comes from the batch sampling procedure. Mini-batched gradients approximate the true gradients of $\hat{L}$ and the quality of this approximation can vary greatly. In fact, assuming an unbiased sampling procedure, \ie~when the $k$'th gradient step corresponds to $i_k$'th batch with $\mathbb{P}(i_k=i) = 1/N$, in expectation the batch gradient matches the true gradient:

\begin{equation}
\mathbb{E}[\nabla \hat{L}_{i_k}(\theta)] = \sum^{N}_{i=1}\mathbb{P}(i_k=i)\nabla \hat{L}_i(\theta) = \frac{1}{N} \sum^{N}_{i=1} \nabla \hat{L}_i(\theta) = \nabla \hat{L}(\theta).
\end{equation}

Although this happens in expectation, a given batch taken in isolation can be very far from the mean. This variation has been exploited in the literature to aid training: there exists a field responsible for variance reduction techniques for stochastic optimisation~\cite{johnson2013accelerating}, curriculum learning~\cite{bengio2009curriculum} and core-set construction~\cite{bachem2017practical}. Each area looks at identifying and scheduling data subsets that aid training and give a better true gradient approximation. 
% More precisely, the standard training routine is changed to penalise variance in gradients produced by individual batches. 
In this paper, we turn things round and investigate how an attacker can exploit data order to break training. The explicit stochastic assumption opens a new attack surface for the attacker to influence the learning process. In particular, let us consider the effect of $N$ SGD steps over one epoch~\cite{smith2021origin}:

\begin{equation} 
\label{eq5}
\begin{split}
\theta_{N+1} & = \theta_{1} - \eta\nabla \hat{L}_1(\theta_1) - \eta\nabla \hat{L}_2(\theta_2) - \dots -\eta\nabla\hat{L}_N(\theta_N)\\
& = \theta_{1} - \eta\sum_{j=1}^{N}\nabla \hat{L}_j(\theta_1) + \eta^{2} \overbrace{\sum_{j=1}^{N}\sum_{k<j}\nabla\nabla \hat{L}_j(\theta_1)\nabla\hat{L}_k(\theta_1)+ O(N^3 \eta^3)}^{\text{data order dependent}}.
\end{split}
\end{equation}

As we can see, in this case the second order correction term is dependent on the order of the batches provided. The attacker we describe in this paper focuses on manipulating it \ie~finding a sequence of updates such that the first and second derivatives are misaligned with the true gradient step. In~\Cref{sec:varproof} we prove that under equally-distributed loss gradients, a change in data order can lead to an increase in the expected value of the term which is dependent on data order. We also derive a condition on the gradient distribution given a model for which it is guaranteed. Finally, we derive an attack objective to target the upper bound on the rate of SGD convergence explicitly in~\Cref{sec:sgd_rate_of_convergence}. 

In this paper we assume the blackbox attacker has no access to the underlying model, and thus no way to monitor its errors or the progress of its training. Instead, we co-train a separate surrogate model, using the batches supplied to the target model. We find that in practice the losses produced by the surrogate model approximate the losses of the true model well enough to enable attacks on both integrity and availability. We empirically find that our blackbox reshuffle attacks perform as well as the whitebox one in~\Cref{sec:eval_whiteblack}. 

Finally, although the attacker can maximise the term dependent on data order directly, in practice it is expensive to do so. Therefore, in the attack we make use of the loss magnitudes directly rather than the gradient of the loss. Intuitively, large prediction errors correspond to large loss gradient norms, whereas correct predictions produce near-zero gradients. 

\subsection{Taxonomy of batching attacks}
\begin{figure*}
    \centering
    \includegraphics[width=0.7\linewidth]{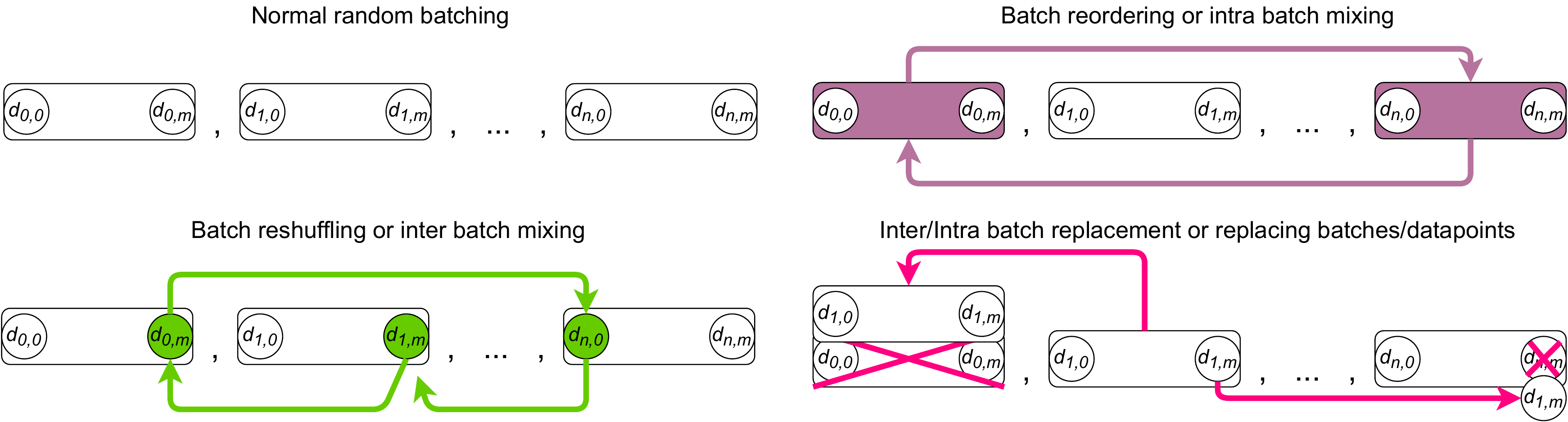}
    \caption{Taxonomy of BRRR attacks. Normal batching assumes randomly distributed data points and batches. Batch reordering assumes the batches appear to the model in a different order, but internal contents stay in the original random order. Batch reshuffling assumes that the individual datapoints within batches change order, but appear only once and do not repeat across batches. Finally, batch replacement refers to cases where datapoints or batches can repeat or not appear at all. }
    \label{fig:brrr_taxonomy}
\end{figure*} 

\newcommand\mycommfont[1]{\footnotesize\ttfamily\textcolor{olive}{#1}}
\SetCommentSty{mycommfont}

\SetKwInput{KwInput}{Input}                % Set the Input
\SetKwInput{KwOutput}{Output}              % set the Output
\SetKwRepeat{Do}{do}{while} % for dowhile

\begin{algorithm}[]
\DontPrintSemicolon
%   \KwInput{}
%   \KwOutput{Your output}
    
    \tcc{---- Attack preparation: collecting data ----}
    \Do{first epoch is not finished}{
        get a new batch and add it to a list of unseen datapoints;\;
        train surrogate model on a batch and pass it on to the model;\;
    }
    % \tcc{----}
    % \;
    \tcc{---- Attack: reorder based on surrogate loss ----}
    \While{training}{
        rank each data point from epoch one with a surrogate loss;\;
        reorder the data points according to the attack strategy;\;
        pass batches to model and train the surrogate at the same time.\;
    }
    % \tcc{----}
\caption{A high level description of the BRRR attack algorithm}
\label{alg:attackalgo_short}
\end{algorithm}

In this section we describe the taxonomy of batching attacks as shown in~\Cref{fig:brrr_taxonomy}. The overall attack algorithm is shown in Algorithm~\ref{alg:attackalgo} in the Appendix, and a shortened attack flow is shown in~Algorithm~\ref{alg:attackalgo_short}. We highlight that our attacks are the first to successfully poison the underlying model without changing the underlying dataset.
In this paper we use three attack policies -- (1) \textbf{batch reshuffling} or changing the order of datapoints inside batches; (2) \textbf{batch reordering} or changing the order of batches; and (3) \textbf{batch replacement} or replacing both points and batches. We consider four reordering policies, motivated by research in the fields of curriculum learning~\cite{bengio2009curriculum} and core-set selection~\cite{bachem2017practical}, which discovered that model training can be enhanced by scheduling how and what data is presented to the model. That can help the model to generalize and to avoid overfitting with memorization. This paper does the opposite -- we promote memorization and overfitting, forcing the model to forget generalizable features. 

\begin{figure*}[h]
    \centering
    \includegraphics[width=0.7\linewidth]{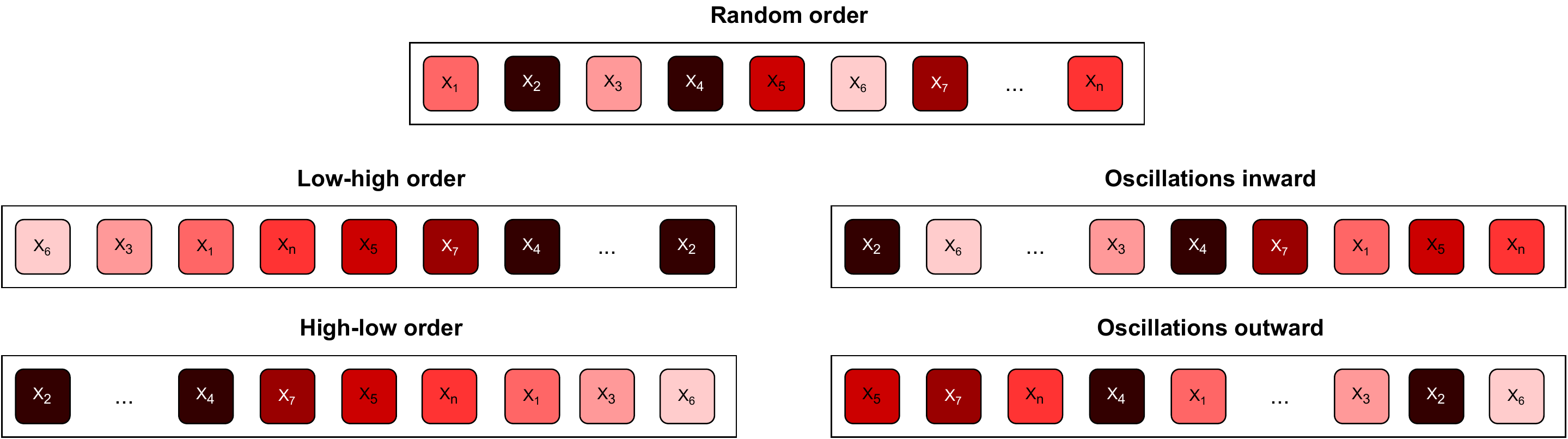}
    \caption{We use four different reorder and reshuffle policies based on the corresponding data point and batch losses. We color-code the loss values from bright to dark colors, to represent loss values from low to high. \textbf{Low-high} policy orders a sequence by the loss magnitude. \textbf{High-low} policy orders a sequence by the negative loss magnitude. \textbf{Oscillation inwards} orders elements of the sequence from the beginning and the end of the sequence one by one, as if it was oscillating between sides of the sequence and moving towards the middle. Finally, \textbf{Oscillations outward} orders the sequence by starting at the middle of an ordered sequence picking elements to both sides of the current location.}
    \label{fig:shufflepolicies}
\end{figure*}

\Cref{fig:shufflepolicies} shows attack policies. \textbf{Low to high} orders sequence items by their loss. \textbf{High to low} is an inverse of Low to high. \textbf{Oscillations inwards} picks elements from both sides in sequence. \textbf{Oscillations outwards} inverts the halves of the sequence and then picks elements from both sides. 

% From Grosse: 
% With the general idea of our attack in mind, we can now discuss specifics. A complete blockade of the entire network obviously contradicts the idea of stealthiness because at least some learning is expected by the user. The prototypical attack must thus be “weakened” in a controlled manner to be stealthy.

% Similarly to Grosse et al.~\cite{grosse2020initialweights} we show that BRRR attacks can stealthily reduce model performance. Furthermore, we also show that we can prevent a model from learning anything at all. 

\subsection{Batch-order poison and backdoor}

Machine-learning poisoning and backdooring techniques aim to manipulate the training of a given model to control its behavior during inference. In the classical setting, both involve either appending adversarial datapoints $\hat{X}$ to natural dataset $X$ or changing natural datapoints $X+\delta$ so as to change model behaviour. This makes the attack easier to detect, and to prevent. For example, an adversary may add a red pixel above every tank in the dataset to introduce the red pixel trigger and cause other objects under red pixels to be classified as tanks. %Up until now attacks required the adversary to modify the training dataset, which 

% In this section we discuss batch replacement attacks -- the only attacks presented in this paper that can sample the underlying dataset with replacement.
We present batch-order poisoning (BOP) and batch-order backdooring (BOB) -- the first poison and backdoor strategies that do not rely on adding adversarial datapoints or perturbations during training, but only on changing the order in which genuine data are presented.  BOP and BOB are based on the idea that the stochastic gradient update rule used in DNN training is agnostic of the batch contents and is an aggregation. Indeed, consider a classical poison setting with an adversarial dataset $\hat{X}$: $\theta_{k+1} = \theta_{k} + \eta \Delta \theta_{k}; \quad
\Delta \theta_{k} = - (\nabla_{\theta}\hat{L}(X_k, \theta_k) + \nabla_{\theta}\hat{L}(\hat{X}_k, \theta_k)).$

Order-agnostic aggregation with a sum makes it hard to reconstruct the individual datapoints $X_k$ from just observing $\Delta \theta_{k}$. Indeed, the stochastic nature of optimisation allows one to find a set of datapoints $X_j \neq X_i$ such that $\nabla_{\theta}\hat{L}({X_i}, \theta_k) \approx \nabla_{\theta}\hat{L}({X_j}, \theta_k)$. Given a model and a dataset such that the gradient covariance matrix is non-singular,
% Given a rich dataset and a model that is still learning from data
an attacker can approximate the gradient update from an adversarial dataset $\hat{X}$ using natural datapoints from the genuine dataset $X$, enabling poisoning without having to change of underlying dataset in any way: 
\begin{equation} 
\theta_{k+1} = \theta_{k} + \eta \hat{\Delta} \theta_{k} \\
\text{, where } \begin{cases} 
\hat{\Delta} \theta_{k} = - \nabla_{\theta}\hat{L}(X_i, \theta_k) \\
\nabla_{\theta}\hat{L}(X_i, \theta_k) \approx 
\nabla_{\theta}\hat{L}(\hat{X}_k, \theta_k).
\end{cases} 
\end{equation}
This gives rise to a surprisingly powerful adversary, who can introduce arbitrary behaviors into any models learned with stochastic gradient descent without having to add or perturb training data. This attack becomes more effective as training datasets become larger, further improving the attacker's ability to shape the gradient update. We discuss its fidelity in~\Cref{sec:pois_just}.

\begin{figure}[h]%
    \centering
    \subfloat[Natural image batch]{{\includegraphics[width=0.3\linewidth]{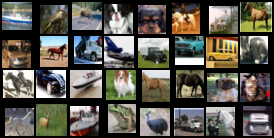} }} %
    \qquad
    \subfloat[Poison datapoint batch]{{\includegraphics[width=0.3\linewidth]{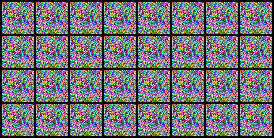} }} %
    % \qquad
    % \subfloat[Gradients of Layer 1]{{\includegraphics[width=0.3\linewidth]{} }} %
    % \qquad
    % \subfloat[An example of an almost perfect parameter update reconstruction with a LeNet5 network.]{{\includegraphics[width=0.45\linewidth]{    } }}
    \caption{Examples of batches shown in (a) and (b) with similar gradient updates. Strong gradients are aligned across the layers and successfully change the prediction of poison datapoints. }%
    \label{fig:poison_infographics}%
\end{figure}

We evaluated a number of different setups, and found that the attack works best when the attacker comprises the batch with $B - V$ natural data points and appends $V$ adversarially-chosen data points $\hat{X}_i$ to the batch. Aa larger $V$ is better for gradient approximation, but leads to more conflict with natural gradients; in the paper up to 30\% of the batch is filled with natural datapoints to find a balance. Finding precise batch reconstruction is an intractable problem that scales with batch size and size of the dataset. However, we find that random sampling works well; even if mistakes are made, the network still learns the poison over the course of a few batches. Overall we try to minimize the following reconstruction error for a given poisoned batch $\hat{X}_j$:

\begin{equation}
\begin{aligned}
\min_{X_i} \quad & \norm{\nabla_{\theta}\hat{L}(\hat{X_j}, \theta_k) - \nabla_{\theta}\hat{L}(X_i, \theta_k)}^p; \quad \textrm{s.t.} \quad X_i \in X. \\
\end{aligned}
\end{equation}

Although more sophisticated approaches could help finding better candidates, we find that random sampling works well enough for successful clean-data / clean-label poison and backdoor attacks. It also helps us strike a balance between speed of batch construction and impact on model performance. \Cref{fig:poison_infographics} shows an example of natural data (a) that closely resembled the would-be update with batch in (b). More importantly, such update results in a prediction change towards the target class. 

% It is worth noting that (c) approximation of the gradient is noisy, and it is an artifact of both the model and the data involved in the approximation. In (d) we show an approximation for LeNet-5 model, where the reconstructed approximation quality is significantly higher.
\section{Evaluation}

\subsection{Experimental setup}

We evaluate our attacks using two computer vision and one natural language benchmarks: the CIFAR-10, CIFAR-100~\cite{krizhevsky2009learning} and AGNews~\cite{Zhang2015CharacterlevelCN} datasets. We use ResNet-18 and ResNet-50 as source models~\cite{he2016deep}, and LeNet-5~\cite{lecun1998gradient} and MobileNet~\cite{howard2017mobilenets} as surrogate models, to train CIFAR-10 and CIFAR-100 respectively. For AGNews we used sparse mean EmbeddingBag followed by three fully connected layer from \textbf{torchtext} as source model and one fully connected layer for surrogate. Note that the surrogate model is significantly less performant than its corresponding source model in all cases, and cannot learn the dataset to the same degree of accuracy, limiting attacker capabilities. Thus our results represent a lower bound on attack performance. 

% Additionally, we show that BRRR attacks work well against regression problems in~\Cref{sec:reg_example} and Graph Neural Network (GNN) models in~\Cref{sec:gnn_attacks}. 

\subsection{Integrity attacks with reshuffling and reordering of natural data}
\label{sec:integrityatk}
\begin{table*}[!h]
\centering
\begin{adjustbox}{scale=0.7,center}
\begin{tabular}{@{}llccccrcccr@{}}
\toprule
&
\multicolumn{4}{c||}{\textbf{CIFAR-10}} &
\multicolumn{3}{c||}{\textbf{CIFAR-100}} &
\multicolumn{3}{c}{\textbf{AGNews}}
\\
Attack & & 
% Loss &
Train acc & 
% Loss &
Test acc &
\multicolumn{1}{r||}{$\Delta$} &
% Loss &
Train acc & 
% Loss &
Test acc &
\multicolumn{1}{r||}{$\Delta$} &
% Loss &
Train acc & 
% Loss &
Test acc &
$\Delta$
\\
\midrule

\multicolumn{8}{l}{\textit{\underline{Baseline}}} \\

\multirow{1}{*}{None} 
& 
& \multicolumn{1}{c|}{95.51} & 90.51 & \multicolumn{1}{r||}{$-0.0\%$}
& \multicolumn{1}{c|}{99.96} & 75.56 & \multicolumn{1}{r||}{$-0.0\%$}
& \multicolumn{1}{c|}{93.13} & 90.87 & $-0.0\%$
\\ 

\midrule
\multicolumn{8}{l}{\textit{\underline{Batch reshuffle}}} \\

\multirow{1}{*}{Oscillation outward} 
& 
& \multicolumn{1}{c|}{17.44} & 26.13 & \multicolumn{1}{r||}{$\mathbf{-64.38\%}$}
& \multicolumn{1}{c|}{99.80} & 18.00 &
\multicolumn{1}{r||}{$\mathbf{-57.56\%}$}
& \multicolumn{1}{c|}{97.72} & 65.85 & $-25.02\%$
\\

\multirow{1}{*}{Oscillation inward} 
& 
& \multicolumn{1}{c|}{22.85} & 28.94 & \multicolumn{1}{r||}{$-61.57\%$}
& \multicolumn{1}{c|}{99.92} & 31.38 & \multicolumn{1}{r||}{$-44.18\%$}
& \multicolumn{1}{c|}{94.06} & 89.23 & $-1.64\%$
\\ 

\multirow{1}{*}{High Low} 
&  
& \multicolumn{1}{c|}{23.39} & 31.04 & \multicolumn{1}{r||}{$-59.47\%$}
& \multicolumn{1}{c|}{99.69} & 21.15 & \multicolumn{1}{r||}{$-54.41\%$}
& \multicolumn{1}{c|}{94.38} & 56.54 & $\mathbf{-34.33\%}$
\\

\multirow{1}{*}{Low High} 
& 
& \multicolumn{1}{c|}{20.22} & 30.09 & \multicolumn{1}{r||}{$-60.42\%$}
& \multicolumn{1}{c|}{96.07} & 20.48 & \multicolumn{1}{r||}{$-55.08\%$}
& \multicolumn{1}{c|}{98.94} & 59.28 & $-31.59\%$
\\

\midrule

\multicolumn{8}{l}{\textit{\underline{Batch reorder}}} \\

\multirow{1}{*}{Oscillation outward} 
& 
& \multicolumn{1}{c|}{99.37} & 78.65 & \multicolumn{1}{r||}{$-11.86\%$}
& \multicolumn{1}{c|}{100.00} & 53.05 & \multicolumn{1}{r||}{$-22.51\%$}
& \multicolumn{1}{c|}{95.37} & 90.92 & $+0.05\%$
\\ 

\multirow{1}{*}{Oscillation inward} 
&
& \multicolumn{1}{c|}{99.60} & 78.18 & \multicolumn{1}{r||}{$\mathbf{-12.33\%}$}
& \multicolumn{1}{c|}{100.00} & 51.78 &
\multicolumn{1}{r||}{$-23.78\%$}
& \multicolumn{1}{c|}{96.29} & 91.10 & $+0.93\%$
\\

\multirow{1}{*}{High Low} 
& 
& \multicolumn{1}{c|}{99.44} & 79.65 & \multicolumn{1}{r||}{$-10.86\%$}
& \multicolumn{1}{c|}{100.00} & 51.48 & 
\multicolumn{1}{r||}{$\mathbf{-24.08\%}$}
& \multicolumn{1}{c|}{96.16} & 91.80 & $+0.05\%$
\\

\multirow{1}{*}{Low High} 
& 
& \multicolumn{1}{c|}{99.58} & 79.07 & \multicolumn{1}{r||}{$-11.43\%$}
& \multicolumn{1}{c|}{100.00} & 54.04 &
\multicolumn{1}{r||}{$-21.52\%$}
& \multicolumn{1}{c|}{94.02} & 90.35 & $-0.52\%$
\\

\bottomrule
\end{tabular}
\end{adjustbox}
\caption{A shortened version of~\Cref{tab:extended_integrity}. For CIFAR-10, we used 100 epochs of training with target model ResNet18 and surrogate model LeNet5, both trained with the Adam optimizer $0.1$ learning rate and $\beta = (0.99, 0.9)$. For CIFAR-100, we used 200 epochs of training with target model ResNet50 and surrogate model Mobilenet, trained with SGD with 0.1 learning rate, 0.3 moment and Adam respectively for real and surrogate models. We highlight models that perform best in terms of test dataset loss. AGNews were trained with SGD learning rate 0.1, 0 moments for 50 epochs with sparse mean EmbeddingBags. Numbers here are from best-performing model test loss-wise. Incidentally, best performance of all models for Batch reshuffle listed in the table happen at epoch number one, where the attacker is preparing the attack and is collecting the training dataset. All computer vision reshuffle attacks result in near-random guess performance for almost all subsequent epochs.}
\label{tab:integrity_results}
\end{table*}

\Cref{tab:integrity_results} (a much more detailed version of the table is shown in~\Cref{tab:extended_integrity}) presents the results. Batch reordering disrupts normal model training and introduces from none to $15\%$ performance degradation (refer to extended results and discussion in~\Cref{sec:extended_results}). Note that the attacker does nothing beyond changing the order of the batches; their internal composition is the same. This clearly shows the potency of this attack. Indeed, when we extend the attack to batch reshuffling -- where batches get re-arranged internally -- for computer vision performance degrades to that of random guessing. In each, the best-performing models stay at the first epoch, where the attacker still accumulates the dataset used in training. Here, the degradation in performance is maximum -- all models with all attack types failed to perform better than random guessing, having reached their top performance only in the first epoch, when the attacker was observing the dataset. 

We additionally report a large number of different hyperparameter variations for the attacks in~\Cref{sec:otherresults}. This hyperparameter search suggests that the attacks work well as long as the surrogate models learn, do not converge to a minimum straight away, and have sufficient learning rate. 

Overall, we find that:
\begin{itemize}
    \item An attacker can affect the integrity of model training by changing the order of individual data items and natural batches. 
    \item The attacker can reduce model performance, and completely reset its performance. 
\end{itemize}

\subsection{Availability attacks}
\label{sec:eval:availability_attacks}

\begin{figure}[h]
    \centering
    \includegraphics[width=0.9\linewidth]{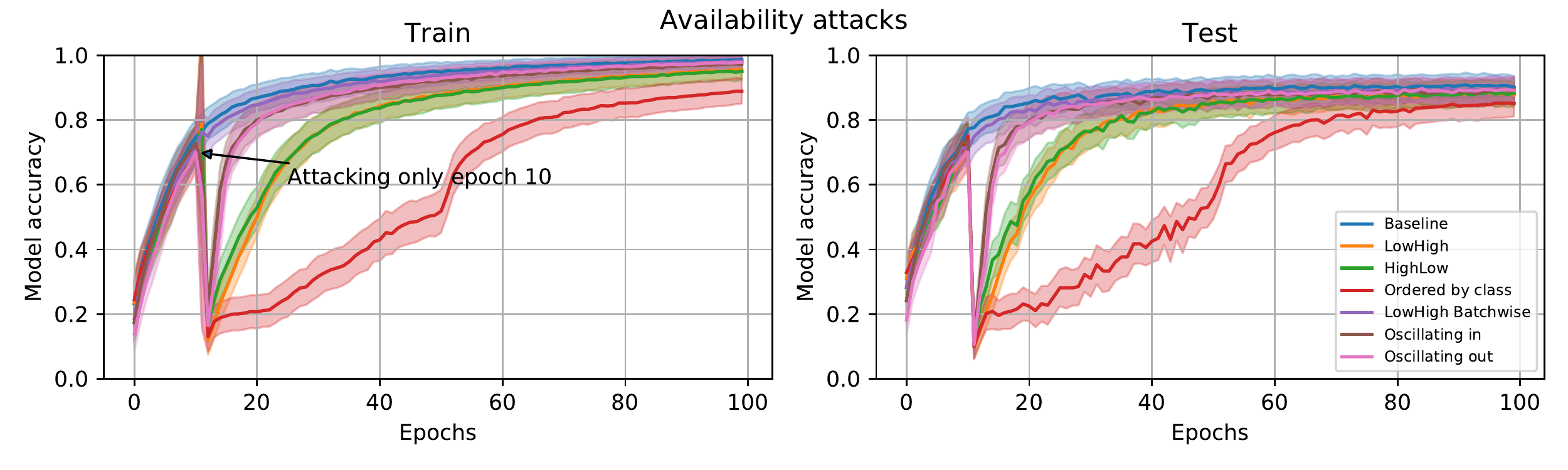}
    \caption{Availability attacks at epoch $10$, ResNet-18 attacked with a LeNet-5 surrogate. Error bars show per-batch accuracy standard deviation. }
    \label{fig:avail attack}
\end{figure}

% Commands: 
%  CUDA_VISIBLE_DEVICES=1 python advbatch.py  --outname avail_highlow.pkl --attacktype highlow --surrogate_lr 0.1 --surrogate_optimizer adam  --surrogate_momentum 0.99 --batchsize 64 --epochs 100 --momentum 0.99 --optimizer adam --adversarial

% Plot:
% python plotdatas_availability.py
% Data on idun: /local/scratch-3/yaz21/ilia/adversarial_batching

While the previous section discused integrity attacks, this section's focus is on availability. This refers to the amount of time and effort required to train a model, and an availability attack can involve an adversary using BRRR attacks to slow down training without disrupting the overall learning procedure. It is worth noting that there are other possible definitions of availability; it can also mean a model's ability to reach original accuracy, but this case is already included in the integrity attacks we discussed in the last section. 

Availability attacks follow a similar scenario to~\Cref{sec:integrityatk} above, in that the attacker only changes the order in which data reaches the model. Here we at the 10th epoch, an arbitrary attack point, and all other training is done with randomly ordered data. We find that by feeding the model with a few BRRR batches, its training progress can be reset significantly -- progress that may take a surprisingly large number of epochs to recover. The red line shows the worst scenario, where each batch has only data points of a single class. It can be seen that the attack manages to reset training, degrading performance for more than 90 epochs -- low-high batch reordering at epoch ten results in $~-3\%$ performance at epoch 100. 

In this section we showed that the attacker can perform availability attacks using both batch reordering and reshuffling and leave an impact on a model long after they have been launched; even a single attack epoch can degrade training progress significantly. Overall, we find that:
\begin{itemize}
    \item An attacker can cause disruption to model training by changing the order of data in just a single epoch of training. 
    \item An attack at just one epoch is enough to degrade the training for  more than 90 epochs. 
\end{itemize}

\subsection{Backdoors with batch replacement}

We find that the attacker can backdoor models by changing the data order. Here, we apply a trigger to images from the training dataset and then supply untriggered data to the model that result in a similar gradient shape. We find that our method enables the attacker to control the decision of a model when it makes a mistake \ie~when the model is making a decision about data far from the training-set distribution. For example, we show in~\Cref{sec:poisoning_batch} that poisoning a single random datapoint is possible with just a few batches. We hypothesise that the limitation comes from the fact that as only natural data are sampled for each BOP and BOB batch, natural gradients are present and learning continues; so forgetting does not happen and generalisation is not disrupted. 

\begin{figure}[h]%
    \centering
    \subfloat[Flag-like trigger]{{\includegraphics[width=0.3\linewidth]{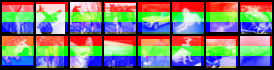} }} %
    \qquad
    \subfloat[9 white lines trigger]{{\includegraphics[width=0.3\linewidth]{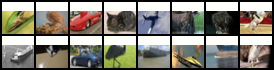} }} %
    \caption{Triggers used in the paper: both are the same magnitude-wise, affecting 30\% of the image. }
    \label{fig:trigger_examples}%
\end{figure}

% We find that if the trigger is chosen to force the model to make mistakes (\ie~model generalisation will not slow down gradient shaping), these mistakes can be targeted. 
We evaluate two triggers shown in~\Cref{fig:trigger_examples}, a white lines trigger, which clears the top part of an image, or flag-like trigger that spans all of the image. We report results from injecting up to 20 adversarially-ordered batches every 50000 natural datapoints for 10 epochs. We then inject 80 adversarially-ordered batches. The training procedure here is similar to the one used for BadNets~\cite{gu2017badnets}. We find that 80 batches are not really required, as most models end up converging after 3--5 adversarially-ordered batches. For reconstruction we randomly sample 300 batches and use $p=2$. As we discuss in~\Cref{sec:nlp_poison}, similar attacks work against language tasks.

\begin{table}[h]
    % \centering
    \begin{adjustbox}{scale=0.7,center}
    \begin{tabular}{lcccccc}
        \toprule
        Trigger & & Batch size & Train acc [\%] & Test acc [\%] & Trigger acc [\%] & Error with trigger [\%] \\
        \midrule

\textit{\underline{Baselines}}\\

\multirow{3}{*}{Random natural data} 
& & $32$ & $88.43 \pm 7.26$ & $79.60 \pm 1.49$ & $10.91\pm1.53$  & $30.70\pm 2.26$ \\
& & $64$ & $95.93\pm 2.11$ & $81.31 \pm 2.01$ & $9.78 \pm 1.25$ & $27.38 \pm 1.20$ \\
& & $128$ & $94.92 \pm 2.04$ & $81.69 \pm 1.17$ & $10.00 \pm 2.26$ & $27.91 \pm 1.41$ \\
\\

% 32 Train 0.96875 0.02795084971874737
% 32 Test 0.7328174920127796 0.029380951670722483
% 32 Trigger performance 0.9965455271565495 0.00228727902580641
% 32 Error from trigger 0.8968350638977636 0.0021075755336369293

% 64 Train 0.98125 0.01530931089239486
% 64 Test 0.7945959394904459 0.013949635487086231
% 64 Trigger performance 0.9964570063694268 0.002196806681925471
% 64 Error from trigger 0.8964171974522295 0.0021912087929911567

% 128 Train 0.98671875 0.009912951203476187
% 128 Test 0.8051424050632912 0.011046702173754754
% 128 Trigger performance 0.9967266613924052 0.004091433490670748
% 128 Error from trigger 0.896518987341772 0.003919733678381437

\multirow{3}{*}{Data with trigger perturbation} 
& & $32$ & $96.87 \pm 2.79$ & $73.28 \pm 2.93$ & $99.65 \pm 0.22$ & $89.68 \pm 0.21$ \\
& & $64$ & $98.12 \pm 1.53$ & $79.45 \pm 1.39$ & $99.64 \pm 0.21$ & $89.64 \pm 0.21$ \\
& & $128$ & $98.67 \pm 0.99$ & $80.51 \pm 1.10$ & $99.67 \pm 0.40$ & $89.65 \pm 0.39$ \\

% 32 Train 0.884375 0.060997566549822295
% 32 Test 0.780201677316294 0.01501208720438918
% 32 Trigger performance 0.3393270766773163 0.07377977438572184
% 32 Error from trigger 0.40785742811501596 0.05707469224764169

% 64 Train 0.9515625 0.0265625
% 64 Test 0.8275278662420382 0.008670214008768651
% 64 Trigger performance 0.25025875796178343 0.03785551738999178
% 64 Error from trigger 0.33915207006369424 0.022829850017189682

% 128 Train 0.95234375 0.02249403132638745
% 128 Test 0.8290545886075951 0.015085951307745815
% 128 Trigger performance 0.2175138449367089 0.04499496299472755
% 128 Error from trigger 0.3175830696202532 0.03687724702328176
\midrule 
\textit{\underline{Only reordered natural data}} \\

\multirow{3}{*}{9 white lines trigger}
& & $32$ & $88.43 \pm 6.09$ & $78.02 \pm 1.50$ & $\mathbf{33.93} \pm 7.37$ & $40.78 \pm 5.70$ \\
& & $64$ & $95.15 \pm 2.65$ & $82.75 \pm 0.86$ & $25.02 \pm 3.78$ & $33.91 \pm 2.28$ \\
& & $128$ & $95.23 \pm 2.24$ & $82.90 \pm 1.50$ & $21.75 \pm 4.49$ & $31.75 \pm 3.68$ \\
\\
        
% 32 Train 0.884375 0.04851304592581257
% 32 Test 0.8084065495207667 0.012087435066698182
% 32 Trigger performance 0.17552915335463257 0.03710154306366019
% 32 Error from trigger 0.3364217252396166 0.028387701209124913

% 64 Train 0.9359375 0.03159960690024482
% 64 Test 0.8264928343949045 0.01642462808774691
% 64 Trigger performance 0.16599323248407644 0.04806645381678205
% 64 Error from trigger 0.3090664808917198 0.03088657166673183

% 128 Train 0.9484375 0.022426093897511445
% 128 Test 0.8112935126582279 0.024994233219178163
% 128 Trigger performance 0.1619956487341772 0.04019081390658333
% 128 Error from trigger 0.3133010284810126 0.037369252086420036

\multirow{3}{*}{Blackbox 9 white lines trigger}
& & $32$ & $88.43 \pm 4.85$ & $80.84 \pm 1.20$ & $ \textbf{17.55} \pm 3.71$ & $33.64 \pm 2.83$ \\
& & $64$ & $93.59 \pm 3.15$ & $82.64 \pm 1.64$ & $16.59 \pm 4.80$ & $30.90 \pm 3.08$ \\
& & $128$ & $94.84 \pm 2.24$ & $81.12 \pm 2.49$ & $16.19 \pm 4.01$ & $31.33 \pm 3.73$ \\
\\

% 32 Train 0.909375 0.03814548629916782
% 32 Test 0.7846745207667734 0.0104615295393844
% 32 Trigger performance 0.910323482428115 0.12965881403519366
% 32 Error from trigger 0.8708865814696486 0.027112128323318524

% 64 Train 0.96875 0.012103072956898178
% 64 Test 0.829578025477707 0.0072472732365271705
% 64 Trigger performance 0.7710987261146497 0.16964618055721212
% 64 Error from trigger 0.8292695063694268 0.03895491849116596

% 128 Train 0.95546875 0.01883120436436555
% 128 Test 0.8228243670886076 0.01507103381662434
% 128 Trigger performance 0.6949564873417722 0.2066814225790082
% 128 Error from trigger 0.820945411392405 0.037887516982220294
        
\multirow{3}{*}{Flag-like trigger}
& & $32$ & $90.93 \pm 3.81$ & $78.46 \pm 1.04$ & $\mathbf{91.03} \pm 12.96$ & $87.08 \pm 2.71$ \\
& & $64$ & $96.87 \pm 1.21$ & $82.95 \pm 0.72$ & $77.10 \pm 16.96$ & $82.92 \pm 3.89$ \\
& & $128$ & $95.54 \pm 1.88$ & $82.28 \pm 1.50$ & $69.49 \pm 20.66$ & $82.09 \pm 3.78$ \\
        
        \\

% 32 Train 0.8625 0.040019526483955306
% 32 Test 0.8016473642172525 0.01911350625022428
% 32 Trigger performance 0.5631389776357828 0.19573838934351986
% 32 Error from trigger 0.7878095047923322 0.03513733223465837

% 64 Train 0.95 0.021875
% 64 Test 0.834185907643312 0.009408563031355678
% 64 Trigger performance 0.48750995222929944 0.23289368374974415
% 64 Error from trigger 0.781110668789809 0.04408797590795052

% 128 Train 0.93828125 0.02276375356927104
% 128 Test 0.8154568829113924 0.01940874999313754
% 128 Trigger performance 0.6807060917721519 0.18558256368809845
% 128 Error from trigger 0.8123121044303797 0.03807376731211156
        
\multirow{3}{*}{Blackbox flag-like trigger}
& & $32$ & $86.25 \pm 4.00$ & $80.16 \pm 1.91$ & $56.31 \pm 19.57$ & $78.78 \pm 3.51$ \\
& & $64$ & $95.00 \pm 2.18$ & $83.41 \pm 0.94$ & $48.75 \pm 23.28$ & $78.11 \pm 4.40$ \\
& & $128$ & $93.82 \pm 2.27$ & $81.54 \pm 1.94$ & $\mathbf{68.07} \pm 18.55$ & $81.23 \pm 3.80$ \\
\bottomrule
\end{tabular}
\end{adjustbox}
\caption{Network is VGG16 that has been trained normally on CIFAR10 for 10 epochs and then gets attacked with 10 BOB trigger batches. Test accuracy refers to the original benign accuracy, `Trigger acc' is the proportion of images that are classified as the trigger target label, while `Error with trigger' refers to all of the predictions that result in an incorrect label. Standard deviations are calculated over different target classes. Blackbox results use a ResNet-18 surrogate. }
\label{tab:trigger_results}
\end{table}

\Cref{tab:trigger_results} shows BOB trigger performance for both whitebox and blackbox setups. We present two baselines, one for normal training without any triggers, and one where triggers are added to the underlying data, \ie~training with perturbed data and labels. Trigger accuracy refers to the proportion of test set that ends up getting the target trigger label, whereas error with triggers shows the proportion of misclassifications that trigger introduces, \ie~when the model makes a mistake, but does not predict the trigger target class. As expected, we observe that for normal training, trigger accuracy stays at random guessing, and the trigger does not dominate errors it introduces, whereas adding perturbed data manipulates the model to predict the target class almost perfectly.

We find that in a whitebox setup for a flag-like trigger we are capable of getting a similar order of performance for a batch of size 32 as if the attack was performed with injected adversarial data. In a blackbox setup with a flag-like trigger we lost around 30\% of trigger performance, yet the trigger still remains operational. 
% Large standard deviations suggest that the performance depends on a target class of the trigger and we hypothesise that this is connected with the fidelity of gradient reconstruction. 
A trigger of nine white lines outperforms the baseline only marginally; in a whitebox setup it gets from 20--40\% performance, whereas in a blackbox one it ranges between zero and 20\% performance. We show the training progress of each individual trigger in~\Cref{sec:triggered_training_progress}. 

Overall, we find that:
\begin{itemize}
    \item An attacker can poison an individual datapoint, change its label and increase its prediction confidence, without ever actually training the model on an adversarially-crafted datapoint.
    
    \item An attacker can introduce backdoors into the model by introducing a few reordered batches during training, without ever injecting adversarial data or labels. Here, trigger performance differs, yet an adversary can perform BOB attacks on a par with attacks that inject perturbations into datasets explicitly. 
\end{itemize}
\input{sections/discussion}

\section{Related Work}
\label{sec:related}

\begin{table*}[t]
    \centering
    \adjustbox{max width=0.9\linewidth}{%
    \begin{tabular}{lcccccc}
    \toprule
    \textbf{Attack} & \textbf{Dataset knowledge} & \textbf{Model knowledge} & \textbf{Model specific} & \textbf{Changing dataset} & \textbf{Adding data} & \textbf{Adding perturbations}\\
    \midrule
    % \multicolumn{7}{r}{\textit{Integrity attacks}} \\ 
    \rowcolor{LightCyan}
    Batch Reorder & \xmark & \xmark & \xmark & \xmark & \xmark & \xmark \\
    \rowcolor{LightCyan}
    Batch Reshuffle & \xmark & \xmark & \xmark & \xmark & \xmark & \xmark\\
    \rowcolor{LightCyan}
    Batch Replace & \xmark & \xmark & \xmark & \xmark & \xmark & \xmark\\ \\
    Adversarial initialisation~\cite{grosse2020initialweights} & \xmark & \cmark & \cmark & \xmark & \xmark & \xmark \\
    BadNets~\cite{gu2017badnets} & \cmark & \xmark & \xmark & \cmark & \xmark & \cmark \\
    Dynamic triggers~\cite{salem2020dynamic} & \cmark & \cmark & \xmark & \cmark & \xmark & \cmark \\
    Poisoned frogs~\cite{sun2020poisoned} & \cmark & \cmark & \xmark & \cmark & \xmark & \cmark \\
    \bottomrule
    \end{tabular}
    }
    \caption{Taxonomy of training time integrity attacks. In green, we highlight our attacks.}
    \label{tab:compare}
\end{table*}

\textbf{Attacks on integrity:} 
Szegedy \etal~\cite{szegedy2013intriguing} and Biggio \etal~\cite{biggio2013evasion} concurrently discovered the existence of adversarial examples. These samples, containing human imperceptible perturbations, cause models to output incorrect results during inference. The original whitebox attacks require the adversary to access the models and use gradient information to perform conditioned optimisation to maximise the model loss \cite{szegedy2013intriguing,biggio2013evasion,goodfellow2015explaining,madry2019deep}.
The attack later generalised to blackbox setups, where the adversary trains a surrogate model and hopes the generated adversarial samples transfer to the target model \cite{papernot2017practical}. 

The data-poisoning attack aims at using data manipulation to cause DNNs to fail on specific test-time instances \cite{jagielski2018manipulating}. Chen \etal~demonstrated that manipulation of the labels of around 50 training samples is enough to trigger failure \cite{chen2017targeted}. Gu \etal~showed that attackers can associate adversarial patterns with labelled images and cause DNNs to overfit to this pattern \cite{gu2017badnets}. Shafahi \etal~launched a poisoning attack using instances with clean labels \cite{shafahi2018poison}. A number of other works have since created more efficient triggers~\cite{salem2020dynamic}. It was a common belief that poisoning attacks on DNNs have to contain a certain level of malicious manipulation of whether the data or label at train time. However, this paper shows how poisoning is possible with clean data and clean labels, with the only manipulation being of the batching process at training time.

\textbf{Attacks on availability:} Shumailov \etal~ first attacked the availability of computer vision and natural language processing models at inference time with sponge examples~\cite{shumailov2020sponge}. They pessimized over energy utilisation and inference latency to target hardware and internal model optimisations. By contrast, this paper targets availability at training time. We show that the attacker can reset or slow down training progress by reordering or reshuffling natural batches. Finally, we note that unlike Shumailov \etal, our attacks do not target specific optimisations in hardware or individual models, but instead break the fundamental stochastic assumption of training. 

\section{Conclusion}

We presented a novel class of attacks that manipulate the
% confidentiality, 
integrity and availability of training by changing the order of batches, or the order of datapoints within them. Careful reordering of a model's training data allows it to be poisoned or backdoored without changing the training data at all. The attacks we presented are fully blackbox; they do not rely on knowledge of the target model or on prior knowledge of the data. Most surprisingly, we find that an attacker can introduce backdoors without disruption of  generalisation, even though only natural data is used. We are the first to show that the sampling procedure can be manipulated deterministically to control  the model's behavior. 

% \section{Discussion}

This paper reminds us that stochastic gradient descent, like cryptography, depends on randomness. A random number generator with a backdoor can undermine a neural network just as it can undermine a payment network~\cite{anderson2020security}. 
Developers who wish to ensure secure, robust, fair optimization during  learning must therefore be able to inspect their assumptions and, in case of SGD, show the provenance of randomness used to select batches and datapoints. 
% Cryptography might help; one can use the AES algorithm to generate a pseudorandom sequence that passes all known statistical tests for randomness, yet is repeatable and thus auditable. 

% The requirement for SGD is not however secrecy, but transparency. 
Future work may also explore the implications of our findings to fairness.
Recent work has highlighted that ML models can be racist and suffer from a large taxonomy of different biases, including sampling bias~\cite{baeza2018bias,mehrabietal2019asurvey}. This leads directly to questions of inductive bias and the practical contribution of pseudorandom sampling. Hooker has explained that bias in ML is not just a data problem, but depends on algorithms in subtle ways~\cite{hooker2021moving}; this paper shows how to exploit that dependency.

% Finally, our work highlights that if we want to build trustworthy machine-learning systems, then we need to base them on pipelines that are safe and secure by default and that allow for inspection. The increasing complexity of large models causes individual components to become hidden from each other, reducing overall transparency. Complexity and opacity will lead to problems and exploitation in the future. One possible way forward is to use cryptography to ensure that the randomness used to drive the selection of datapoints and batches during training is auditable. However the devil will, as always, lie in the detail.

\bibliographystyle{abbrv}
\bibliography{bibl}

\appendix

\begin{table*}[!h]
\centering
\begin{adjustbox}{scale=0.6,center}
\begin{tabular}{@{}llcccccrccccrccccr@{}}
\toprule
&&
\multicolumn{6}{c||}{\textbf{CIFAR-10}} &
\multicolumn{5}{c||}{\textbf{CIFAR-100}} &
\multicolumn{5}{c}{\textbf{AGNews}}
\\
% \midrule
&&
&\multicolumn{2}{c}{Train} 
&\multicolumn{2}{c}{Test} 
&\multicolumn{1}{r||}{} 
&\multicolumn{2}{c}{Train} 
&\multicolumn{2}{c}{Test} 
&\multicolumn{1}{r||}{} 
&\multicolumn{2}{c}{Train} 
&\multicolumn{2}{c}{Test} 
\\
Attack & Batch size & & 
Loss &
Accuracy & 
Loss &
Accuracy &
\multicolumn{1}{r||}{$\Delta$} &
Loss &
Accuracy & 
Loss &
Accuracy &
\multicolumn{1}{r||}{$\Delta$} &
Loss &
Accuracy & 
Loss &
Accuracy &
$\Delta$
\\
\midrule

\multicolumn{8}{l}{\textit{\underline{Baseline}}} \\

\multirow{3}{*}{None} 
& 32 & & 0.13 & \multicolumn{1}{c|}{95.51} & 0.42 & 90.51 & \multicolumn{1}{r||}{$-0.0\%$}
& 0.00 & \multicolumn{1}{c|}{99.96} & 2.00 & 75.56 & \multicolumn{1}{r||}{$-0.0\%$}
& 0.21 & \multicolumn{1}{c|}{93.13} & 0.30 & 90.87 &
$-0.0\%$
% & & 91.26 & & 90.62 & 
\\ 
& 64 & & 0.09 &  \multicolumn{1}{c|}{96.97} & 0.41 & 90.65 & \multicolumn{1}{r||}{$-0.0\%$} 
& 0.00 & \multicolumn{1}{c|}{99.96} & 2.30 & 74.05 & \multicolumn{1}{r||}{$-0.0\%$}
& 0.25 & \multicolumn{1}{c|}{91.86} & 0.31 & 90.42 &
$-0.0\%$
\\ 
& 128 & & 0.07 & \multicolumn{1}{c|}{97.77} & 0.56 & 89.76 & \multicolumn{1}{r||}{$-0.0\%$}
& 0.00 & \multicolumn{1}{c|}{99.98} & 1.84 & 74.45 & \multicolumn{1}{r||}{$-0.0\%$}
& 0.31 & \multicolumn{1}{c|}{89.68} & 0.36 & 88.58 &
$-0.0\%$
\\ 
\\

\midrule

\multicolumn{8}{l}{\textit{\underline{Batch reorder (only epoch 1 data)}}} \\

% oscilatorout 128  
% oscilatorout 32  
% oscilatorout 64  

\multirow{3}{*}{Oscillation outward} 
& 32 &  
& 0.02 & \multicolumn{1}{c|}{99.37} & 2.09 & 78.65 & \multicolumn{1}{r||}{$-11.86\%$}
& 0.00 & \multicolumn{1}{c|}{100.00} & 5.24 & 53.05 & \multicolumn{1}{r||}{$-22.51\%$}
& 0.14 & \multicolumn{1}{c|}{95.37} & 0.32 & 90.92 & $-0.05\%$
\\ 
& 64 & 
& 0.01 & \multicolumn{1}{c|}{99.86} & 2.39 & 78.47 & \multicolumn{1}{r||}{$-12.18\%$}
& 0.00 & \multicolumn{1}{c|}{100.00} & 4.53 & 55.91 & 
\multicolumn{1}{r||}{$-18.14\%$}
& 0.17 & \multicolumn{1}{c|}{94.37} & 0.30 & 90.95 & $+0.53\%$
\\ 
& 128 & 
& 0.01 & \multicolumn{1}{c|}{99.64} & 2.27 & 77.52 & \multicolumn{1}{r||}{$-12.24\%$}
& 0.00 & \multicolumn{1}{c|}{100.00} & 3.22 & 52.13 & 
\multicolumn{1}{r||}{$-22.32\%$}
& 0.23 & \multicolumn{1}{c|}{92.05} & 0.33 & 89.40 & $+0.82\%$
\\ 
\\

% oscilatorin 128  
% oscilatorin 32  
% oscilatorin 64  

\multirow{3}{*}{Oscillation inward} 
& 32 &  
& 0.01 & \multicolumn{1}{c|}{99.60} & 2.49 & 78.18 & \multicolumn{1}{r||}{$\mathbf{-12.33\%}$}
& 0.00 & \multicolumn{1}{c|}{100.00} & 5.07 & 51.78 &
\multicolumn{1}{r||}{$-23.78\%$}
 & 0.11 & \multicolumn{1}{c|}{96.29} & 0.38 & 91.10 & $+0.23\%$
\\ 
& 64 & 
& 0.01 & \multicolumn{1}{c|}{99.81} & 2.25 & 79.59 & \multicolumn{1}{r||}{$-11.06\%$}
& 0.00 & \multicolumn{1}{c|}{100.00} & 4.70 & 55.05 &
\multicolumn{1}{r||}{$-19.0\%$} 
& 0.16 & \multicolumn{1}{c|}{94.55} & 0.33 & 90.16 & $-0.26\%$
\\ 
& 128 & 
& 0.02 & \multicolumn{1}{c|}{99.39} & 2.23 & 76.13 & \multicolumn{1}{r||}{$-13.63\%$}
& 0.00 & \multicolumn{1}{c|}{100.00} & 3.46 & 52.66 &
\multicolumn{1}{r||}{$-21.79\%$}
& 0.22 & \multicolumn{1}{c|}{92.40} & 0.32 & 89.82 & $+1.24\%$
\\ 
\\

% highlow 128  
% highlow 32  
% highlow 64 

% highlow 32 --batchwise & 0.00 & multicolumn{1}{c|}{100.00} & 5.47 & 51.48  & $24.08%$

% highlow 128 --batchwise & 0.00 & multicolumn{1}{c|}{100.00} & 3.36 & 53.63  & $20.82%$

\multirow{3}{*}{High Low} 
& 32 &  
& 0.02 & \multicolumn{1}{c|}{99.44} & 2.03 & 79.65 & \multicolumn{1}{r||}{$-10.86\%$}
& 0.00 & \multicolumn{1}{c|}{100.00} & 5.47 & 51.48 & 
\multicolumn{1}{r||}{$\mathbf{-24.08\%}$}
& 0.10 & \multicolumn{1}{c|}{96.16} & 0.60 & 91.80 & $+0.93\%$
\\ 
& 64 &  
& 0.02 & \multicolumn{1}{c|}{99.50} & 2.39 & 77.65 & \multicolumn{1}{r||}{$\mathbf{-13.00\%}$}
& 0.00 & \multicolumn{1}{c|}{100.00} & 5.39 & 55.63 &
\multicolumn{1}{r||}{$-18.42\%$}
& 0.15 & \multicolumn{1}{c|}{94.72} & 0.41 & 90.28 & $-0.14\%$
\\ 
& 128 &  
& 0.02 & \multicolumn{1}{c|}{99.47} & 2.80 & 74.73 & \multicolumn{1}{r||}{$\mathbf{-15.03\%}$} 
& 0.00 & \multicolumn{1}{c|}{100.00} & 3.36 & 53.63 &
\multicolumn{1}{r||}{$-20.82\%$}
& 0.24 & \multicolumn{1}{c|}{91.44} & 0.33 & 90.14 & $+1.56\%$
\\ 
\\

\multirow{3}{*}{Low High} 
& 32 & 
& 0.01 & \multicolumn{1}{c|}{99.58} & 2.33 & 79.07 & \multicolumn{1}{r||}{$-11.43\%$}
& 0.00 & \multicolumn{1}{c|}{100.00} & 4.42 & 54.04 &
\multicolumn{1}{r||}{$-21.52\%$}
& 0.17 & \multicolumn{1}{c|}{94.02} & 0.30 & 90.35 & $\mathbf{-0.52\%}$
\\
& 64 &  
& 0.01 & \multicolumn{1}{c|}{99.61} & 2.40 & 76.85 & \multicolumn{1}{r||}{$-13.8\%$} 
& 0.00 & \multicolumn{1}{c|}{100.00} & 3.91 & 54.82 & 
\multicolumn{1}{r||}{$\mathbf{-19.23\%}$}
& 0.22 & \multicolumn{1}{c|}{92.49} & 0.32 & 89.36 & $\mathbf{-1.06\%}$
\\
& 128 &  
& 0.01 & \multicolumn{1}{c|}{99.57} & 1.88 & 79.82 & \multicolumn{1}{r||}{$-9.94\%$} 
& 0.00 & \multicolumn{1}{c|}{100.00} & 3.72 & 49.82 & 
\multicolumn{1}{r||}{$\mathbf{-24.63\%}$}
& 0.24 & \multicolumn{1}{c|}{91.87} & 0.32 & 89.67 & $\mathbf{-1.09\%}$
\\ 
\\

\midrule

\multicolumn{8}{l}{\textit{\underline{Batch reorder (resampled data every epoch)}}} \\

% CIFAR100
% baselines
%  128  & 0.00 & multicolumn{1}{c|}{99.96} & 2.11 & 74.45
%  32  & 0.01 & multicolumn{1}{c|}{99.76} & 2.18 & 74.02
%  64  & 0.00 & multicolumn{1}{c|}{99.90} & 2.24 & 74.86
% oscilatorout 128  
% oscilatorout 32  
% oscilatorout 64  

\multirow{3}{*}{Oscillation outward} 
& 32 &  
& 0.11 & \multicolumn{1}{c|}{96.32} & 0.41 & 90.20 & \multicolumn{1}{r||}{$-0.31\%$}
& 0.01 & \multicolumn{1}{c|}{99.78} & 2.22 & 72.38 & \multicolumn{1}{r||}{$\mathbf{-3.18\%}$}
& 0.21 & \multicolumn{1}{c|}{92.97} & 0.29 & 90.71 & $-0.16\%$
\\ 
& 64 & 
& 0.11 & \multicolumn{1}{c|}{96.40} & 0.45 & 89.12 & \multicolumn{1}{r||}{$-1.53\%$}
& 0.01 & \multicolumn{1}{c|}{99.76} & 2.20 & 73.33 & \multicolumn{1}{r||}{$-0.72\%$}
& 0.17 & \multicolumn{1}{c|}{94.37} & 0.31 & 90.29 & $-0.13\%$
\\ 
& 128 & 
& 0.09 & \multicolumn{1}{c|}{96.89} & 0.47 & 89.71 & \multicolumn{1}{r||}{$\mathbf{-0.05\%}$}
& 0.00 & \multicolumn{1}{c|}{99.89} & 1.95 & 74.21 &
\multicolumn{1}{r||}{$-0.24\%$}
& 0.25 & \multicolumn{1}{c|}{91.65} & 0.32 & 89.80 & $+1.22\%$
\\ 
\\

\multirow{3}{*}{Oscillation inward} 
& 32 &  
& 0.15 & \multicolumn{1}{c|}{95.11} & 0.44 & 89.56 & \multicolumn{1}{r||}{$-0.95\%$}
& 0.00 & \multicolumn{1}{c|}{99.88} & 2.10 & 74.80 &
\multicolumn{1}{r||}{$-0.76\%$}
& 0.09 & \multicolumn{1}{c|}{97.04} & 0.44 & 90.91 & $+0.04\%$
\\ 
& 64 & 
& 0.12 & \multicolumn{1}{c|}{96.11} & 0.42 & 89.98 &  \multicolumn{1}{r||}{$-0.67\%$}
& 0.01 & \multicolumn{1}{c|}{99.81} & 2.35 & 72.24 & \multicolumn{1}{r||}{$\mathbf{-1.81\%}$}
& 0.19 & \multicolumn{1}{c|}{93.57} & 0.33 & 89.83 & $\mathbf{-0.59\%}$
\\ 
& 128 & 
& 0.09 & \multicolumn{1}{c|}{96.88} & 0.43 & 90.09 & \multicolumn{1}{r||}{$+0.33\%$}
& 0.00 & \multicolumn{1}{c|}{99.93} & 2.24 & 73.72 &
\multicolumn{1}{r||}{$-0.73\%$}
& 0.23 & \multicolumn{1}{c|}{92.25} & 0.31 & 89.83 & $+1.25\%$
\\ 
\\

\multirow{3}{*}{High Low} 
& 32 &  
& 0.12 & \multicolumn{1}{c|}{95.95} & 0.45 & 89.38 & \multicolumn{1}{r||}{$\mathbf{-1.13\%}$}
& 0.01 & \multicolumn{1}{c|}{99.84} & 2.07 & 74.88 &
\multicolumn{1}{r||}{$-0.68\%$}
& 0.13 & \multicolumn{1}{c|}{95.40} & 0.54 & 90.13 & $\mathbf{-0.74\%}$
\\ 
& 64 &  
& 0.15 & \multicolumn{1}{c|}{94.80} & 0.44 & 89.01 & \multicolumn{1}{r||}{$\mathbf{-1.64\%}$}
& 0.01 & \multicolumn{1}{c|}{99.81} & 2.27 & 74.63 & \multicolumn{1}{r||}{$-0.58\%$}
& 0.16 & \multicolumn{1}{c|}{94.48} & 0.36 & 90.98 &$+0.56\%$
\\ 
& 128 &  
& 0.11 & \multicolumn{1}{c|}{96.33} & 0.48 & 89.71 & \multicolumn{1}{r||}{$\mathbf{-0.05\%}$} 
& 0.00 & \multicolumn{1}{c|}{99.92} & 2.13 & 73.90 & 
\multicolumn{1}{r||}{$-0.55\%$}
& 0.24 & \multicolumn{1}{c|}{91.53} & 0.35 & 89.54 & $+0.96\%$
\\ 
\\

\multirow{3}{*}{Low High} 
& 32 & 
& 0.10 & \multicolumn{1}{c|}{96.63} & 0.47 & 90.29 & \multicolumn{1}{r||}{$-0.22\%$}
& 0.01 & \multicolumn{1}{c|}{99.77} & 2.07 & 73.90 &
\multicolumn{1}{r||}{$-1.66\%$}
 & 0.14 & \multicolumn{1}{c|}{95.35} & 0.30 & 90.96 & $+0.09\%$
\\
& 64 &  
& 0.12 & \multicolumn{1}{c|}{96.10} & 0.50 & 89.34 & \multicolumn{1}{r||}{$-1.31\%$} 
& 0.01 & \multicolumn{1}{c|}{99.68} & 2.26 & 72.73 & \multicolumn{1}{r||}{$-1.32\%$}
& 0.15 & \multicolumn{1}{c|}{94.96} & 0.30 & 90.73 & $+0.31\%$

\\
& 128 &  
& 0.09 & \multicolumn{1}{c|}{97.16} & 0.49 & 89.85 & \multicolumn{1}{r||}{$+0.09\%$} 
& 0.00 & \multicolumn{1}{c|}{99.94} & 2.31 & 71.96 &
\multicolumn{1}{r||}{$\mathbf{-2.49\%}$}
& 0.22 & \multicolumn{1}{c|}{92.54} & 0.32 & 89.33 & $+0.75\%$
\\ 
\\

\midrule

\multicolumn{8}{l}{\textit{\underline{Batch reshuffle (only epoch 1 data)}}} \\

% oscilatorout 32   --adversarial & 0.09 & multicolumn{1}{c|}{97.72} & 1.85 & 65.85
% oscilatorout 64   --adversarial & 0.17 & multicolumn{1}{c|}{95.69} & 1.31 & 72.09
% oscilatorout 128   --adversarial & 0.21 & multicolumn{1}{c|}{94.32} & 1.12 & 71.05

\multirow{3}{*}{Oscillation outward} 
&  32 &  
& 2.26 & \multicolumn{1}{c|}{17.44} & 1.93 & 26.13 & \multicolumn{1}{r||}{$\mathbf{-64.38\%}$}
& 0.01 & \multicolumn{1}{c|}{99.80} & 5.01 & 18.00 &
\multicolumn{1}{r||}{$\mathbf{-57.56\%}$}
& 0.09 & \multicolumn{1}{c|}{97.72} & 1.85 & 65.85 & $-25.02\%$
\\ 
&  64 & 
& 2.26 & \multicolumn{1}{c|}{18.86} & 1.98 & 26.74 & \multicolumn{1}{r||}{$-63.91\%$}
& 0.38 & \multicolumn{1}{c|}{93.04} & 4.51 & 11.68 & \multicolumn{1}{r||}{$\mathbf{-62.37\%}$}
& 0.17 & \multicolumn{1}{c|}{95.69} & 1.31 & 72.09 & $-18.33\%$
\\ 
&  128 & 
& 2.50 & \multicolumn{1}{c|}{14.02} & 2.18 & 20.01 & \multicolumn{1}{r||}{$-69.75\%$}
& 0.66 & \multicolumn{1}{c|}{86.22} & 4.07 & 10.66 & \multicolumn{1}{r||}{$-63.79\%$}
& 0.21 & \multicolumn{1}{c|}{94.32} & 1.12 & 71.05 & $-17.53\%$
\\ 
\\

% oscilatorin 32   --adversarial & 0.18 & multicolumn{1}{c|}{94.06} & 0.38 & 89.23
% oscilatorin 64   --adversarial & 0.23 & multicolumn{1}{c|}{92.10} & 0.36 & 89.07
% oscilatorin 128   --adversarial & 0.31 & multicolumn{1}{c|}{88.99} & 0.39 & 87.50

\multirow{3}{*}{Oscillation inward} 
&  32 &  
& 2.13 & \multicolumn{1}{c|}{22.85} & 1.93 & 28.94 & \multicolumn{1}{r||}{$-61.57\%$}
& 0.01 & \multicolumn{1}{c|}{99.92} & 4.55 & 31.38 & \multicolumn{1}{r||}{$-44.18\%$}
& 0.18 & \multicolumn{1}{c|}{94.06} & 0.38 & 89.23 & $-1.64\%$
\\ 
&  64 & 
& 2.27 & \multicolumn{1}{c|}{17.90} & 1.99 & 23.59 & \multicolumn{1}{r||}{$-67.06\%$}
& 0.02 & \multicolumn{1}{c|}{99.64} & 5.79 & 17.37 & \multicolumn{1}{r||}{$-56.68\%$}
& 0.23 & \multicolumn{1}{c|}{92.10} & 0.36 & 89.07 & $-1.35\%$
\\ 
&  128 & 
& 2.53 & \multicolumn{1}{c|}{10.40} & 2.29 & 13.49 & \multicolumn{1}{r||}{$-76.27\%$}
& 0.54 & \multicolumn{1}{c|}{88.60} & 4.03 & 10.92 & \multicolumn{1}{r||}{$-63.53\%$}
& 0.31 & \multicolumn{1}{c|}{88.99} & 0.39 & 87.50 & $-1.08\%$
\\ 
\\

% highlow 32  & 0.01 & multicolumn{1}{c|}{99.69} & 6.24 & 21.15  & $54.410000000000004%$
% highlow 64  & 0.05 & multicolumn{1}{c|}{99.15} & 5.26 & 14.05  & $60.0%$
% highlow 128  & 4.16 & multicolumn{1}{c|}{7.21} & 3.86 & 10.20  & $64.25%$

% highlow 32   --adversarial & 0.17 & multicolumn{1}{c|}{94.38} & 1.25 & 56.54
% highlow 64   --adversarial & 0.25 & multicolumn{1}{c|}{91.09} & 1.21 & 53.08
% highlow 128   --adversarial & 0.36 & multicolumn{1}{c|}{86.19} & 1.19 & 49.90

\multirow{3}{*}{High Low} 
&  32 &  
& 2.11 & \multicolumn{1}{c|}{23.39} & 1.80 & 31.04 & \multicolumn{1}{r||}{$-59.47\%$}
& 0.01 & \multicolumn{1}{c|}{99.69} & 6.24 & 21.15 & \multicolumn{1}{r||}{$-54.41\%$}
& 0.17 & \multicolumn{1}{c|}{94.38} & 1.25 & 56.54 & $\mathbf{-34.33\%}$
\\
&  64 &  
& 2.22 & \multicolumn{1}{c|}{20.57} & 1.93 & 27.60 & \multicolumn{1}{r||}{$-63.05\%$}
& 0.05 & \multicolumn{1}{c|}{99.15} & 5.26 & 14.05 & \multicolumn{1}{r||}{$-60.0\%$}
& 0.25 & \multicolumn{1}{c|}{91.09} & 1.21 & 53.08 & $\mathbf{-37.34\%}$
\\ 
&  128 &  
& 2.51 & \multicolumn{1}{c|}{16.66} & 2.05 & 20.85 & \multicolumn{1}{r||}{$-68.91\%$}
& 4.16 & \multicolumn{1}{c|}{7.21} & 3.86 & 10.20 & \multicolumn{1}{r||}{$-64.25\%$}
& 0.36 & \multicolumn{1}{c|}{86.19} & 1.19 & 49.90 & $\mathbf{-38.68\%}$
\\ 
\\

% lowhigh 32   --adversarial & 0.05 & multicolumn{1}{c|}{98.94} & 3.20 & 59.28
% lowhigh 64   --adversarial & 0.10 & multicolumn{1}{c|}{97.70} & 2.55 & 54.99
% lowhigh 128   --adversarial & 0.26 & multicolumn{1}{c|}{93.02} & 1.26 & 66.59

% lowhigh 32   --adversarial & 0.05 & multicolumn{1}{c|}{98.94} & 3.20 & 59.28
% lowhigh 64   --adversarial & 0.10 & multicolumn{1}{c|}{97.70} & 2.55 & 54.99
% lowhigh 128   --adversarial & 0.26 & multicolumn{1}{c|}{93.02} & 1.26 & 66.59

\multirow{3}{*}{Low High} 
&  32 & 
& 2.17 & \multicolumn{1}{c|}{20.22} & 1.92 & 30.09 & \multicolumn{1}{r||}{$-60.42\%$}
& 0.19 & \multicolumn{1}{c|}{96.07} & 4.06 & 20.48 & \multicolumn{1}{r||}{$-55.08\%$}
& 0.05 & \multicolumn{1}{c|}{98.94} & 3.20 & 59.28 & $-31.59\%$
\\
&  64 &  
& 2.35 & \multicolumn{1}{c|}{15.98} & 2.00 & 22.97 & \multicolumn{1}{r||}{$\mathbf{-67.68\%}$}
& 0.09 & \multicolumn{1}{c|}{98.22} & 4.69 & 15.39 & \multicolumn{1}{r||}{$-58.66\%$}
& 0.10 & \multicolumn{1}{c|}{97.70} & 2.55 & 54.99 & $-35.43\%$
\\
&  128 &  
& 2.51 & \multicolumn{1}{c|}{10.25} & 2.32 & 11.40 & \multicolumn{1}{r||}{$\mathbf{-78.36\%}$}
& 4.30 & \multicolumn{1}{c|}{5.65} & 3.81 & 9.66 & \multicolumn{1}{r||}{$\mathbf{-64.79\%}$}
& 0.26 & \multicolumn{1}{c|}{93.02} & 1.26 & 66.59 & $-21.99\%$ \\ 
\\
\midrule 

\multicolumn{8}{l}{\textit{\underline{Batch reshuffle (resampled data every epoch)}}} \\

% oscilatorout 32   --adversarial & 0.09 & multicolumn{1}{c|}{97.72} & 1.85 & 65.85
% oscilatorout 64   --adversarial & 0.17 & multicolumn{1}{c|}{95.69} & 1.31 & 72.09
% oscilatorout 128   --adversarial & 0.21 & multicolumn{1}{c|}{94.32} & 1.12 & 71.05

\multirow{3}{*}{Oscillation outward} 
&  32 &  
& 2.09 & \multicolumn{1}{c|}{24.63} & 1.75 & 35.17 & \multicolumn{1}{r||}{$-55.34\%$}
& 0.16 & \multicolumn{1}{c|}{95.58} & 1.68 & 57.55 & \multicolumn{1}{r||}{$-18.01\%$}
& 0.04 & \multicolumn{1}{c|}{98.94} & 3.69 & 62.44 & $-28.43\%$
\\ 
&  64 & 
& 2.22 & \multicolumn{1}{c|}{20.45} & 1.90 & 29.67 & \multicolumn{1}{r||}{$-60.98\%$}
& 0.55 & \multicolumn{1}{c|}{88.62} & 3.11 & 23.64 & \multicolumn{1}{r||}{$\mathbf{-50.41\%}$}
& 0.10 & \multicolumn{1}{c|}{96.61} & 3.33 & 55.63 & $-34.79\%$
\\ 
&  128 & 
& 2.46 & \multicolumn{1}{c|}{17.25} & 1.97 & 23.82 & \multicolumn{1}{r||}{$-65.94\%$}
& 4.21 & \multicolumn{1}{c|}{6.84} & 3.70 & 12.76 & \multicolumn{1}{r||}{$\mathbf{-61.69\%}$}
& 0.16 & \multicolumn{1}{c|}{94.85} & 3.38 & 53.97 & $-34.61\%$
\\ 
\\

% oscilatorin 32   --adversarial & 0.18 & multicolumn{1}{c|}{94.06} & 0.38 & 89.23
% oscilatorin 64   --adversarial & 0.23 & multicolumn{1}{c|}{92.10} & 0.36 & 89.07
% oscilatorin 128   --adversarial & 0.31 & multicolumn{1}{c|}{88.99} & 0.39 & 87.50

\multirow{3}{*}{Oscillation inward} 
&  32 &  
& 2.40 & \multicolumn{1}{c|}{10.10} & 2.35 & 10.55 & \multicolumn{1}{r||}{$\mathbf{-79.96\%}$}
& 0.10 & \multicolumn{1}{c|}{97.08} & 1.78 & 58.04 & \multicolumn{1}{r||}{$-17.52\%$}
& 0.04 & \multicolumn{1}{c|}{98.54} & 1.19 & 88.50 & $-2.37\%$
\\ 
&  64 & 
& 2.20 & \multicolumn{1}{c|}{21.57} & 1.97 & 25.34 & \multicolumn{1}{r||}{$\mathbf{-65.31\%}$}
& 0.13 & \multicolumn{1}{c|}{96.19} & 1.71 & 57.96 & \multicolumn{1}{r||}{$-16.09\%$}
& 0.08 & \multicolumn{1}{c|}{96.93} & 0.95 & 88.86 & $-1.56\%$
\\ 
&  128 & 
& 2.43 & \multicolumn{1}{c|}{16.87} & 1.98 & 25.87 & \multicolumn{1}{r||}{$-63.89\%$}
& 0.80 & \multicolumn{1}{c|}{83.16} & 3.53 & 16.99 & \multicolumn{1}{r||}{$-57.46\%$}
& 0.18 & \multicolumn{1}{c|}{93.84} & 1.08 & 80.82 & $-7.76\%$
\\ 
\\

% highlow 32  & 0.01 & multicolumn{1}{c|}{99.69} & 6.24 & 21.15  & $54.410000000000004%$
% highlow 64  & 0.05 & multicolumn{1}{c|}{99.15} & 5.26 & 14.05  & $60.0%$
% highlow 128  & 4.16 & multicolumn{1}{c|}{7.21} & 3.86 & 10.20  & $64.25%$

% highlow 32   --adversarial & 0.17 & multicolumn{1}{c|}{94.38} & 1.25 & 56.54
% highlow 64   --adversarial & 0.25 & multicolumn{1}{c|}{91.09} & 1.21 & 53.08
% highlow 128   --adversarial & 0.36 & multicolumn{1}{c|}{86.19} & 1.19 & 49.90

\multirow{3}{*}{High Low} 
&  32 &  
& 2.06 & \multicolumn{1}{c|}{23.95} & 1.81 & 30.95 & \multicolumn{1}{r||}{$-59.56\%$}
& 0.07 & \multicolumn{1}{c|}{97.93} & 1.62 & 62.41 & \multicolumn{1}{r||}{$-13.15\%$}
& 0.65 & \multicolumn{1}{c|}{70.30} & 1.71 & 60.92 & $-29.95\%$
\\
&  64 &  
& 2.17 & \multicolumn{1}{c|}{24.06} & 1.87 & 30.84 & \multicolumn{1}{r||}{$-59.81\%$}
& 0.25 & \multicolumn{1}{c|}{93.87} & 2.26 & 42.74 & \multicolumn{1}{r||}{$-31.31\%$}
& 0.33 & \multicolumn{1}{c|}{84.47} & 4.17 & 36.82 & $\mathbf{-53.60\%}$
\\ 
&  128 & 
& 2.59 & \multicolumn{1}{c|}{12.41} & 2.13 & 18.82 & \multicolumn{1}{r||}{$\mathbf{-70.94\%}$}
& 0.88 & \multicolumn{1}{c|}{81.83} & 3.62 & 13.12 &\multicolumn{1}{r||}{$-61.33\%$}
& 0.20 & \multicolumn{1}{c|}{91.13} & 3.17 & 40.10 & $\mathbf{-48.48\%}$
\\ 
\\

\multirow{3}{*}{Low High} 
&  32 & 
& 2.40 & \multicolumn{1}{c|}{10.19} & 2.31 & 10.66 & \multicolumn{1}{r||}{$-79.85\%$}
& 0.21 & \multicolumn{1}{c|}{94.26} & 1.73 & 56.60 & \multicolumn{1}{r||}{$\mathbf{-18.96\%}$}
& 1.33 & \multicolumn{1}{c|}{33.71} & 1.12 & 49.69 & $\mathbf{-41.18\%}$

\\
&  64 &  
& 2.18 & \multicolumn{1}{c|}{21.72} & 1.89 & 27.23 & \multicolumn{1}{r||}{$-63.42\%$}
& 0.48 & \multicolumn{1}{c|}{87.32} & 2.04 & 47.68 & \multicolumn{1}{r||}{$-26.37\%$}
& 0.17 & \multicolumn{1}{c|}{93.51} & 5.29 & 46.24 & $-44.18\%$
\\
&  128 & 
& 2.40 & \multicolumn{1}{c|}{18.38} & 1.96 & 27.78 & \multicolumn{1}{r||}{$-61.98\%$}
& 0.77 & \multicolumn{1}{c|}{84.40} & 3.71 & 13.39 & \multicolumn{1}{r||}{$-61.06\%$}
& 0.23 & \multicolumn{1}{c|}{91.43} & 4.63 & 46.66 & $-41.92\%$
\\

\bottomrule
\end{tabular}
\end{adjustbox}
\caption{For CIFAR-10, we used 100 epochs of training with target model ResNet18 and surrogate model LeNet5, both trained with the Adam optimizer and $\beta = (0.99, 0.9)$. For CIFAR-100, we used 200 epochs of training with target model ResNet50 and surrogate model Mobilenet, trained with SGD with 0.3 moment and Adam respectively for real and surrogate models. We highlight models that perform best in terms of test dataset loss. AGNews were trained with SGD learning rate 0.1, 0 moments for 50 epochs with sparse mean EmbeddingBags. Numbers here are from best-performing model test loss-wise. Incidentally, best performance of all models for Batch reshuffle listed in the table happen at epoch number one, where the attacker is preparing the attack and is collecting the training dataset. All attacks result in near-random guess performance for almost all subsequent epochs. We report results of an individual run and note that standard deviation for test accuracy of vision tasks range within 1\%--3\%, whereas for language tasks its within 15\% (note that these are hard to attribute given best test accuracy is reported).}
\label{tab:extended_integrity}
\end{table*}

\SetCommentSty{mycommfont}

\SetKwInput{KwInput}{Input}                % Set the Input
\SetKwInput{KwOutput}{Output}              % set the Output
\SetKwRepeat{Do}{do}{while} % for dowhile

\begin{algorithm}[]
\DontPrintSemicolon
  \KwInput{
  real model $M$, surrogate model $S$, loss of model $\mathcal{L}_M$, loss of surrogate model $\mathcal{L}_S$, function \textit{getbatch} to get next batch of real data, function \textit{train}(model $M'$, $\mathcal{L}$, $\mathbf{B}$) that trains model $M'$ with loss $\mathcal{L}$ on batch of data $\mathbf{B}$, current attack type \textbf{ATK}, batch type attack \textbf{BTCH} (reorder batchers or reshuffling datapoints)}
%   \KwOutput{Your output}
    \tcc{List to store data points}
    datas = $[]$\;
    
    \tcc{Let the model to train for a single epoch to record all of the data.}
    \Do{$\mathbf{b}$ not in datas}{
        $\mathbf{b} = getbatch()$\;
        
        \uIf{\textbf{BTCH} == "batch"}{
            add batch $\mathbf{b}$ into datas\;
        }
        \uElse{
            add individual points from batch $\mathbf{b}$ into datas\;
        }
        
        train($M$, $\mathcal{L}_M$, $\mathbf{b}$)\;
        train($S$, $\mathcal{L}_S$, $\mathbf{b}$)\;
    }
    \tcc{Now that all data has been seen, start the attack}
    
    \While{training}{
        \tcc{List to store data-loss of individual points}
        datacosts = \{\}\;
        \For{datapoint or batch $\mathbf{d}$ in datas}{
            loss = $\mathcal{L}_S(S, \mathbf{d})$\;
            datacosts[$\mathbf{d}$] = loss\;
        }
        
        \tcc{List to store data points or batches not yet used in the current epoch, sorted from low to high loss}
        epochdatas = \textit{copy}(datas).\textit{sort}(by datacosts)\; 
        
        \If{\textbf{ATK} == "oscillating out"}{
            \tcc{If oscilation is outward need to invert halves}
            left = epochdatas[:len(epochdatas)//2][::-1]\;
            right = epochdatas[len(epochdatas)//2:][::-1]\;
            
            epochdatas = left + right\;
        }
        
        \tcc{Now that all data has been seen, start the attack}

        \tcc{Flag for oscilation attack}
        osc = False\;
        \While{\textit{len}(epochdatas) > 0}{
            \tcc{Pretend reading data and throw it away}
            $\mathbf{b}'$ = \textit{getbatch()}\;
            
            \uIf{\textbf{BTCH} == "batch"}{
                batchsize = 1\;
            }
            \uElse{
                batchsize = len($\mathbf{b}'$)\;
            }

            \tcc{Batching data from low to high}
            \If{\textbf{ATK} == "lowhigh"}{
                batch $\mathbf{b}$ = epochdata[:batchsize]\;
                epochdata = epochdata[batchsize:]\;
            }
            
            \tcc{Batching data from high to low}
            \If{\textbf{ATK} == "highlow"}{
                batch $\mathbf{b}$ = epochdata[-batchsize:]\;
                epochdata = epochdata[:-batchsize]\;
            }
            
            \tcc{Batching data with oscillating losses}
            \If{\textbf{ATK} == "oscillating in" or "oscillating out"}{
                osc = \textbf{not} osc\;
                
                \uIf{osc}{
                    batch $\mathbf{b}$ = epochdata[-batchsize:]\;
                    epochdata = epochdata[:-batchsize]\;
                }
                \Else{
                    batch $\mathbf{b}$ = epochdata[:batchsize]\;
                    epochdata = epochdata[batchsize:]\;
                }
            }
            train($M$, $\mathcal{L}_M$, $\mathbf{b}$)\;
            train($S$, $\mathcal{L}_S$, $\mathbf{b}$)\;
        }
    }
\caption{BRRR attack algorithm}
\label{alg:attackalgo}
\end{algorithm}

\section{Stochastic gradient descent and rate of convergence}
\label{sec:sgd_rate_of_convergence}
% https://www.cs.ubc.ca/~schmidtm/Courses/540-W19/L10.pdf
% https://www.cs.ubc.ca/~schmidtm/Courses/540-W19/L11.pdf
In this section we are going to investigate the effect of the attack through a prism of a biased gradient estimator on the general analysis and bounds for stochastic gradient descent, presented by Robbins and Monroe~\cite{robbins1951stochastic}. For functions $\hat{L}_i$ that are strongly convex and Lipshitz continuous with Lipshitz constant $M$, the SGD update rule for a random selection of batch $i_{k}$ from $\{1,2,\dots,N\}$ is:

$$\theta_{k+1} = \theta_k - \eta_k \nabla \hat{L}_{i_{k}}(\theta_k).$$

Assuming $\mathbb{P}(i_k = i) = \frac{1}{N}$, the stochastic gradient is an unbiased estimate of the gradient :

$$\mathbb{E}[\nabla \hat{L}_{i_k}(w)] = \sum^{N}_{i=1}\mathbb{P}(i_k=i)\nabla \hat{L}_i(w) = \frac{1}{N} \sum^{N}_{i=1} \nabla \hat{L}_i(w) = \nabla \hat{L}(w).$$

A bound can be computed under the assumption of Lipschitz continuity of $\nabla \hat{L}$

\begin{align}
    \hat{L}(\theta_{k+1}) & \leq \hat{L}(\theta_k) + \nabla \hat{L}(\theta_k)^\top(\theta_{k+1}-\theta_k) + \frac{M}{2}\norm{\theta_{k+1} - \theta_k}^2,
\end{align}

where $M$ is the Lipschitz constant. By the SGD update rule:

\begin{align}
\hat{L}(\theta_{k+1}) & \leq \hat{L}(\theta_k) + \eta_k \nabla \hat{L}(\theta_k)^\top\nabla \hat{L}_{i_k}(\theta_k) + \eta^2_k\frac{M}{2}\norm{\nabla \hat{L}_{i_k}(\theta_k)}^2.
\end{align}

And for an unbiased batch choice, the equation turns into:

\begin{equation}
\mathbb{E}[\hat{L}(\theta_{k+1})]\leq \hat{L}(\theta_k) - \eta_k \norm{\nabla \hat{L}(\theta_k)}^2 + \eta^2_k \frac{M}{2}\mathbb{E}\norm{\nabla \hat{L}_{i_k}(\theta_k)}^2.
\end{equation}

Leading to the final bound, which looks like:

\begin{equation}
\min_{k=1,\dots,t} \mathbb{E}\norm{\nabla \hat{L}(\theta_k)}^2 \leq \frac{\hat{L}(\theta_1)-\hat{L}^*}{\sum^{t}_{k=1}\eta_k} + \frac{M}{2}\frac{\sum^{t}_{k=1}\eta^2_k\mathbb{E}\norm{\nabla \hat{L}_{i_k}(\theta_k)}^2}{\sum^{t}_{k=1}\eta_k}.
\end{equation}
For strongly convex functions, this implies convergence in expectation.
But assuming biased batch sampling we have an extra term:

\begin{align}
\label{eq:ourbound}
\min_{k=1,\dots,t} \mathbb{E}\norm{\nabla \hat{L}(\theta_k)}^2 & \leq\\ 
\frac{\hat{L}(\theta_1)-\hat{L}^*}{\sum^{t}_{k=1}\eta_k} + \frac{M}{2}\frac{\sum^{t}_{k=1}\eta^2_k \mathbb{E}\norm{\nabla \hat{L}_{i_k}(\theta_k)}^2}{\sum^{t}_{k=1}\eta_k} & - \mathbb{E}\Bigg[ \frac{\sum^{t}_{k=1}{\eta_k \nabla \hat{L}(\theta_k)^\top \Big(\mathbb{E}\big[\nabla \hat{L}_{i_k}(\theta_k)\big]-\nabla \hat{L}(\theta_k)\Big)}}{\sum^{t}_{k=1}{\eta_k}}\Bigg].
\end{align}

In our specific setup the step size is fixed, making the bound simpler:
\begin{align}
\label{eq:ourbound_fixedstep}
\min_{k=1,\dots,t} \mathbb{E}\norm{\nabla \hat{L}(\theta_k)}^2 \leq 
\frac{\hat{L}(\theta_1)-\hat{L}^*}{\eta t} + \frac{M\eta}{2}\mathbb{E}\norm{\nabla \hat{L}_{i_k}(\theta_k)}^2 - \mathbb{E}\Big[\hat{L}(\theta_k)^\top \Big(\mathbb{E}\big[\nabla \hat{L}_{i_k}(\theta_k)\big]-\nabla \hat{L}(\theta_k)\Big)\Big].
\end{align}
A biased bound varies from the unbiased one in two terms: $\mathbb{E}\norm{\nabla \hat{L}_{i_k}(\theta_k)}^2$ and $\mathbb{E}\Big[\hat{L}(\theta_k)^\top \Big(\mathbb{E}\big[\nabla \hat{L}_{i_k}(\theta_k)\big]-\nabla \hat{L}(\theta_k)\Big)\Big]$. The bound will grow if the former term becomes larger, while the latter becomes large and negative.  

The first two terms in equation \ref{eq:ourbound_fixedstep} can be made arbitrarily small by a suitable choice of $\eta$ and $t$, under an assumption of bounded variance. The last term, on the other hand, does not directly depend on $\eta$ or $t$ in an obvious way. To be more precise, this term can be explicitly manipulated to produce a better attack against SGD convergence. In particular, from the expression above, the attacker needs to pick out batches such that the difference between the batch gradient and the true gradient is in the opposite direction from the true gradient. In this paper, instead of the gradient of the loss, we approximate this information by using the loss error term directly, which is much less expensive and can be utilized in practice. 

In particular, we observe that the optimisation does not converge even for a simple two-variable linear regression as is shown in~\Cref{sec:reg_example}.

% Finally, it is worth noting that another way to think about the datapoint covariance is local elasticity~\cite{he2019local}. Here, the attacker can try maximising local elasticity of datapoints within the batch, promoting explicit biases. 

\section{Upper bound on sample size for poisoning attacks}
\label{sec:pois_just}
In this section, we further investigate an attacker's ability to approximate out-of-distribution data using natural data. In the limit of large batch size, we expect the gradients of the input to be normally distributed due to the central limit theorem. As a result, we expect to be able to approximate any vector in the limit of infinite batches, as long as we sample for long enough. To make this statement more concrete, we compute an upper bound on the sample size for a fixed $(1-p)$-confidence interval of size $2\epsilon$ as follows.
Using the notation from the section above, denote individual item losses $L_j(\theta)$ such that $\hat{L}_{i_k}(\theta)=\frac{1}{B}\sum^{i_k+B}_{j=i_k}L_j(\theta)$, where $B$ is the batch size. The attacker aims to pick $j\sim J$, such that we can match the target gradient with a natural one: 
\begin{equation}
\label{eq:grad_approx}
    \nabla L^\dagger(\theta)=\frac{1}{B}\sum^{B}_{j=1}\nabla L_j(\theta).
\end{equation}
As stated previously, we will assume in our calculations that batch size is large enough for us to approximate the right hand side of \Cref{eq:grad_approx} using the central limit theorem, reducing our problem to finding an optimal sample $y\sim\mathcal{N}(\mu,\sigma^2)$ such that:
\begin{equation}
% \label{eq:grad_approx}
    \norm{\nabla L^\dagger(\theta)-y}\leq\epsilon
\end{equation}
Let $Z\sim\mathcal{N}(0,1)$, with CDF $\Phi$ and PDF $\phi$. Let $Y_1,...,Y_n$ be iid $\mathcal{N}(\mu,\sigma)$. Let $K_i= \norm{\nabla L^\dagger-Y_i}$ have CDF $\Phi^{'}$. We want $K^{(1)}=\min{K_i}$ to be within $\epsilon$ of the desired value with probability $1-p$:
\begin{align}
% \label{eq:_approx}
    \mathbb{P}(K^{(1)}\leq\epsilon)&=1-p \iff\\
    1-(1-\Phi^{'}(\epsilon))^n &= 1-p \iff\\
    \ln{p}&=n\ln{(1-\Phi^{'}(\epsilon))}\iff\\
    n&=\frac{\ln{p}}{\ln{(1-\Phi^{'}(\epsilon))}}
\end{align}
Now, in the case of $1D$ and $l_1$-norm, $\Phi^{'}(\epsilon)=\Phi(\frac{\epsilon-\mu+\nabla L^\dagger}{\sigma})-\Phi(\frac{-\epsilon-\mu+\nabla L^\dagger}{\sigma})$. Hence our equation for $n$ is:
\begin{equation}
    n=\frac{\ln{p}}{\ln{\big[1-\Phi(\frac{\epsilon-\mu+\nabla L^\dagger}{\sigma})+\Phi(\frac{-\epsilon-\mu+\nabla L^\dagger}{\sigma})\big]}}
\end{equation}
In fact for small values of $\frac\epsilon\sigma$, we can expand to first order and approximate as:
\begin{equation}
    n\approx\frac{\ln{p}}{\ln{\big[1-2\frac\epsilon\sigma\phi(\frac{-\mu+\nabla L^\dagger}{\sigma})\big]}}
\end{equation}
where we can approximate true parameters through $\mu=\frac{1}{N}\sum_{i} \hat{L}_i(\theta)$, $\sigma=\frac{1}{B(N-1)}\sum_{i} (\hat{L}_i(\theta)-\mu)^2$.

In the general case, we are dealing with multidimensional gradients. However we can once again invoke CLT to approximate the RHS with a multivariate normal distribution $y\sim\mathcal{N}(\boldsymbol{\mu},\boldsymbol{\Sigma})$. Given this reformulated problem, we can see that in the general case, the reconstruction is impossible -- as the covariance matrix must be non-singular. This can be seen from the following simple example. Say we are trying to approximate
$y = \begin{bmatrix}
           1 \\
           1 
         \end{bmatrix}$
using samples from the distribution
$\begin{bmatrix}
           X \\
           2X 
         \end{bmatrix}$
where $X$ is a Gaussian random variable. Clearly we can not get within \textbf{any} accuracy with this reconstruction. In fact the closest one can get is within 0.5 at $x=0.5$.
Therefore, we will assume that we have a non-singular covariance matrix. 
Write $\boldsymbol{Y} = A\boldsymbol{Z} + \boldsymbol{\mu}$, where $\boldsymbol{\Sigma} = AA^T$ and $\boldsymbol{Z}$ is a vector of independent gaussians. One can now attain exact bounds using \eg~non central chi-squared distribution, though for us a rough bound would be enough. For this, note 
$K_i= \norm{\nabla L^\dagger-\boldsymbol{Y}_i}\leq \norm{A^{-1}(\nabla L^\dagger-\boldsymbol{\mu})-\boldsymbol{Z}_i}\norm{A}$.
Therefore, we can see that the following $n$ is sufficient:
\begin{equation}
    n=\max_{i}\frac{\ln{1-[1-p]^\frac{1}{k}}}{\ln{\big[1-\Phi(\frac{\epsilon}{\norm{A}}+[A^{-1}(\nabla L^\dagger-\boldsymbol{\mu})]_i)+\Phi(\frac{-\epsilon}{\norm{A}}+[A^{-1}(\nabla L^\dagger-\boldsymbol{\mu})]_i)\big]}}
\end{equation}
Or similarly approximating for small values of $\frac{\epsilon}{\norm{A}}$:
\begin{equation}
    n=\max_{i}\frac{\ln{1-[1-p]^\frac{1}{k}}}{\ln{\big[1-\frac{2\epsilon}{\norm{A}}\phi([A^{-1}(\nabla L^\dagger-\boldsymbol{\mu})]_i)\big]}}
\end{equation}
% \todo{Give examples of values for some epsilon and p just to justify this}

\section{Second order correction term in expectation}
\label{sec:varproof}
In this section we are going to investigate how the lowest-order reorder-dependent component of SGD changes in the setting of a high-low attack in the limit of small step size. For the sake of simplicity we are going to assume one dimensional case. We assume standard stochastic gradient descent, as shown in~\Cref{apeq5}: 
\begin{equation} 
\label{apeq5}
\begin{split}
\theta_{N+1} & = \theta_{1} - \eta\nabla \hat{L}_1(\theta_1) - \eta\nabla \hat{L}_2(\theta_2) - \dots -\eta\nabla\hat{L}_m(\theta_m)\\
& = \theta_{1} - \eta\sum_{j=1}^{N}\nabla \hat{L}_j(\theta_1) + \eta^{2} \sum_{j=1}^{N}\sum_{k<j}\nabla\nabla \hat{L}_j(\theta_1)\nabla\hat{L}_k(\theta_1) + O(N^3 \eta^3)
\end{split}
\end{equation}
From this, we can see that the lowest order that is affected by reordering is the second order correction term, namely $$\xi(\theta) = \sum_{j=1}^{N}\sum_{k<j}\nabla\nabla \hat{L}_j(\theta_1)\nabla\hat{L}_k(\theta_1).$$

In the sequel, for simplicity, we define $\bar{\xi}(\theta) = \frac{2}{N(N-1)}\sum_{j=1}^{N}\sum_{k<j}g(X_j) X_k$, 
where $X_{k}$ and $g(X_{k})$ serve as surrogates for $\nabla \hat{L}_k$ and $\nabla\nabla \hat{L}_k$, respectively. 
Further assume that $X_k$ are i.i.d., as in~\cite{smith2021origin}, with mean $\mu$ and variance $\sigma^2$. Without loss of generality we assume that $\mu > 0$.

Under this assumption, the expected value $\mathbb{E}(\bar\xi) = \mu \mathbb{E}(g(X_i))$. 
However, for the attack we will reorder the $X_i$ such that $X_{(1)} > X_{(2)} > \dots > X_{(N)}$. As a result, the $X_{(i)}$ are no longer identically distributed and can be described as $k$-order statistics. Define $\xi^\dagger = \frac{2}{N(N-1)}\sum_{j=1}^{N}\sum_{k<j}g(X_{(j)}) X_{(k)}$

\begin{theorem}

Given $0 < m \leq g(X_i) \leq M$, the following is true:

\begin{align}
\label{boundproof1}
\mu m + \sigma K_n m & \leq \mathbb{E}(\xi^\dagger) \leq \mu M + \sigma K_n M\\
\mu m &\leq \mathbb{E}(\bar\xi) \leq \mu M,
\end{align}

where $$K_n = \frac{2}{N(N-1)}\sum_{i=1}^N \sum_{j=1}^{i-1} \mathbb{E}(Z_{(j)}),$$ and $Z_{(j)}=\left(\frac{X_{(j)} - \mu}{\sigma}\right)$. Let $Z_j$ have probability density function $\phi$ and cumulative density function $\Phi$, and be bounded. Then, in the limit $N\rightarrow \infty$, the condition for attack success is 

$$\frac{\sigma}{\mu} \geq K_{\infty} \left(\frac{M}{m}-1\right),$$

where $$K_{\infty}=\lim_{N \to \infty} K_N= 2\int_{u=-\infty}^{\infty} \int_{v=u}^{\infty} v\phi(u)\phi(v)dudv$$

\end{theorem}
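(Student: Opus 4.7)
The plan is to prove the two pairs of bounds separately and then combine them with an order-statistic limit to obtain the asymptotic attack condition, splitting the argument into three stages: the i.i.d.\ bound on $\mathbb{E}(\bar\xi)$, the reordered bound on $\mathbb{E}(\xi^\dagger)$, and the passage $N\to\infty$. The bound on $\mathbb{E}(\bar\xi)$ is essentially a one-line computation: for $k<j$ the $X_j,X_k$ are independent, so $\mathbb{E}[g(X_j)X_k] = \mathbb{E}[g(X_j)]\mathbb{E}[X_k] = \mu\mathbb{E}[g(X_j)]$, and applying the pointwise bound $m \le g \le M$ gives $\mu m \le \mathbb{E}[g(X_j)X_k] \le \mu M$; summing over the $\binom{N}{2}$ pairs and using the $\frac{2}{N(N-1)}$ normalisation yields the claimed bound.

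For $\mathbb{E}(\xi^\dagger)$, I would decompose $X_{(k)} = \mu + \sigma Z_{(k)}$, so that
\[
\mathbb{E}[g(X_{(j)})X_{(k)}] = \mu\,\mathbb{E}[g(X_{(j)})] + \sigma\,\mathbb{E}[g(X_{(j)})Z_{(k)}].
\]
Since reordering does not change the multiset, $\mathbb{E}[g(X_{(j)})] = \mathbb{E}[g(X_j)] \in [m,M]$, so after summation the first term contributes a quantity in $[\mu m,\mu M]$. For the second term I would push the bound $g\in[m,M]$ past the expectation against $Z_{(k)}$ to get $m\,\mathbb{E}[Z_{(k)}] \le \mathbb{E}[g(X_{(j)})Z_{(k)}] \le M\,\mathbb{E}[Z_{(k)}]$; summing over $k<j$ with the normalisation reproduces exactly the definition of $K_n$, and hence the $\sigma K_n m$ and $\sigma K_n M$ correction terms.

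The attack condition is then obtained by requiring the lower bound of $\mathbb{E}(\xi^\dagger)$ to dominate the upper bound of $\mathbb{E}(\bar\xi)$, i.e.\ $\mu m + \sigma K_n m \ge \mu M$, and isolating $\sigma/\mu$. For the $N\to\infty$ step I would invoke the classical asymptotic $\mathbb{E}[Z_{(j)}] \to \Phi^{-1}(1-j/N)$ under the descending convention, rewrite $K_n$ as a Riemann sum for $2\int_0^1\!\int_0^x \Phi^{-1}(1-y)\,dy\,dx$, and change variables $u=\Phi^{-1}(1-x)$, $v=\Phi^{-1}(1-y)$ (so $y<x$ becomes $v>u$ with Jacobian $\phi(u)\phi(v)$), which reproduces the stated double-integral form of $K_\infty$.

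The main obstacle is the second stage: because the order statistics are strongly coupled, $g(X_{(j)})$ and $Z_{(k)}$ are not independent, and the naive pointwise inequality $m Z_{(k)} \le g(X_{(j)}) Z_{(k)} \le M Z_{(k)}$ only holds on the event $\{Z_{(k)}\ge 0\}$. Making the claimed lower bound rigorous therefore seems to need either an additional hypothesis placing the $X_{(k)}$ on the positive half-line (consistent with $\mu>0$ if $\mu$ is large enough relative to $\sigma$), or a finer argument splitting $Z_{(k)}$ into positive and negative parts and exploiting that the weight sequence in $K_n$ is dominated by the largest-valued order statistics, so the negative-part contribution is controlled. The order-statistic asymptotics in the third stage are by comparison standard, modulo care with sign conventions in the descending-order change of variables.
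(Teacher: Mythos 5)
Your proposal follows essentially the same route as the paper: the same independence computation for $\mathbb{E}(\bar\xi)$, the same decomposition $X_{(k)}=\mu+\sigma Z_{(k)}$ with the pointwise bound on $g$ producing the $\sigma K_n m$ and $\sigma K_n M$ correction terms, the same comparison of the lower bound on $\mathbb{E}(\xi^\dagger)$ against the upper bound on $\mathbb{E}(\bar\xi)$ to get the attack condition, and the same quantile-asymptotic/Riemann-sum passage to the double-integral form of $K_\infty$. The sign obstruction you flag in stage two is genuine but is not resolved in the paper either --- it is dispatched by a footnote assuming (``without loss of generality'') a skewness condition making the relevant sums of $\mathbb{E}(Z_{(j)})$ non-negative, with the low-high attack prescribed otherwise --- so your proposed splitting into positive and negative parts would, if carried out, be more careful than the published argument.
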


\begin{proof}

\begin{align}
\mathbb{E}(\xi^\dagger) = \sum_{i=1}^{N} \sum_{j=1}^{i-1} \mathbb{E}\left((X_{(j)}-\mu) g(X_{(i)})\right) + \mu \sum_{i=0}^{N} \sum_{j=1}^{i-1} \mathbb{E}\left(g(X_{(i)})\right).
\end{align}

Hence, using the bounds on $g$\footnote{Without loss of generality, $Z_j$ is assumed to be positively skewed, such that the first sum in $K_n$ is non-negative. For a negatively skewed $Z_j$ one should instead use the low-high attack.}, 

\begin{align}
\frac{N(N-1)}{2}\mu m & \leq \mu \sum_{i=0}^{N} \sum_{j=1}^{i-1} \mathbb{E}\left(g(X_{(i)})\right) \leq \frac{N(N-1)}{2}\mu M\\
\sigma K_n m & \leq  \sum_{i=1}^{N} \sum_{j=1}^{i-1} \mathbb{E}\left((X_{(j)}-\mu) g(X_{(i)})\right) \leq \sigma K_n M
\end{align}

We find the bound in~\Cref{boundproof1}. In order for the attack strategy to work we require that the lower bound on $\xi^\dagger$ is larger than the upper bound on $\bar{\xi}$ \ie

\begin{align}
    \mathbb{E}(\xi^\dagger) \geq \sigma K_n m + \mu m \geq \mu M \geq \mathbb{E}(\bar\xi),
\end{align}

which is equivalent to 

\begin{align}
    \frac{\sigma}{\mu} K_n\geq \left(\frac{M}{m}-1\right).
\end{align}

In order to find the value of $K_n$ in the attack scenario we will use the following fact derived in \cite{chennotes}:

\begin{equation}
\sqrt{N} \left(Z_{([Np])}-\Phi^{-1}(1-p)\right)\xrightarrow{d} N \left( 0, \frac{p(1-p)}{[\phi(\Phi^{-1}(1-p))]^2} \right)
\end{equation}

% In particular, we see from this that \todo{Zak:Murat, is this true for convergence in distribution?} $ \xrightarrow{n\rightarrow\infty} \Phi^{-1}(1-p)$. 

First, consider the following sum, which we can rewrite as an integral:
\begin{align}
G_N
& := \frac{2}{N(N-1)} \sum_{i=1}^N \sum_{j=1}^{i-1} \Phi^{-1}(1-\frac{i}{N+1}),\\
\lim_{N \to \infty} G_N
& = 2\int_{x=0}^{1} \int_{p=0}^{x} \Phi^{-1}(1-p)dpdx \\
& = 2\int_{u=-\infty}^{\infty} \int_{v=u}^{\infty} v\phi(u)\phi(v)dudv \\
\end{align}

Using this we can now rewrite:
\begin{align}
K_{\infty}& = \lim_{N \to \infty}\frac{2}{N(N-1)} \sum_{i=1}^N \sum_{j=1}^{i-1} \mathbb{E}(Z_{(j)})\\
& = \lim_{N \to \infty}G_N + \frac{2}{N(N-1)\sqrt{N}} \sum_{i=1}^N \sum_{j=1}^{i-1}\mathbb{E} \sqrt{N} \left(Z_{(j)}-\Phi^{-1}(1-\frac{j}{N+1})\right)\\
\end{align}
Now the term under the expectation sign tends to a normal distribution with mean $0$~\cite{bahadur1966anote}. Since uniform integrability holds, we have
\begin{equation}
K_{\infty}=\lim_{N \to \infty} G_N= 2\int_{u=-\infty}^{\infty} \int_{v=u}^{\infty} v\phi(u)\phi(v)dudv .
\end{equation}

% Now, in the limit of large $N$ using $K$-th order statistics~\cite{chennotes}, for statistics $Y_{(i)}$ with mean 0, variance 1, with PDF $\phi$ and CDF $\Phi$:
% \begin{equation}
% \begin{split}
% \mathbb{E}(Y_{(i)}) & \approx  \Phi^{-1}[\frac{i}{N+1}].
% \end{split}
% \end{equation}

\end{proof}

To summarize:

\begin{itemize}
    \item Above we find the condition for the gradient distribution under which the attack directly causes an increase in the second order correction term of SGD. Given exact form of $\phi$, an attacker can exactly evaluate $K_\infty$ and reason about success of the attack.
    \item We can consider the specific case of normally distributed $X_i$, where $K_\infty$ evaluates to be equal to $\frac{1}{\sqrt{\pi}}$. In this case, the condition becomes $\frac{\sigma}{\mu}\geq \sqrt{\pi}\left(\frac{M}{m}-1\right)$.
    \item In normal case scenario for neural network the batch sizes are chosen to be large enough that gradients can be assumed to be normally distributed due to CLT. As a result, here we show that an attacker can break learning by appropriately changing the order of data.
    \item Theory outlined here highlights the differences in attack performance observed for batch reorder and reshuffle. To be precise, batch reorder does not cause as much disruption as batch reshuffle, due to a smaller value of $\sigma$, whereas $\mu$ remains exactly the same. 
\end{itemize}

\section{Integrity attacks on Computer Vision in white and blackbox setups}
\label{sec:eval_whiteblack}
\begin{figure}[h]
    \centering
    \includegraphics[width=0.8\linewidth]{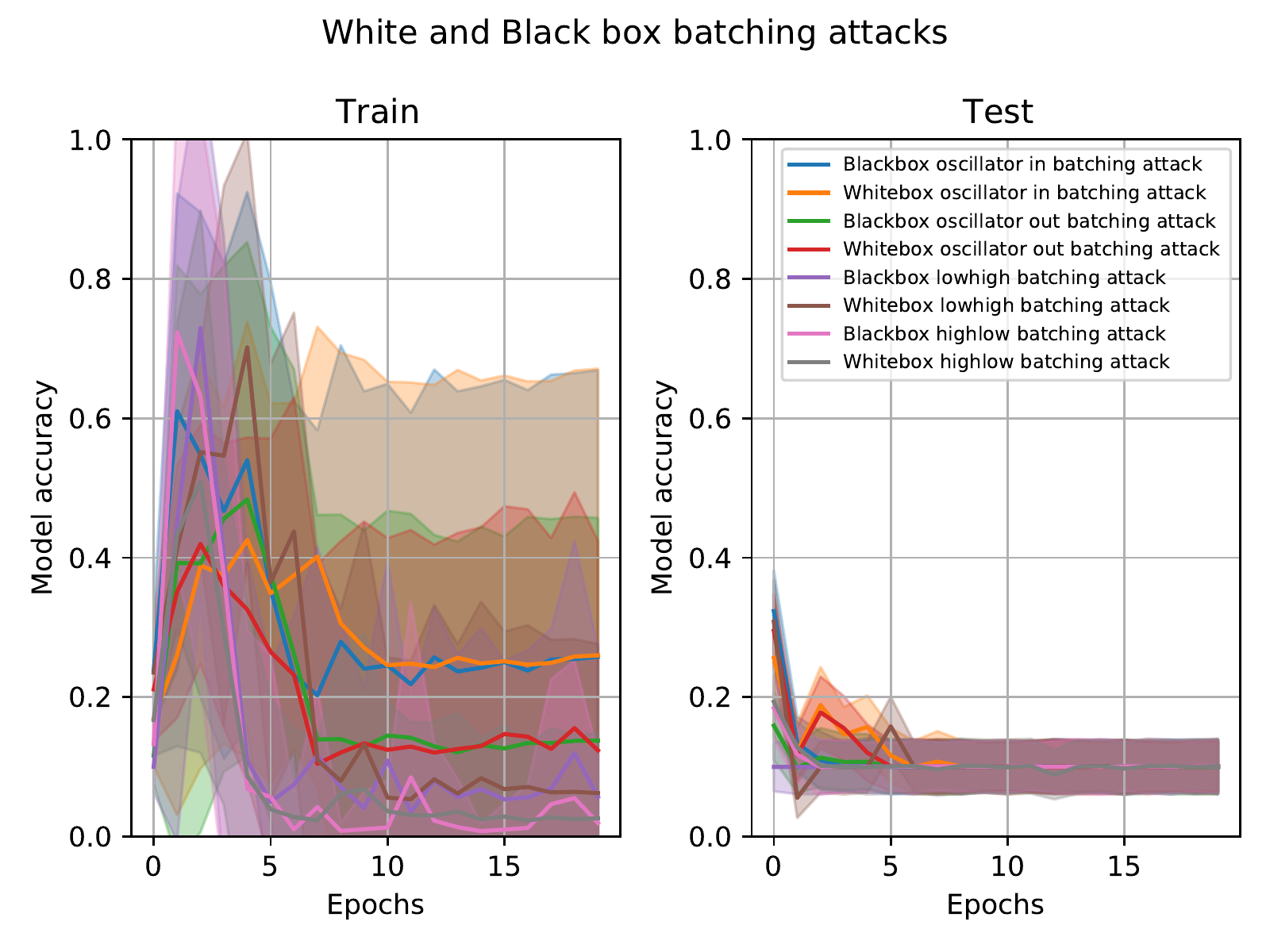}
    \caption{Comparison of White and Blackbox attacks against ResNet-18 network and CIFAR-10 dataset. Error bars shown standard deviation of per-batch accuracy.}
    \label{fig:boxesattacks}
\end{figure}

% Commands: 
% python plotdatas_whitebox_combined.py

In this section we evaluate the performance of reordering attacks in the whitebox and blackbox settings. In the whitebox case, we assume that the attacker can compute the loss directly to perform the attacks. We show the results in~\Cref{fig:boxesattacks}. Attacks in both settings significantly reduce model accuracy at train-and-test time. Importantly, we observe that both blackbox and whitebox attacks significantly degrade model accuracy, with the blackbox attack also having a smaller standard deviation, demonstrating that the batching attacker is a realistic threat. We show more results in~\Cref{sec:whiteboxblackbox} and \Cref{fig:whitebox_interintra}. 

\section{Extended integrity attack results}
\label{sec:extended_results}

In~\Cref{tab:extended_integrity} we present extended version of the results present in~\Cref{tab:integrity_results}. We extend the attack scenario here to include cases where the attacker resamples data every epoch. It makes a difference for two reasons: first, the batch-contents change for batch reorder~\ie~the batch contents change between epochs; and second, the non-deterministic data augmentations (random crop + random rotation of both CIFAR10 and CIFAR100) get recalculated. The results illustrate that resampling has a significant impact on the performance -- sometime even leading to an improvement in performance after reordering. Indeed, batch reorder results follow the theoretical findings presented in~\Cref{sec:varproof}, where we show that the attack performance is bounded by relative gradient magnitudes. Both batch reorder and reshuffle attacks target the same phenomenon, with the sole difference residing in how well gradients approximate true gradients and variance across batches. Finally, we find batch reshuffle with Low High, High Low and Oscillations outward attack policies perform consistently well across computer vision and natural language tasks. 

\section{BOP with batch replacement}
\label{sec:poisoning_batch}
\begin{figure}[h]%
    \centering
    \subfloat[ResNet-18]{{\includegraphics[width=0.45\linewidth]{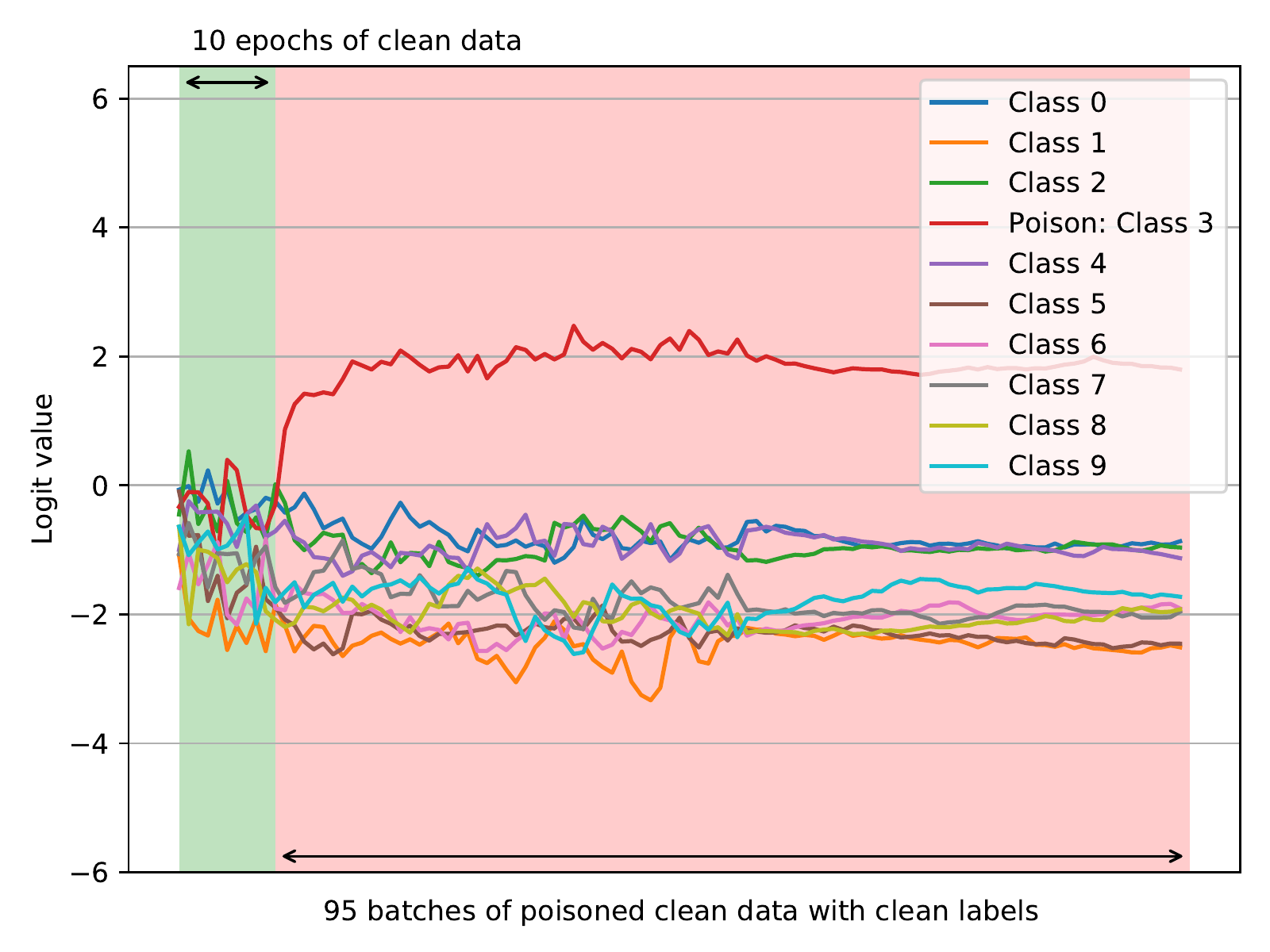} }} %
    \qquad
    \subfloat[VGG-11]{{\includegraphics[width=0.45\linewidth]{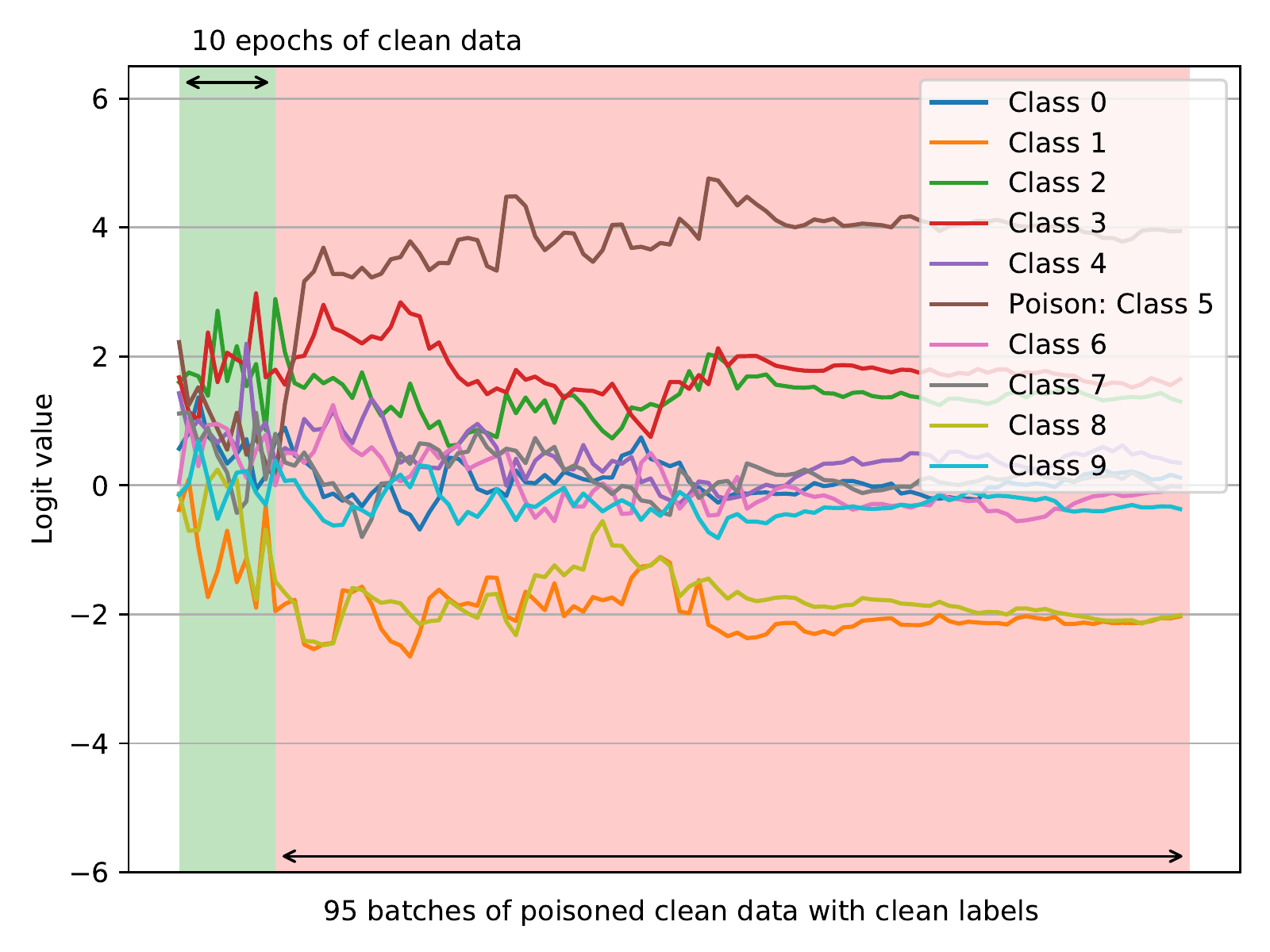} }} %
    \qquad
    \subfloat[VGG-16]{{\includegraphics[width=0.45\linewidth]{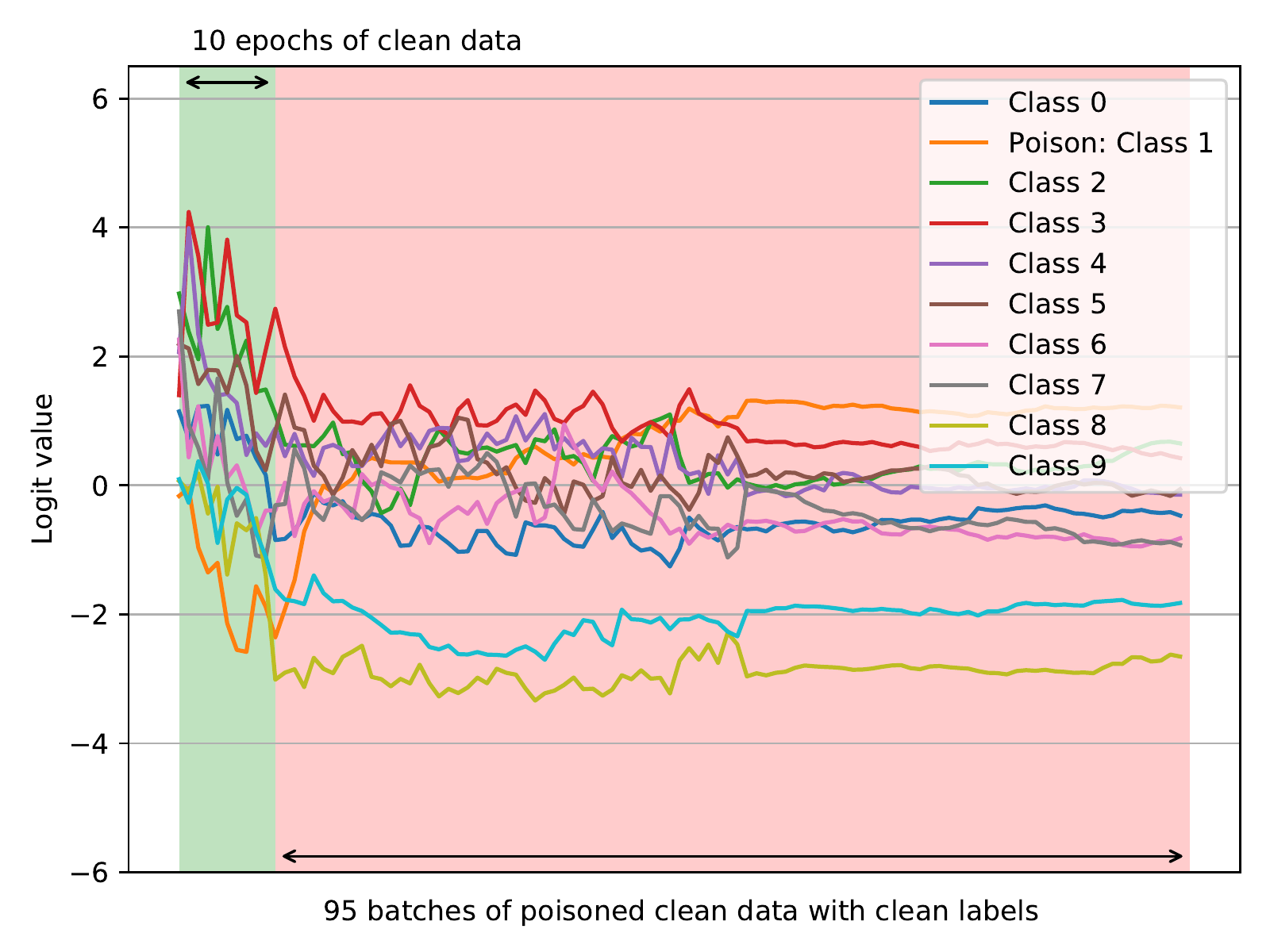} }} %
    \qquad
    \subfloat[Mobilenet]{{\includegraphics[width=0.45\linewidth]{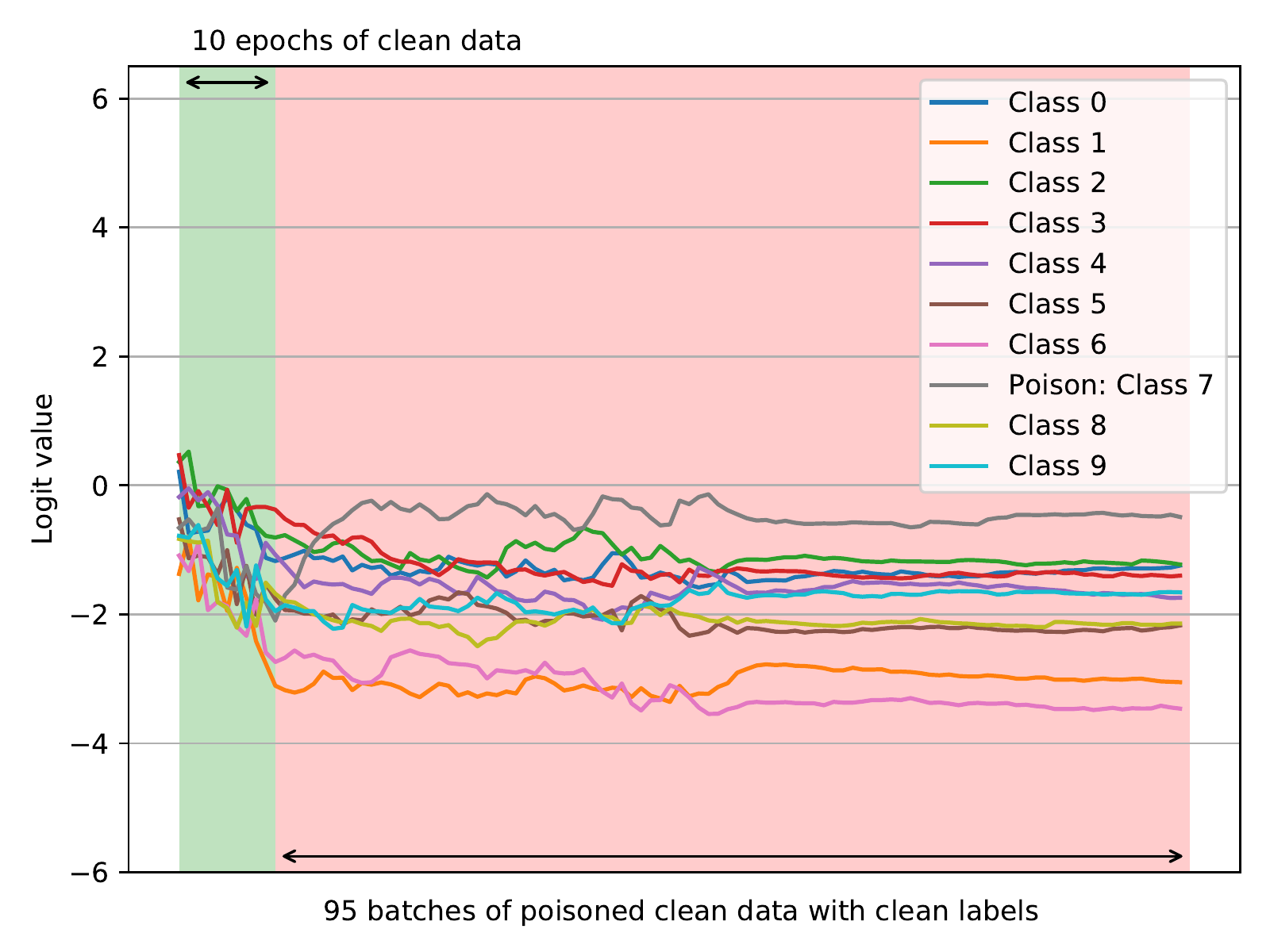} }} %
    \caption{Logit values of a network with 10 epochs of clean data and 95 batches of poisoned data. It takes around 3--5 poison ordered batches for ResNet-18 and VGG-11, 10 for Mobilenet, whereas VGG-16 takes about 50 batches. After poisoning, all models lost at most 10\% accuracy. }
    \label{fig:poison_logits}%
\end{figure}

In this section we present results for a single datapoint poisoning with BOP. Here we train the network with clean data for 10 epochs, and then start injecting 95 BOP batches to poison a single random datapoint. We find that poisoned datapoints converge relatively quickly to the target class. \Cref{fig:poison_logits} shows the performance of a single-datapoint poison for four different architectures. In each a fixed random target point ends up getting the target class after a few batches. For all but one, the point ends up getting to the target class within 10 batches; for VGG-16 it took around 50.

\section{Stochastic gradient descent with linear regression}
\label{sec:reg_example}

In this section we investigate the impact of ordered data on stochastic gradient descent learning of a linear regression model. The problem in this case is optimising a function of two parameters: 
\begin{align*}
J(\theta_0,\theta_1)&=\frac1n \sum_{i=1}^n \norm{\beta_\theta(x_i) - y_i}^2 \\
\beta_\theta(x) &= \theta_1 x + \theta_0
\end{align*}
By considering data points coming from $y=2x+17+\mathcal{N}(0,1)$, we attempt to approximate the values of $\theta_0,\theta_1$. We observe that even in such a simple 2-parameter example, we are able to disrupt convergence by reordering items that the model sees during gradient descent. This shows that the inherent 'vulnerability' lies in the optimization process itself, rather than in overparametrized learned models. The following two subsections investigate this problem in three different choices of batch size and learning rate. 

\subsection{Batch reshuffling}
\begin{figure}[h]%
    \centering
    \subfloat[\centering $\theta_0$ parameter]{{\includegraphics[width=0.45\linewidth]{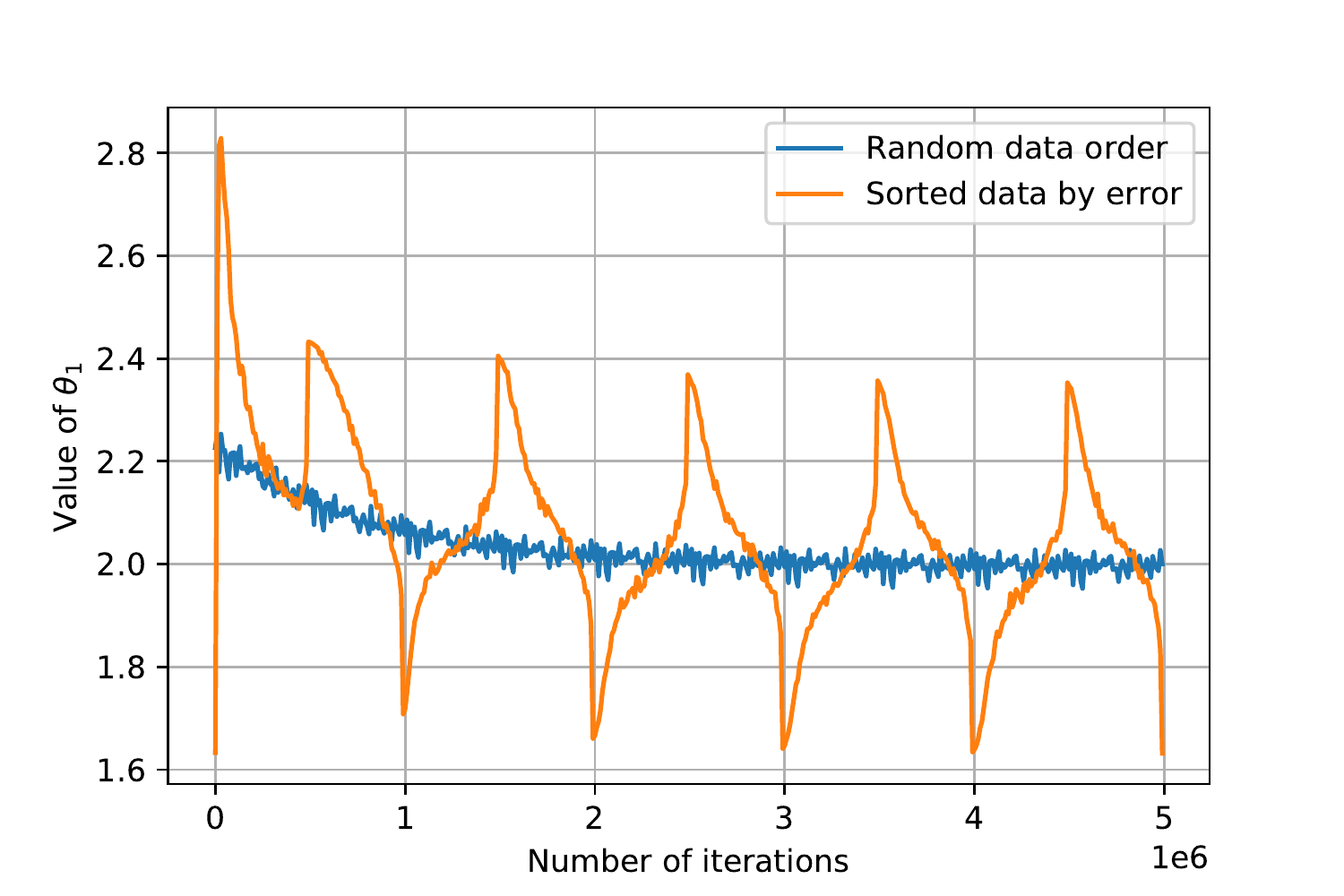} }}%
    \qquad
    \subfloat[\centering $\theta_1$ parameter]{{\includegraphics[width=0.45\linewidth]{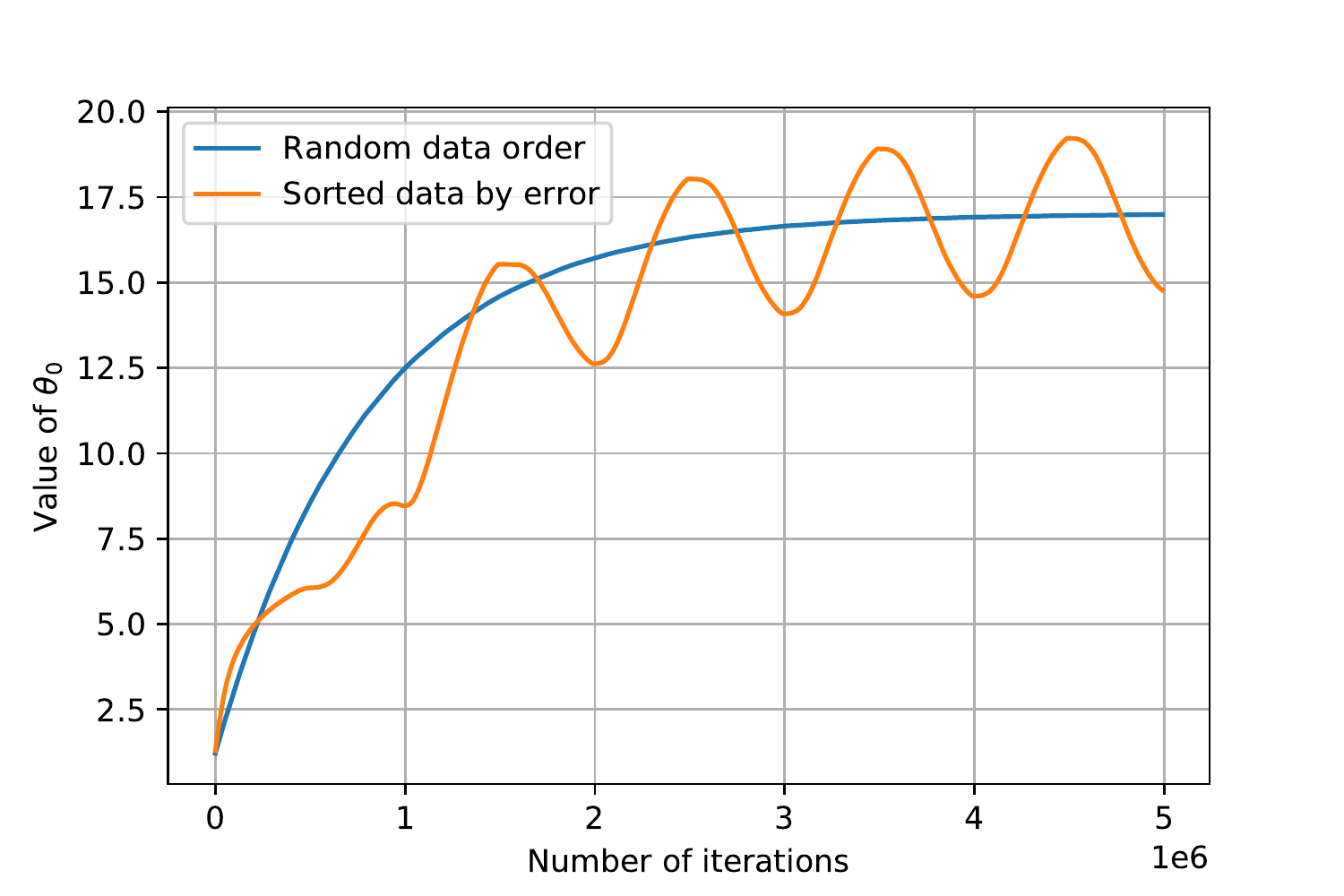} }}%
    \caption{Individual linear regression parameters changing over the course the training}%
    \label{fig:sgd_casestudy_1}%
\end{figure}

\begin{figure}[h]
    \centering
    \includegraphics[width=0.7\linewidth]{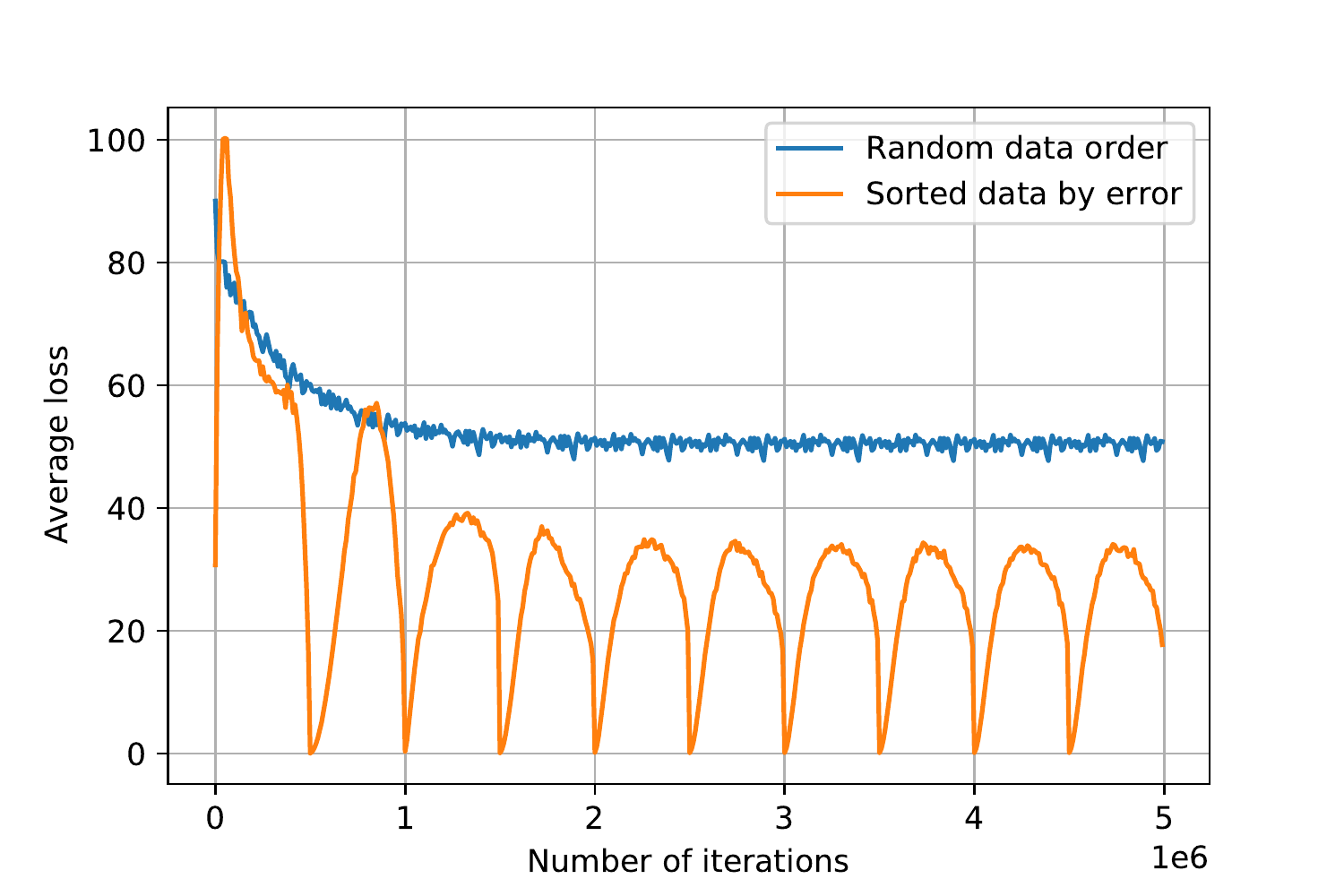}
    \caption{Average training dataset loss during stochasitc gradient descent of a linear regression model with two parameters. Random sampling is shown in blue, sorted items by error are shown in yellow. }
    \label{fig:sgd_casestudy}
\end{figure}

\begin{figure}[h]%
    \centering
    \subfloat[\centering $\theta_0$ parameter]{{\includegraphics[width=0.45\linewidth]{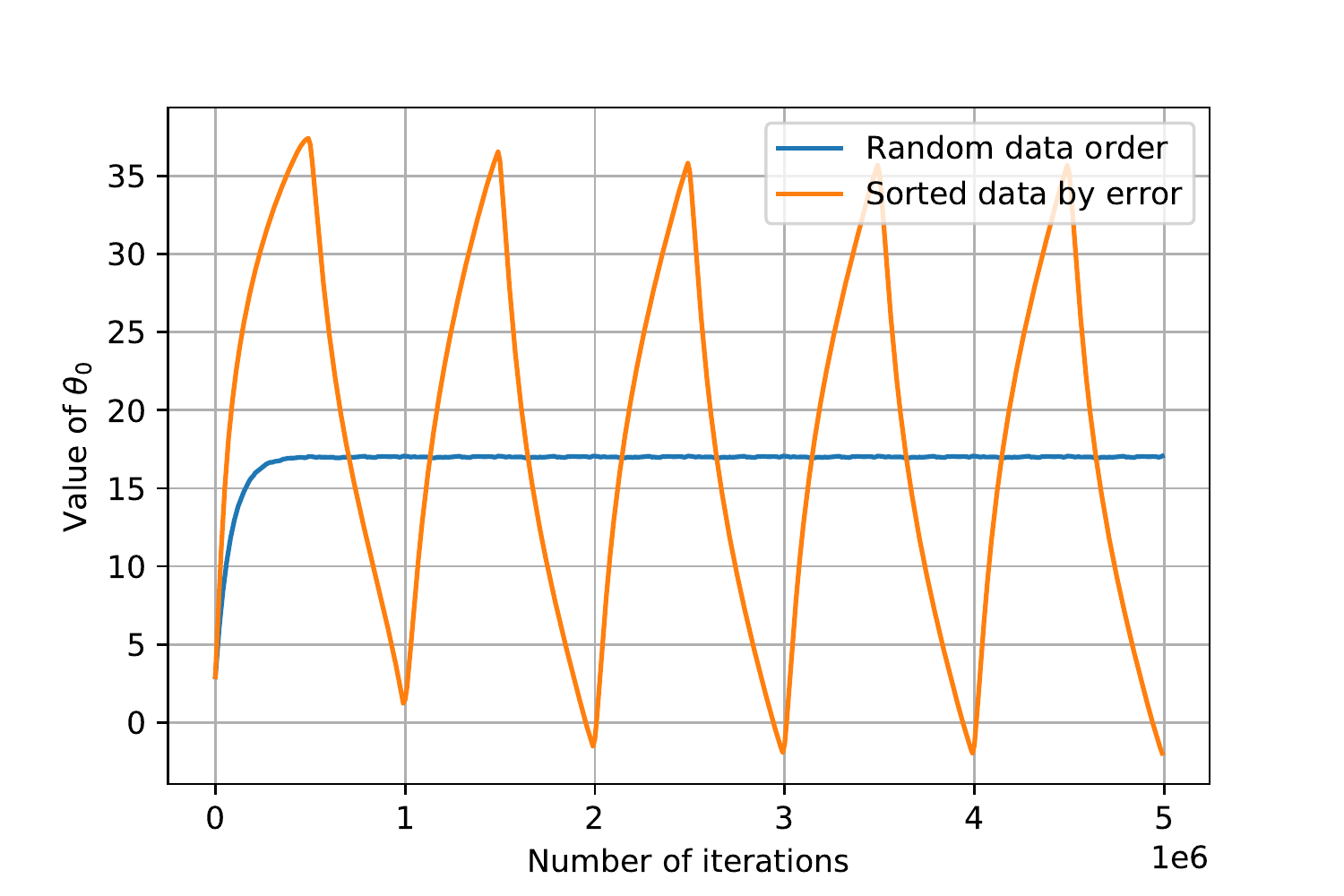} }}%
    \qquad
    \subfloat[\centering $\theta_1$ parameter]{{\includegraphics[width=0.45\linewidth]{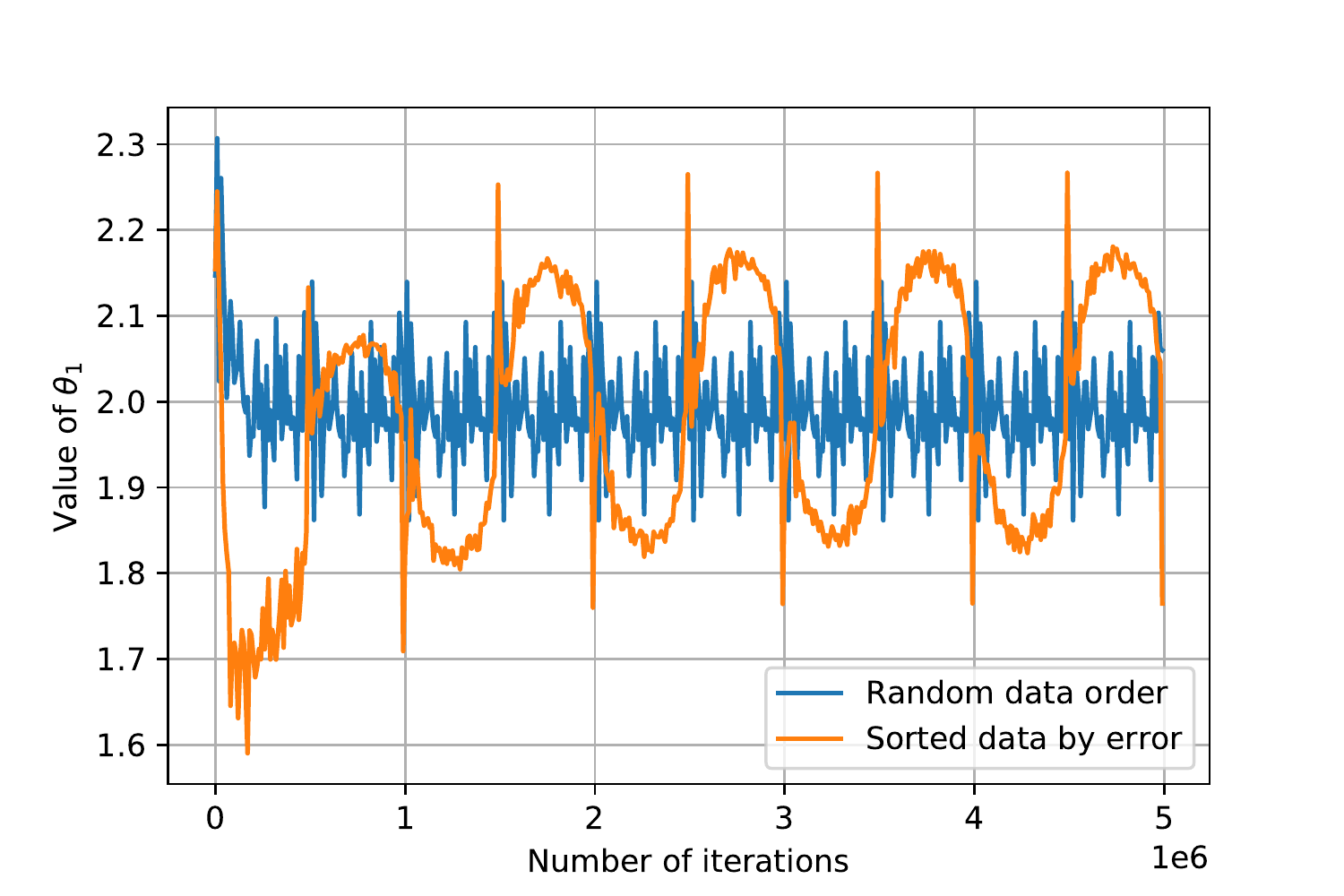} }}%
    \caption{Individual linear regression parameters changing over the course the training for larger step size}%
    \label{fig:sgd_casestudy_2}%
\end{figure}

\begin{figure}[h]%
    \centering
    \subfloat[\centering $\theta_0$ parameter]{{\includegraphics[width=0.45\linewidth]{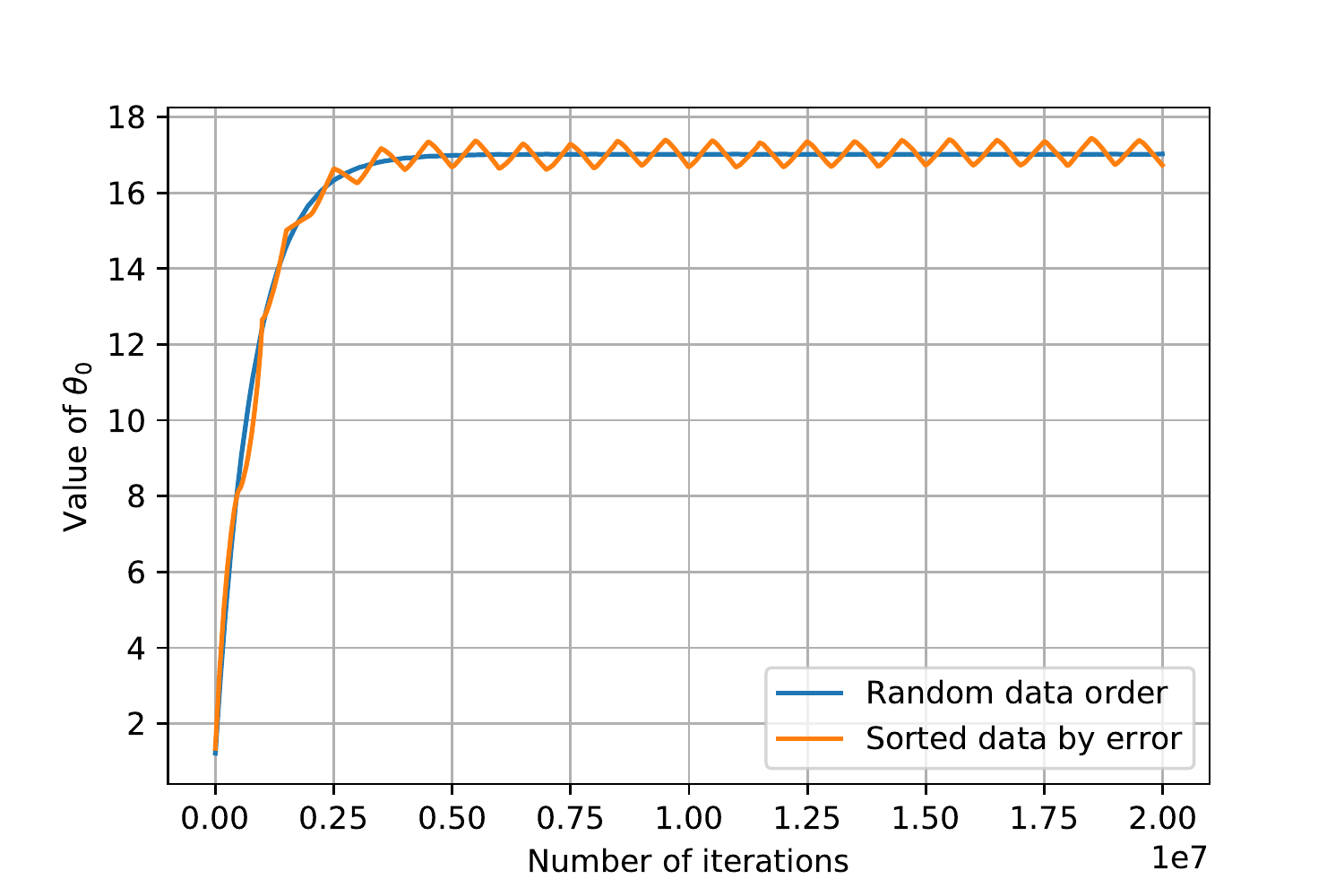} }}%
    \qquad
    \subfloat[\centering $\theta_1$ parameter]{{\includegraphics[width=0.45\linewidth]{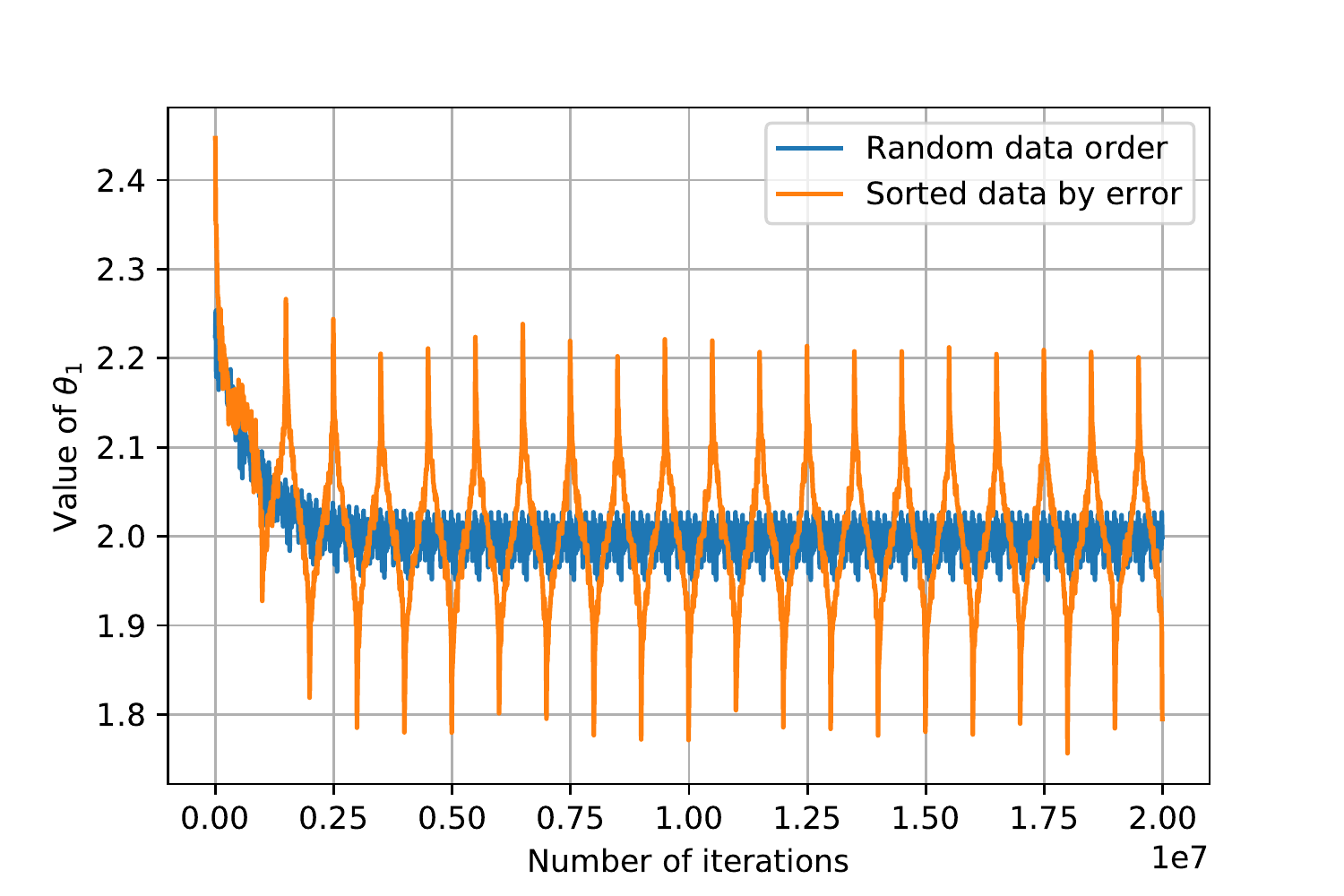} }}%
    \caption{Individual linear regression parameters changing over the course the training for larger batch size ($B$=4) }%
    \label{fig:sgd_casestudy_3}%
\end{figure}

\Cref{fig:sgd_casestudy} shows average error per data point, when training is done over randomly-sampled data and ordered by error data. A linear regression here has an optimal solution that the blue line reaches while the orange line oscillates quite far away from it. In fact, by looking at the parameter behaviour, as shown on \Cref{fig:sgd_casestudy_1}, we can see that the parameters end up oscillating around the global minimum, never reaching it. Indeed, we find that with error-ordered data, gradient descent exhibits strong overfitting to the data samples in the current minibatch, and fails to reach the optimal solution. This effect is similar to the one observed for SGD with neural networks. In addition, we can also see the dependence of the oscillations on the learning rate. On \Cref{fig:sgd_casestudy_2} by increasing the step size by 10 times from $5e^{-6}$ to $5e^{-5}$, we are able to drastically increase oscillations for $\theta_0$. This behaviour is achieved when our minibatch size is chosen to be equal to 1. 

\subsection{Batch reordering}
By increasing the minibatch size, we are able to 'dampen' the oscillations observed in the previous subsection and converge to the optimal solution, as shown on \Cref{fig:sgd_casestudy_3}. This is also quite similar to the neural network case, as simple batch reordering is not able to achieve the same performance degradation as reshuffling.

\section{Batch reshuffling and hyperparameters}
\label{sec:otherresults}

We have thoroughly evaluated different combinations of hyperparameters for the integrity attack (\Cref{tab:hparam_table}) and show the results in~\Cref{fig:hyper1,fig:hyper2,fig:hyper3,fig:hyper4,fig:hyper5,fig:hyper6,fig:hyper7,fig:hyper8,fig:hyper9,fig:hyper10,fig:hyper11,fig:hyper12}.
% \begin{itemize}
%     \item Dataset: CIFAR10
%     \item Attack policies: HighLow, LowHigh, Oscillations in, Oscillations out
%     \item Batch sizes of 32, 64 and 128.
%     \item True model optimizers: Adam and SGD
%     \item Surrogate model optimizers: Adam and SGD
%     \item Learning rates of 0.1, 0.01 and 0.001
%     \item Surrogate learning rates of 0.1, 0.01 and 0.001
%     \item Moments of 0, 0.5 and 0.99
%     \item Surrogate moments of 0, 0.5 and 0.99
% \end{itemize}

\begin{table}[]
    \centering
    \begin{tabular}{ll}
    \toprule
    \textbf{Parameter} & \textbf{Values} \\
    \midrule
    Source model & ResNet18 \\
    Surrogate model & LeNet-5 \\
    Dataset & CIFAR10 \\
    Attack policies & [ HighLow, LowHigh, Oscillations in, Oscillations out ] \\ 
    Batch sizes & [32, 64, 128]\\
    True model optimizers & [Adam, SGD]\\
    Surrogate model optimizers & [Adam, SGD]\\
    Learning rates & [0.1, 0.01, 0.001]\\
    Surrogate learning rates & [0.1, 0.01, 0.001]\\
    Moments & [0, 0.5, 0.99]\\
    Surrogate moments & [0, 0.5, 0.99]\\
    \bottomrule 
    \end{tabular}
    \caption{Parameters searched}
    \label{tab:hparam_table}
\end{table}

\begin{figure}[h]%
    \centering
    \subfloat[High low batching]{{\includegraphics[width=0.4\linewidth]{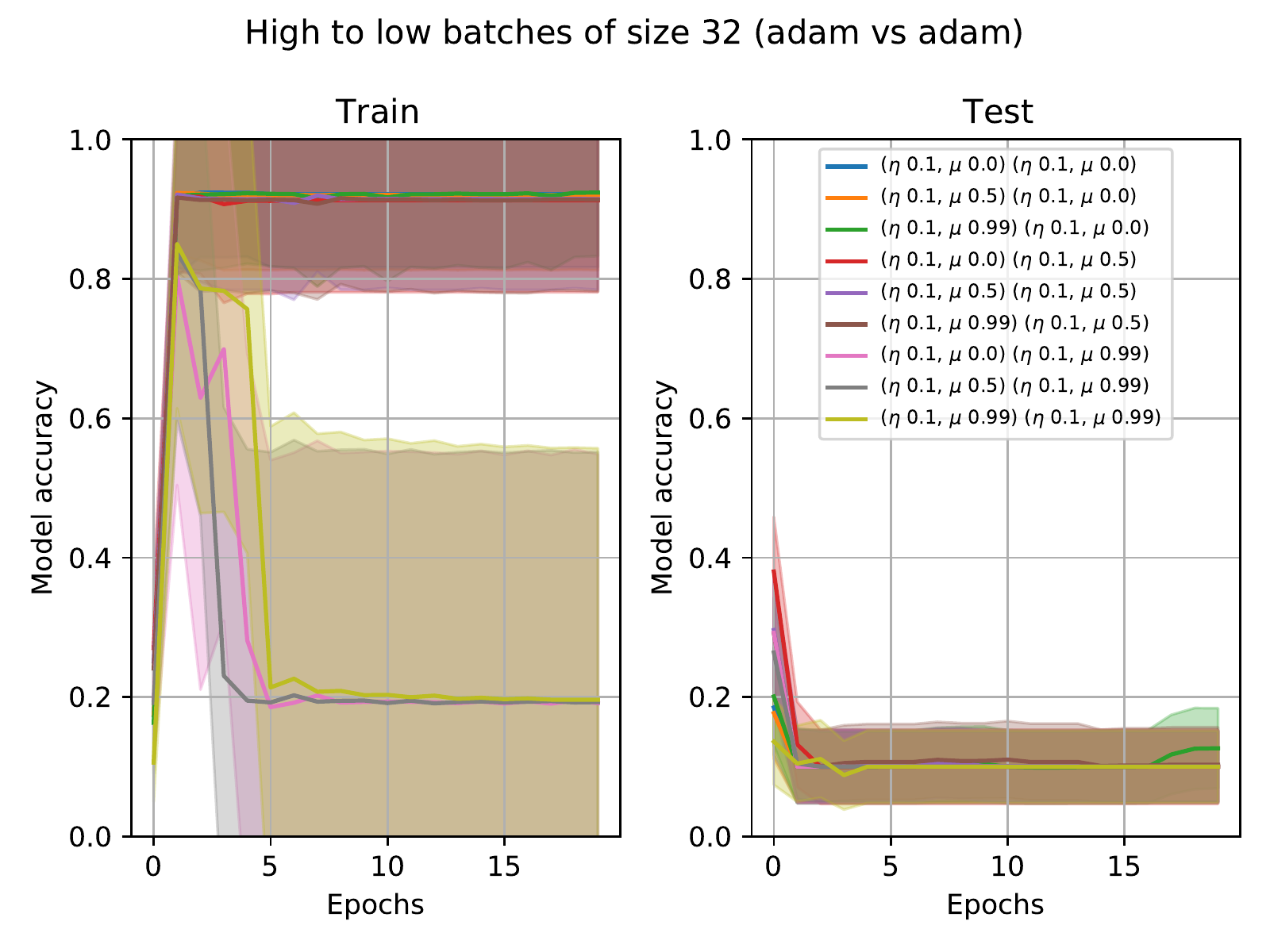} }}%
    \qquad
    \subfloat[Low high batching]{{\includegraphics[width=0.4\linewidth]{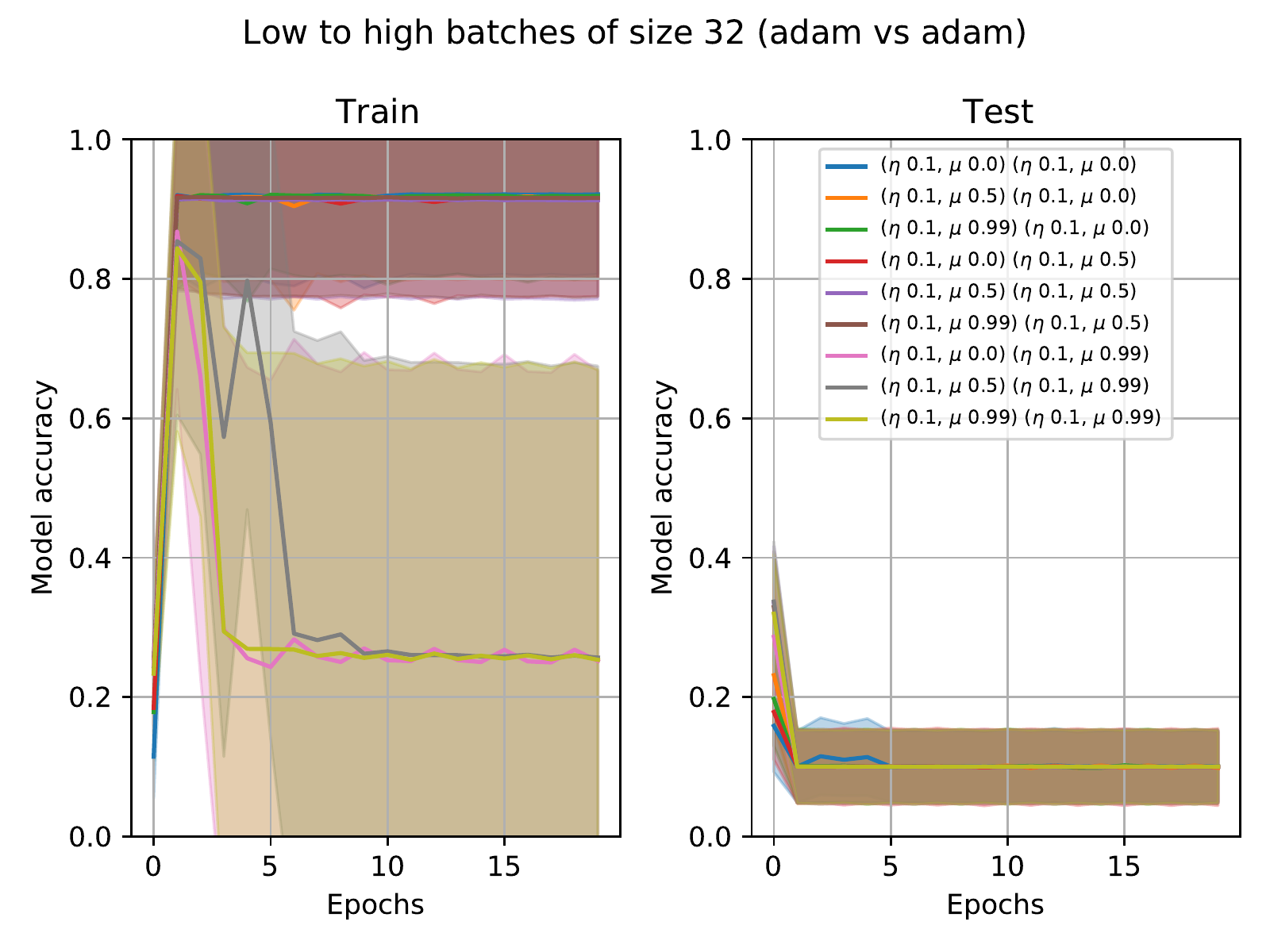} }}%
    \\
    \subfloat[Oscillating inward batching]{{\includegraphics[width=0.4\linewidth]{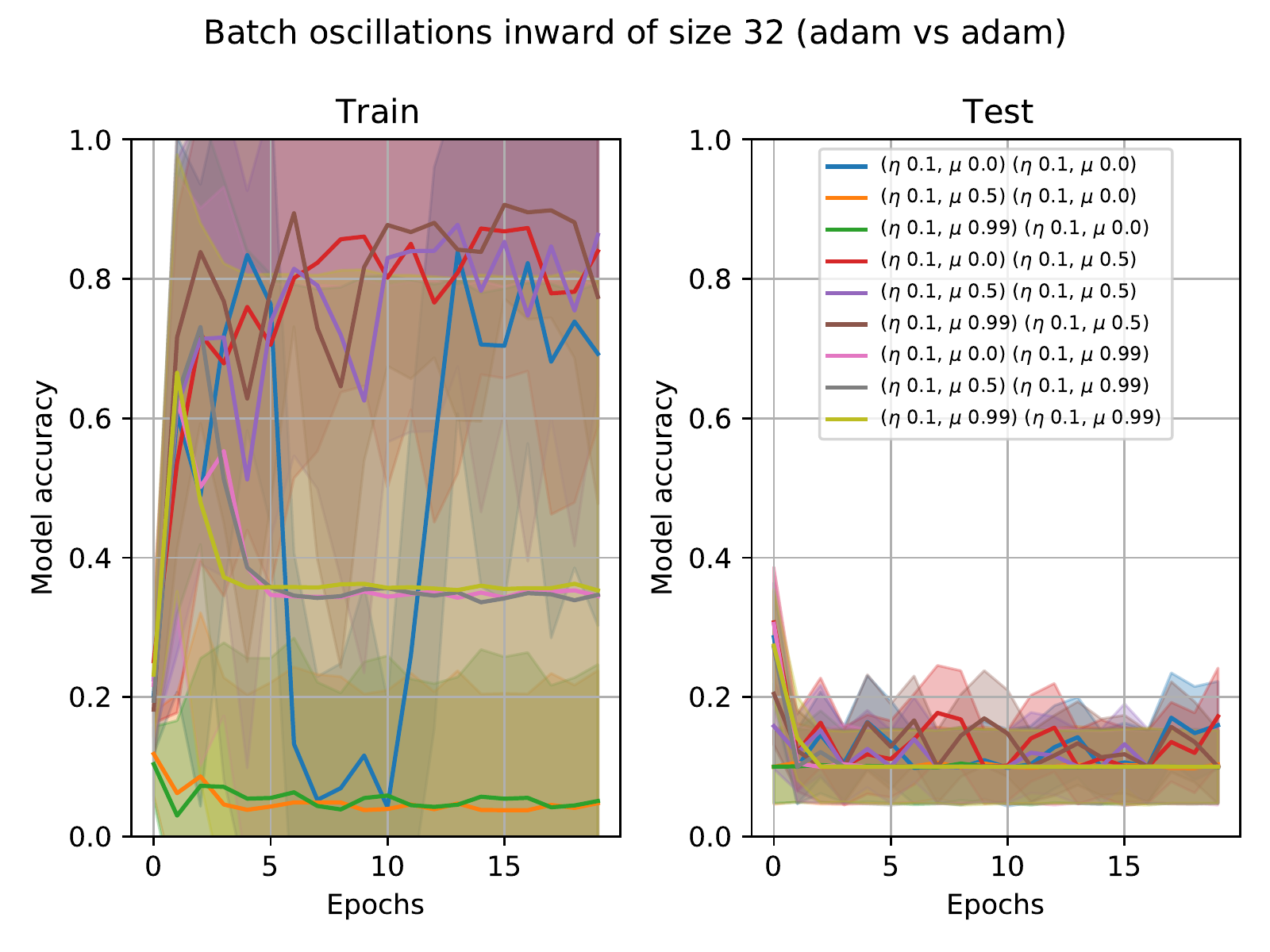} }}%
    \qquad
    \subfloat[Oscillating outward batching]{{\includegraphics[width=0.4\linewidth]{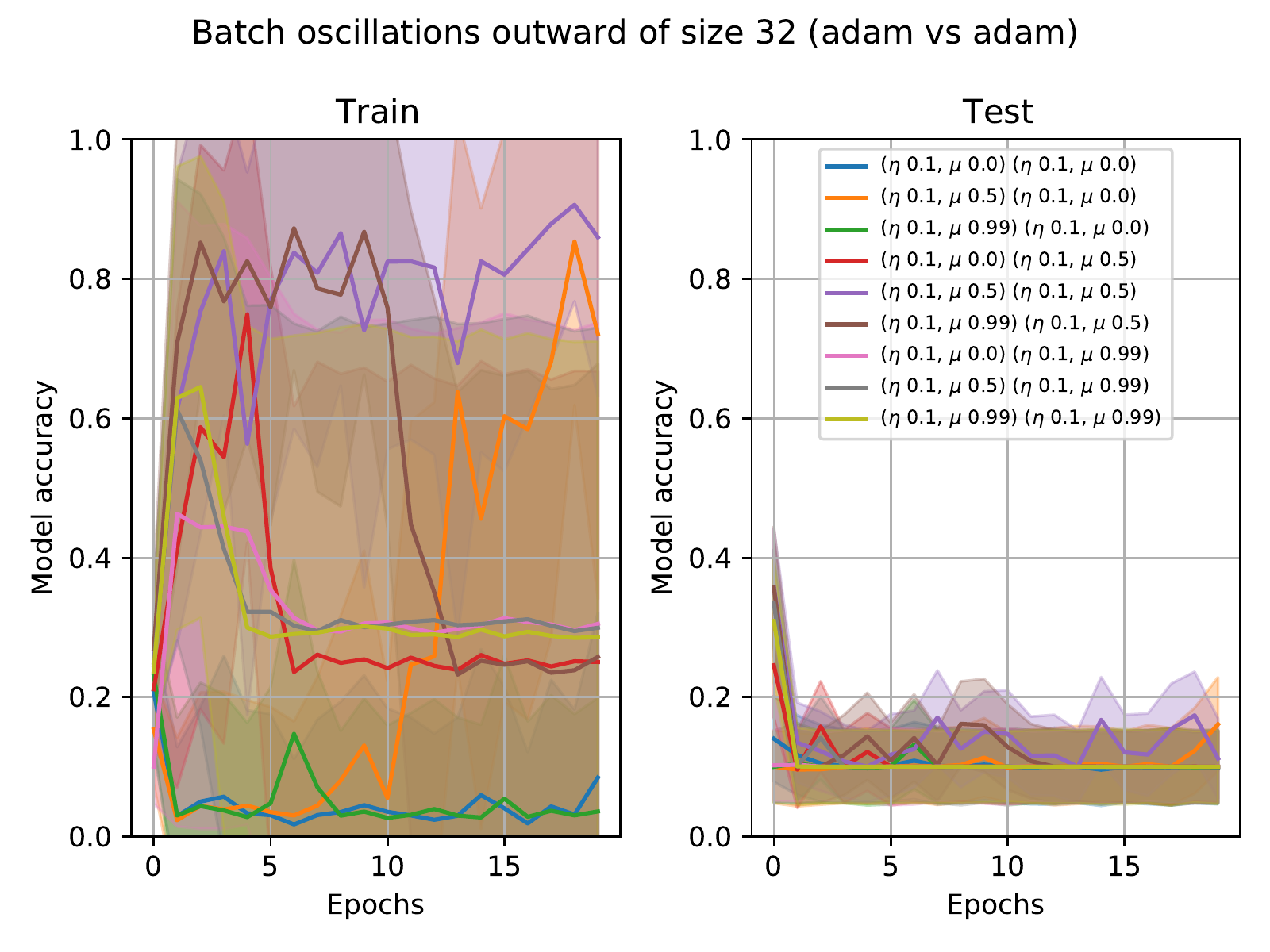} }}%
    \caption{ResNet18 real model Adam training, LeNet5 surrogate with Adam and Batchsize 32}%
    \label{fig:hyper1}%
\end{figure}

\begin{figure}[h]%
    \centering
    \subfloat[High low batching]{{\includegraphics[width=0.4\linewidth]{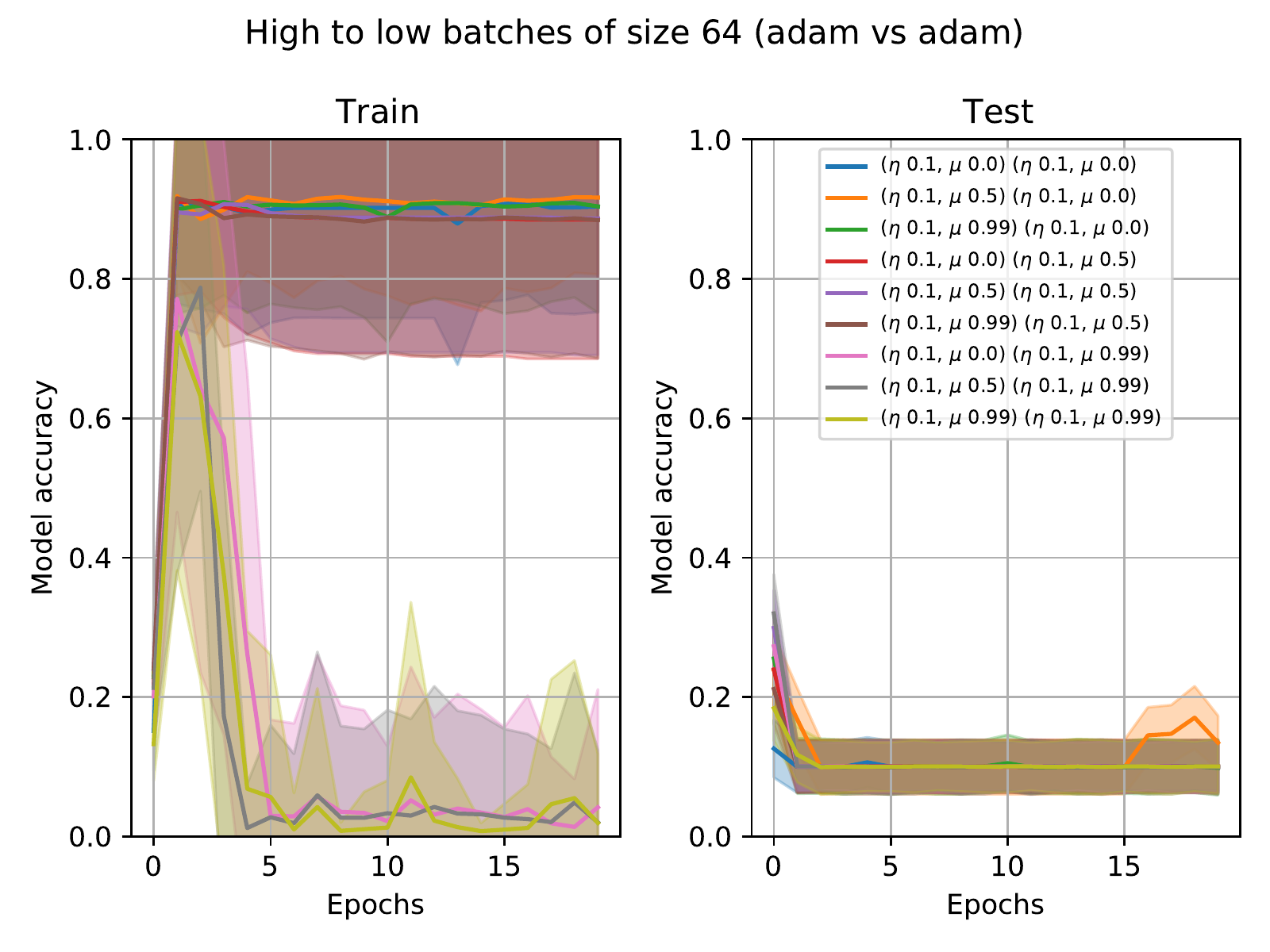} }}%
    \qquad
    \subfloat[Low high batching]{{\includegraphics[width=0.4\linewidth]{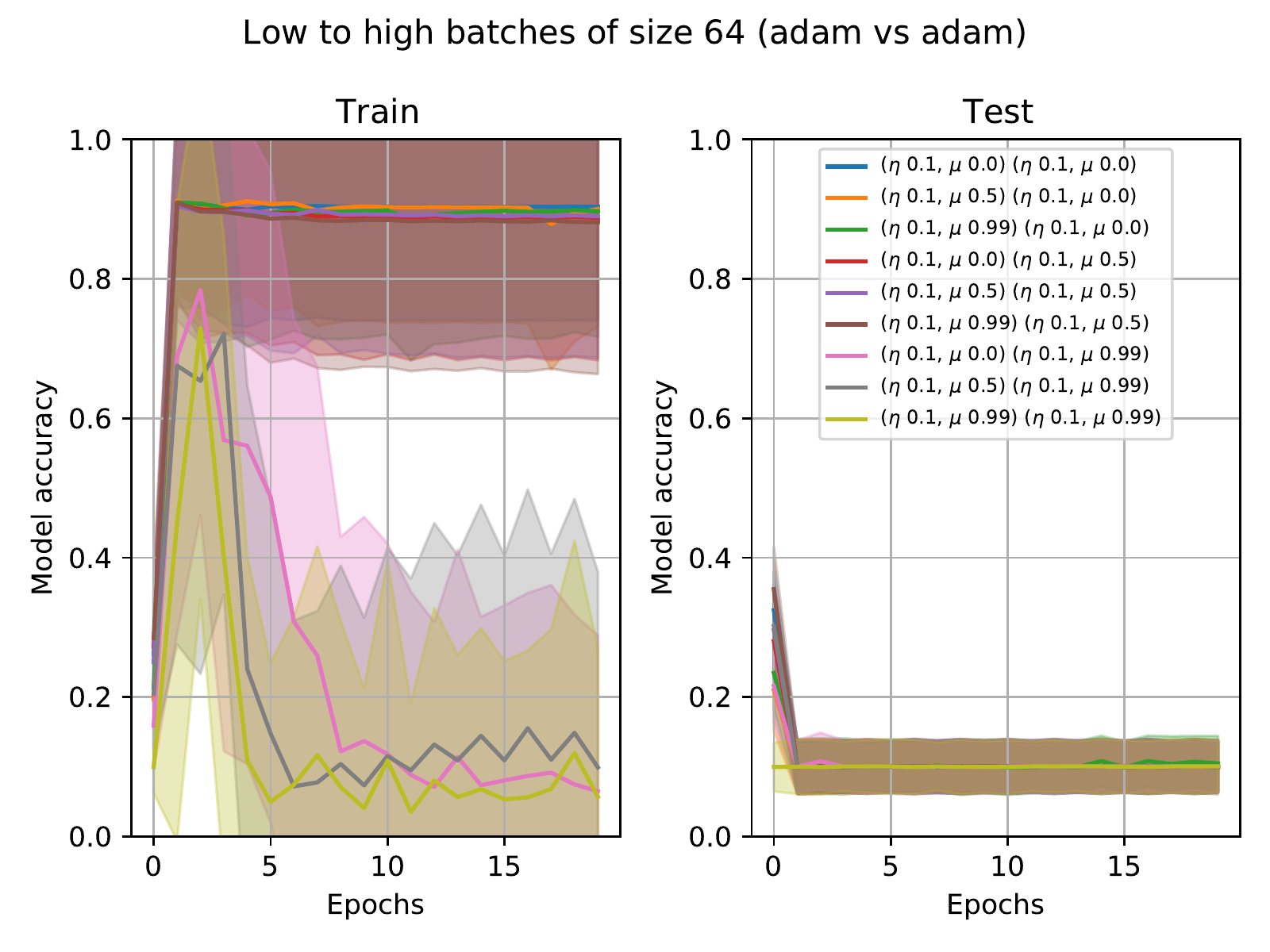} }}%
    \\
    \subfloat[Oscillating inward batching]{{\includegraphics[width=0.4\linewidth]{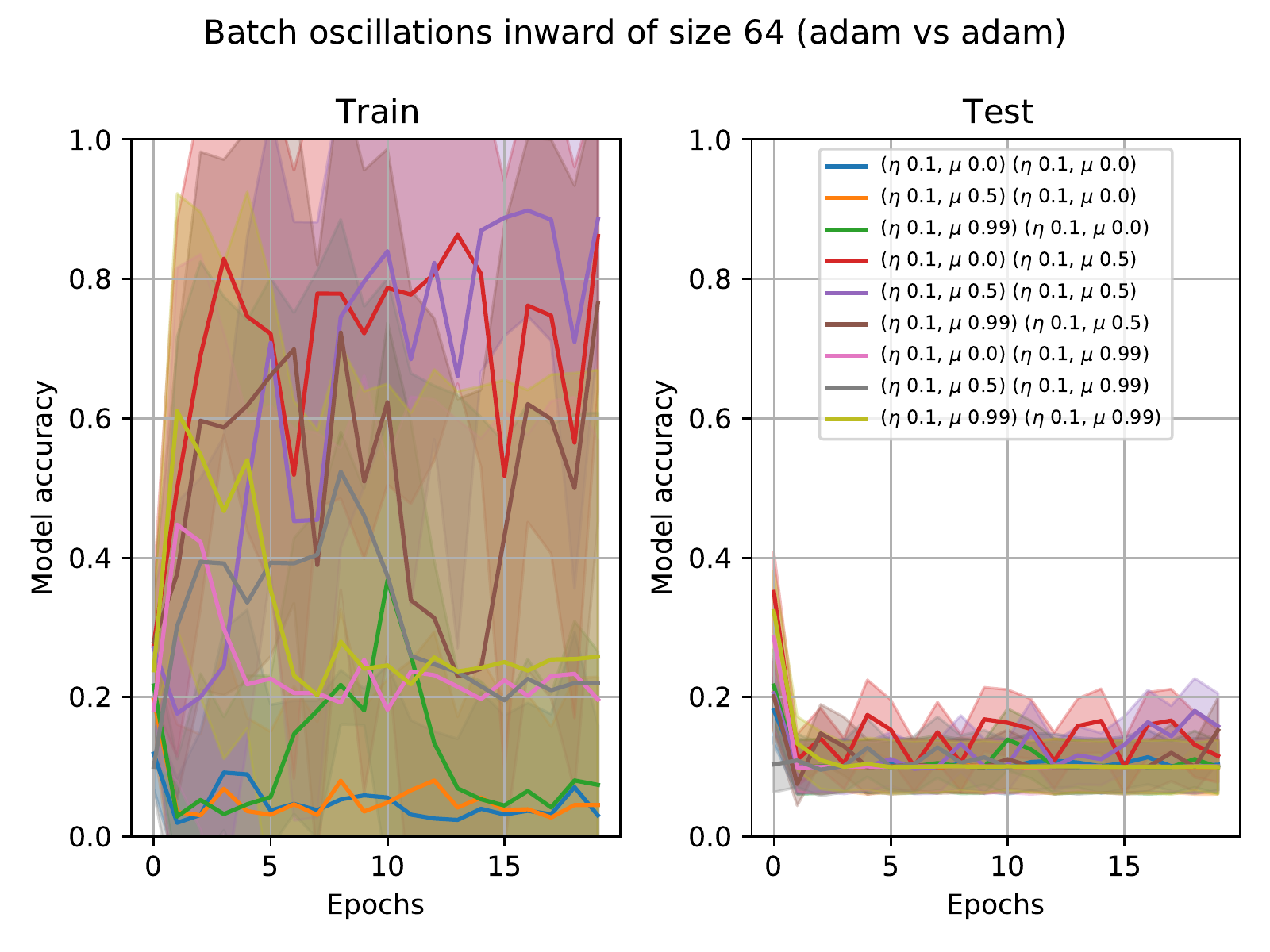} }}%
    \qquad
    \subfloat[Oscillating outward batching]{{\includegraphics[width=0.4\linewidth]{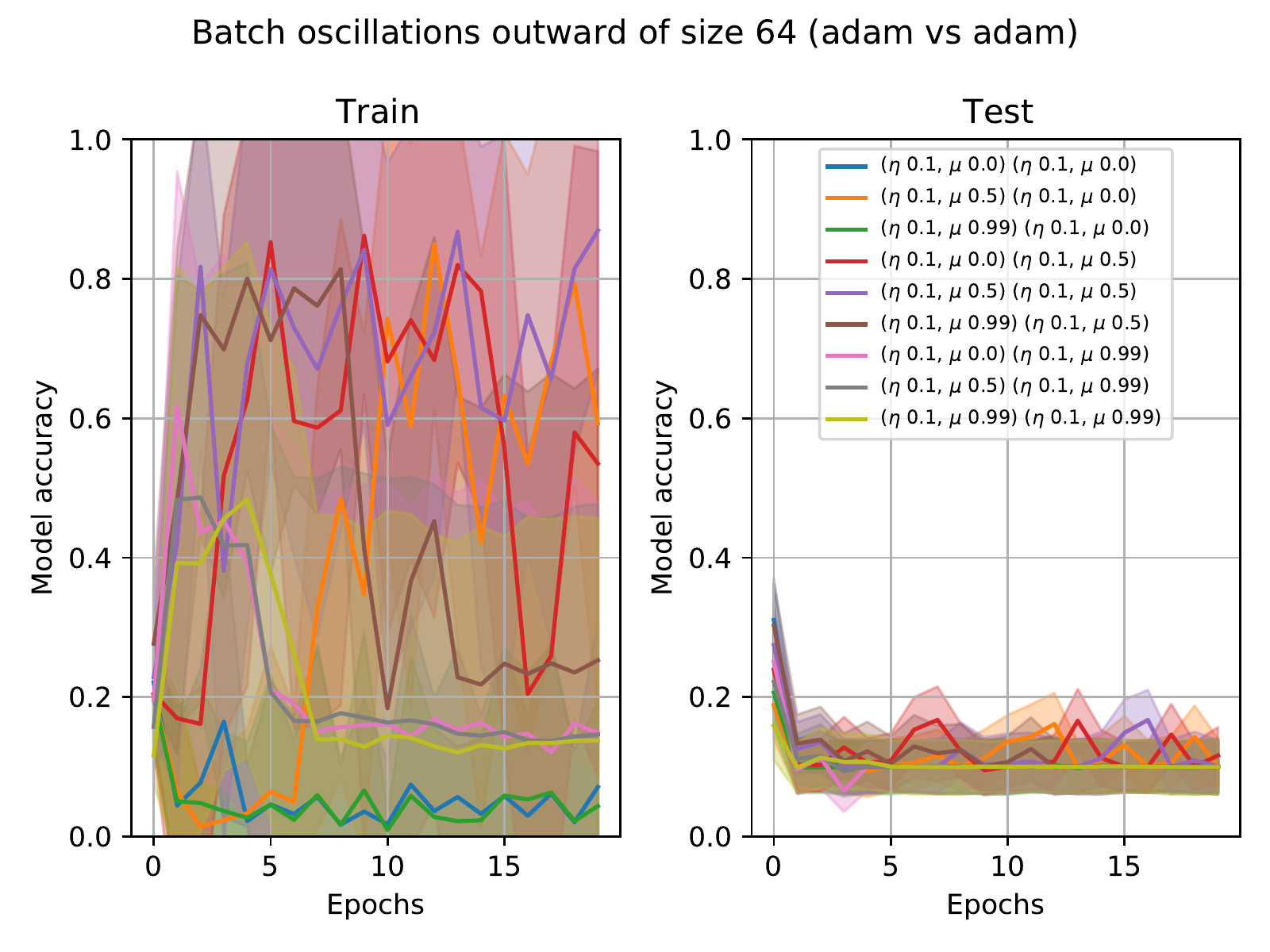} }}%
    \caption{ResNet18 real model Adam training, LeNet5 surrogate with Adam and Batchsize 64}%
    \label{fig:hyper2}%
\end{figure}

\begin{figure}[h]%
    \centering
    \subfloat[High low batching]{{\includegraphics[width=0.4\linewidth]{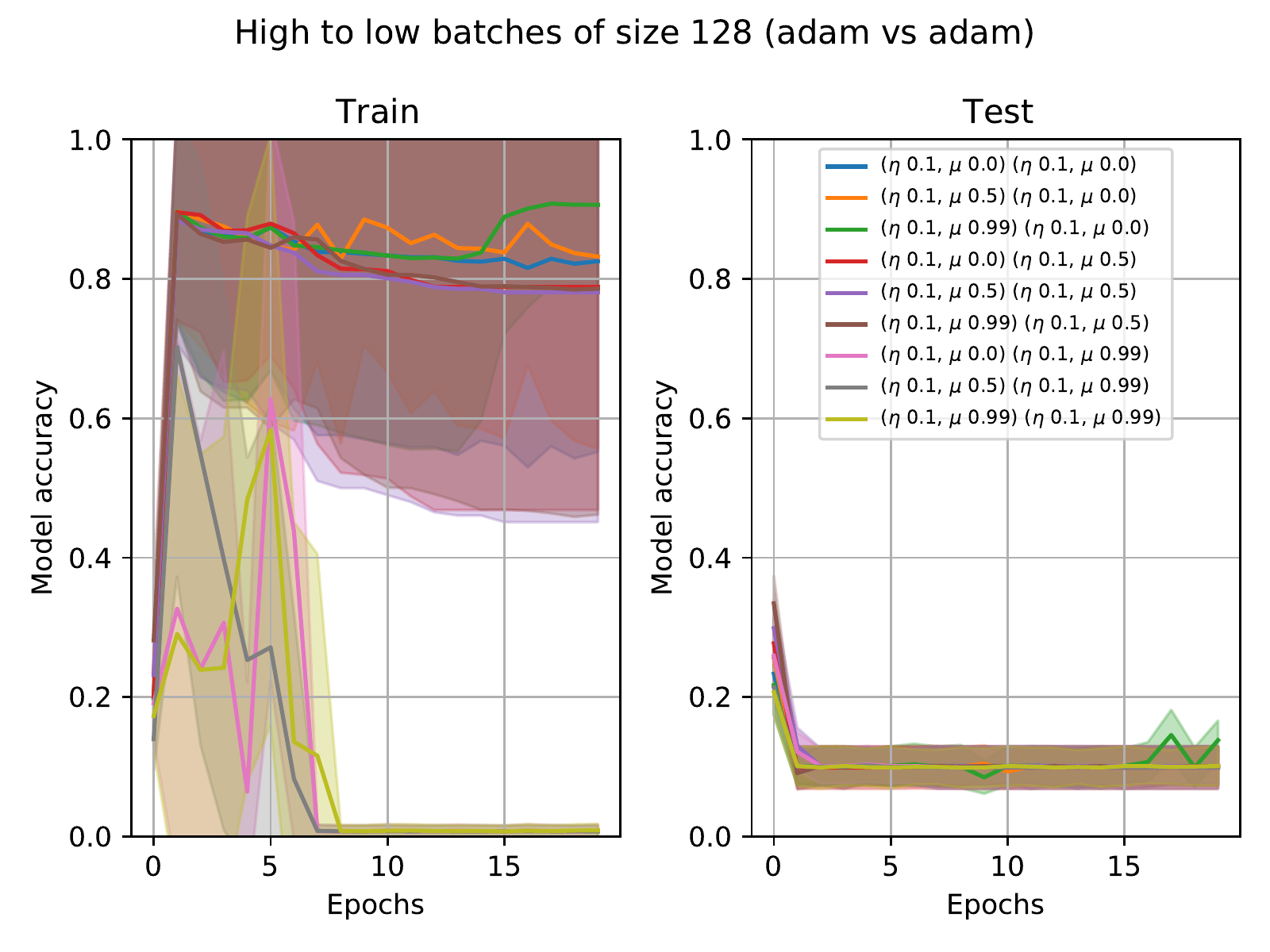} }}%
    \qquad
    \subfloat[Low high batching]{{\includegraphics[width=0.4\linewidth]{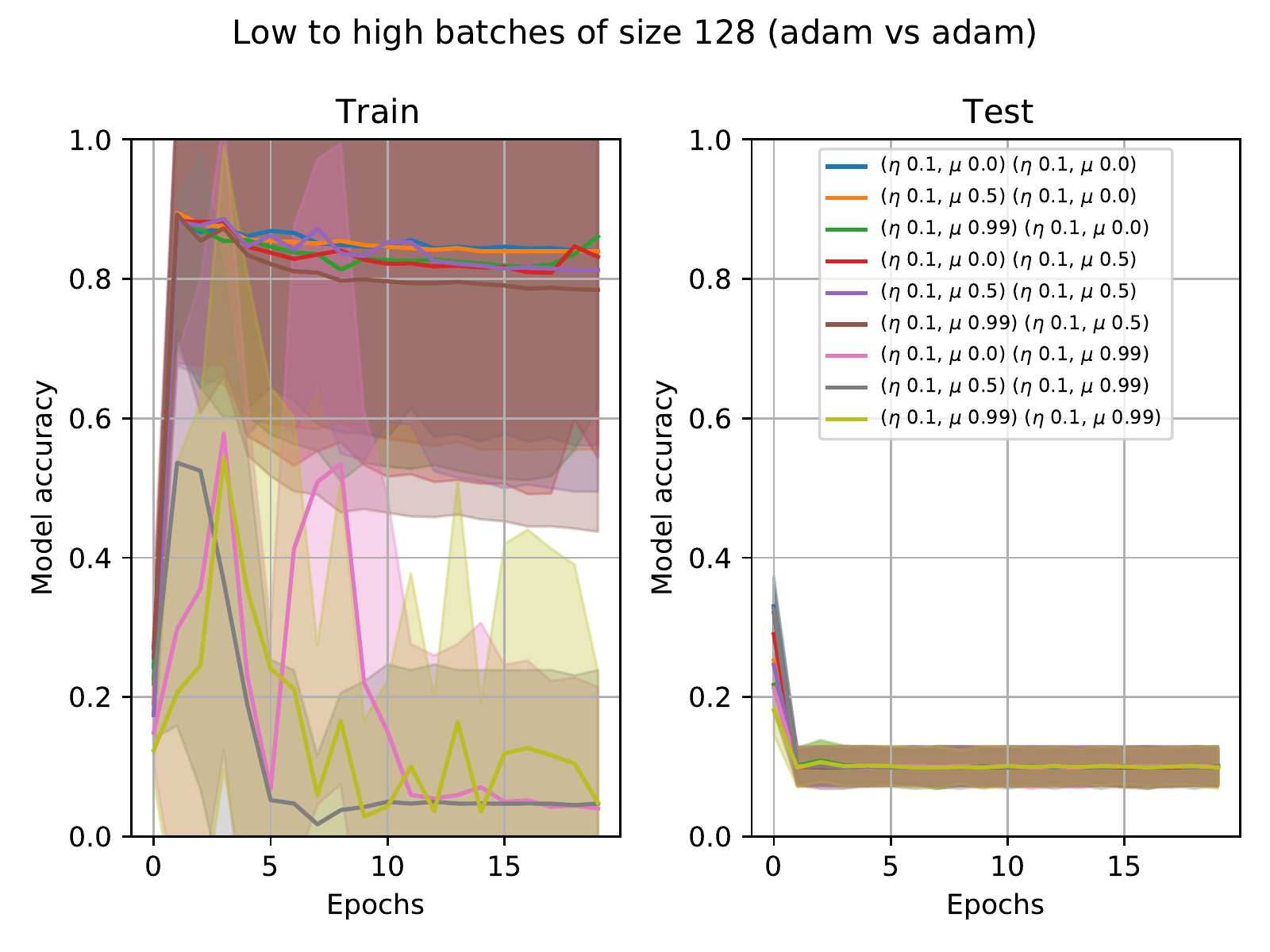} }}%
    \\
    \subfloat[Oscillating inward batching]{{\includegraphics[width=0.4\linewidth]{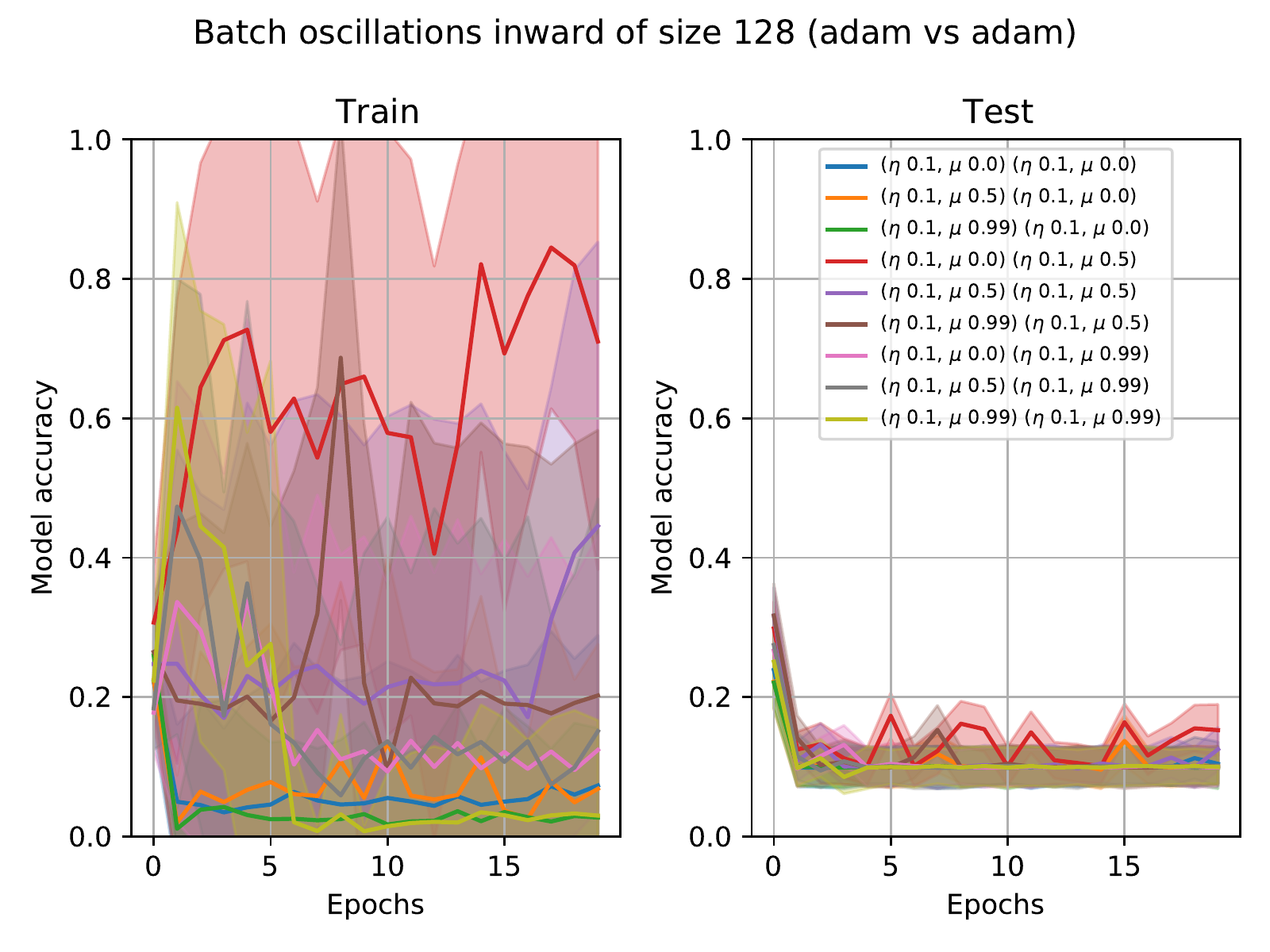} }}%
    \qquad
    \subfloat[Oscillating outward batching]{{\includegraphics[width=0.4\linewidth]{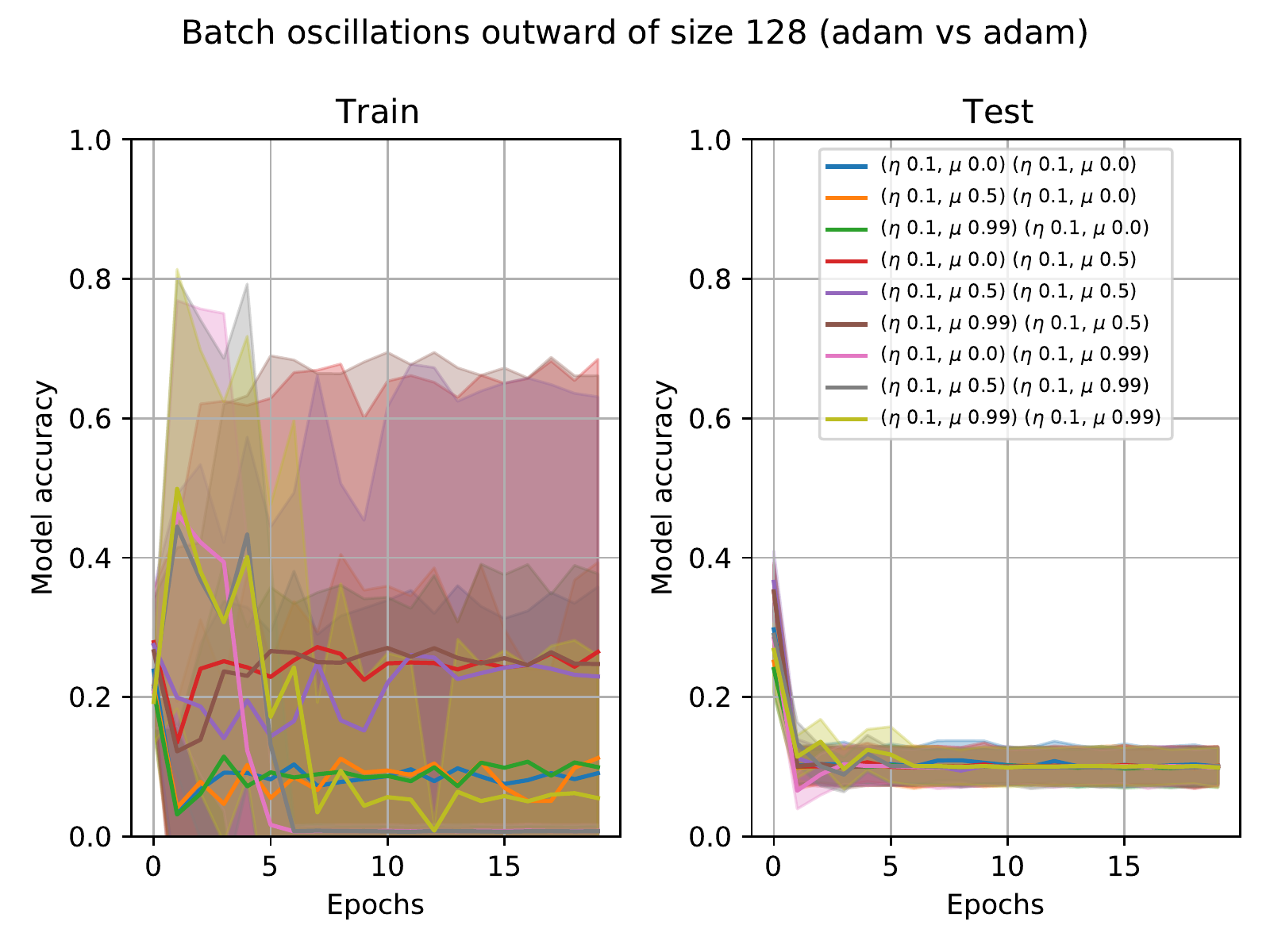} }}%
    \caption{ResNet18 real model Adam training, LeNet5 surrogate with Adam and Batchsize 128}%
    \label{fig:hyper3}%
\end{figure}

%  -----------
%  Results from adam -> sgd

\begin{figure}[h]%
    \centering
    \subfloat[High low batching]{{\includegraphics[width=0.4\linewidth]{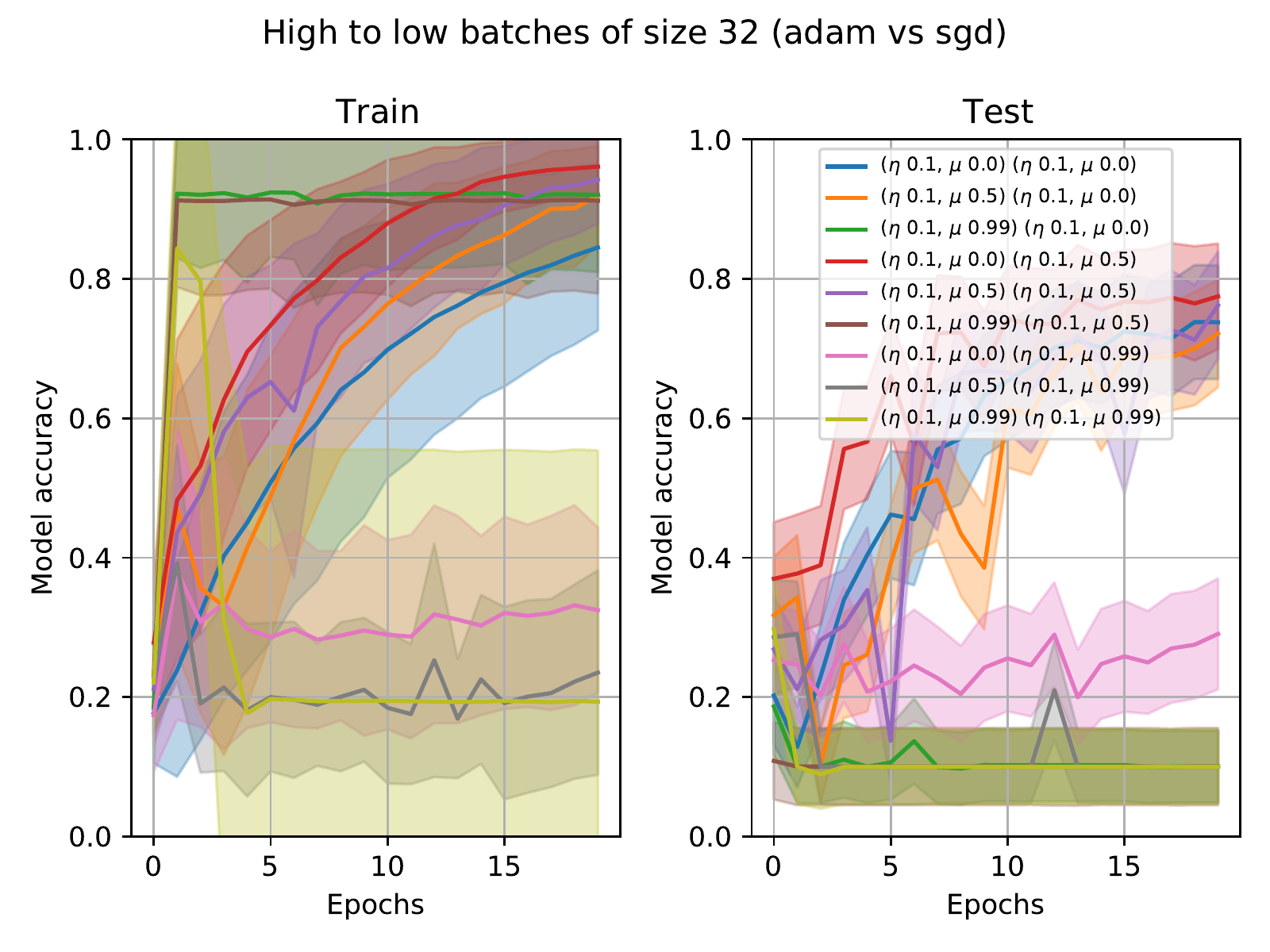} }}%
    \qquad
    \subfloat[Low high batching]{{\includegraphics[width=0.4\linewidth]{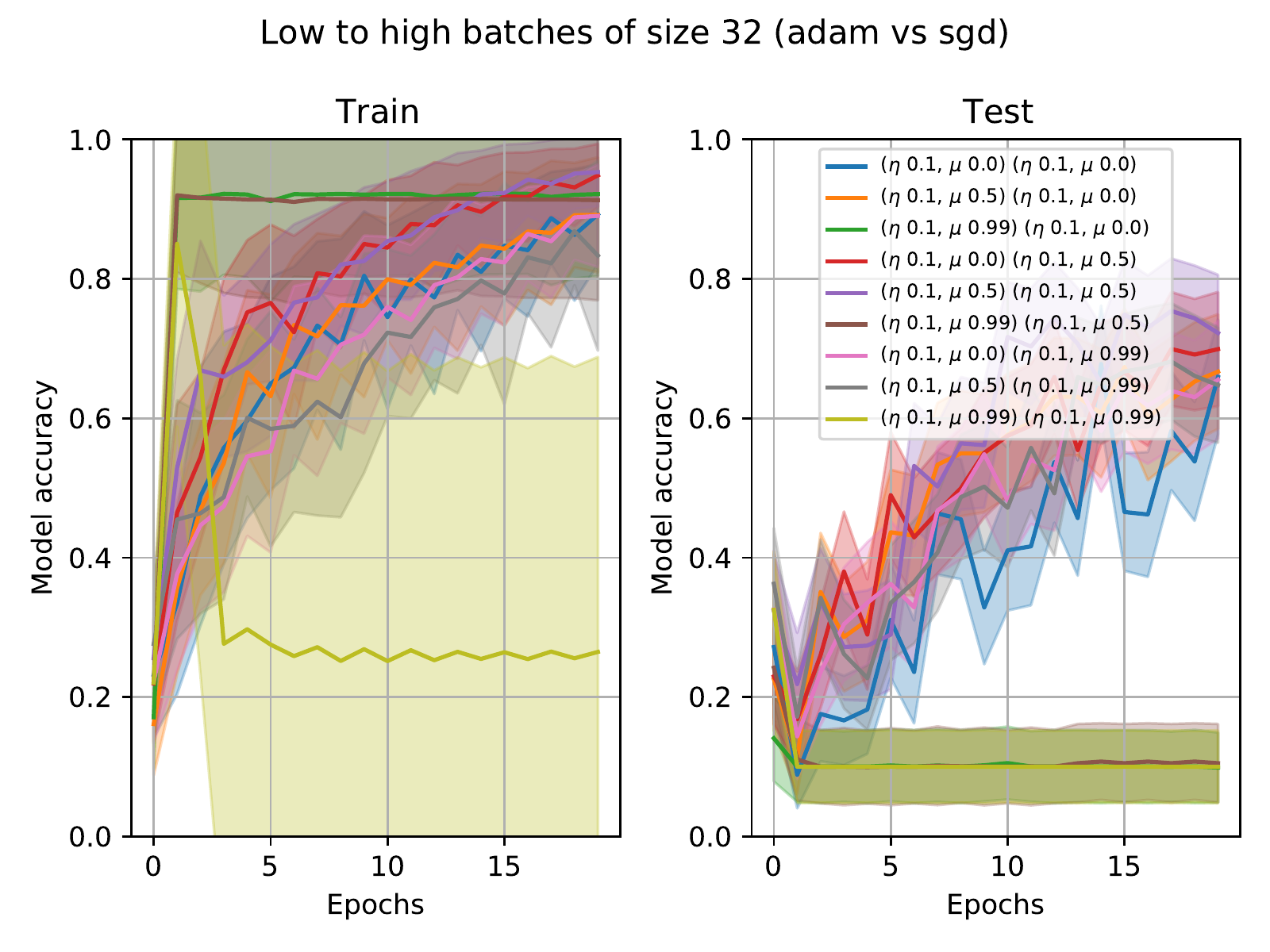} }}%
    \\
    \subfloat[Oscillating inward batching]{{\includegraphics[width=0.4\linewidth]{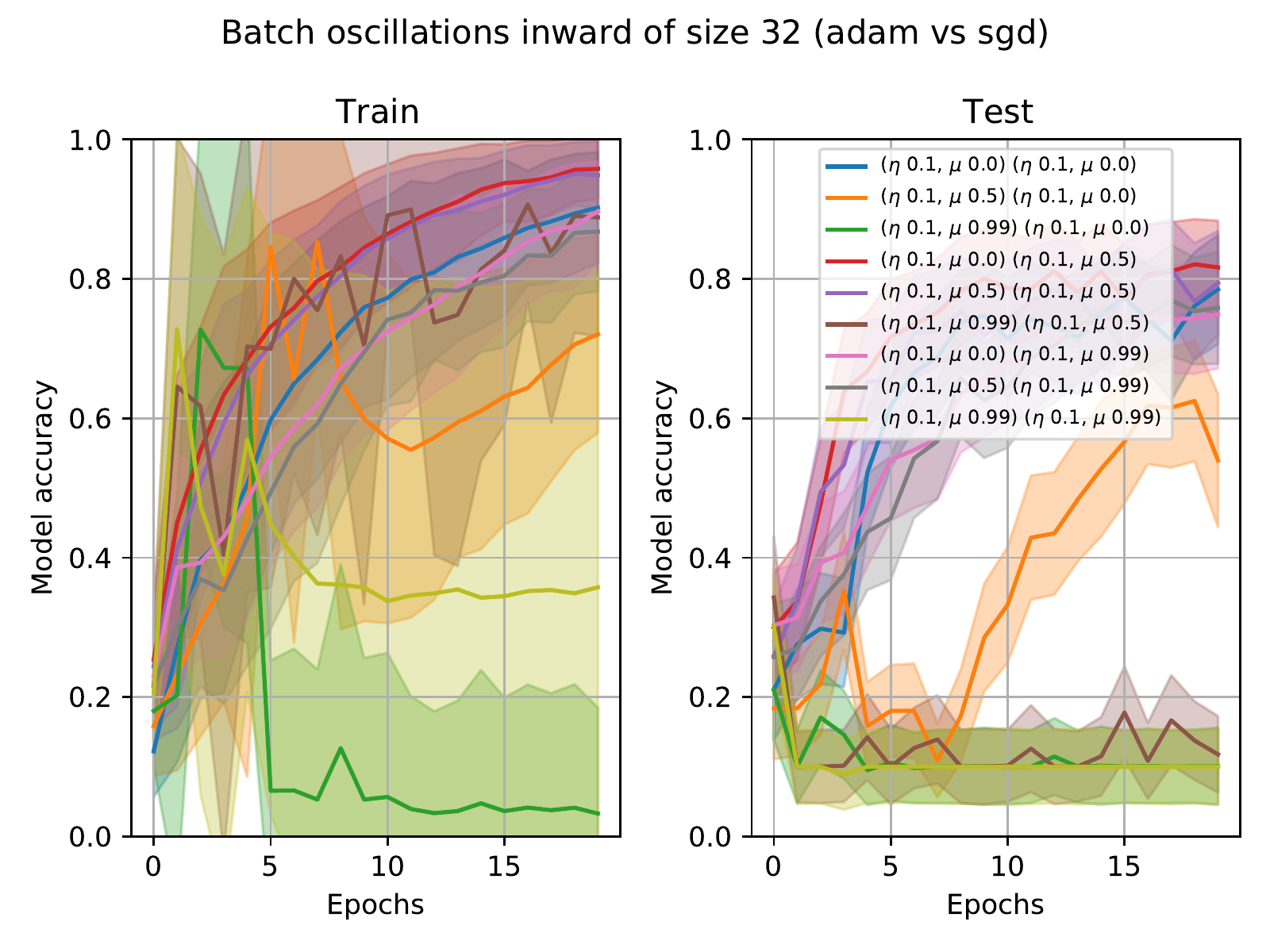} }}%
    \qquad
    \subfloat[Oscillating outward batching]{{\includegraphics[width=0.4\linewidth]{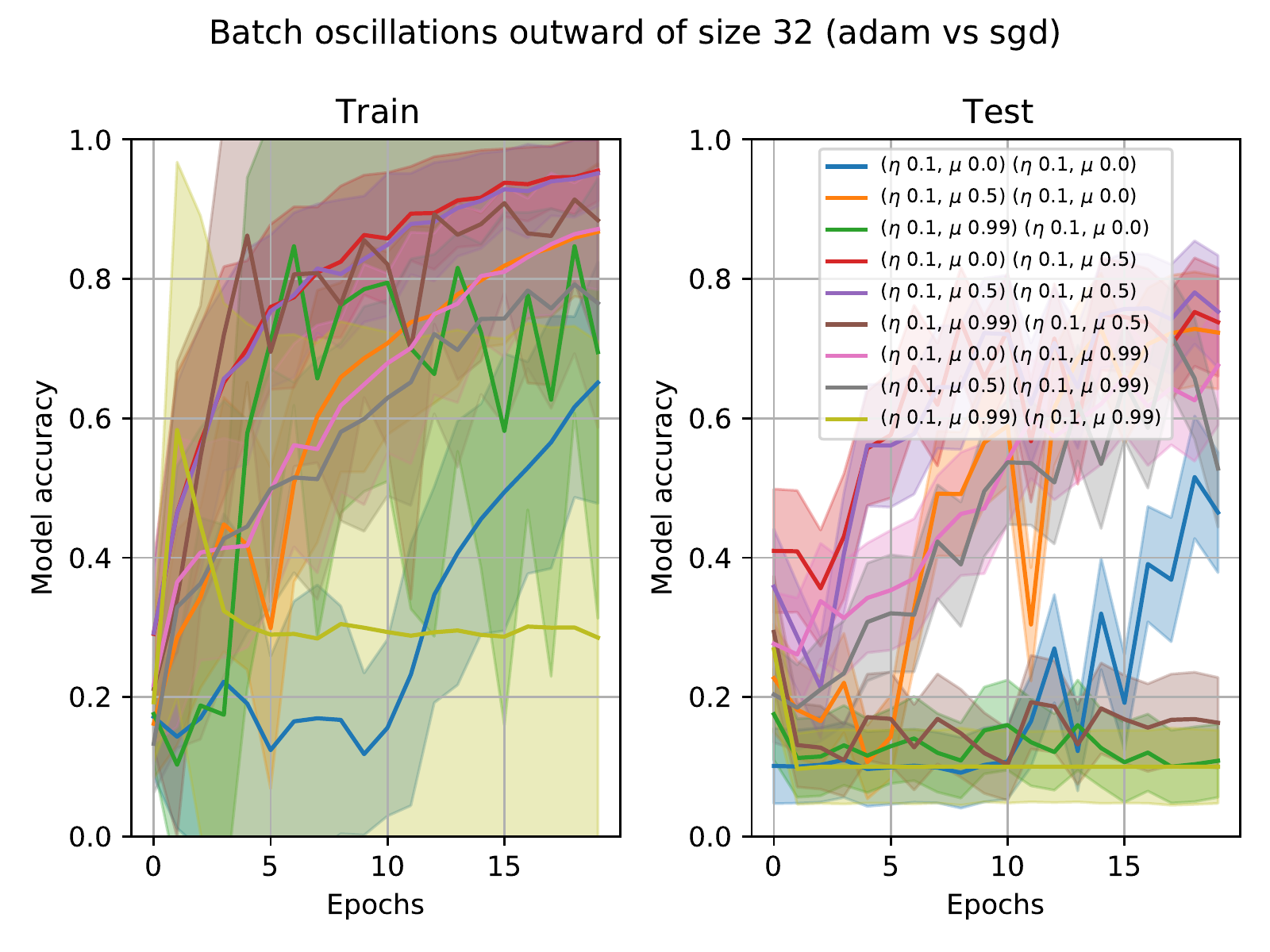} }}%
    \caption{ResNet18 real model Adam training, LeNet5 surrogate with SGD and Batchsize 32}%
    \label{fig:hyper4}%
\end{figure}

\begin{figure}[h]%
    \centering
    \subfloat[High low batching]{{\includegraphics[width=0.4\linewidth]{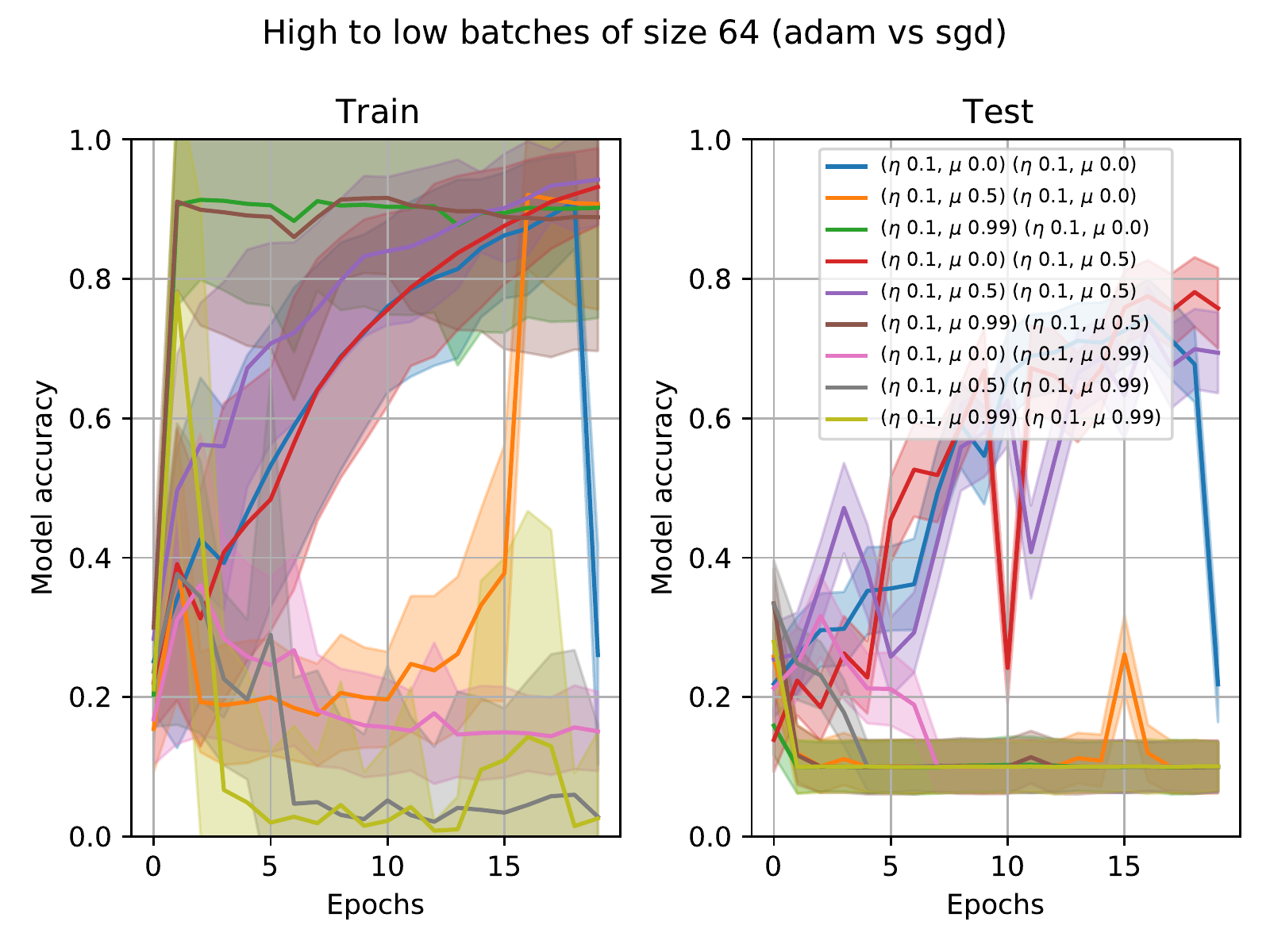} }}%
    \qquad
    \subfloat[Low high batching]{{\includegraphics[width=0.4\linewidth]{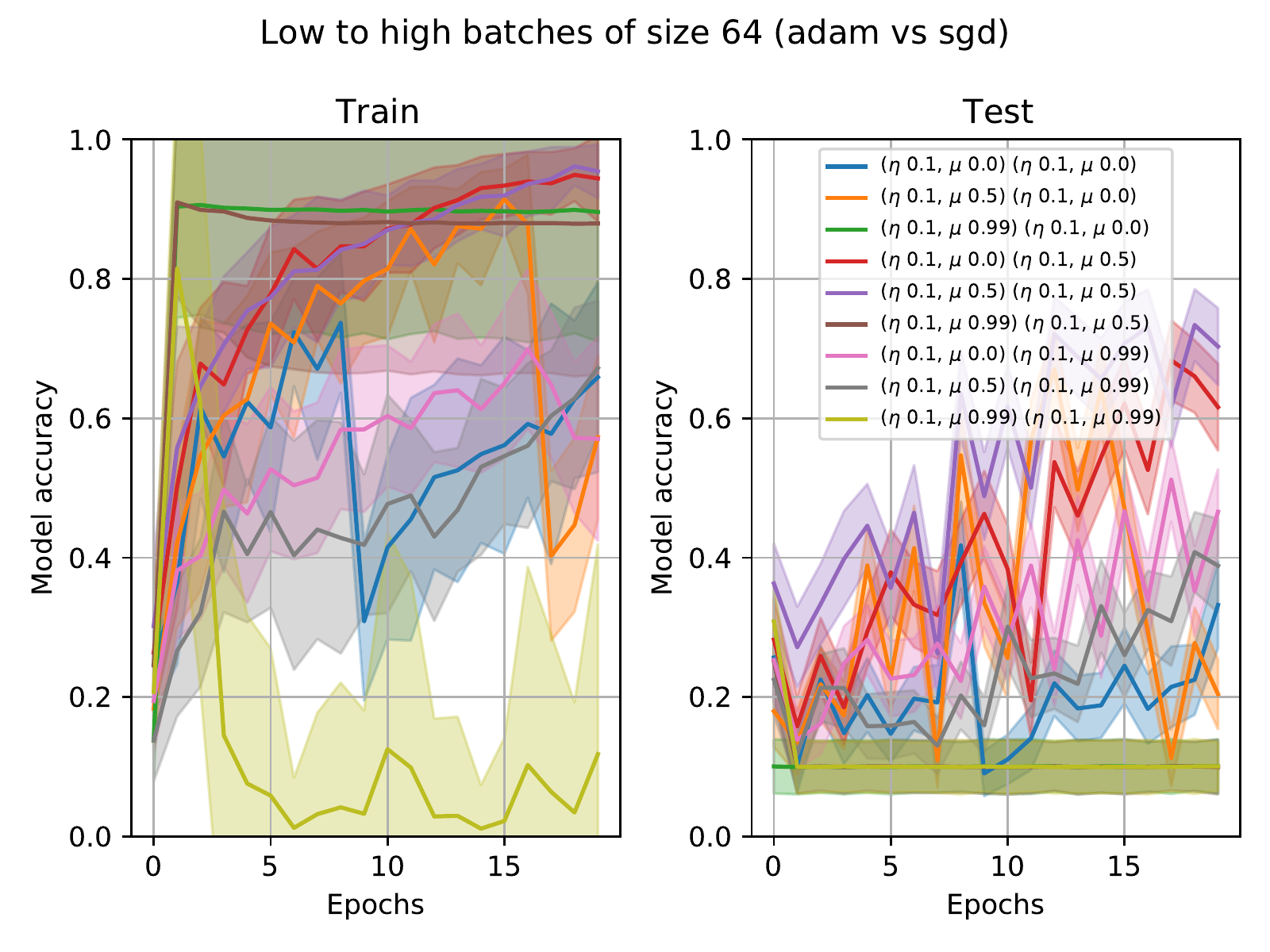} }}%
    \\
    \subfloat[Oscillating inward batching]{{\includegraphics[width=0.4\linewidth]{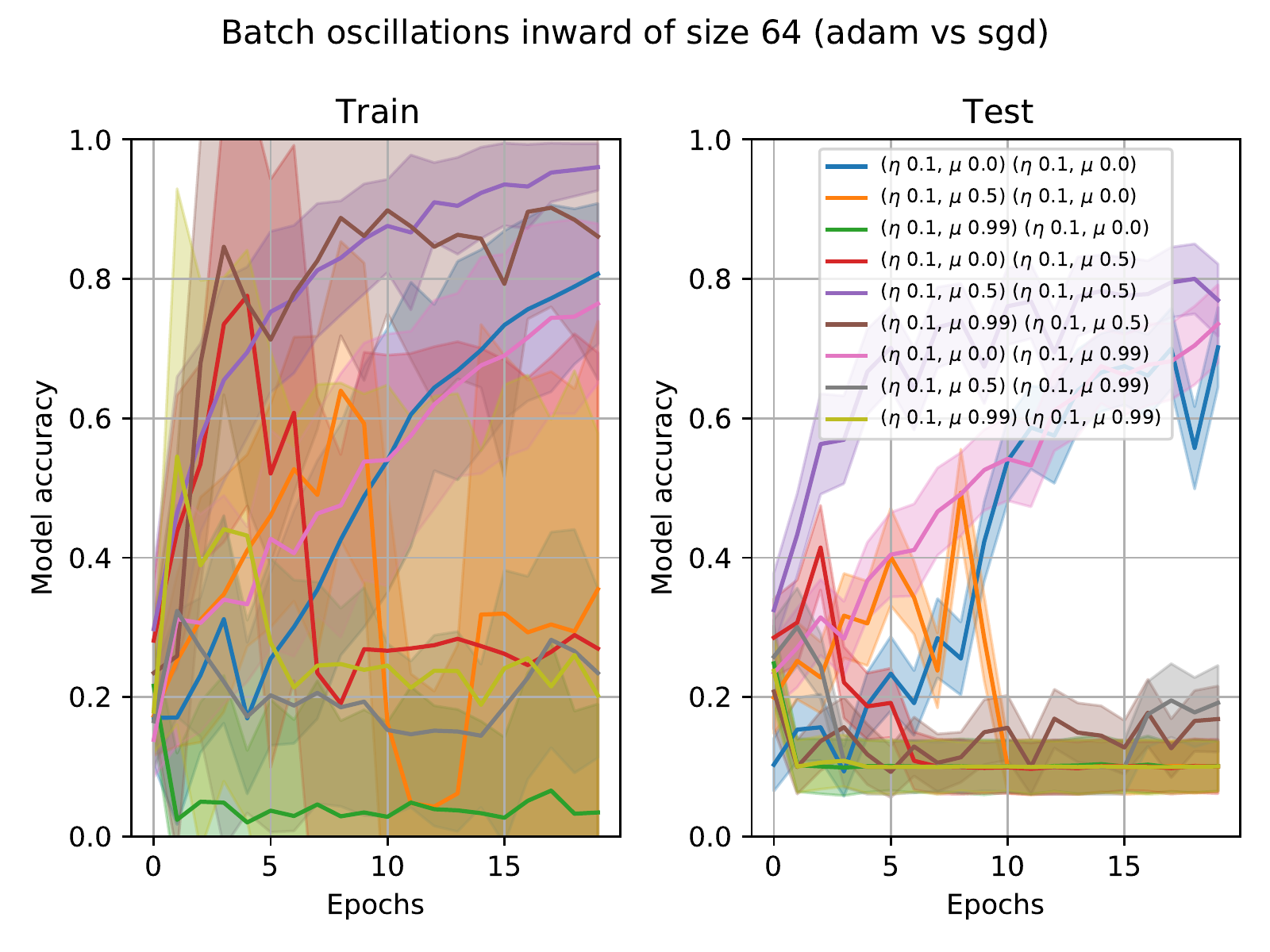} }}%
    \qquad
    \subfloat[Oscillating outward batching]{{\includegraphics[width=0.4\linewidth]{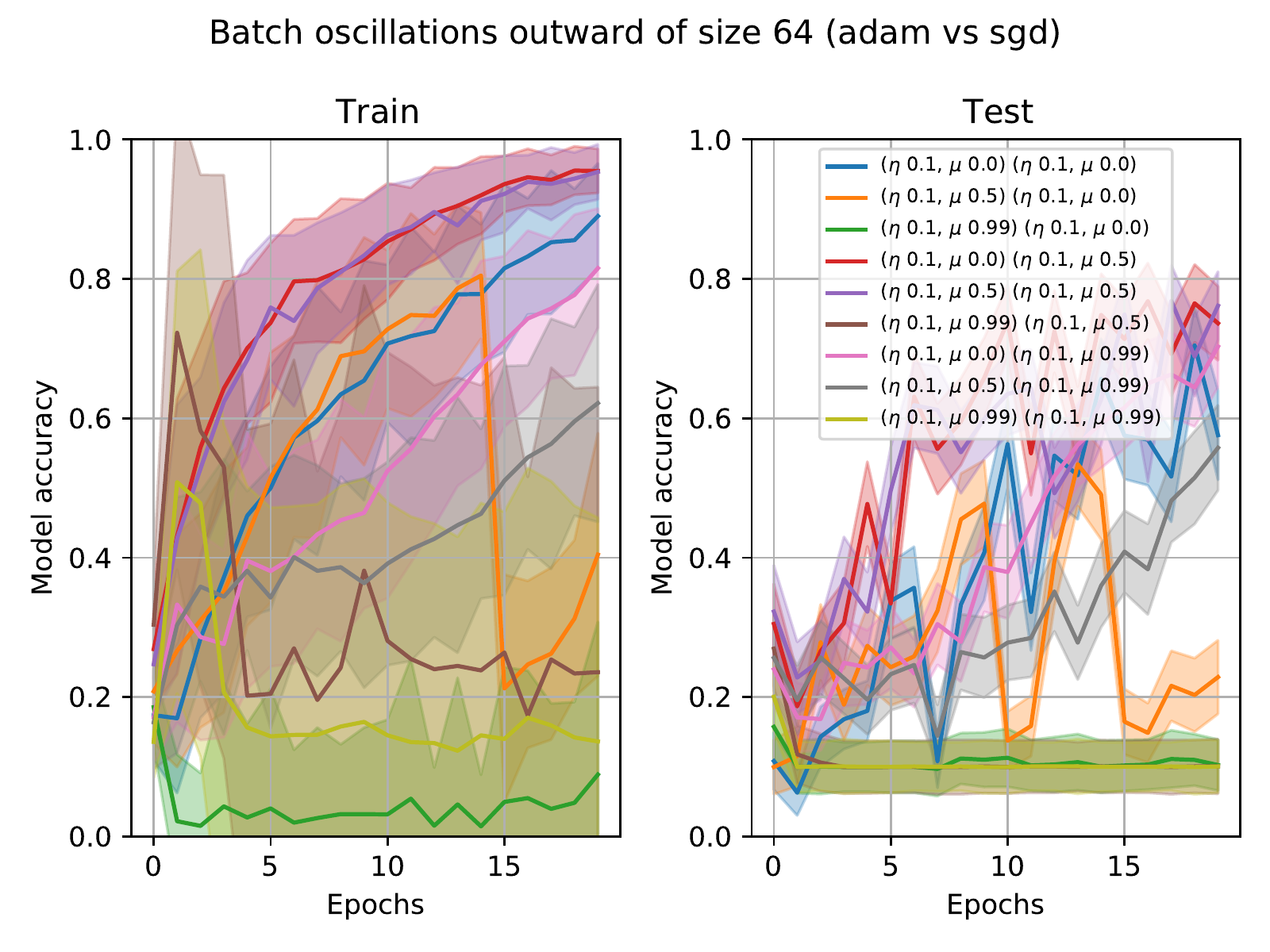} }}%
     \caption{ResNet18 real model Adam training, LeNet5 surrogate with SGD and Batchsize 64}%
    \label{fig:hyper5}%
\end{figure}

\begin{figure}[h]%
    \centering
    \subfloat[High low batching]{{\includegraphics[width=0.4\linewidth]{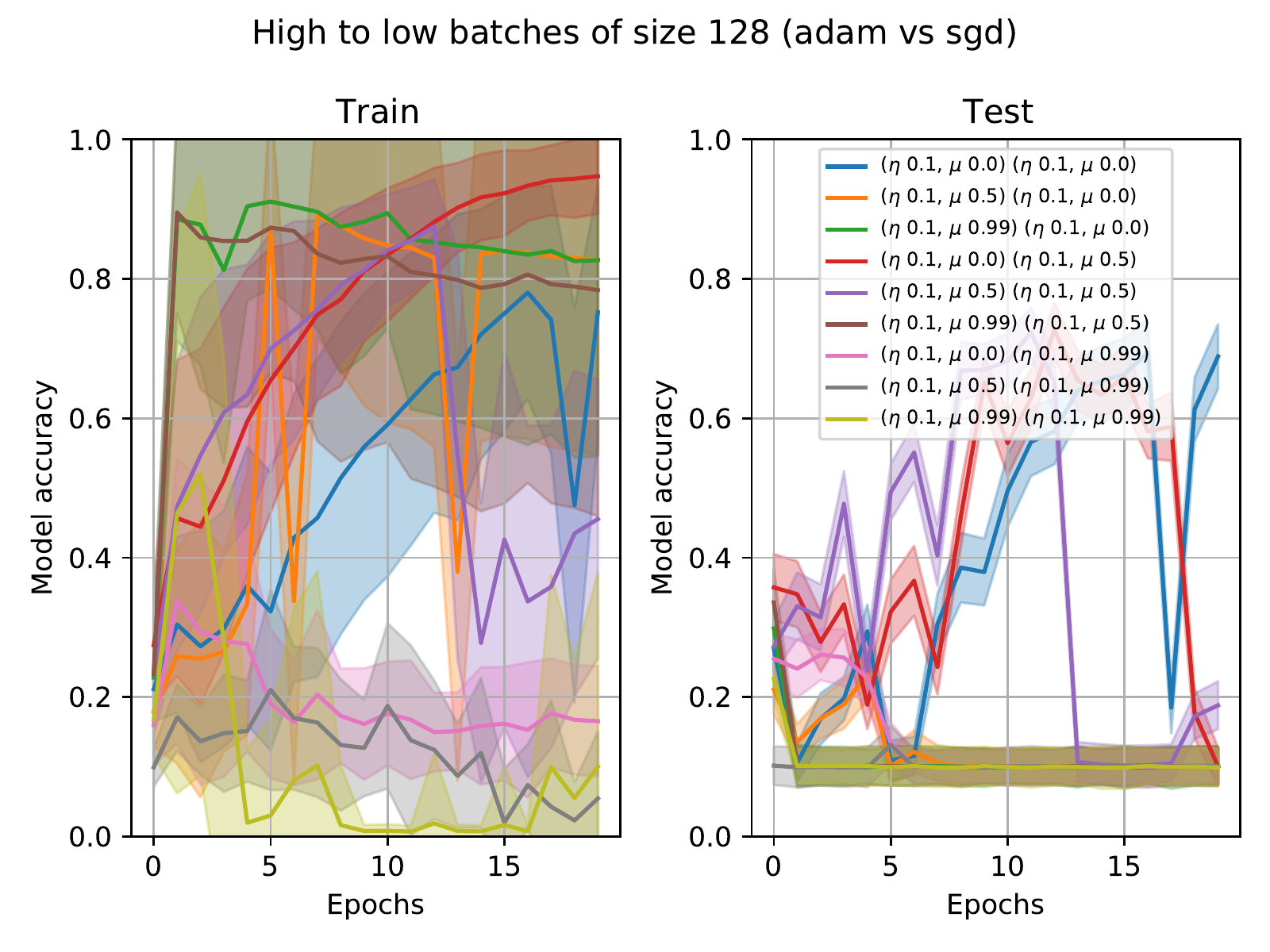} }}%
    \qquad
    \subfloat[Low high batching]{{\includegraphics[width=0.4\linewidth]{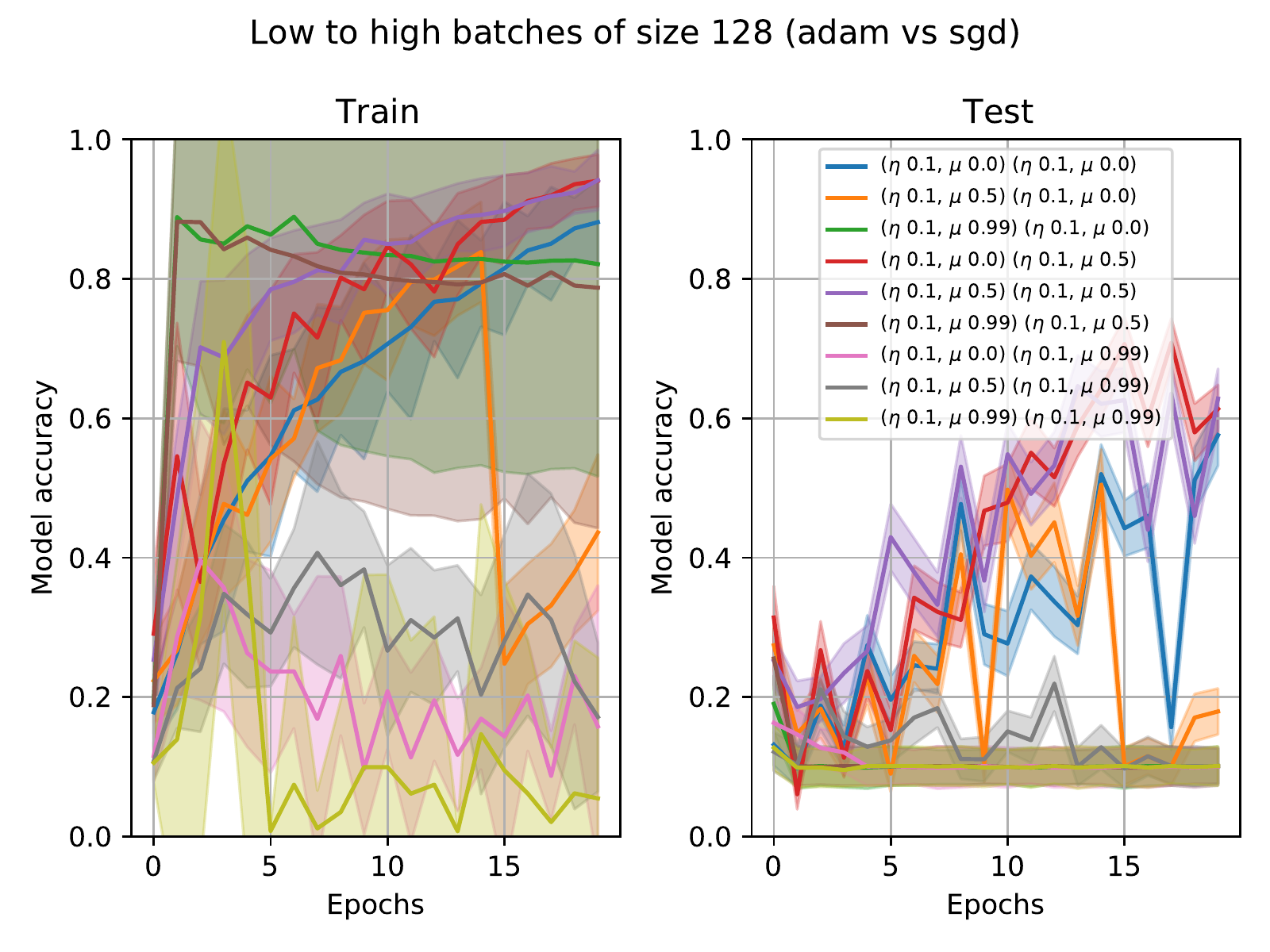} }}%
    \\
    \subfloat[Oscillating inward batching]{{\includegraphics[width=0.4\linewidth]{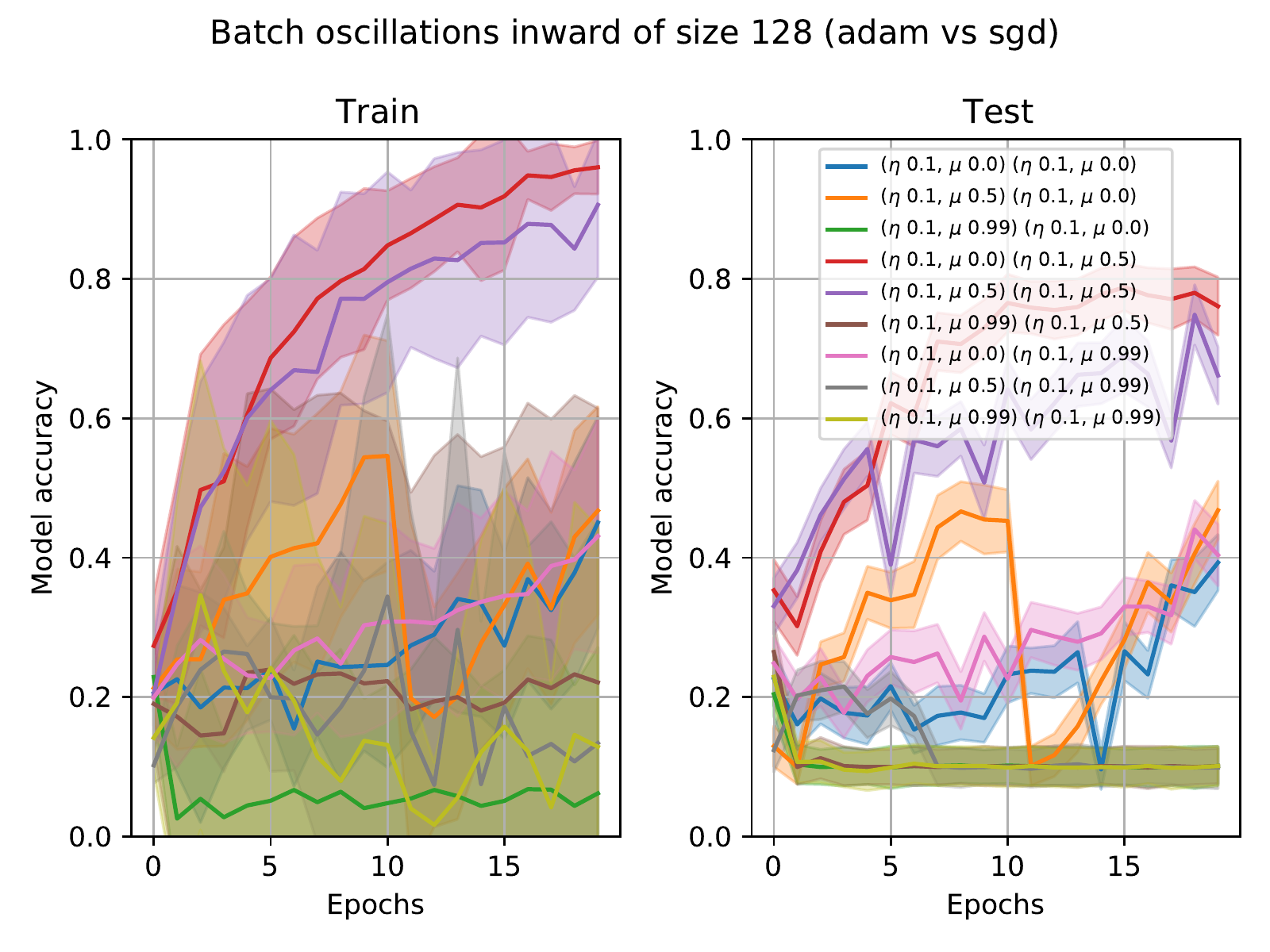} }}%
    \qquad
    \subfloat[Oscillating outward batching]{{\includegraphics[width=0.4\linewidth]{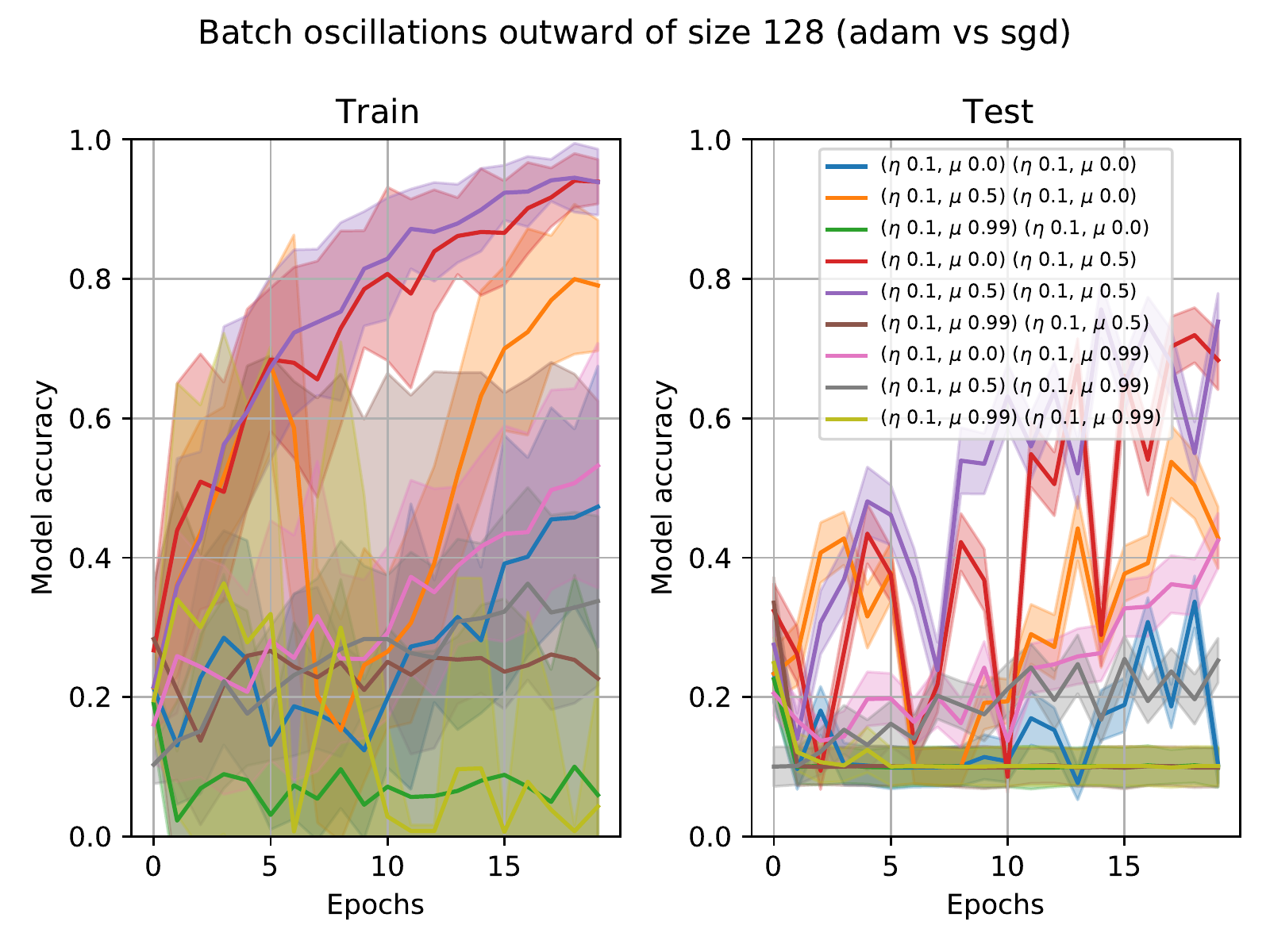} }}%
     \caption{ResNet18 real model Adam training, LeNet5 surrogate with SGD and Batchsize 128}%%
    \label{fig:hyper6}%
\end{figure}

%  -----------
%  Results from sgd-> sgd

\begin{figure}[h]%
    \centering
    \subfloat[High low batching]{{\includegraphics[width=0.4\linewidth]{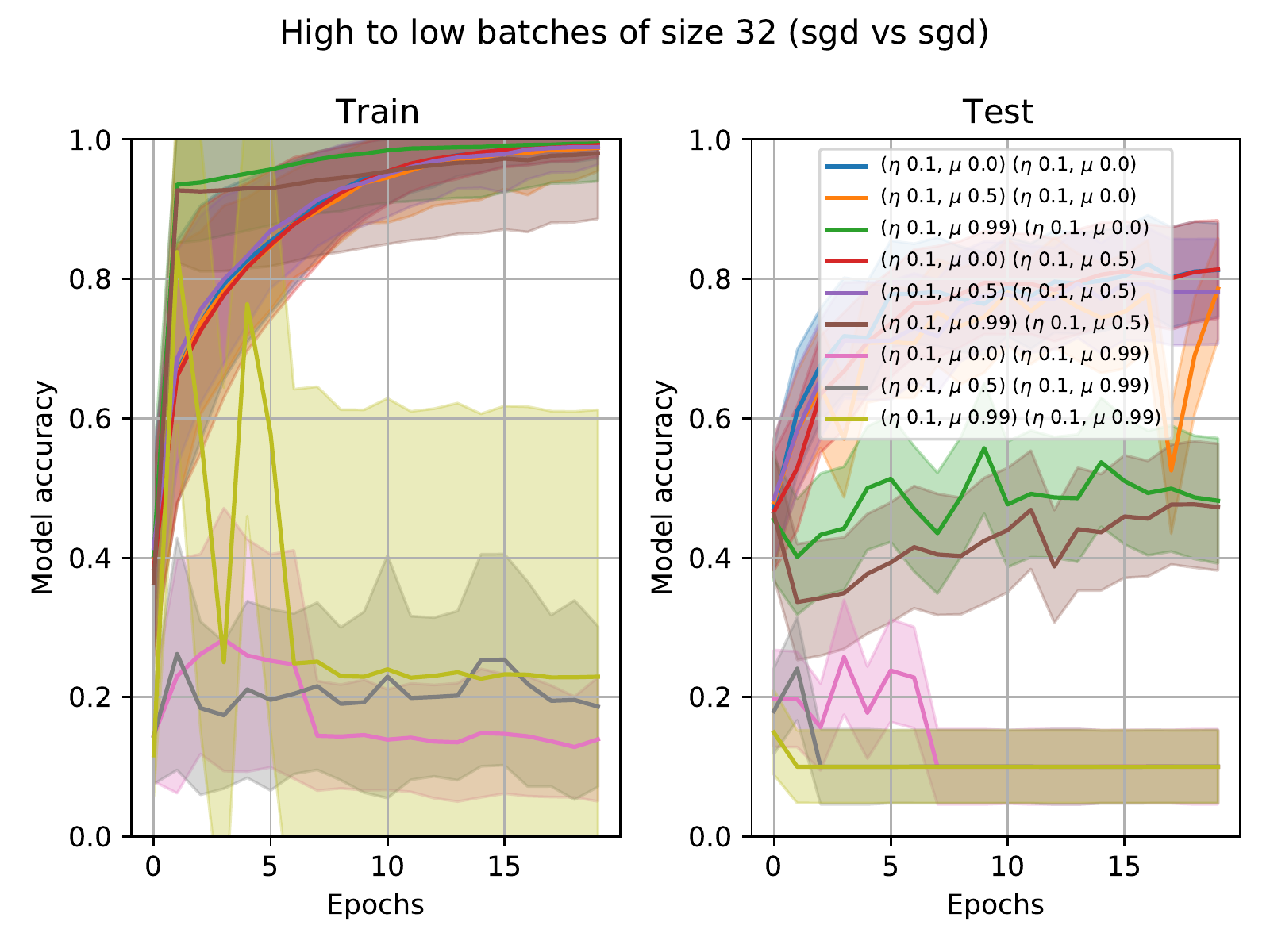} }}%
    \qquad
    \subfloat[Low high batching]{{\includegraphics[width=0.4\linewidth]{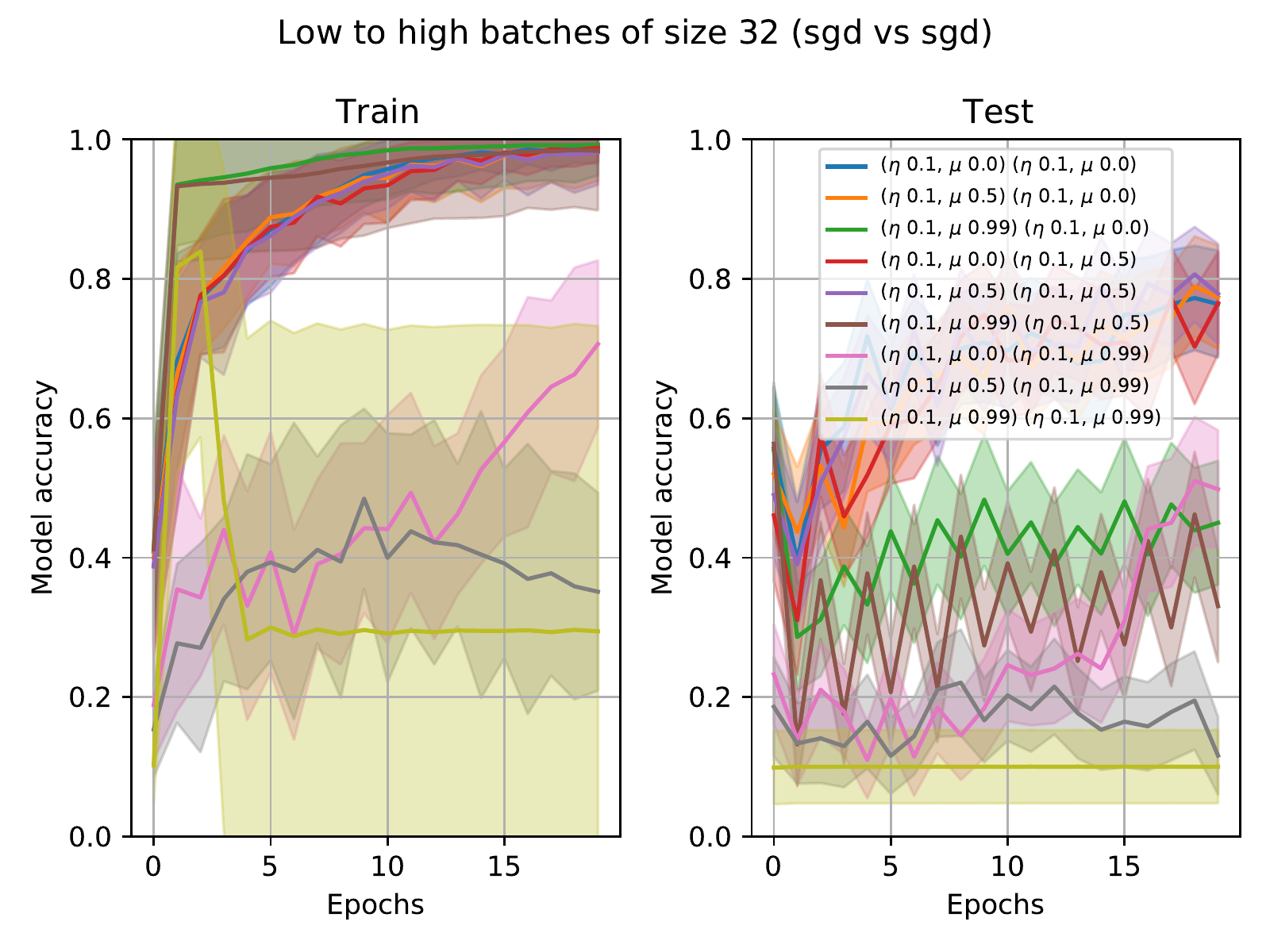} }}%
    \\
    \subfloat[Oscillating inward batching]{{\includegraphics[width=0.4\linewidth]{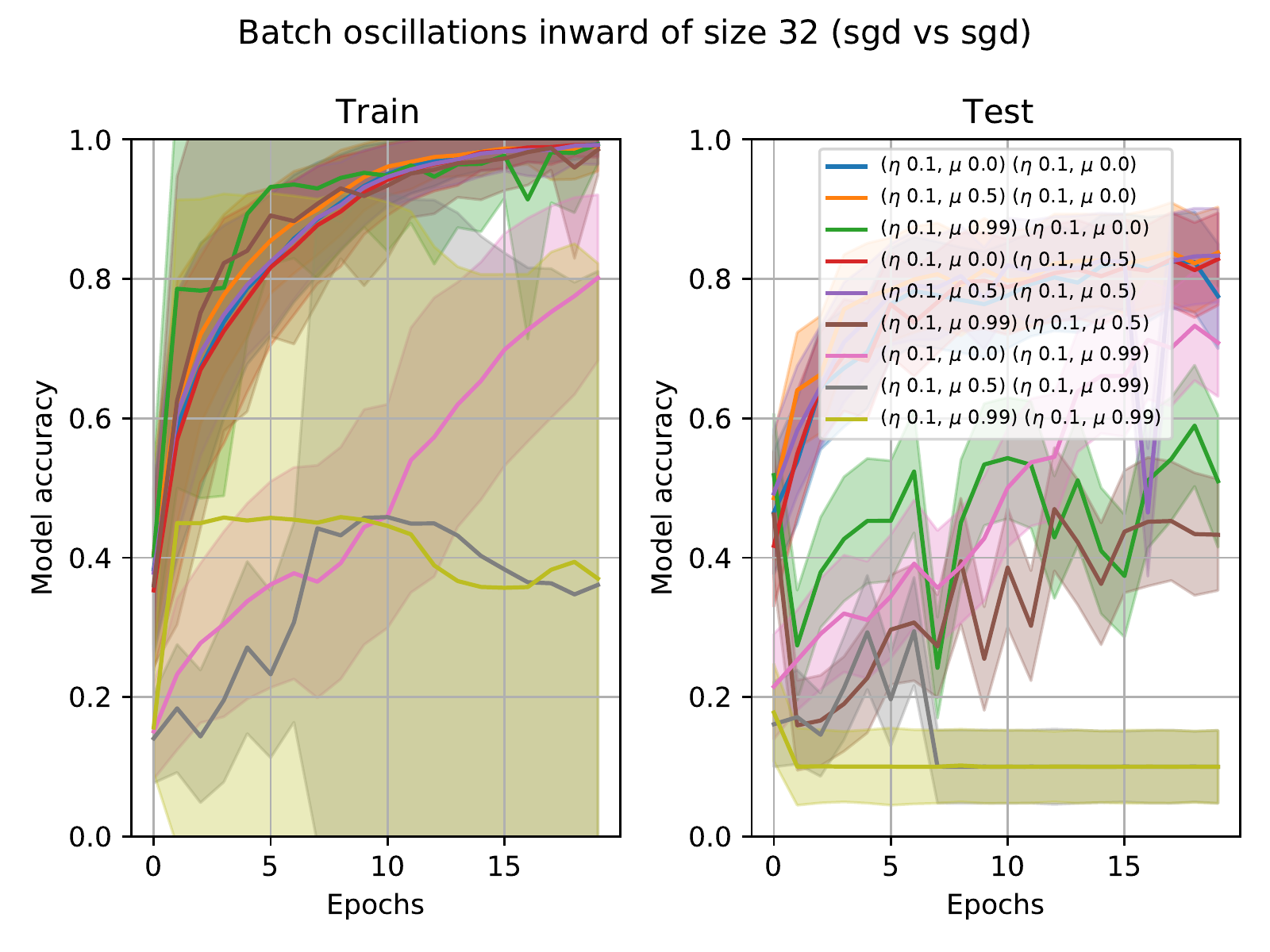} }}%
    \qquad
    \subfloat[Oscillating outward batching]{{\includegraphics[width=0.4\linewidth]{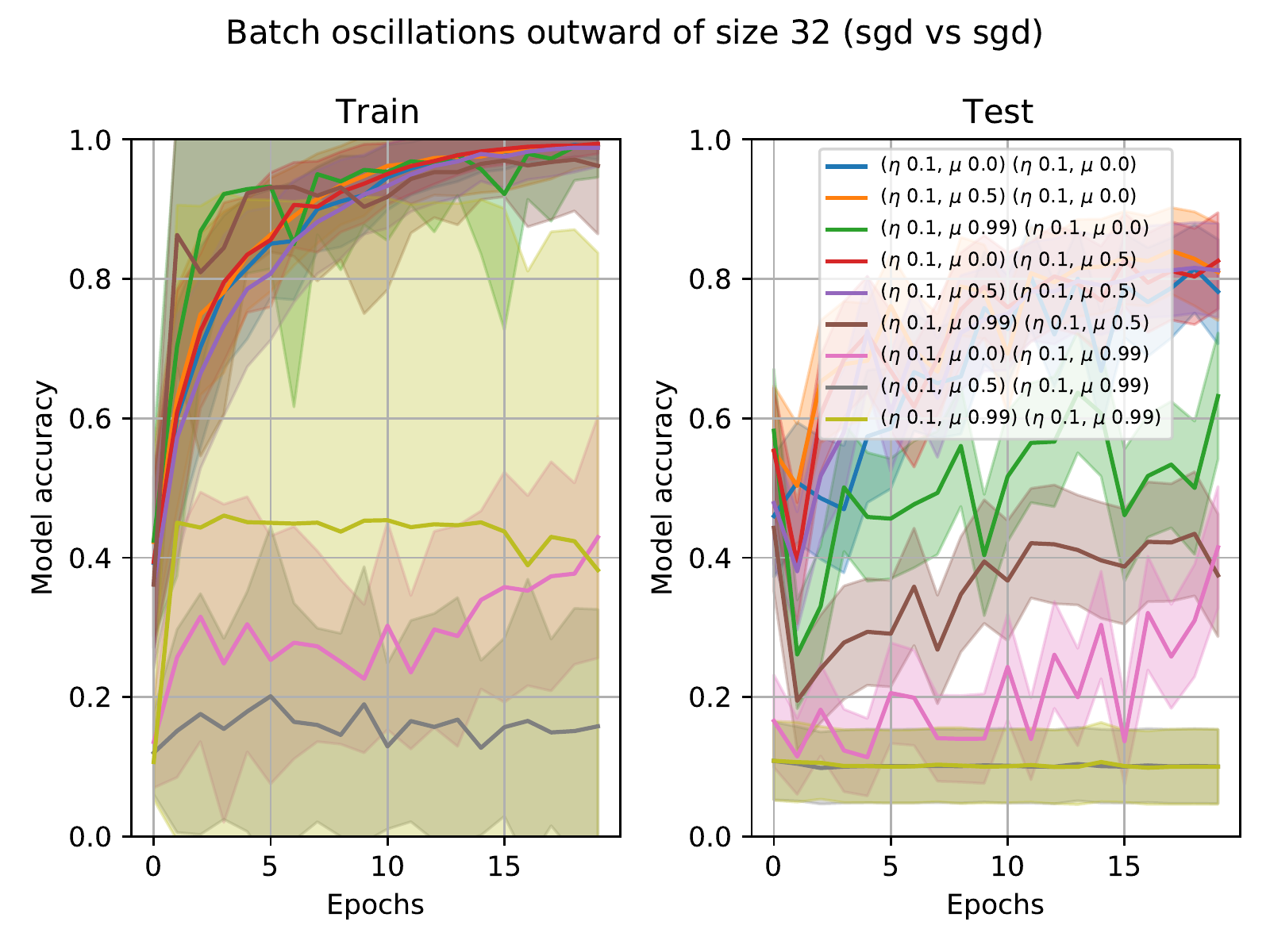} }}%
   \caption{ResNet18 real model SGD training, LeNet5 surrogate with SGD and Batchsize 32}%
    \label{fig:hyper7}%
\end{figure}

\begin{figure}[h]%
    \centering
    \subfloat[High low batching]{{\includegraphics[width=0.4\linewidth]{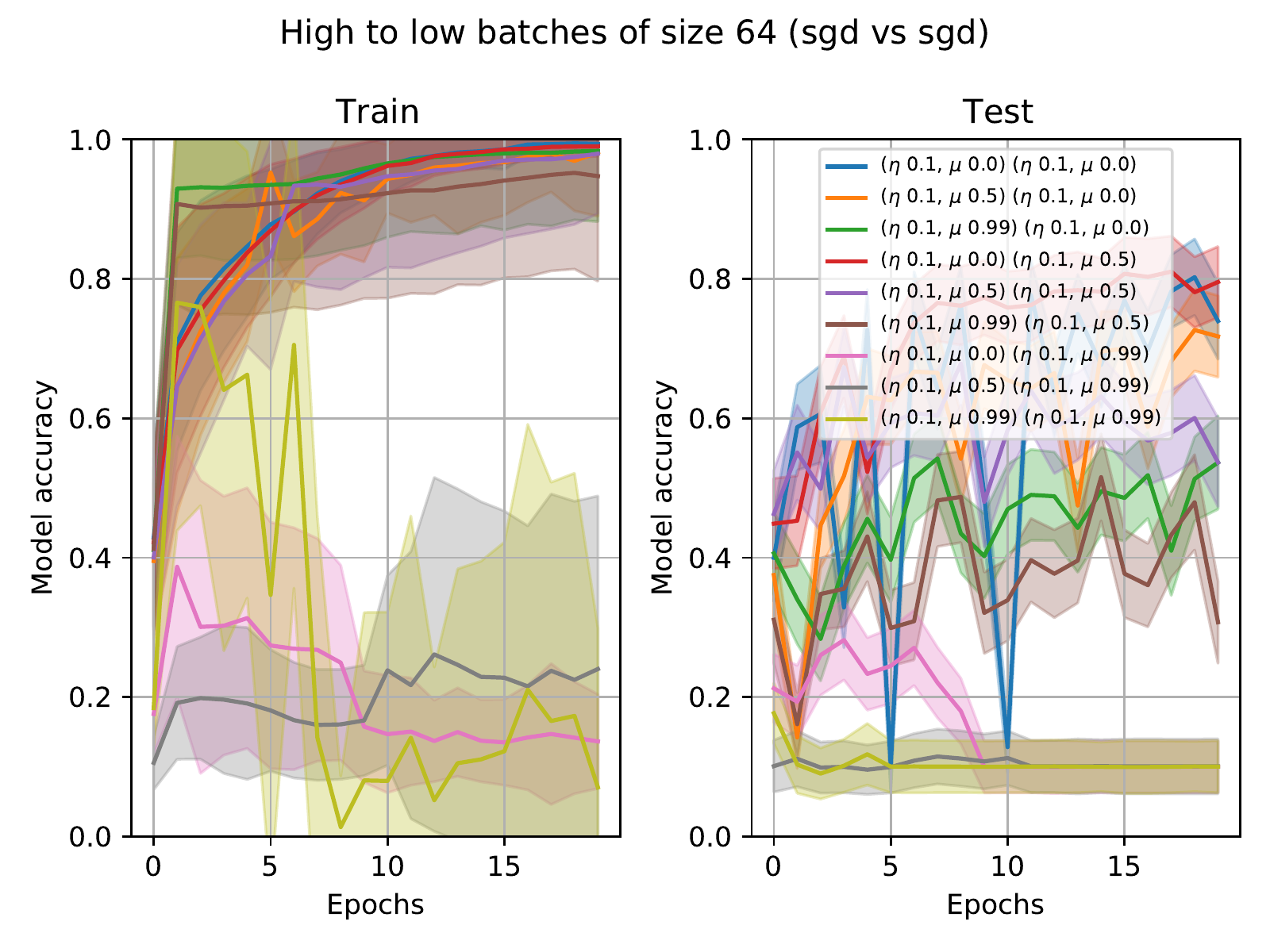} }}%
    \qquad
    \subfloat[Low high batching]{{\includegraphics[width=0.4\linewidth]{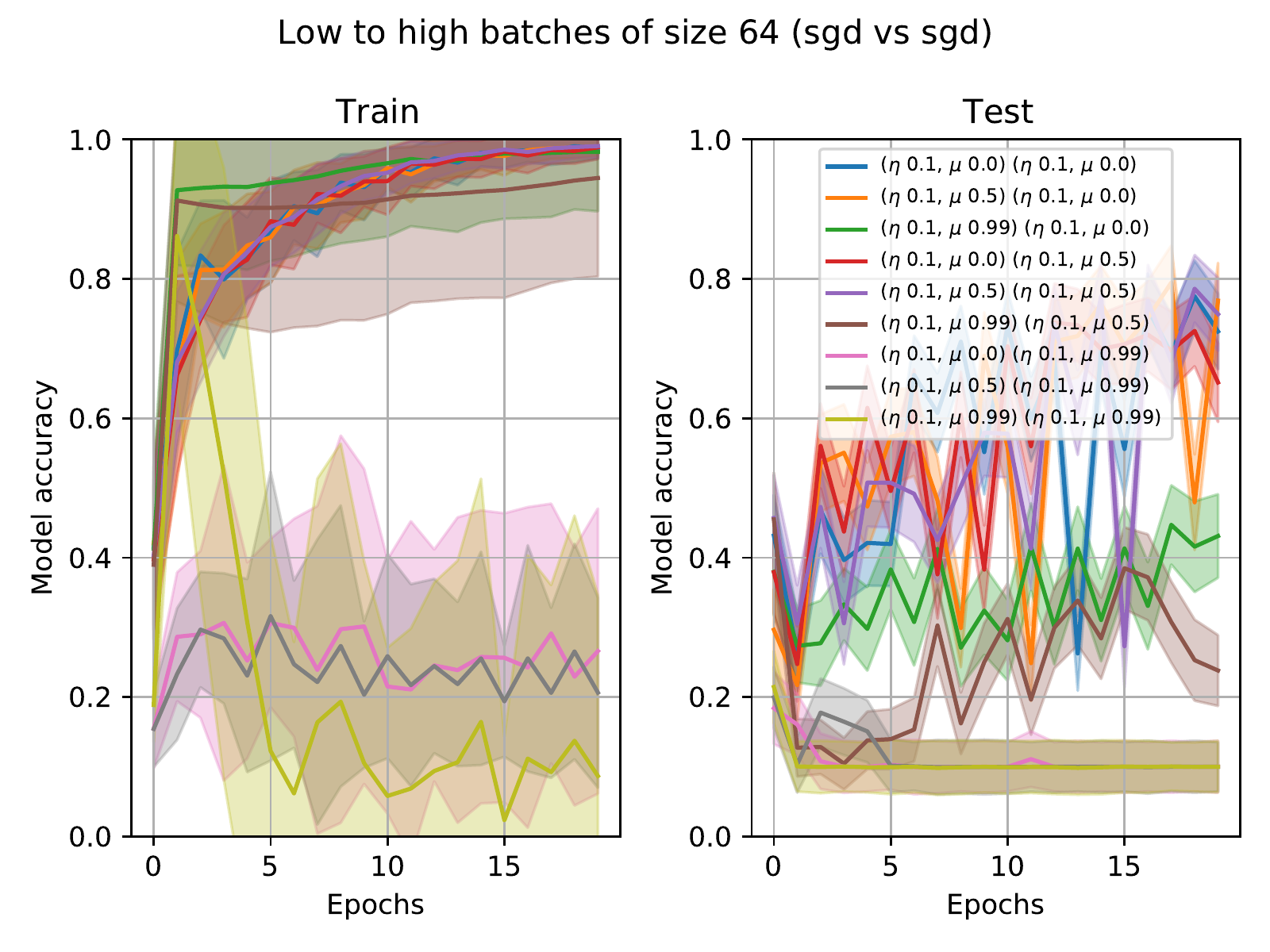} }}%
    \\
    \subfloat[Oscillating inward batching]{{\includegraphics[width=0.4\linewidth]{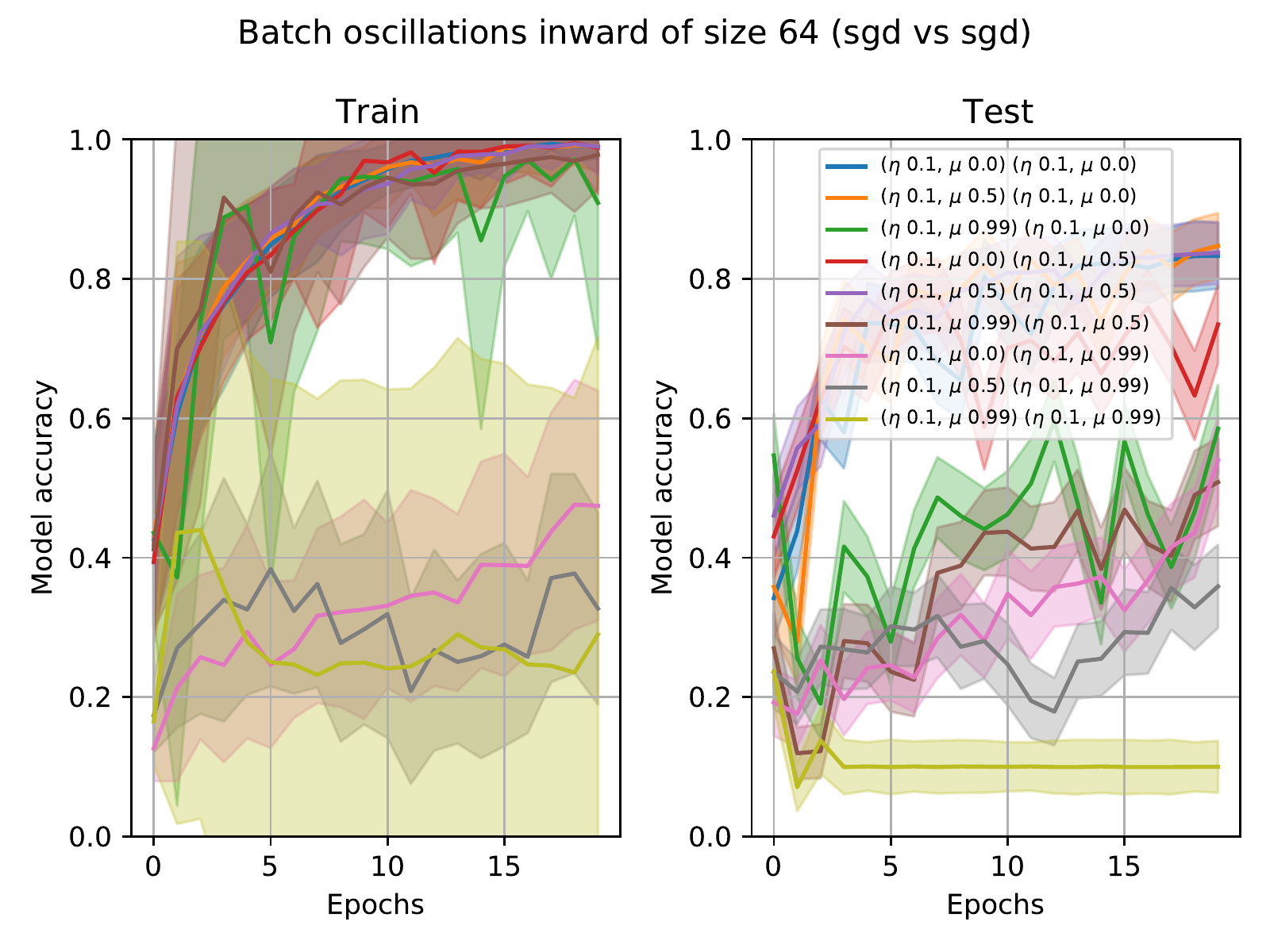} }}%
    \qquad
    \subfloat[Oscillating outward batching]{{\includegraphics[width=0.4\linewidth]{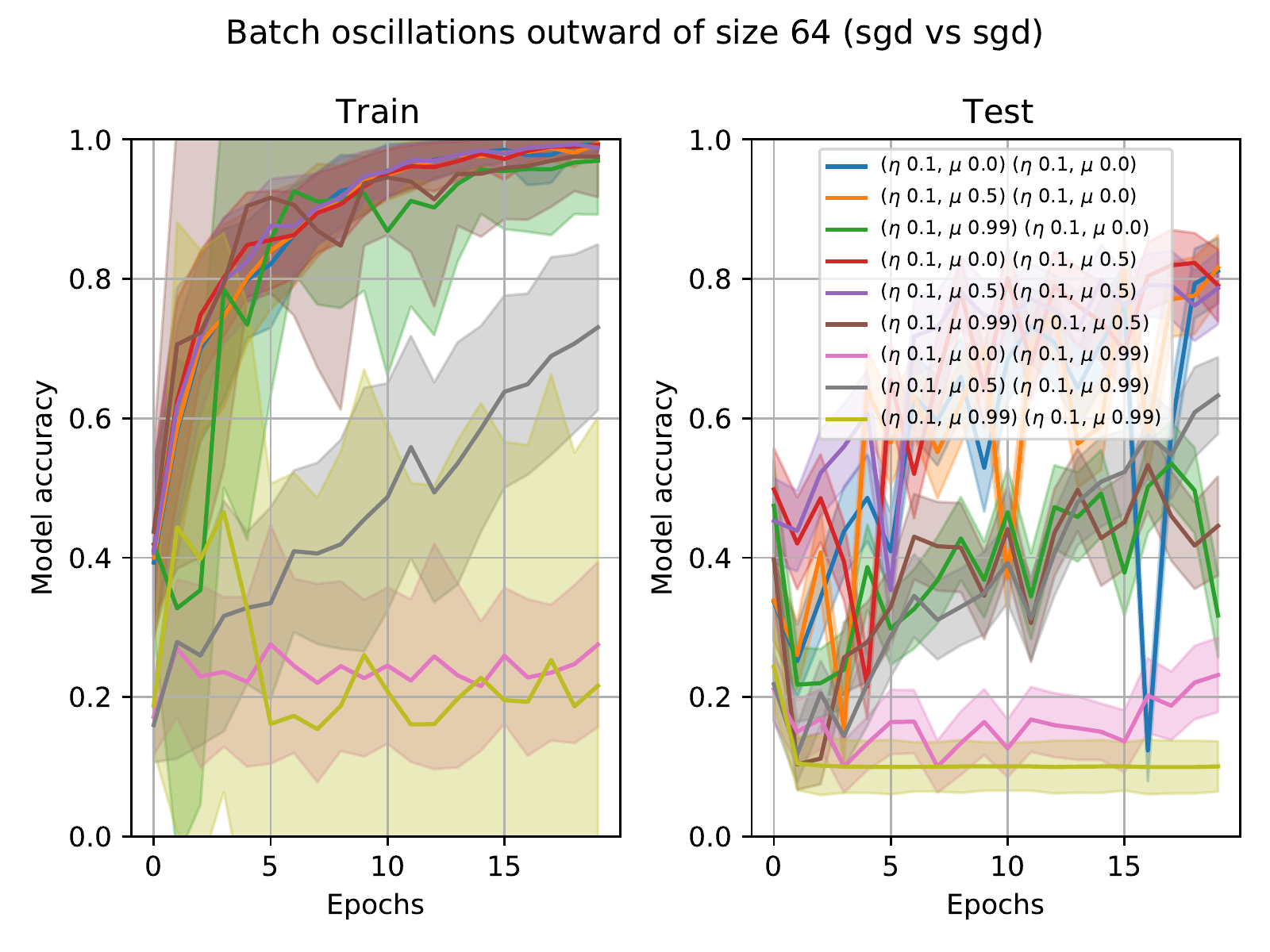} }}%
   \caption{ResNet18 real model SGD training, LeNet5 surrogate with SGD and Batchsize 64}%
    \label{fig:hyper8}%
\end{figure}

\begin{figure}[h]%
    \centering
    \subfloat[High low batching]{{\includegraphics[width=0.4\linewidth]{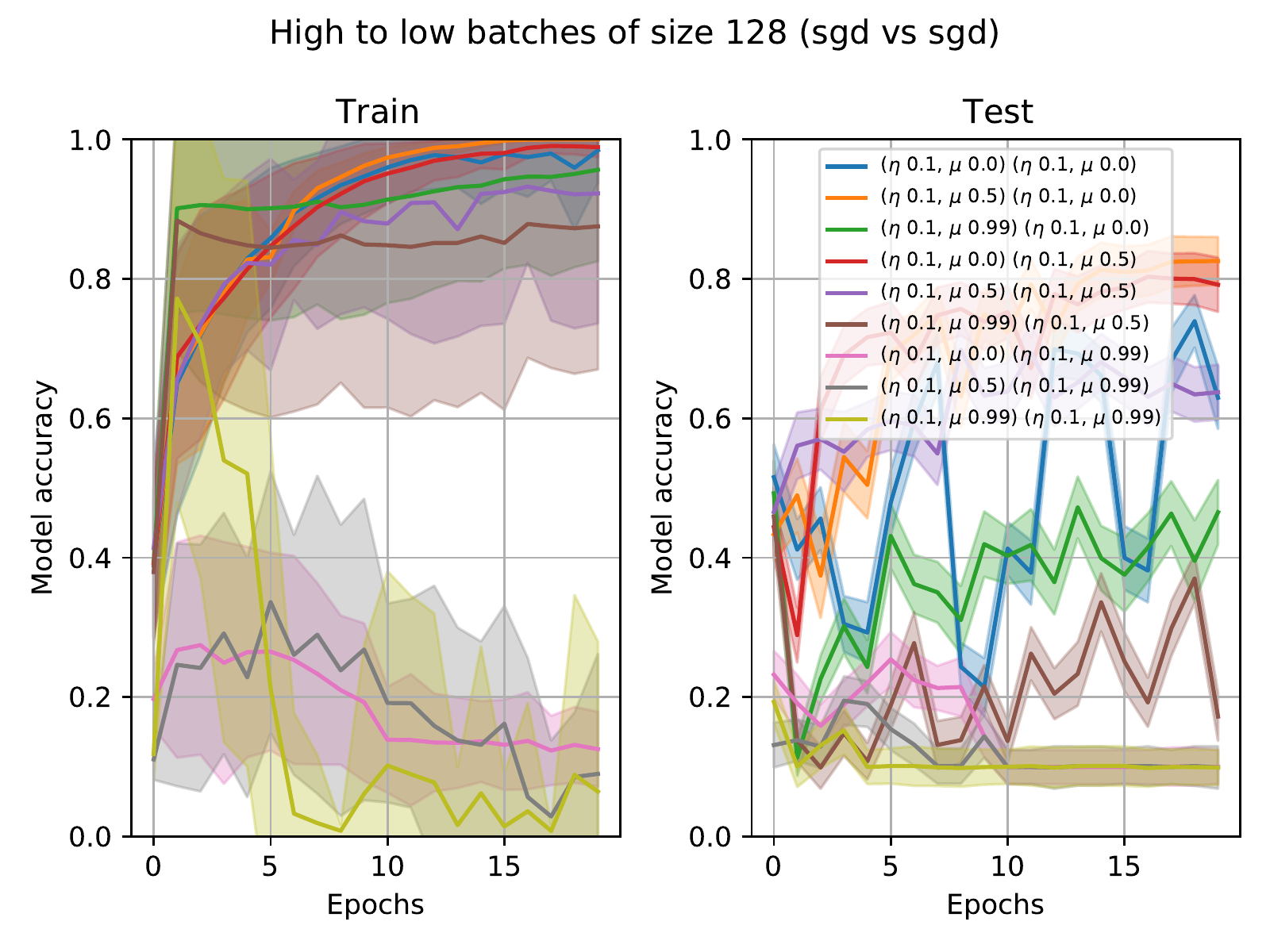} }}%
    \qquad
    \subfloat[Low high batching]{{\includegraphics[width=0.4\linewidth]{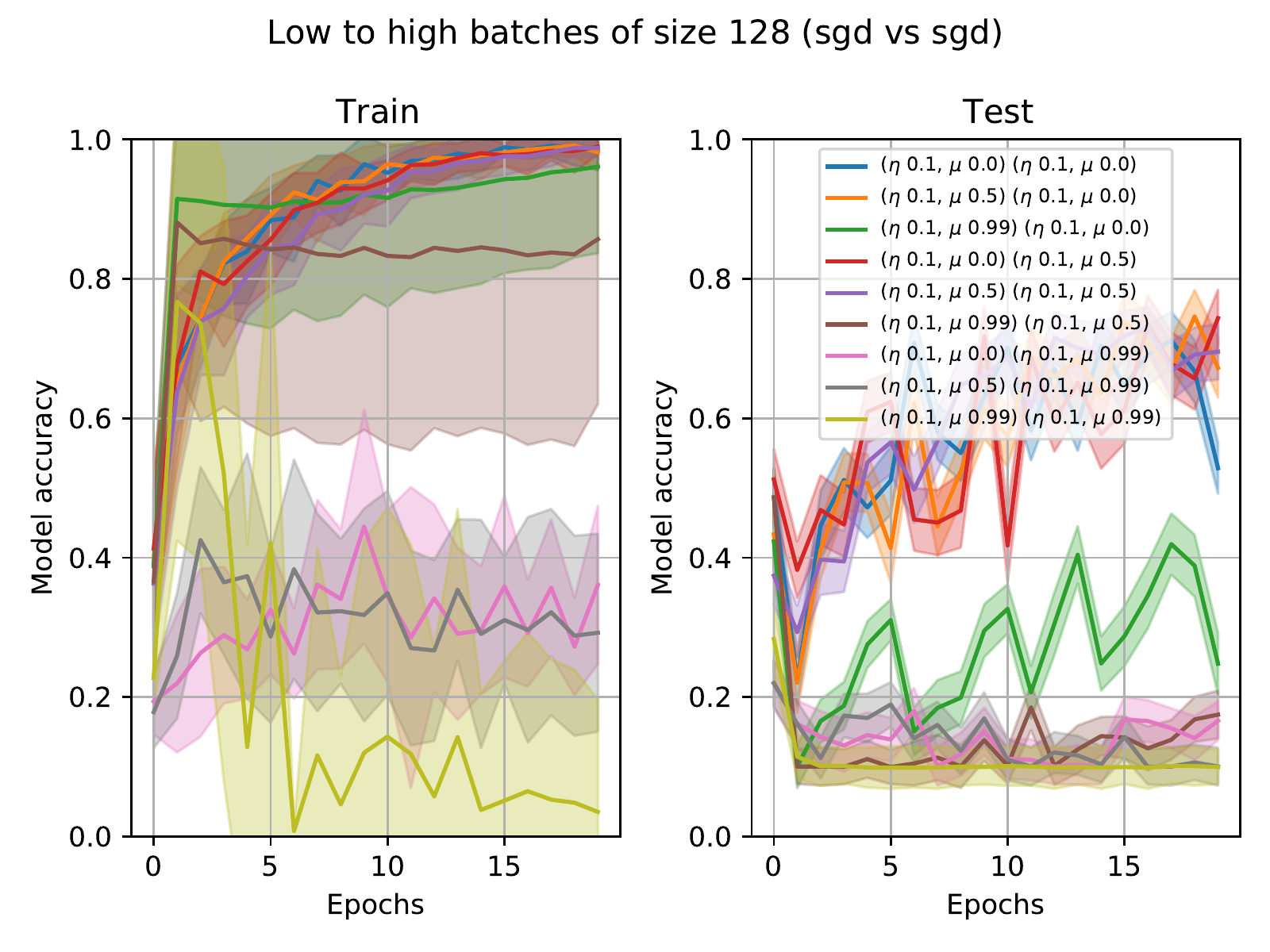} }}%
    \\
    \subfloat[Oscillating inward batching]{{\includegraphics[width=0.4\linewidth]{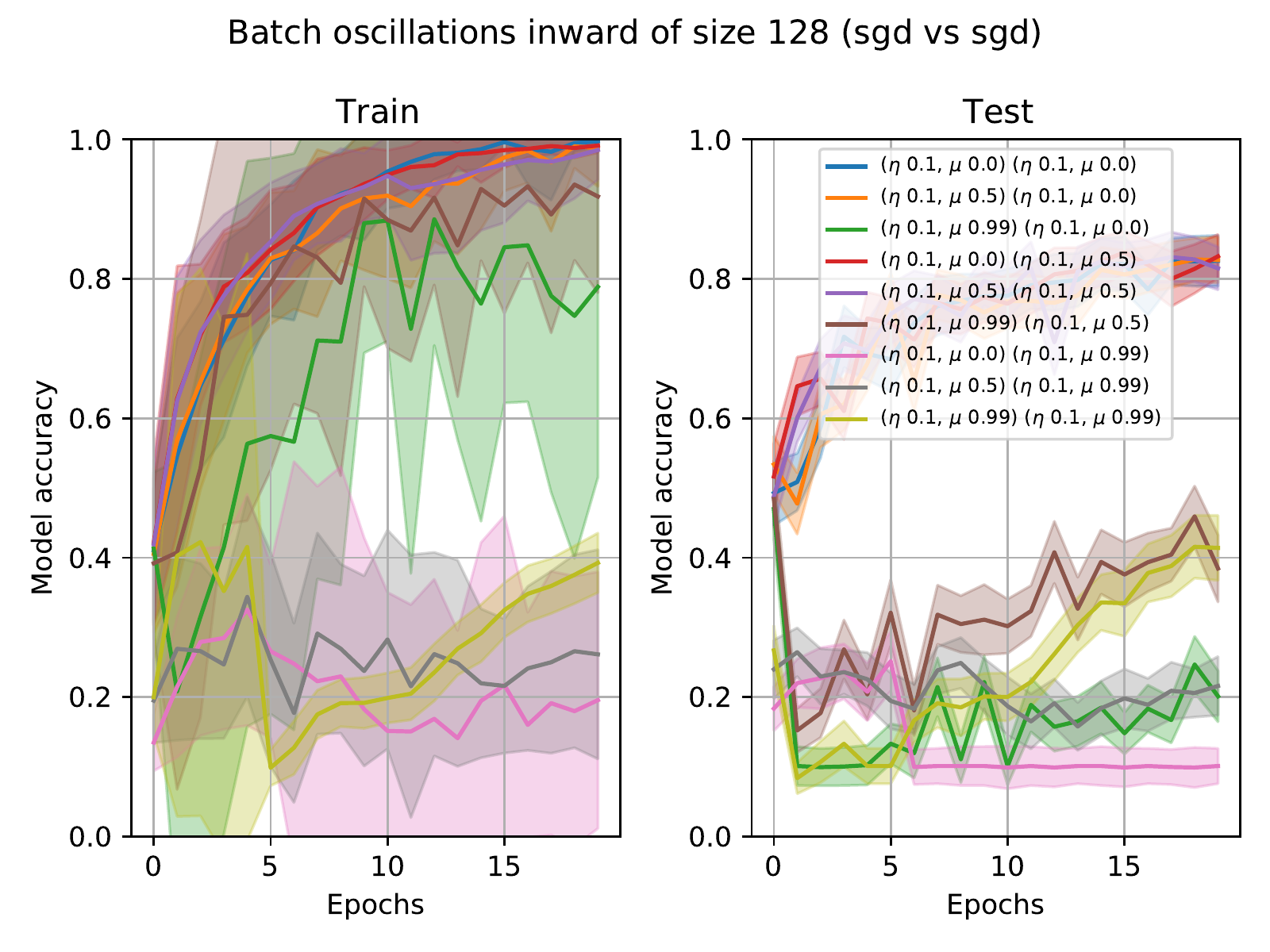} }}%
    \qquad
    \subfloat[Oscillating outward batching]{{\includegraphics[width=0.4\linewidth]{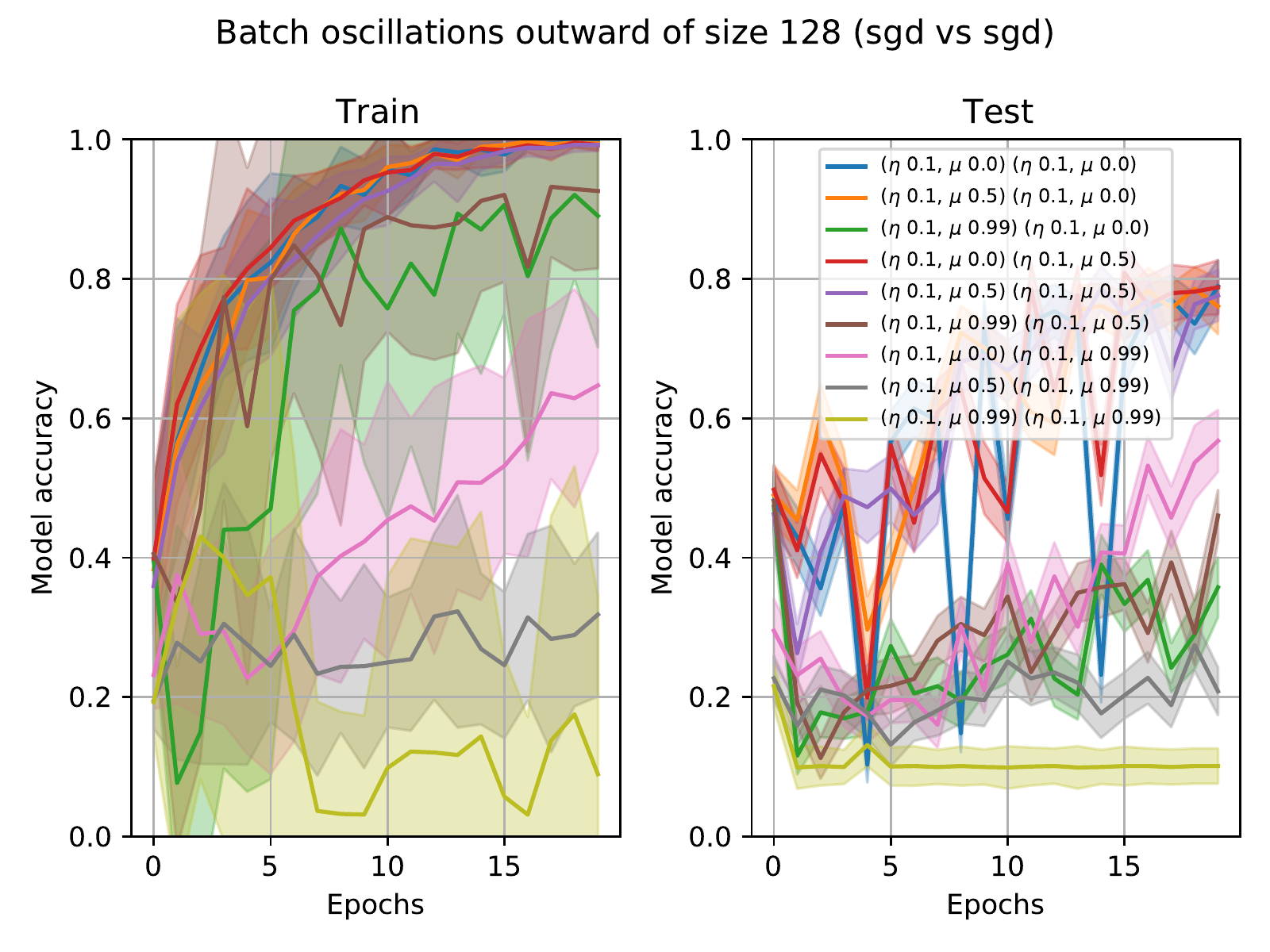} }}%
   \caption{ResNet18 real model SGD training, LeNet5 surrogate with SGD and Batchsize 128}%
    \label{fig:hyper9}%
\end{figure}

%  -----------
%  Results from sgd -> adam

\begin{figure}[h]%
    \centering
    \subfloat[High low batching]{{\includegraphics[width=0.4\linewidth]{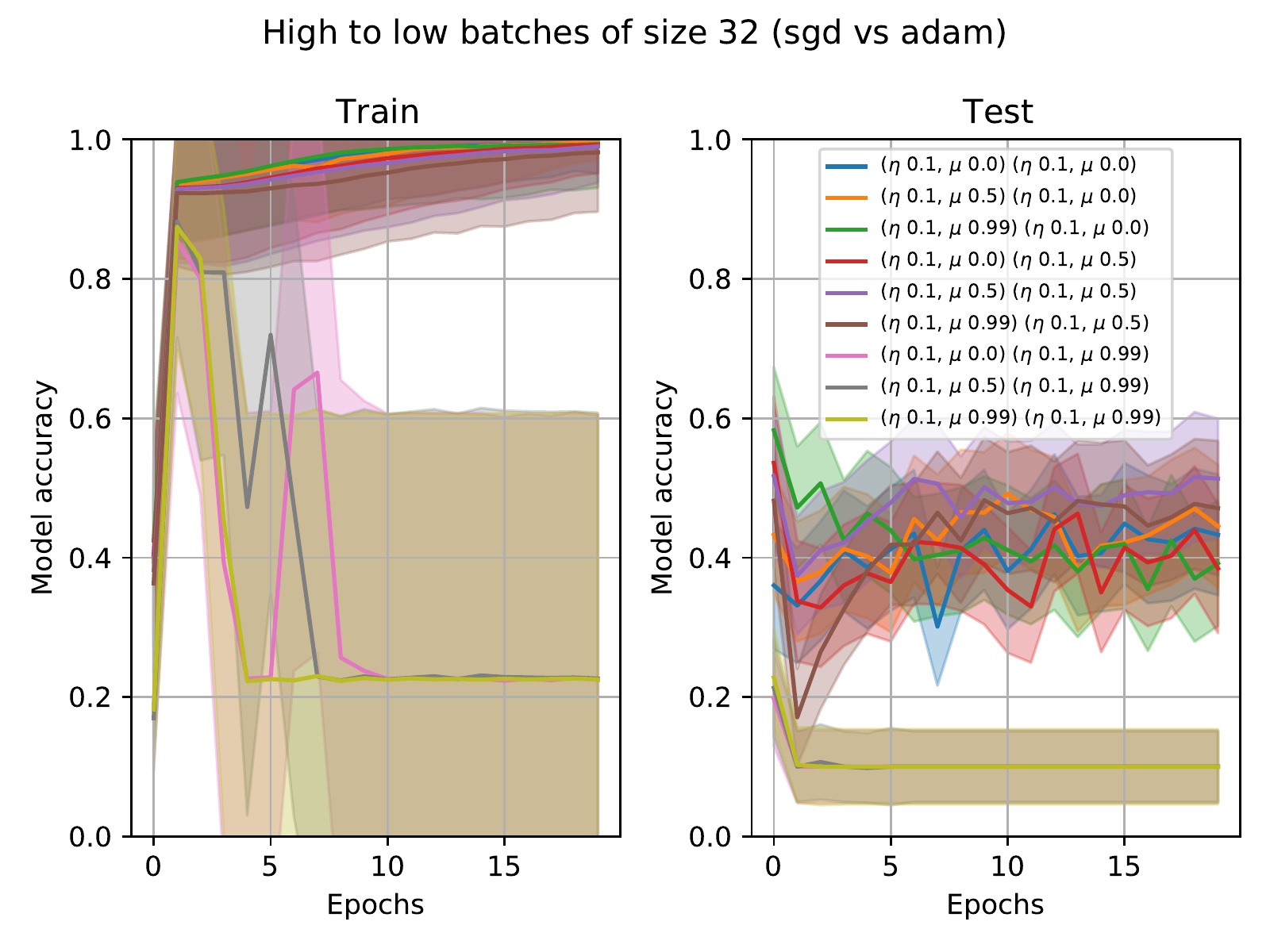} }}%
    \qquad
    \subfloat[Low high batching]{{\includegraphics[width=0.4\linewidth]{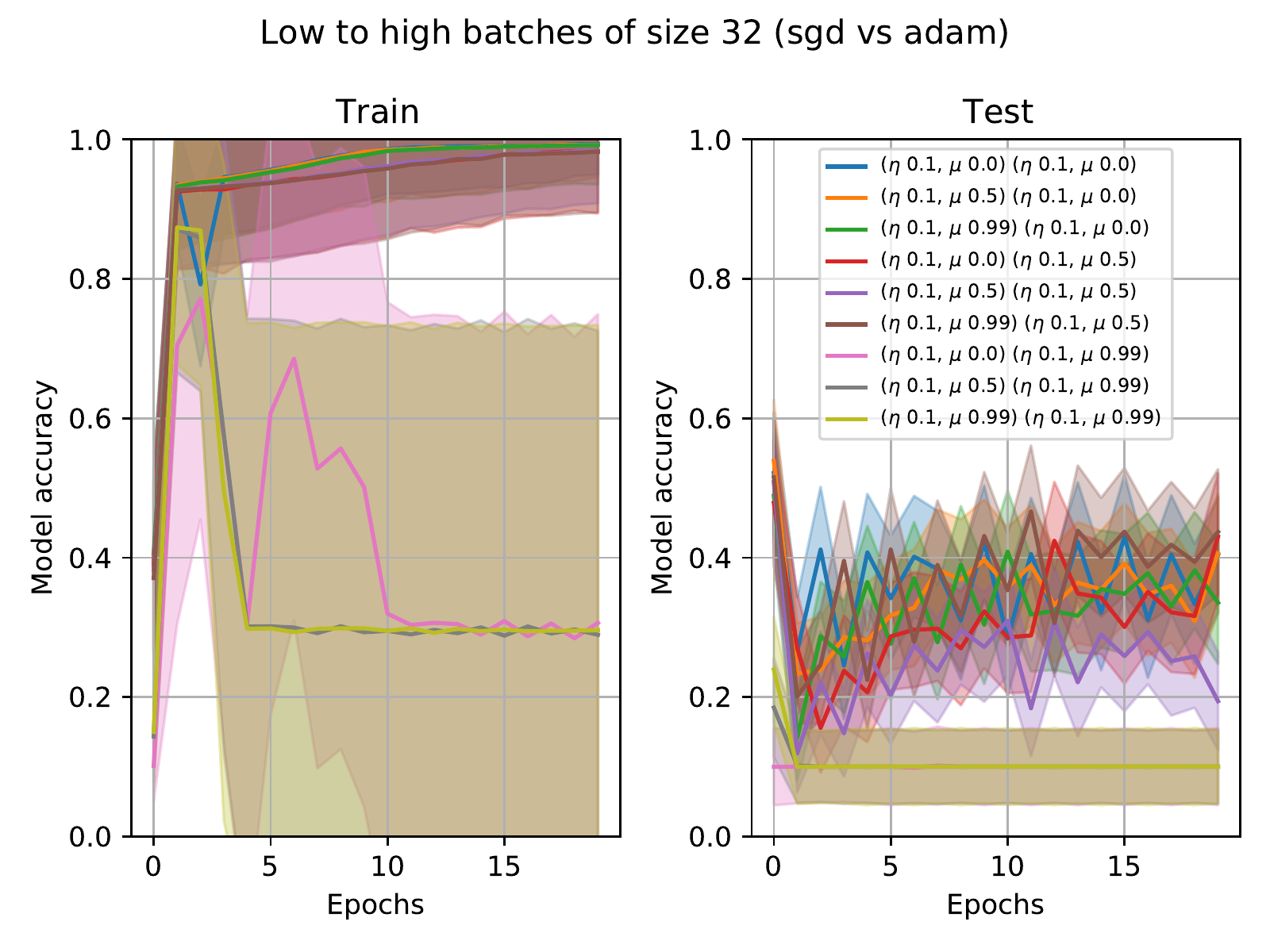} }}%
    \\
    \subfloat[Oscillating inward batching]{{\includegraphics[width=0.4\linewidth]{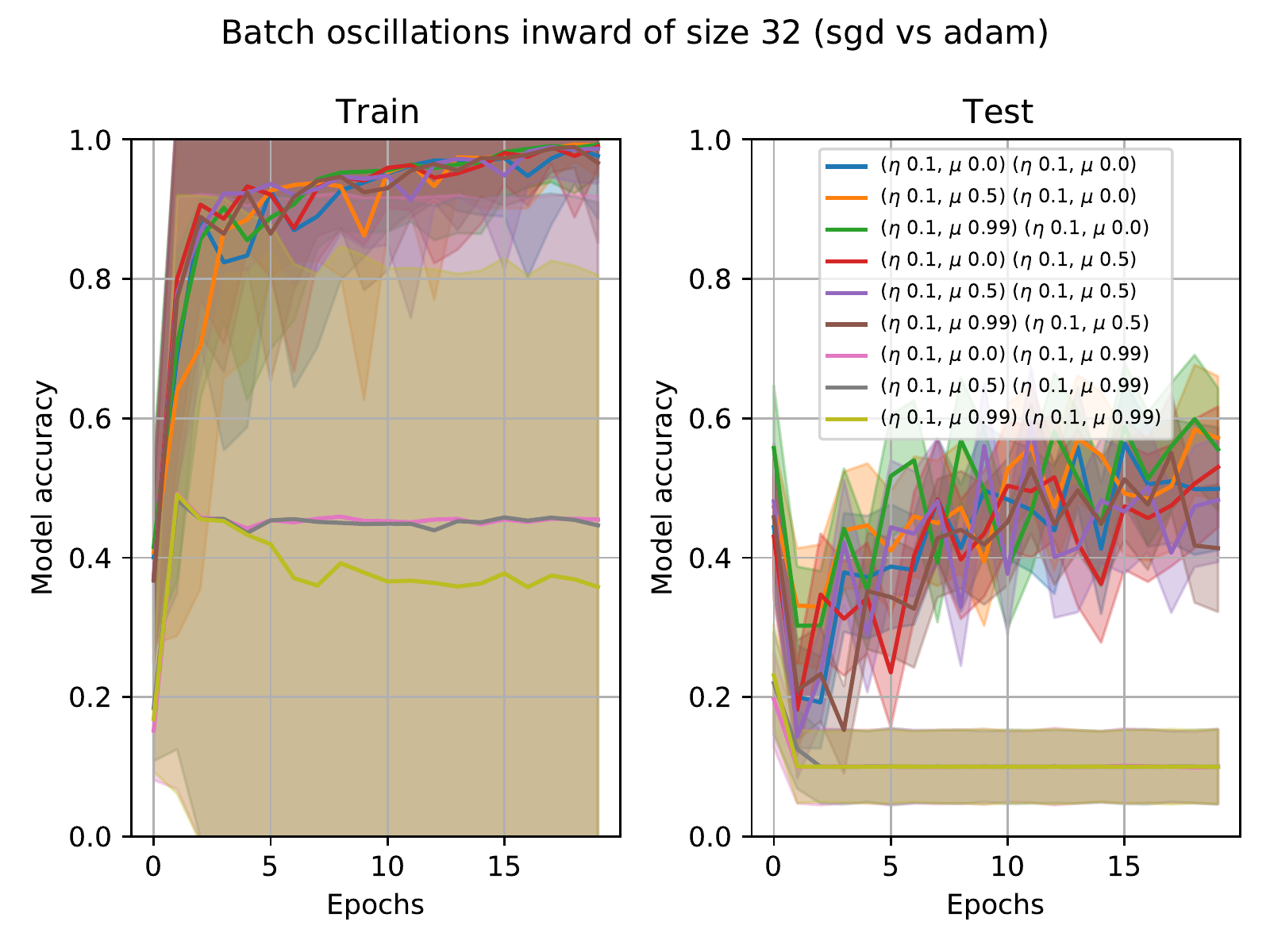} }}%
    \qquad
    \subfloat[Oscillating outward batching]{{\includegraphics[width=0.4\linewidth]{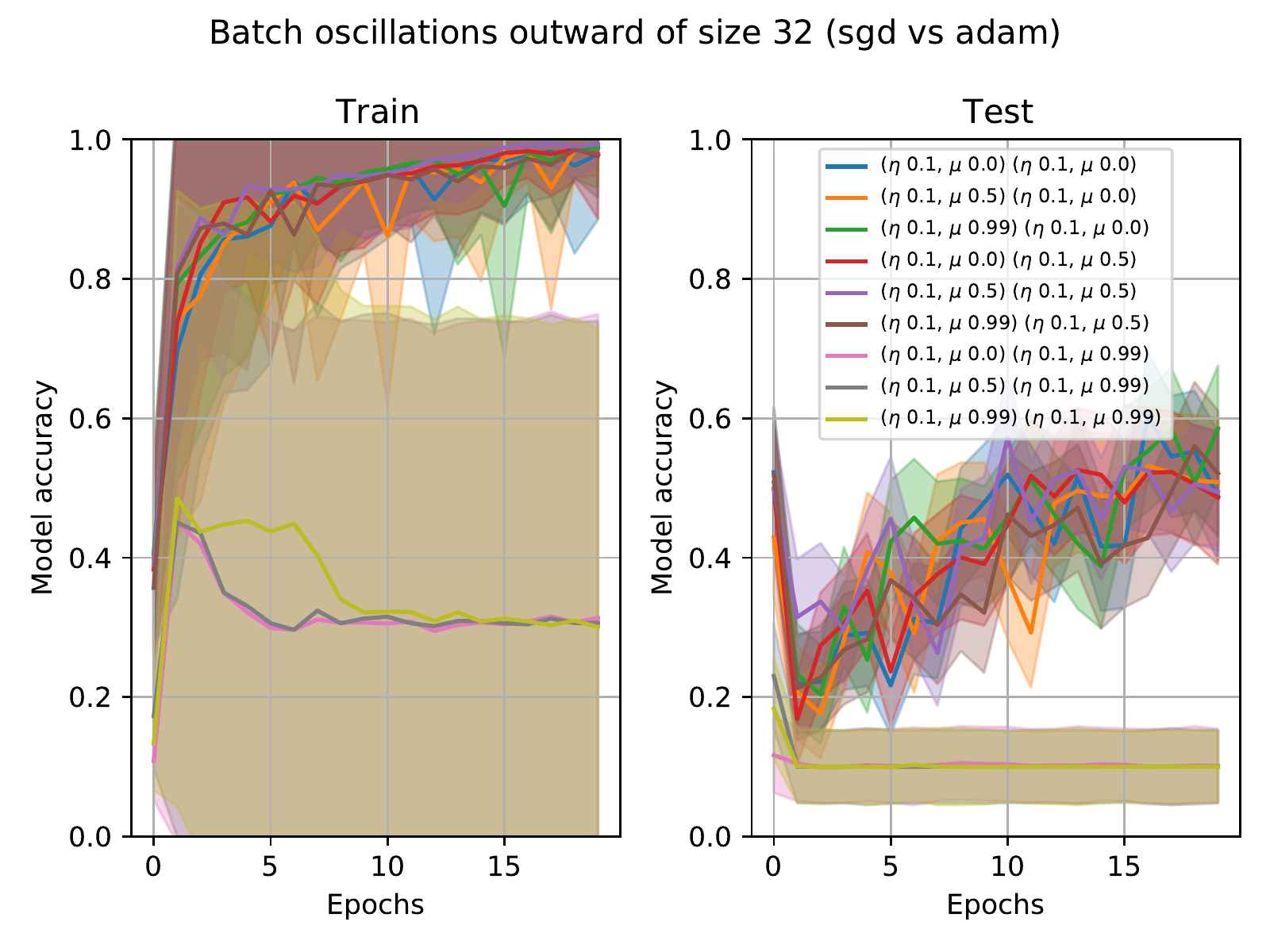} }}%
    \caption{ResNet18 real model SGD training, LeNet5 surrogate with Adam and Batchsize 32}%
    \label{fig:hyper10}%
\end{figure}

\begin{figure}[h]%
    \centering
    \subfloat[High low batching]{{\includegraphics[width=0.4\linewidth]{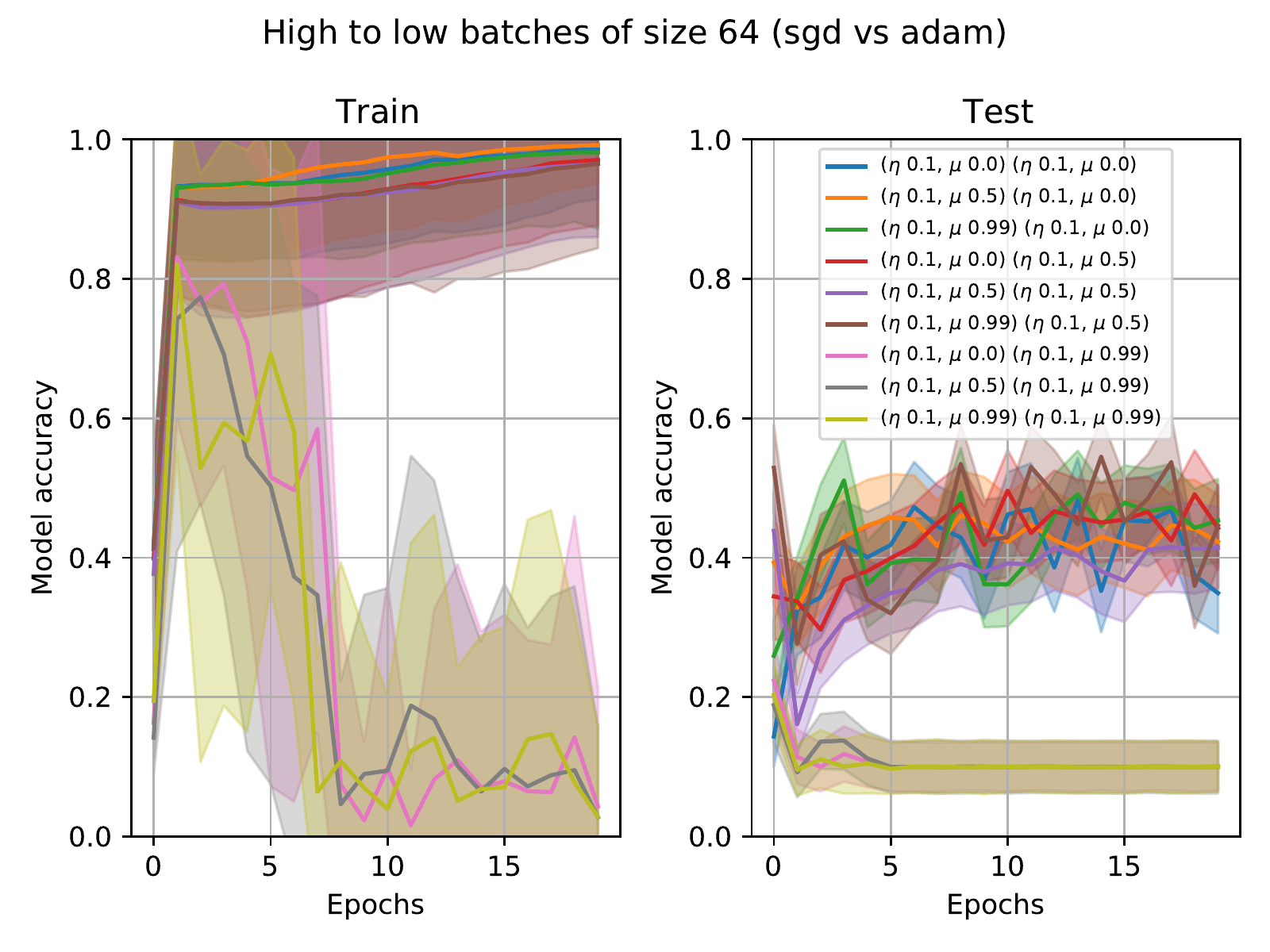} }}%
    \qquad
    \subfloat[Low high batching]{{\includegraphics[width=0.4\linewidth]{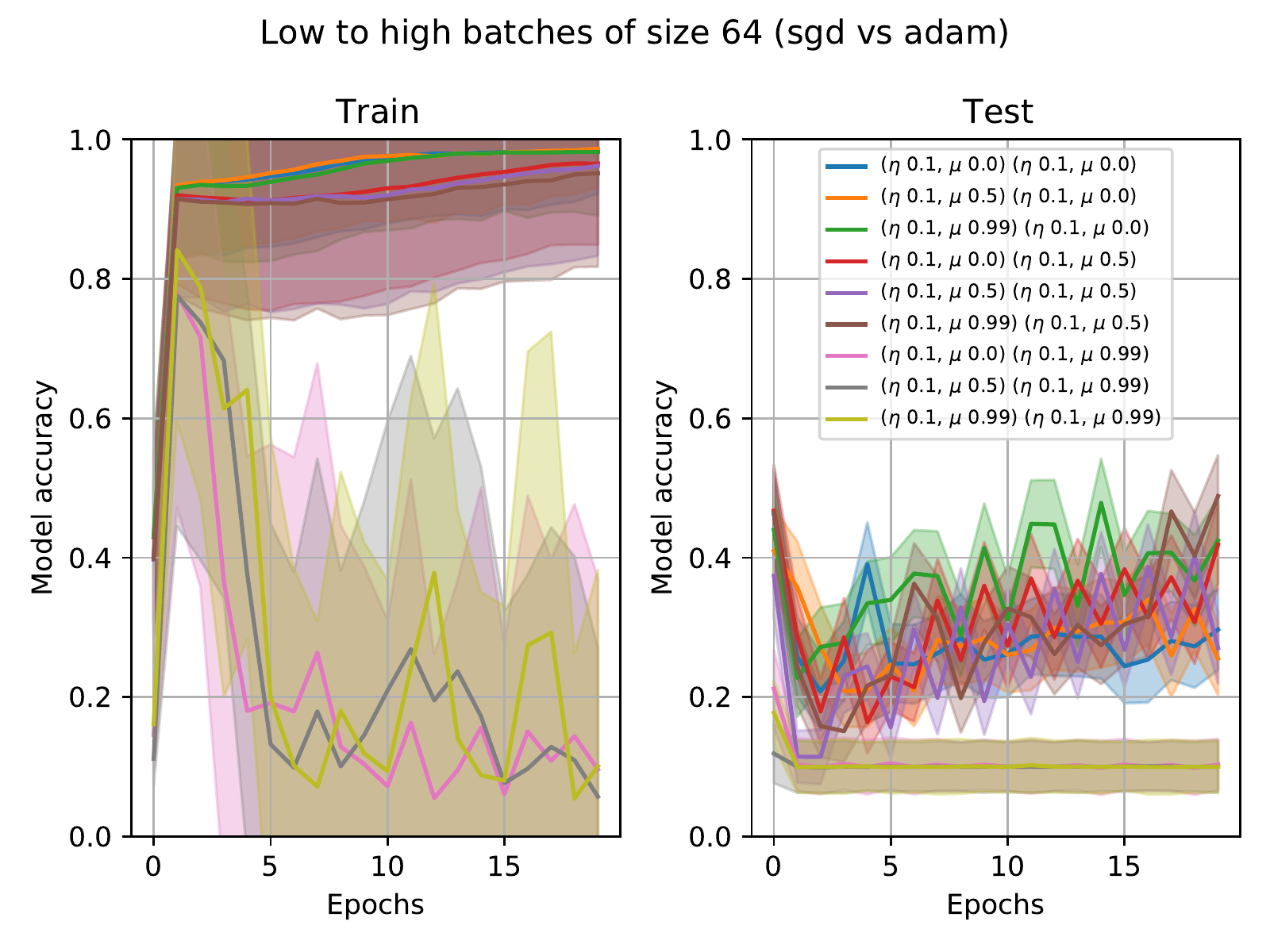} }}%
    \\
    \subfloat[Oscillating inward batching]{{\includegraphics[width=0.4\linewidth]{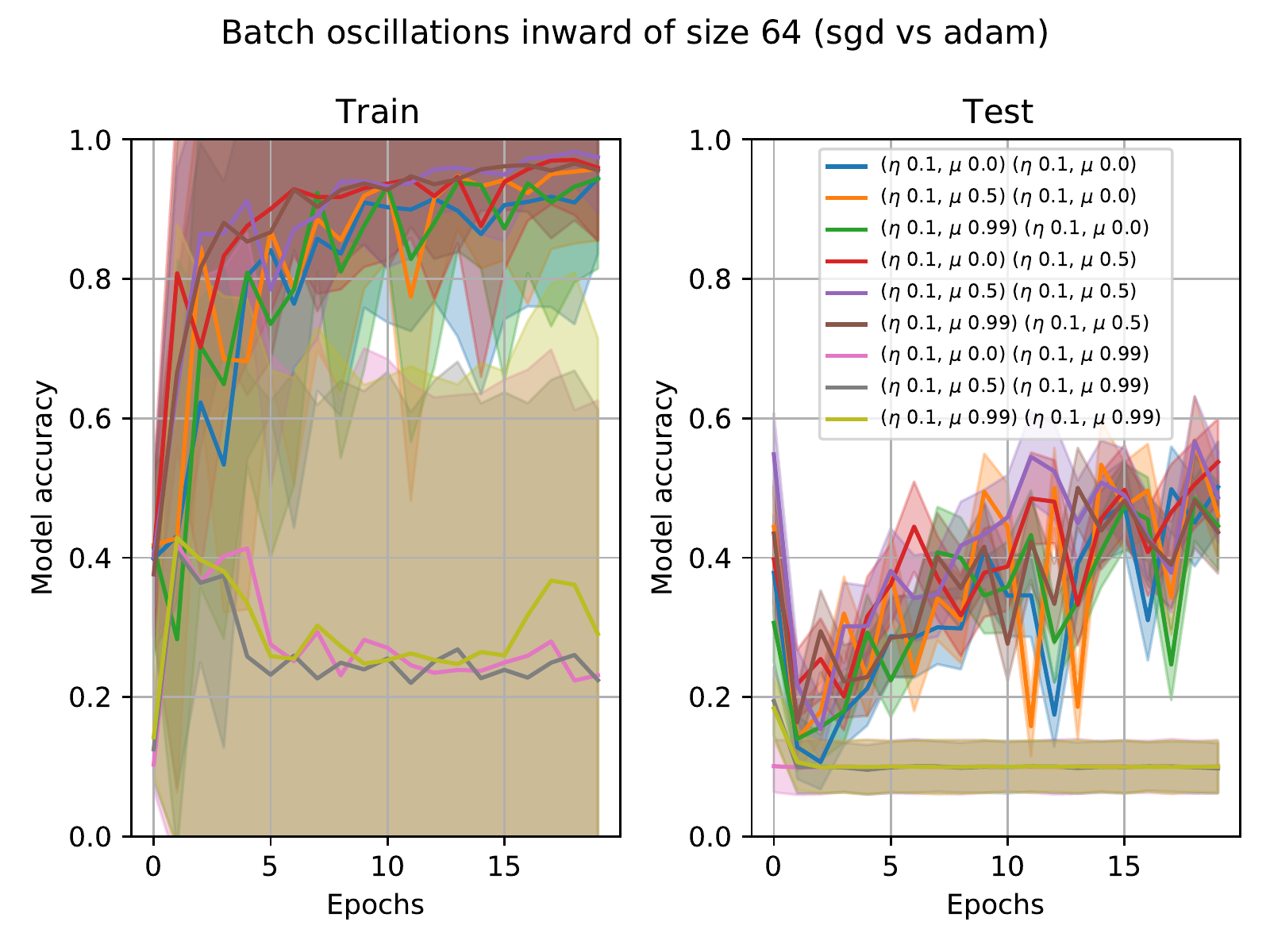} }}%
    \qquad
    \subfloat[Oscillating outward batching]{{\includegraphics[width=0.4\linewidth]{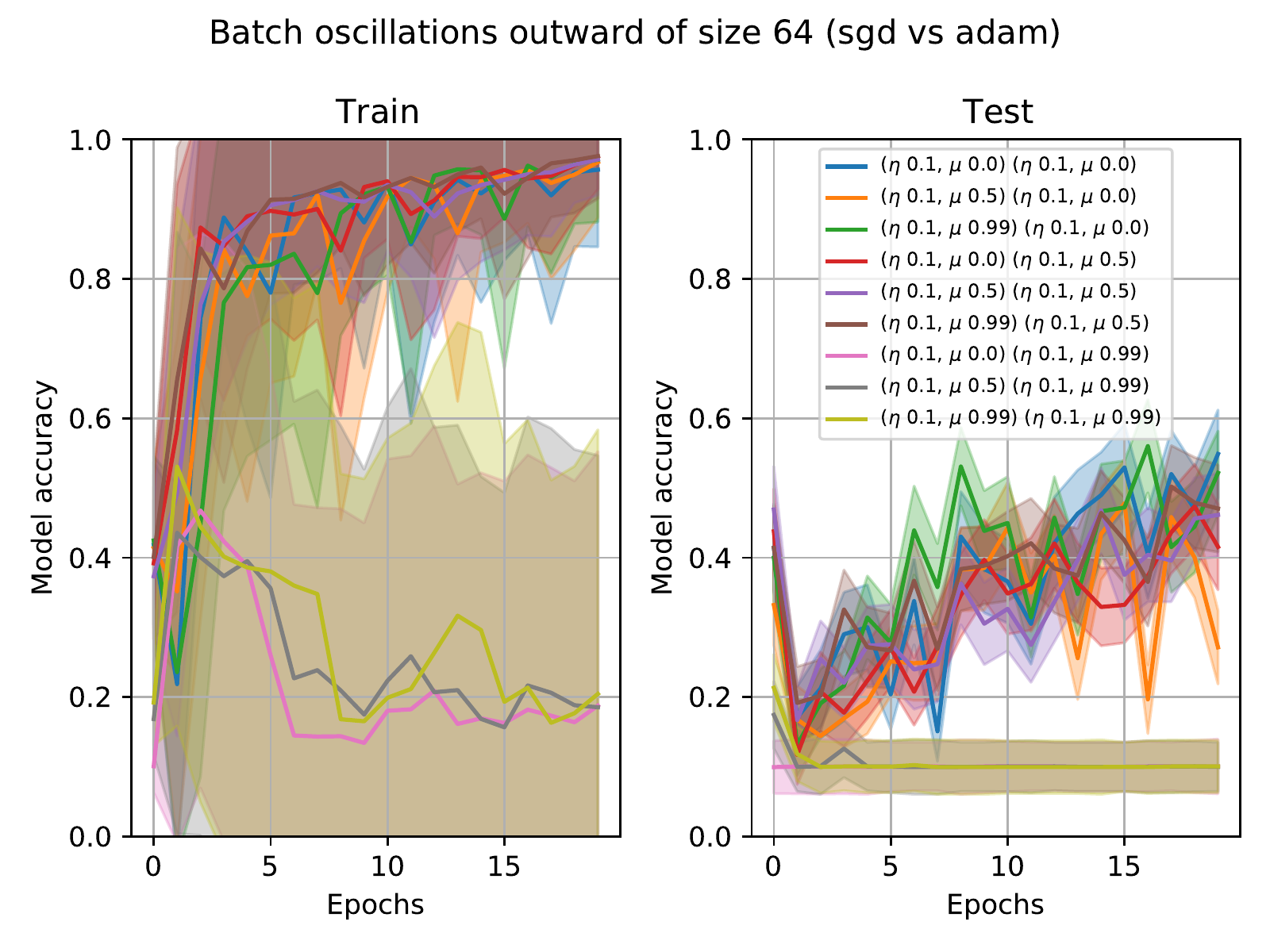} }}%
    \caption{ResNet18 real model SGD training, LeNet5 surrogate with Adam and Batchsize 64}%
    \label{fig:hyper11}%
\end{figure}

\begin{figure}[h]%
    \centering
    \subfloat[High low batching]{{\includegraphics[width=0.4\linewidth]{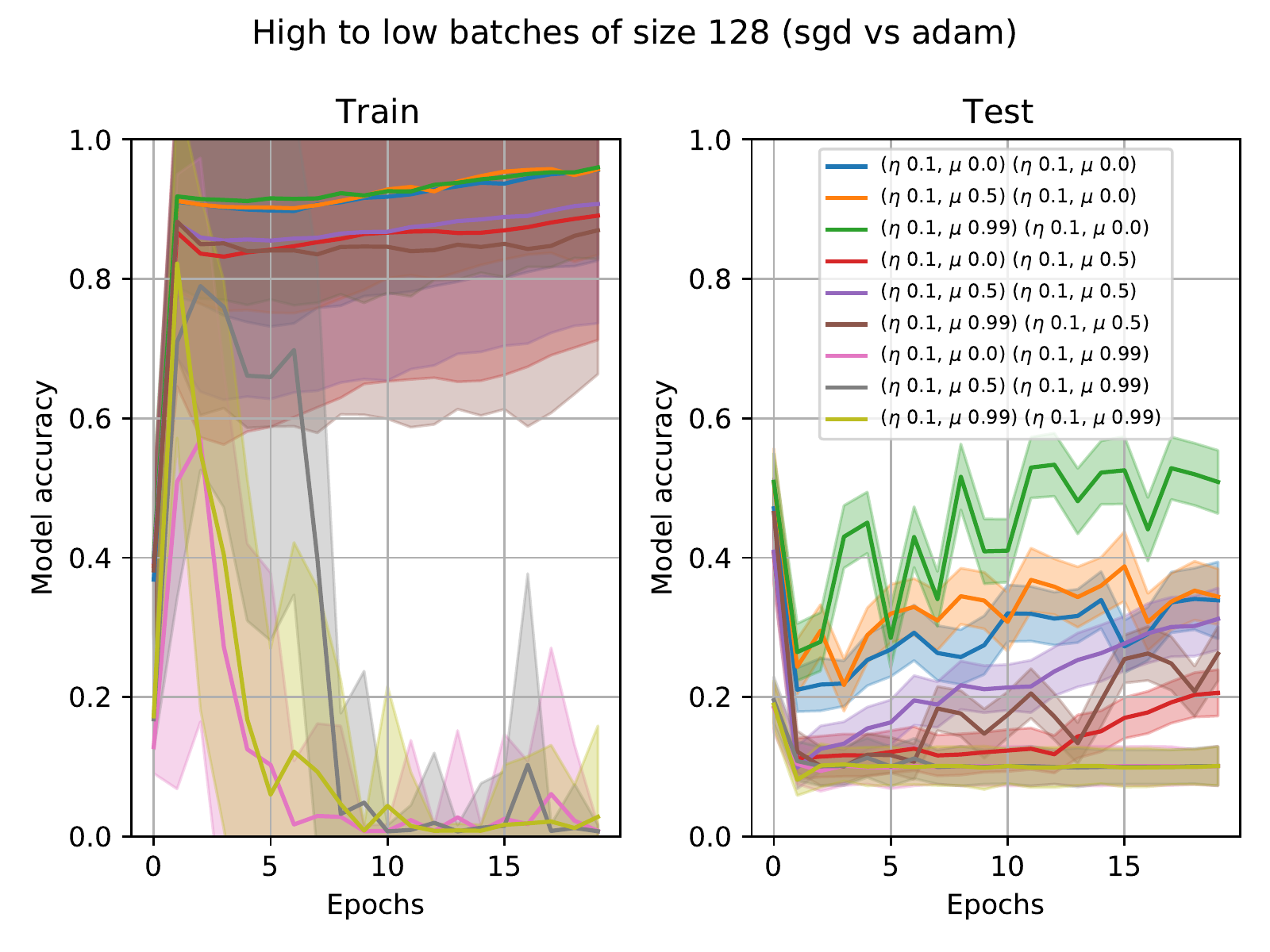} }}%
    \qquad
    \subfloat[Low high batching]{{\includegraphics[width=0.4\linewidth]{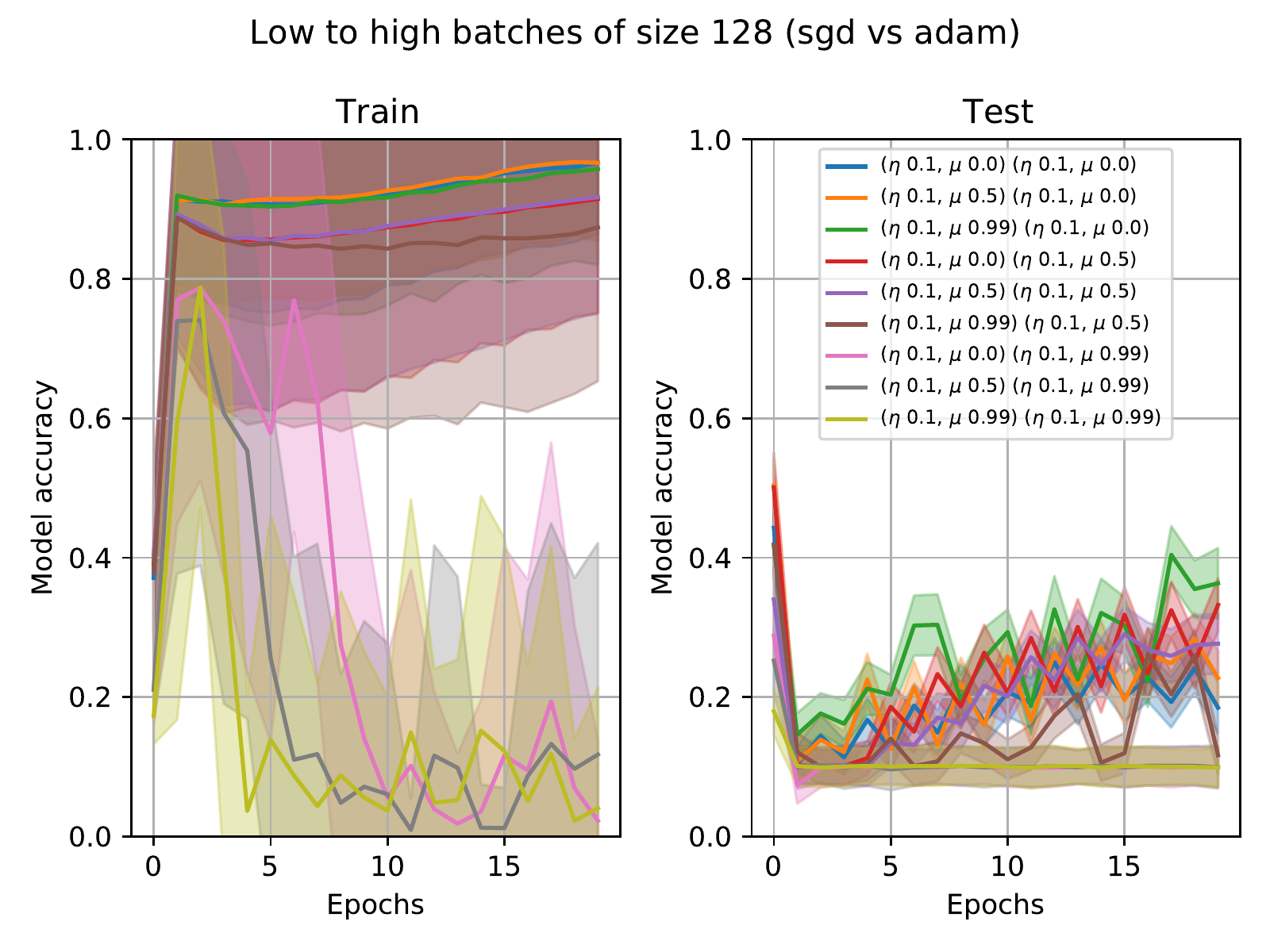} }}%
    \\
    \subfloat[Oscillating inward batching]{{\includegraphics[width=0.4\linewidth]{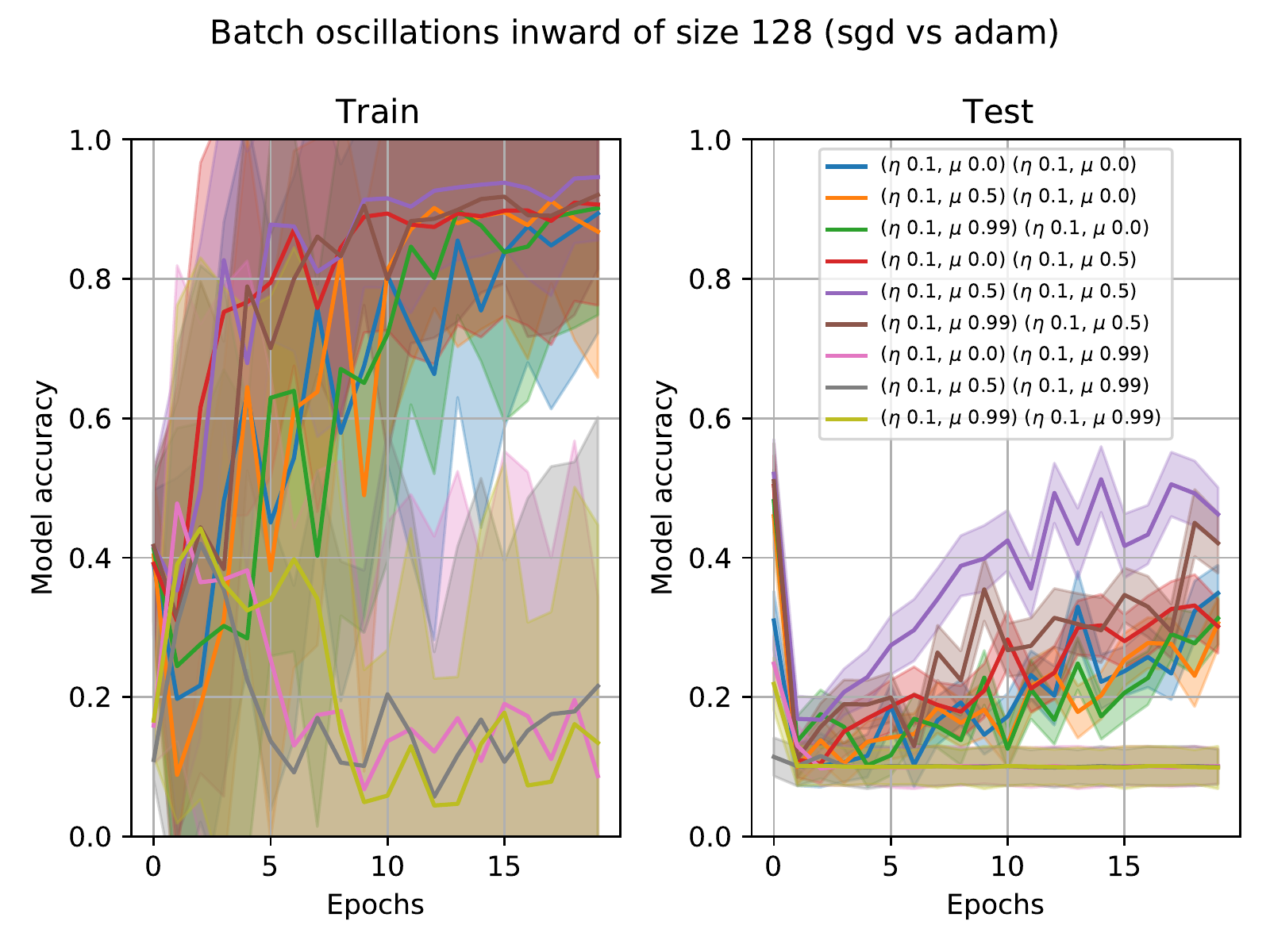} }}%
    \qquad
    \subfloat[Oscillating outward batching]{{\includegraphics[width=0.4\linewidth]{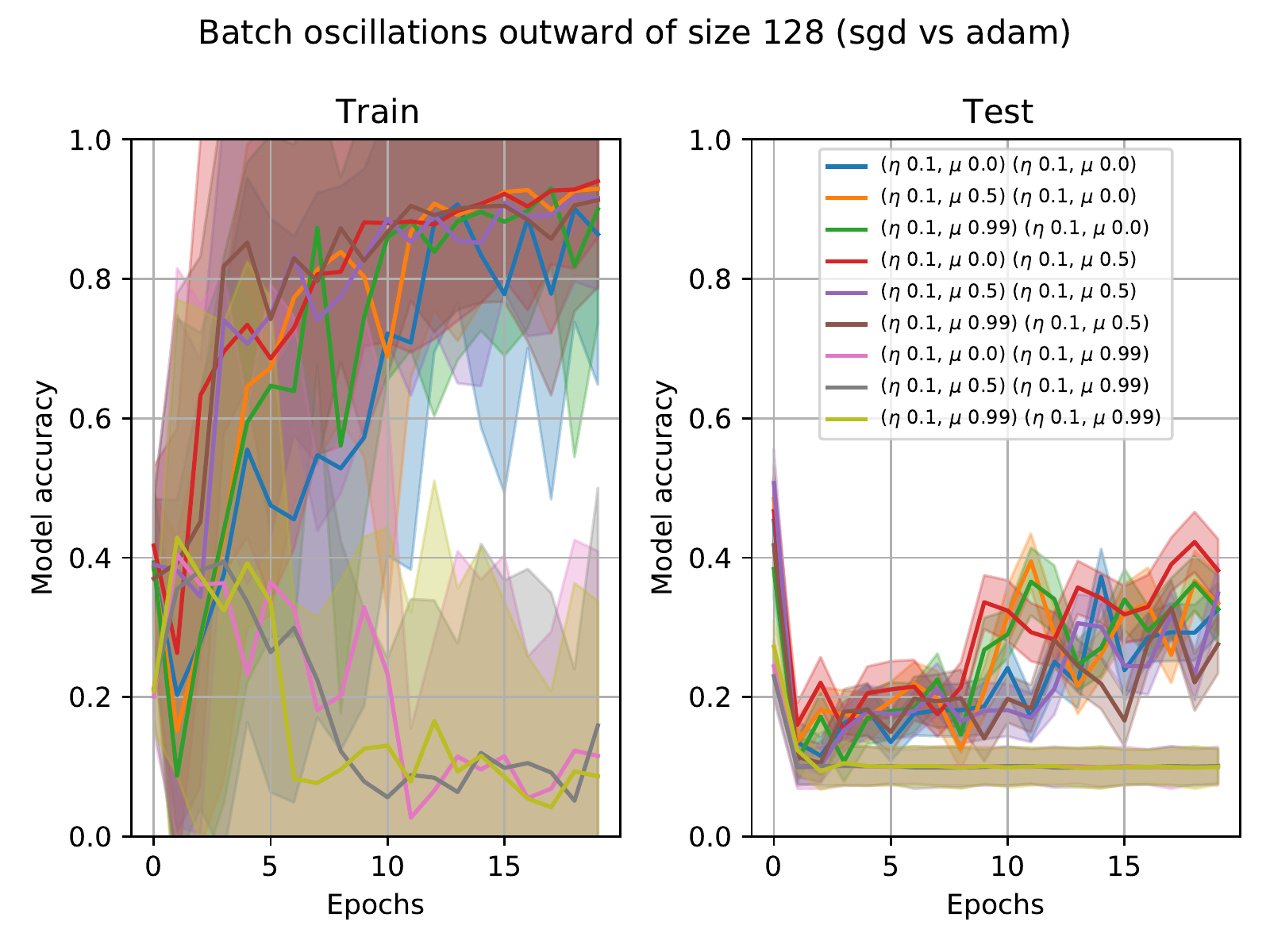} }}%
    \caption{ResNet18 real model SGD training, LeNet5 surrogate with Adam and Batchsize 128}%
    \label{fig:hyper12}%
\end{figure}

\section{Whitebox batching attack performance}
\label{sec:whiteboxblackbox}

We show training of ResNet18 with the CIFAR10 dataset in the presence of whitebox BRRR attacks in~\Cref{fig:whitebox_interintra}. We see that datapoint-wise attacks perform extremely well, while batchwise BRRR attacks force the network to memorise the training data and reduce its testing data performance. 

\begin{figure}[h]%
    \centering
    \subfloat[HighLow]{{\includegraphics[width=0.45\linewidth]{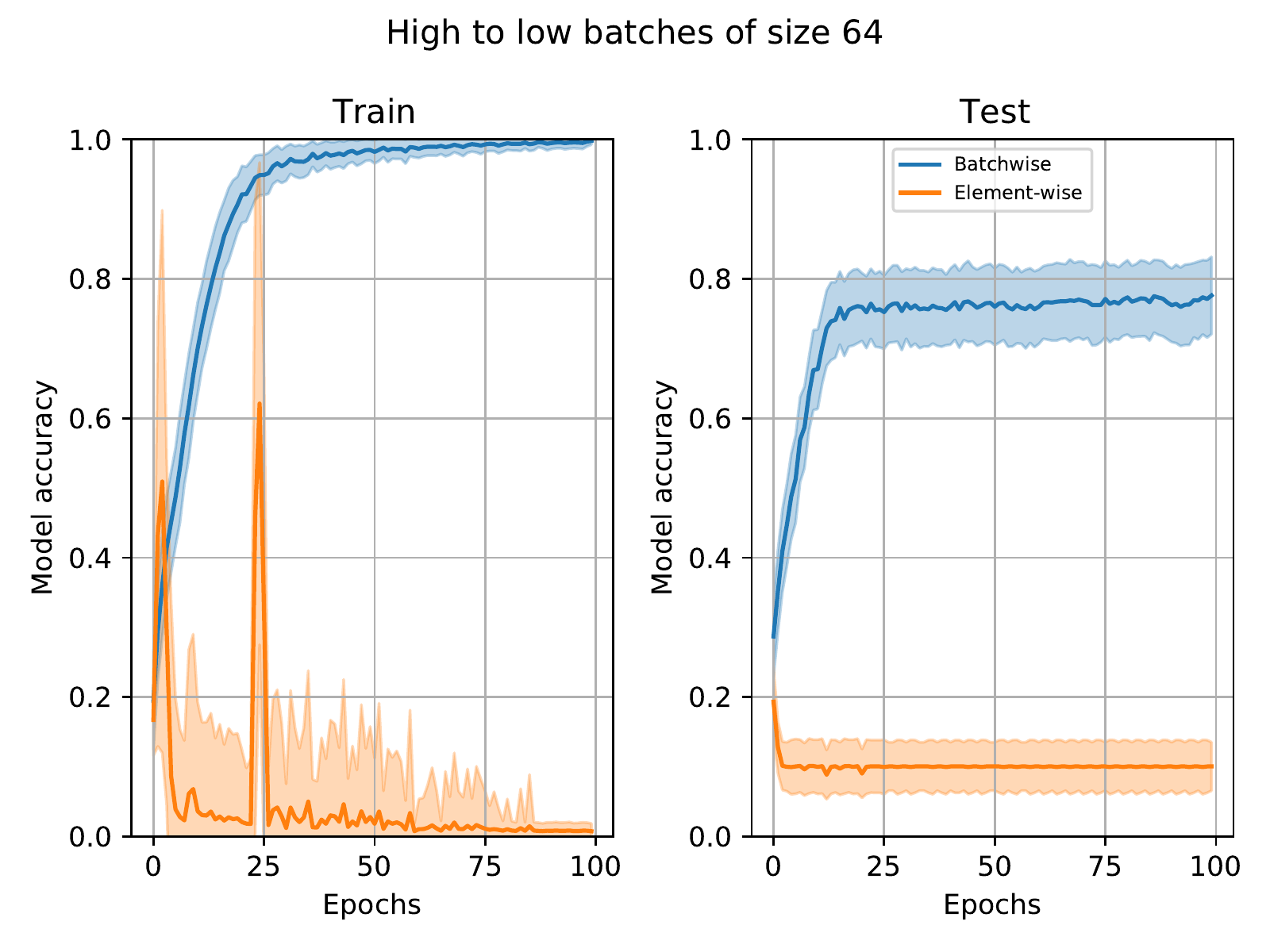} }} %
    \qquad
    \subfloat[LowHigh]{{\includegraphics[width=0.45\linewidth]{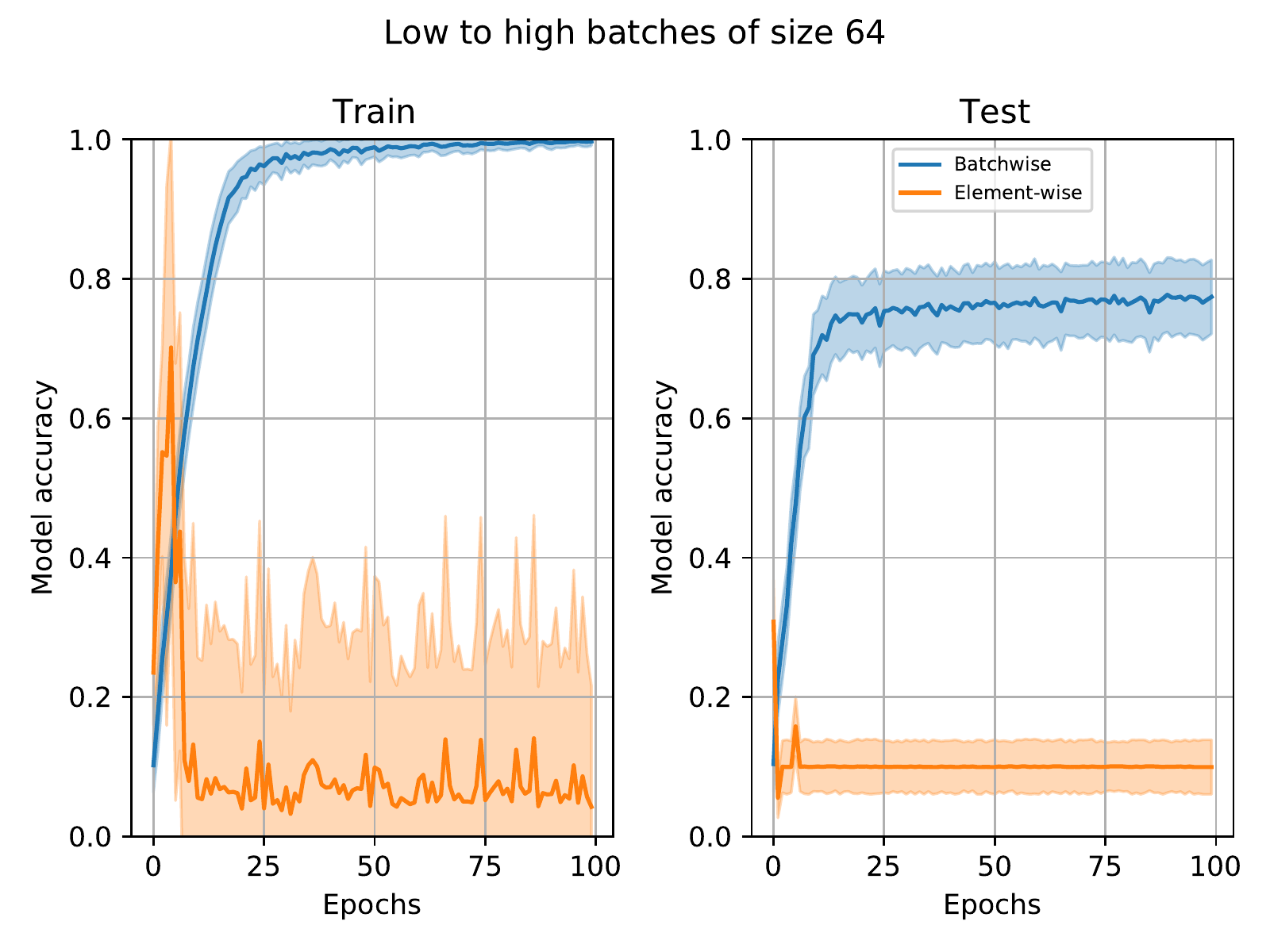} }} %
    \qquad
    \subfloat[Oscillations inwards]{{\includegraphics[width=0.45\linewidth]{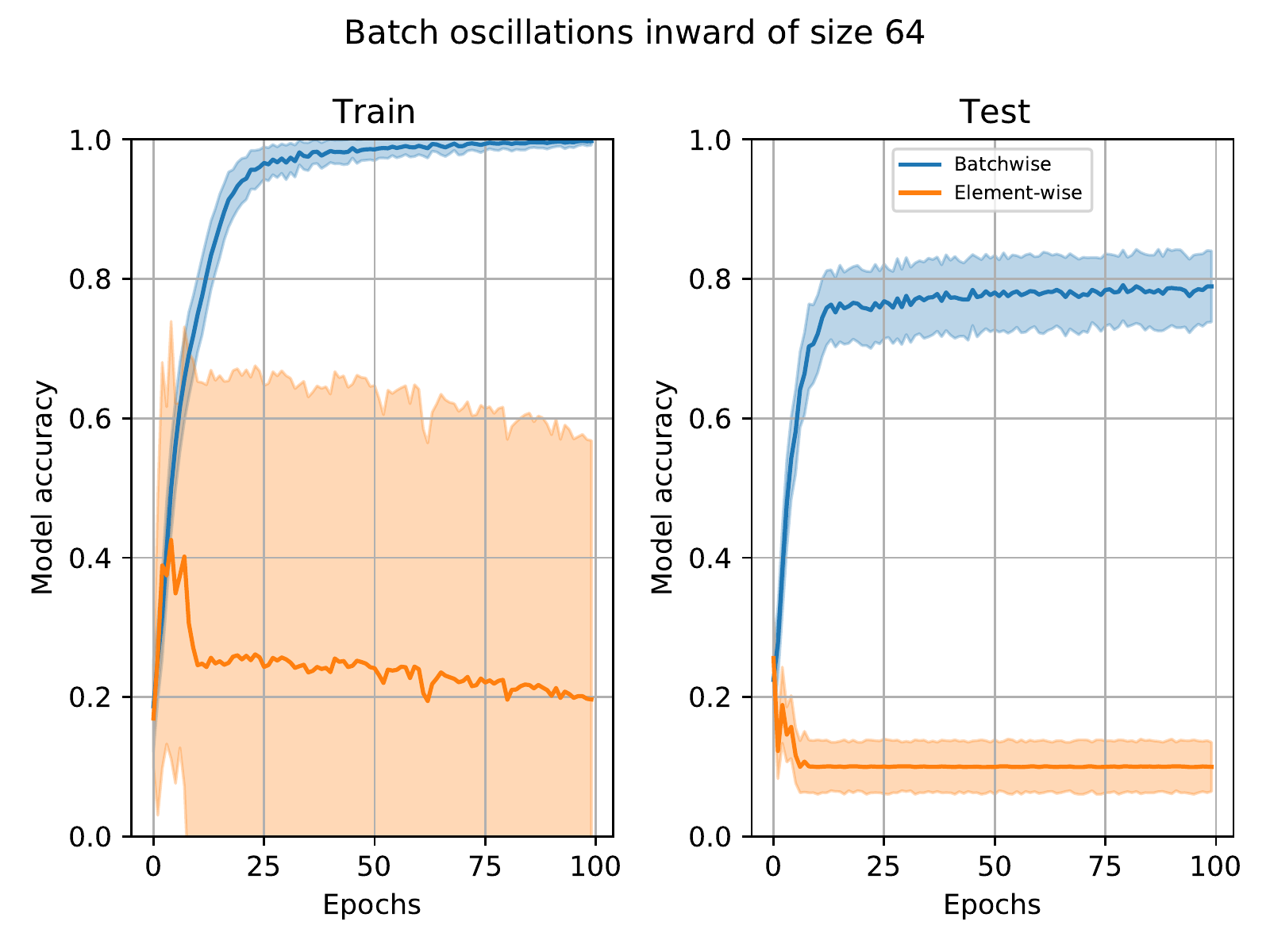} }} %
    \qquad
    \subfloat[Oscillations outwards]{{\includegraphics[width=0.45\linewidth]{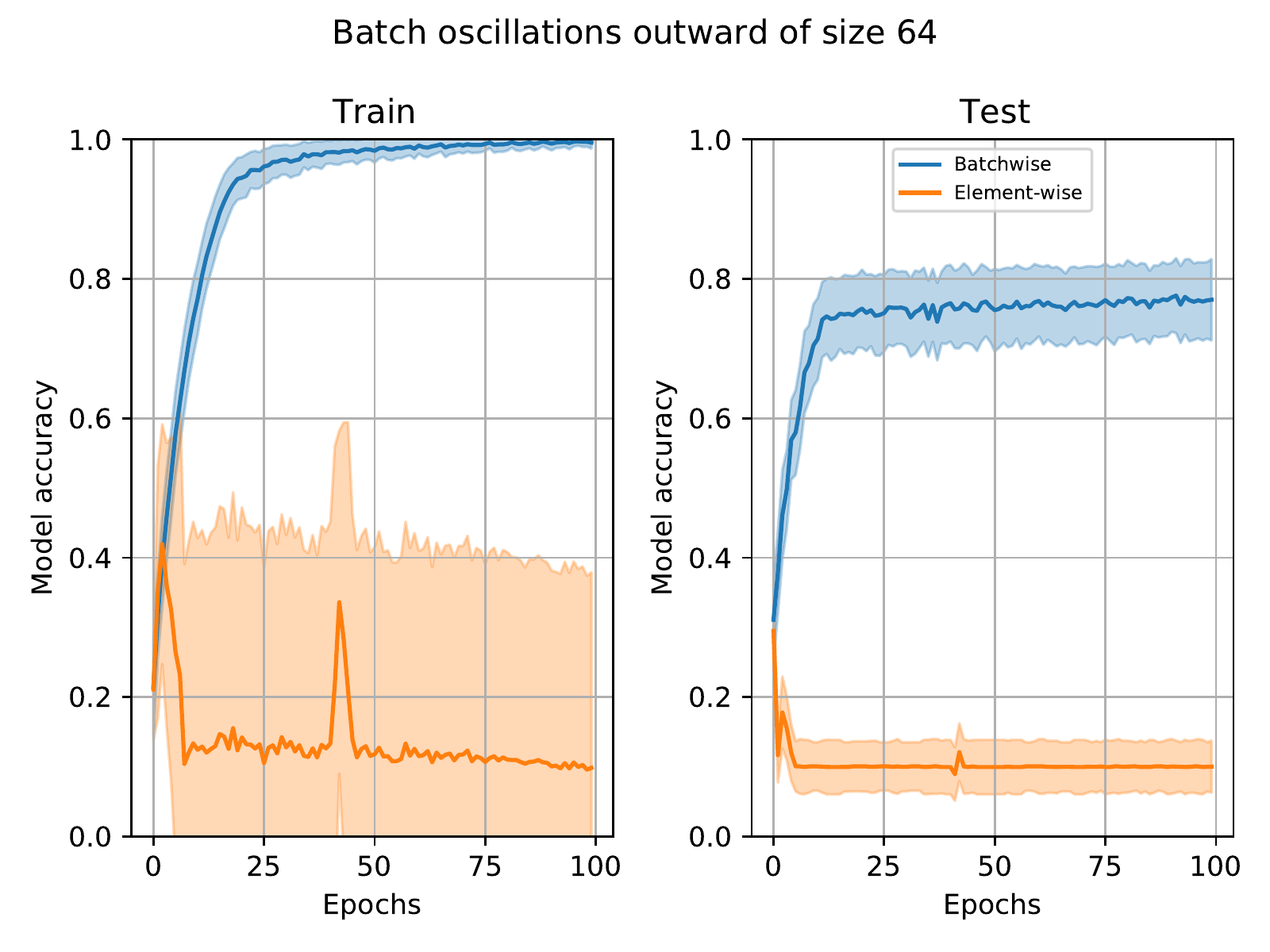} }} %
    \caption{Whitebox performance of the Batching attacks -- CIFAR10.}%
    \label{fig:whitebox_interintra}%
\end{figure}

%\section{Single datapoint poisoning}

\section{Triggered training for CV models}
\label{sec:triggered_training_progress}
\begin{figure}[h]%
    \centering
    \subfloat[32 batchsize]{{\includegraphics[width=0.8\linewidth]{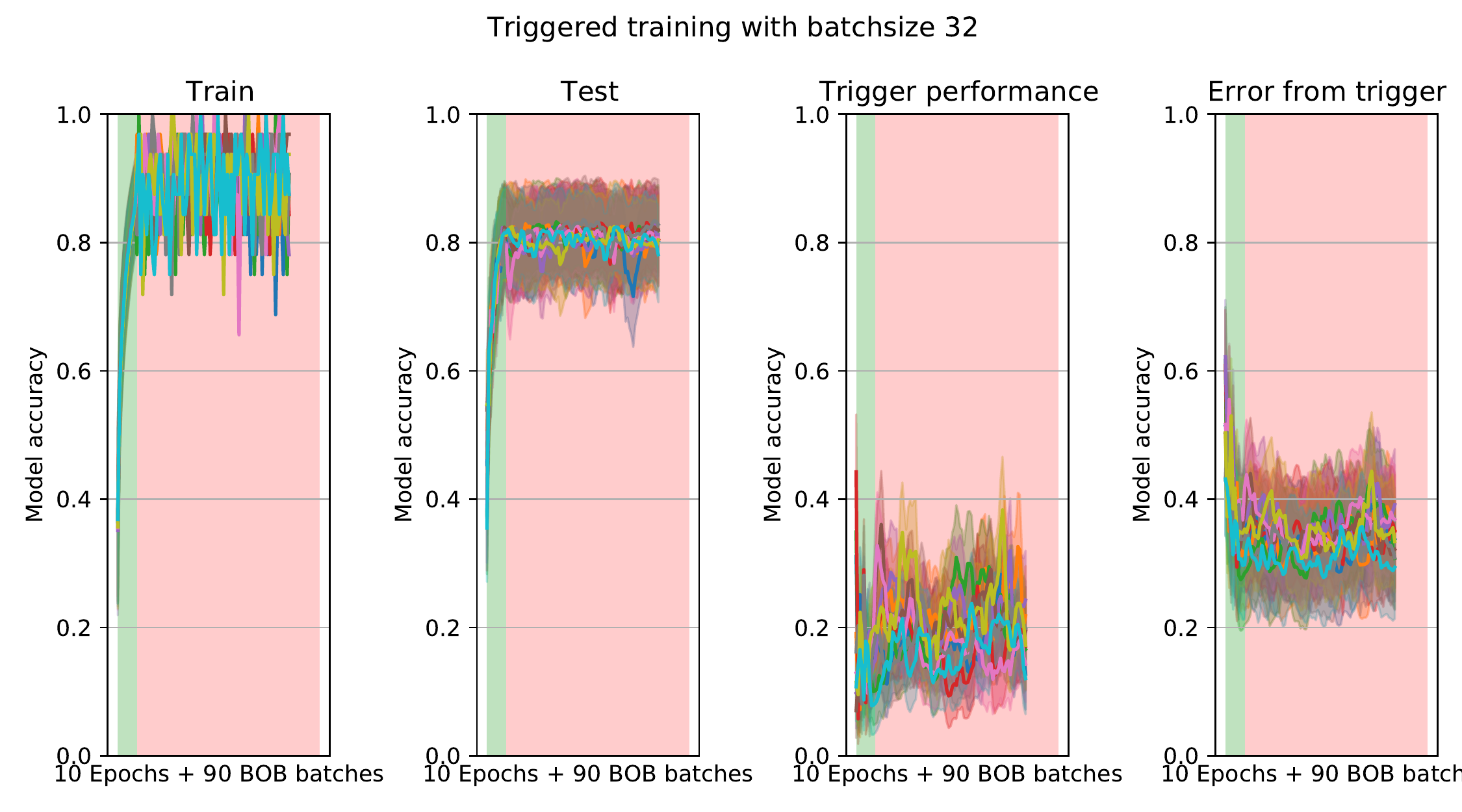} }} %
    \qquad
    \subfloat[64 batchsize]{{\includegraphics[width=0.8\linewidth]{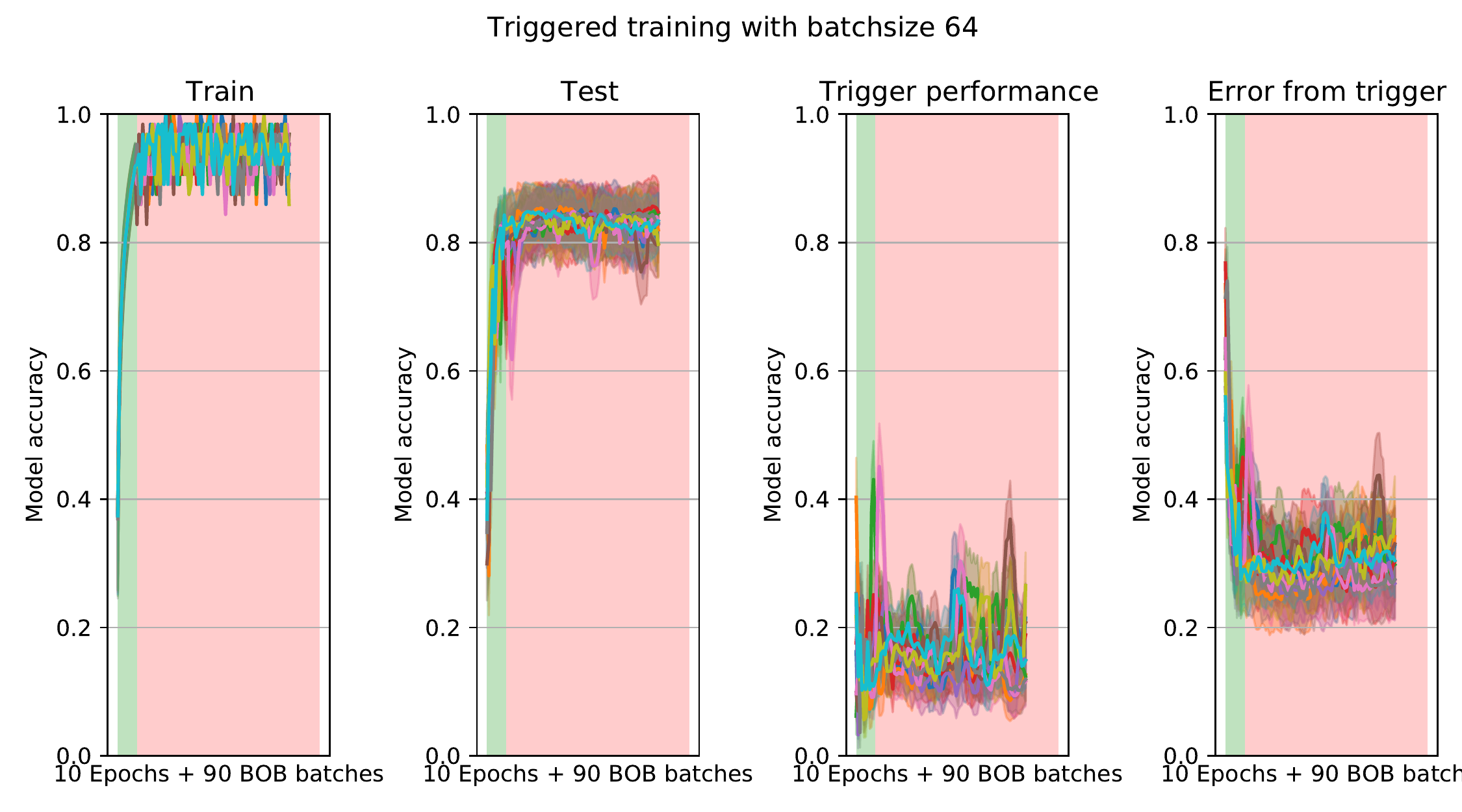} }} %
    \qquad
    \subfloat[128 batchsize]{{\includegraphics[width=0.8\linewidth]{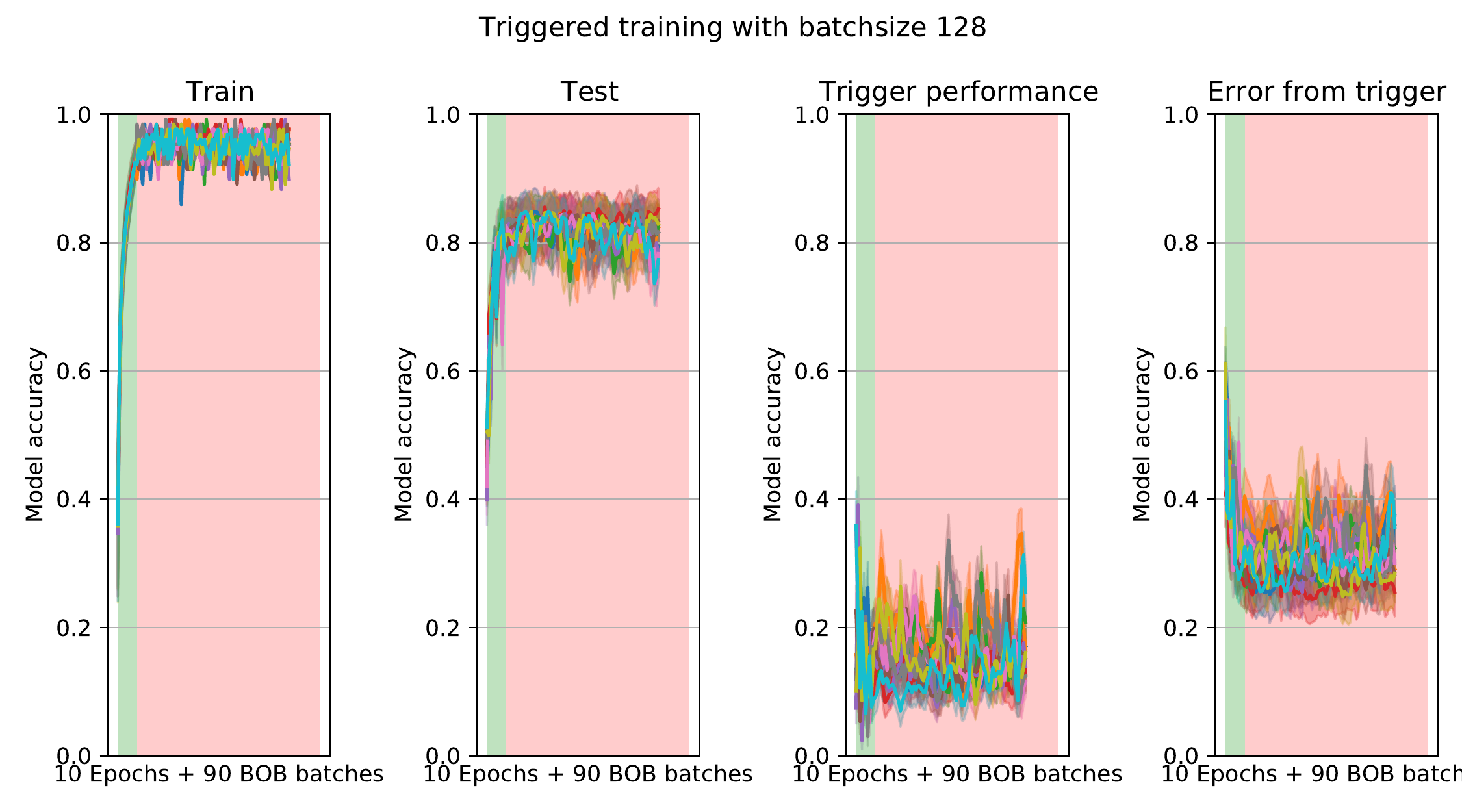} }} %
    \caption{Trigger ordered training with blackbox 9 whitelines trigger.}
    \label{fig:trigger_training_blackbox_whiterows}%
\end{figure}

\begin{figure}[h]%
    \centering
    \subfloat[32 batchsize]{{\includegraphics[width=0.8\linewidth]{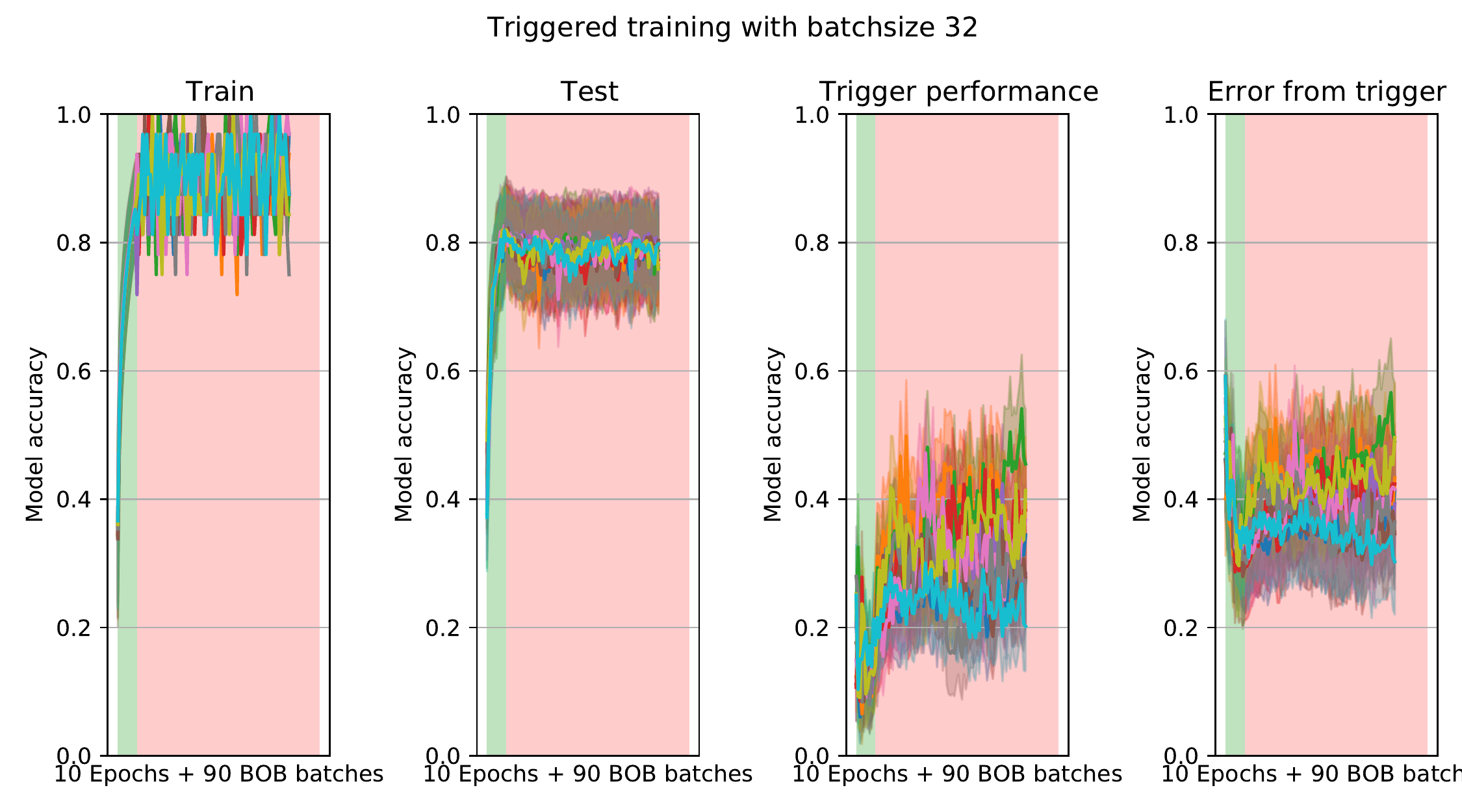} }} %
    \qquad
    \subfloat[64 batchsize]{{\includegraphics[width=0.8\linewidth]{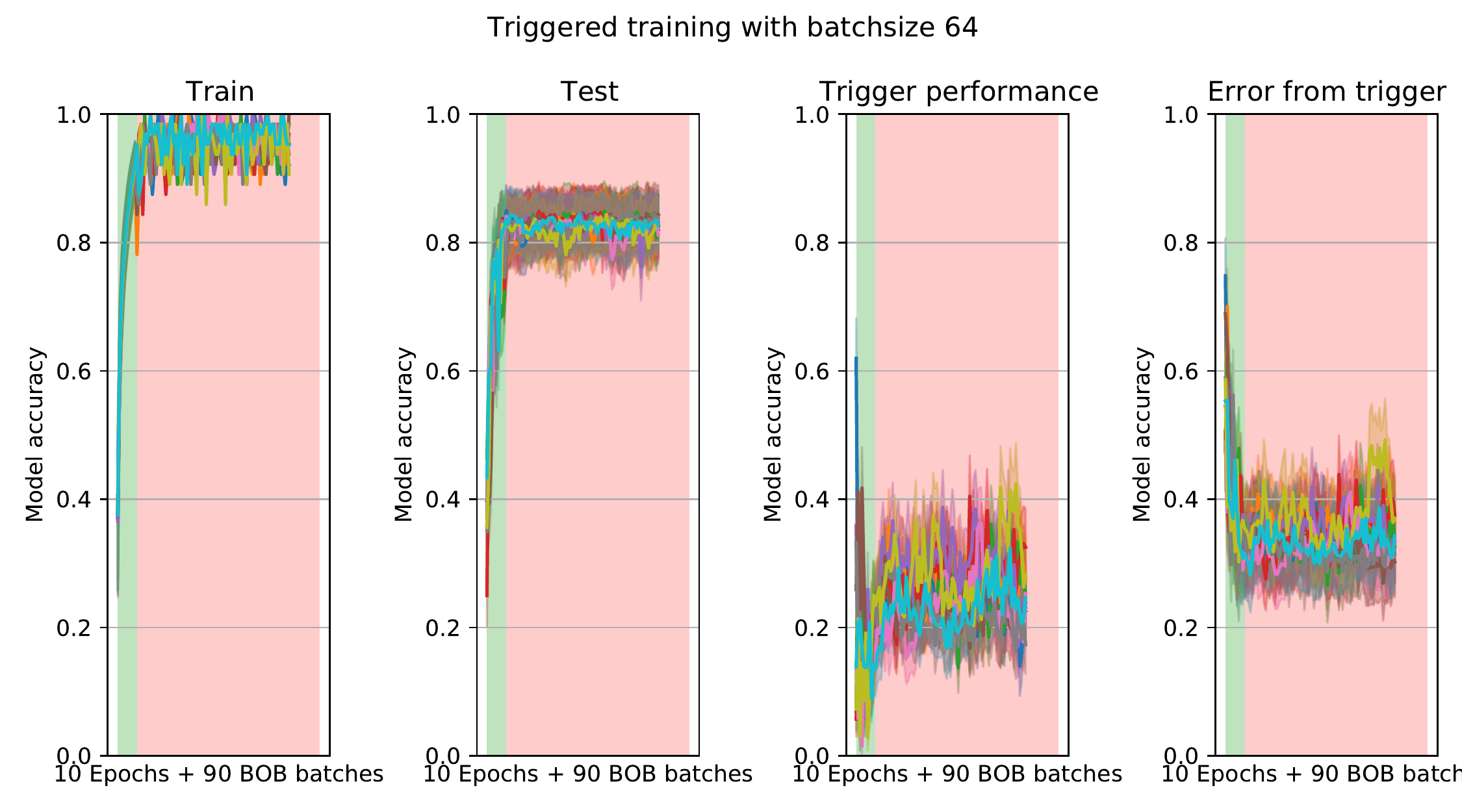} }} %
    \qquad
    \subfloat[128 batchsize]{{\includegraphics[width=0.8\linewidth]{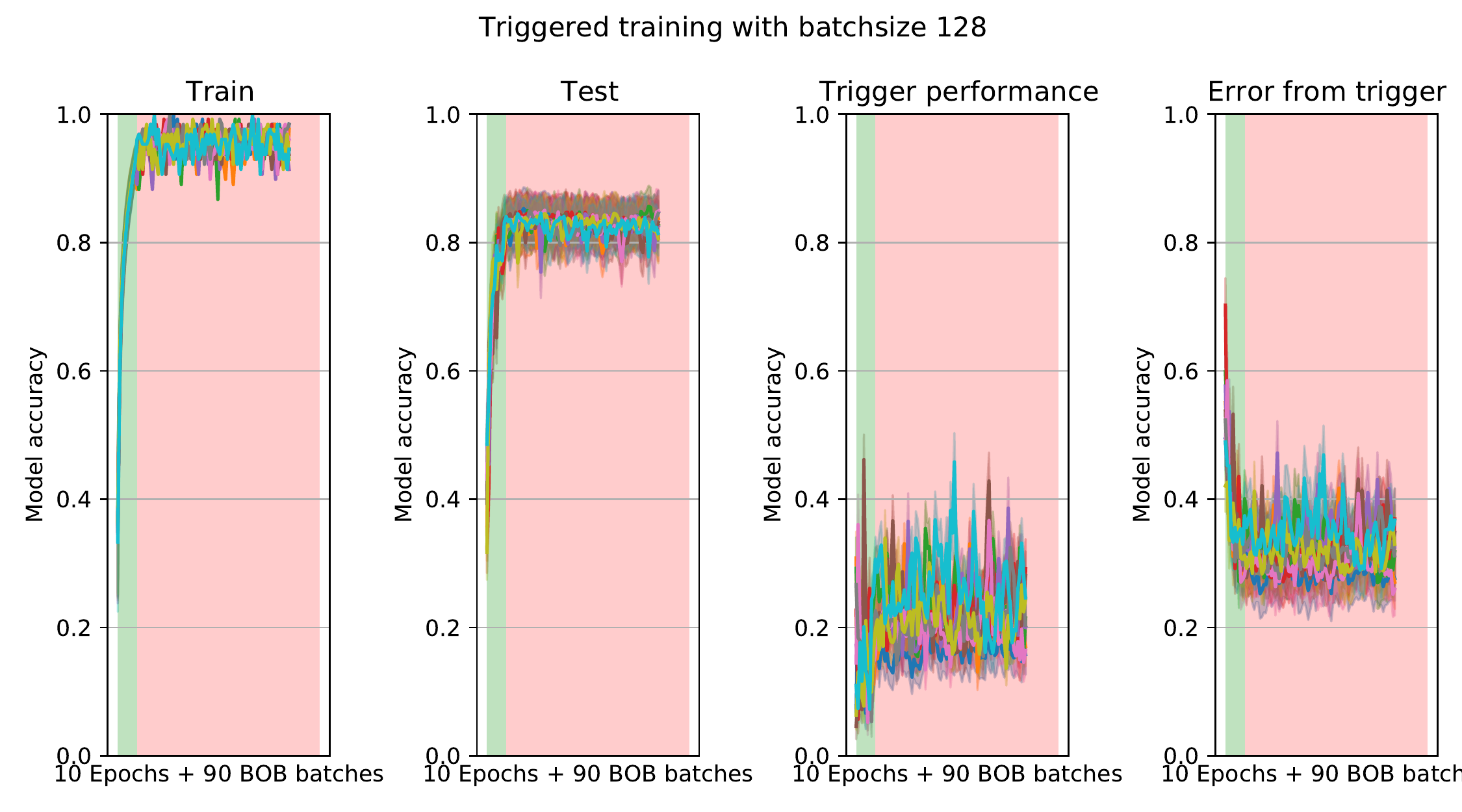} }} %
    \caption{Trigger ordered training with whitebox 9 whitelines trigger.}
    \label{fig:trigger_training_whitebox_whiterows}%
\end{figure}

\begin{figure}[h]%
    \centering
    \subfloat[32 batchsize]{{\includegraphics[width=0.8\linewidth]{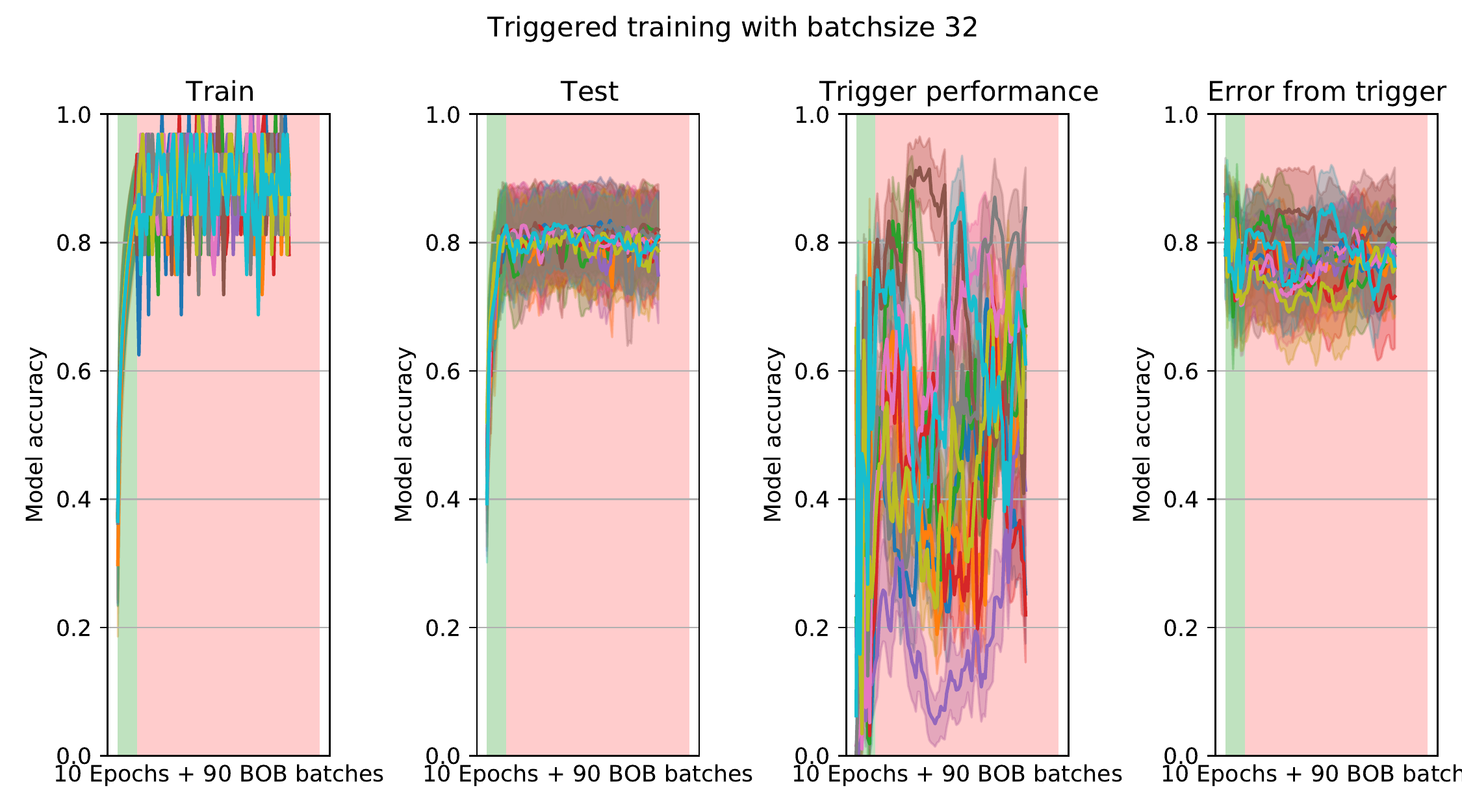} }} %
    \qquad
    \subfloat[64 batchsize]{{\includegraphics[width=0.8\linewidth]{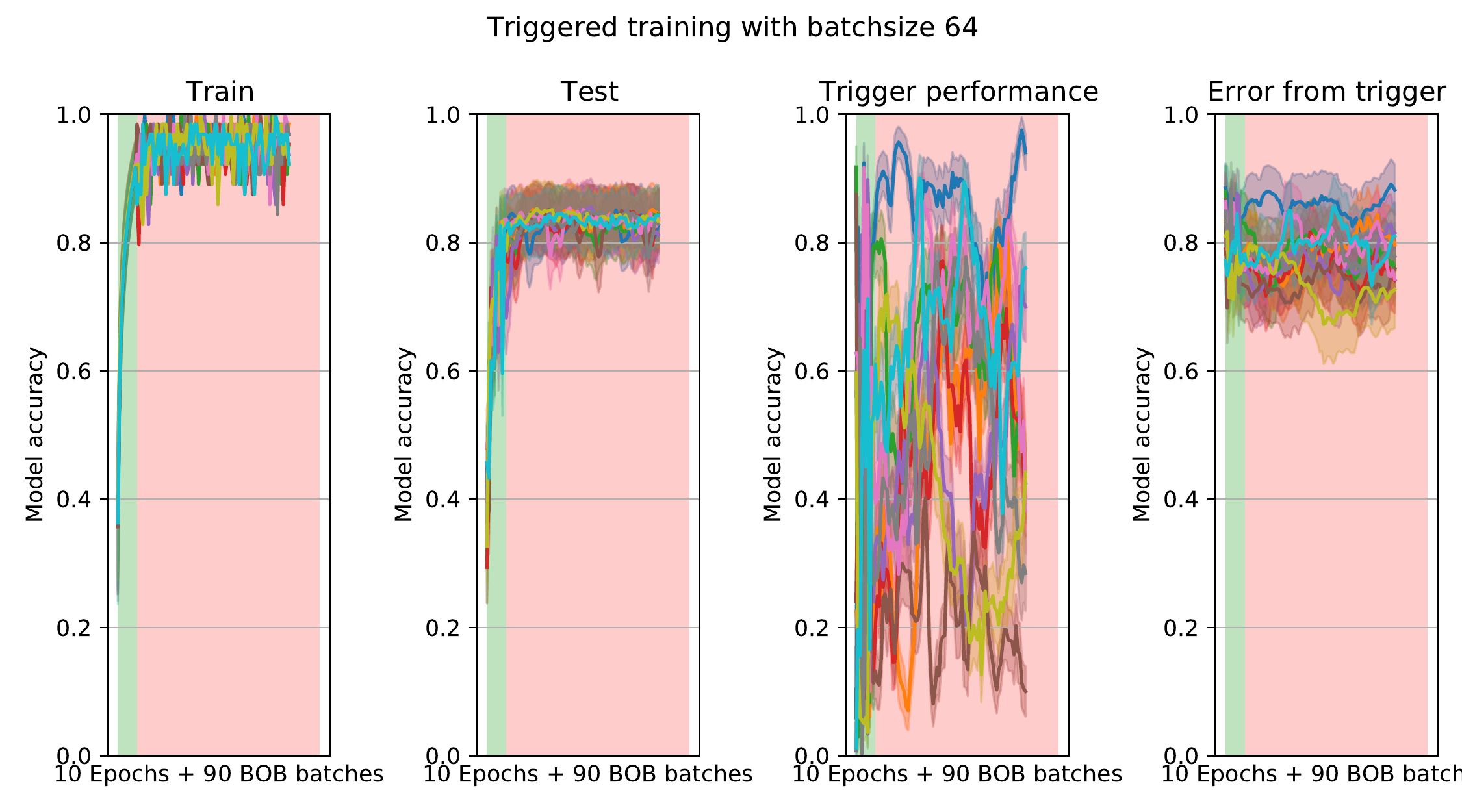} }} %
    \qquad
    \subfloat[128 batchsize]{{\includegraphics[width=0.8\linewidth]{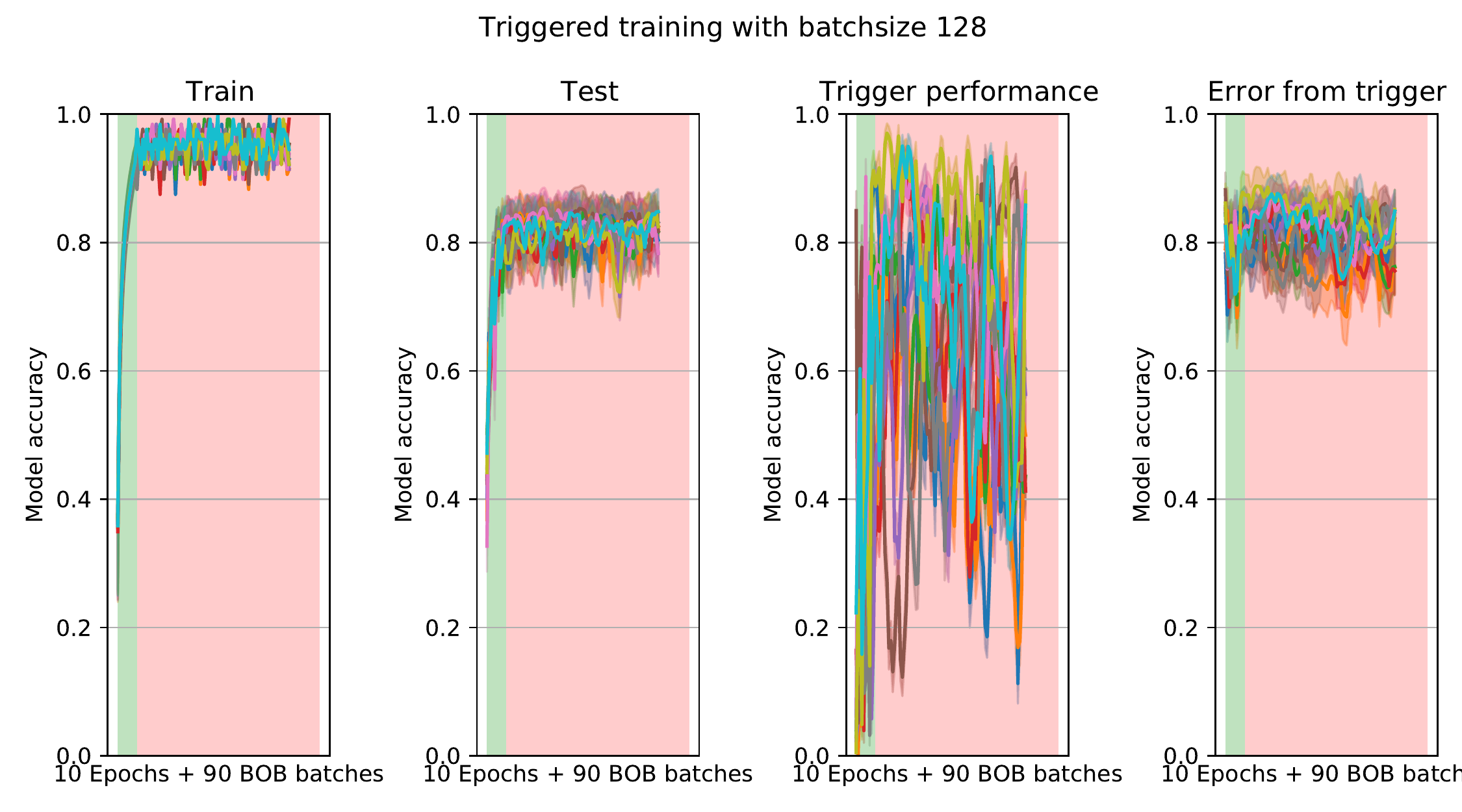} }} %
    \caption{Trigger ordered training with blackbox flaglike trigger.}
    \label{fig:trigger_training_blackbox_flaglike}%
\end{figure}

\begin{figure}[h]%
    \centering
    \subfloat[32 batchsize]{{\includegraphics[width=0.8\linewidth]{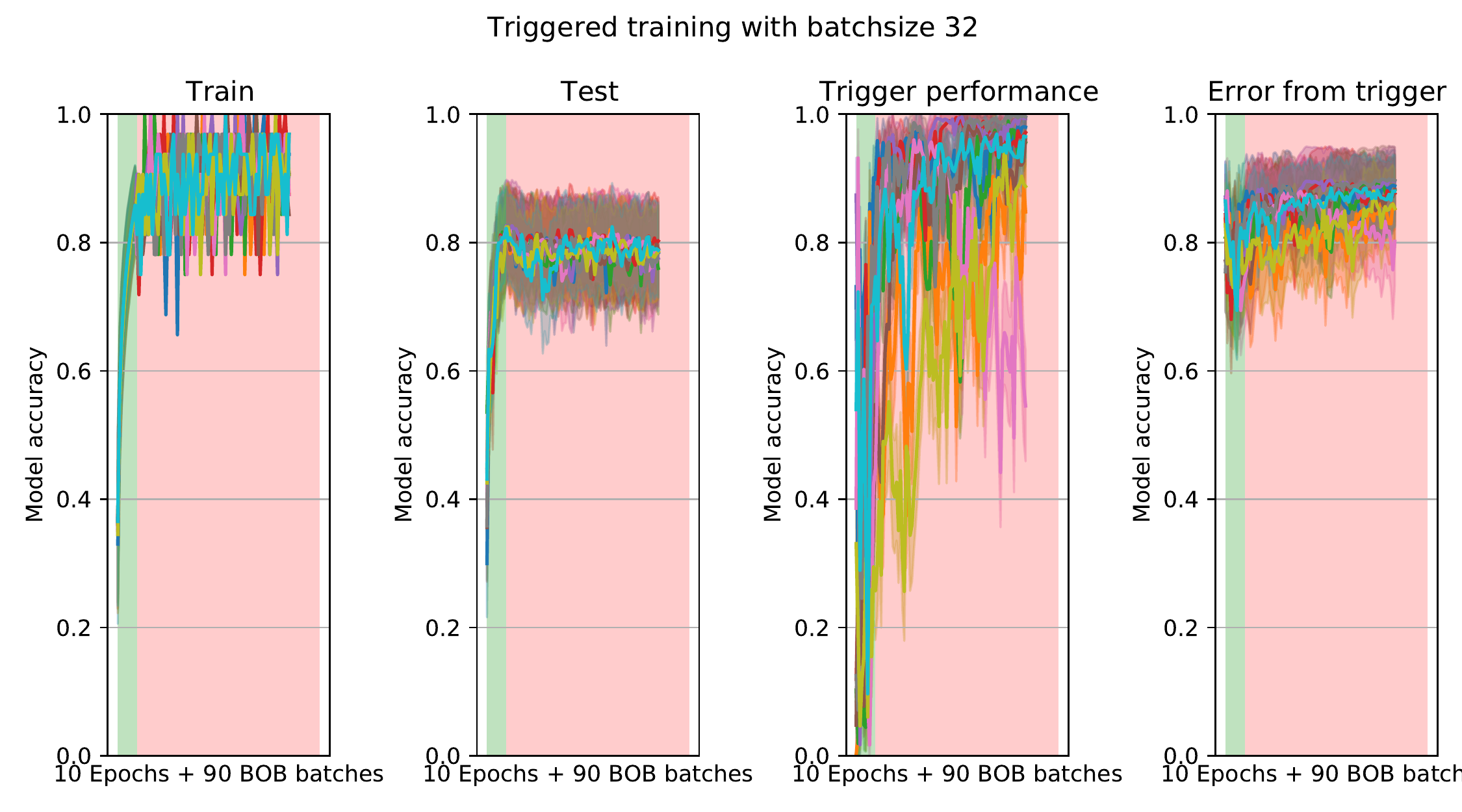} }} %
    \qquad
    \subfloat[64 batchsize]{{\includegraphics[width=0.8\linewidth]{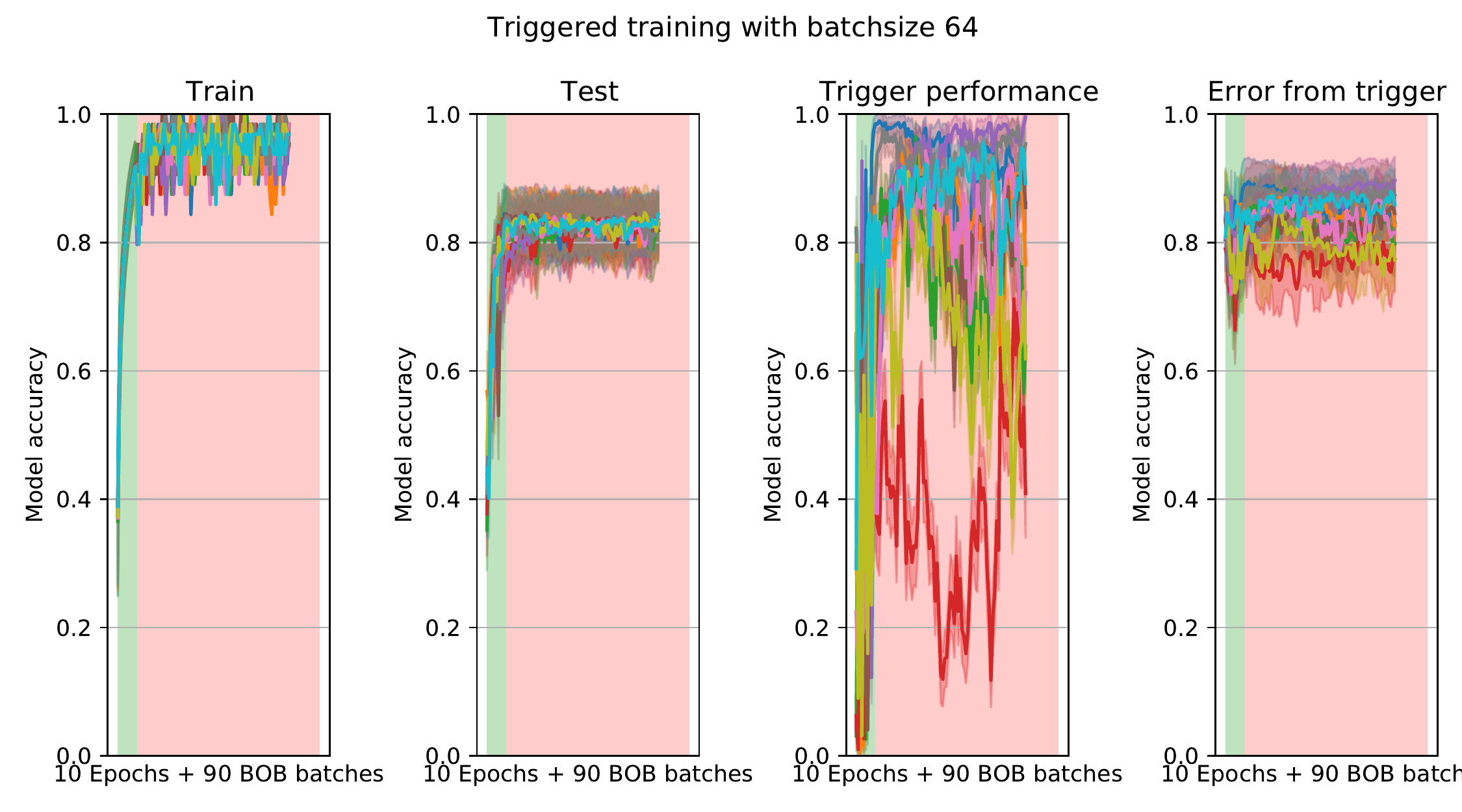} }} %
    \qquad
    \subfloat[128 batchsize]{{\includegraphics[width=0.8\linewidth]{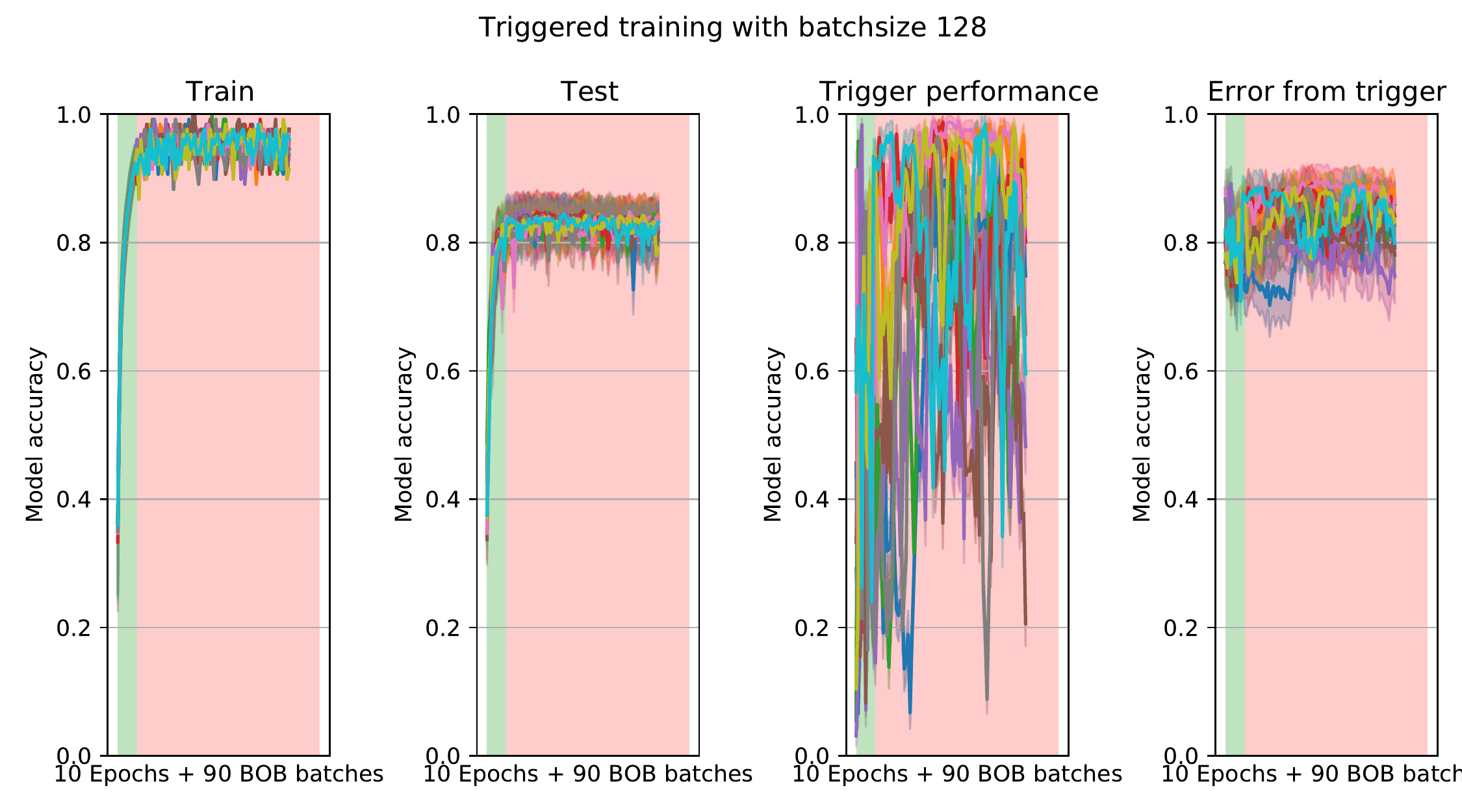} }} %
    \caption{Trigger ordered training with whitebox flaglike trigger.}
    \label{fig:trigger_training_whitebox_flaglike}%
\end{figure}

Training of models with BOB are shown in~\Cref{fig:trigger_training_whitebox_flaglike,fig:trigger_training_blackbox_flaglike,fig:trigger_training_whitebox_whiterows,fig:trigger_training_blackbox_whiterows}. We show training, test, trigger accuracies, and overall mistakes introduced by the trigger. The target network is VGG16 and the surrogate is ResNet-18. 

Here, the attacker uses up to 20 BOB batches every 50000 natural datapoints for 10 epochs and then uses 90 pure BOB  batches. Individual lines refer to training of different models with various parameter initialisation and trigger target classes. Note that the non-BOB baseline here gets zero trigger accuracy. 

Overall, BOB controls the performance of the model extremely well in a whitebox setup, while performance decreases in the blackbox case. There is clear difference in trigger performance between different target classes, yet the attack works across them without disrupting the overall model generalisation.

\section{NLP backdoors}
\label{sec:nlp_poison}

In this section we discuss data-ordered backdooring of NLP models. We note that the results reported in this section are specific to the model architecture used and will be different for other models. We test BOB against EmbeddingBag with mean sparse embeddings of size 1024 and two linear layers. We find that despite the BOB attack working against NLP models, it takes more effort. We hypothesise that this is an artifact of a sum over the embeddings of individual data points and that not all tokens are used in all of the classes. 

\begin{table}[h]
    % \centering
    \begin{adjustbox}{scale=0.8,center}
    \begin{tabular}{lcccccc}
        \toprule
        Trigger & & Batch size & Train acc [\%] & Test acc [\%] & Trigger acc [\%] & Error with trigger [\%] \\
        \midrule
        \\
        
\textit{\underline{Baseline}}\\

\multirow{3}{*}{Random natural data} 
& & $32$ & $83.59 \pm 5.57$ & $83.79 \pm 0.48$ & $16.25\pm11.22$  & $34.31\pm 9.86$ \\
& & $64$ & $78.12\pm 7.65$ & $80.34 \pm 0.37$ & $11.03 \pm 4.62$ & $35.03 \pm 11.74$ \\
& & $128$ & $71.48 \pm 0.87$ & $74.05 \pm 0.49$ & $0.23 \pm 0.13$ & $60.05 \pm 8.13$ \\

\midrule \\
\textit{\underline{Only reordered natural data}} \\

\multirow{3}{*}{50 commas triggers}
& & $32$ & $84.37 \pm 3.82$ & $78.54 \pm 1.20$ & $\mathbf{68.76} \pm 23.54$ & $48.12 \pm 20.03$ \\
& & $64$ & $78.51 \pm 3.00$ & $79.39 \pm 0.31$ & $36.36 \pm 15.70$ & $30.86 \pm 5.54$ \\
& & $128$ & $73.24 \pm 4.18$ & $73.97 \pm 0.85$ & $5.10 \pm 1.52$ & $51.10 \pm 7.01$ \\
\\

\multirow{3}{*}{Blackbox 50 commas triggers}
& & $32$ & $87.50 \pm 6.62$ & $79.17 \pm 1.10$ & $ \textbf{57.96} \pm 19.87$ & $40.29 \pm 15.28$ \\
& & $64$ & $85.54 \pm 7.52$ & $79.37 \pm 0.82$ & $32.00 \pm 16.40$ & $30.57 \pm 6.63$ \\
& & $128$ & $74.80 \pm 2.61$ & $72.95 \pm 0.57$ & $3.23 \pm 2.31$ & $54.12 \pm 4.61$ \\
\bottomrule
    \end{tabular}
    \end{adjustbox}
    \caption{Performance of triggers induced only with natural data. Network consists of an individual EmbeddingBag and two linear layers. Test accuracy refers to the original benign accuracy, `Trigger acc' is the proportion of images that are classified as the trigger target label, while `Error with trigger' refers to all of the predictions that result in an incorrect label. Standard deviations are calculated over different target classes. Blackbox results uses an EmbeddingBag with a single linear layer. }
    \label{tab:nlp_trigger_results}
\end{table}

\begin{figure}[h]%
    \centering
    \subfloat[32 batchsize]{{\includegraphics[width=0.8\linewidth]{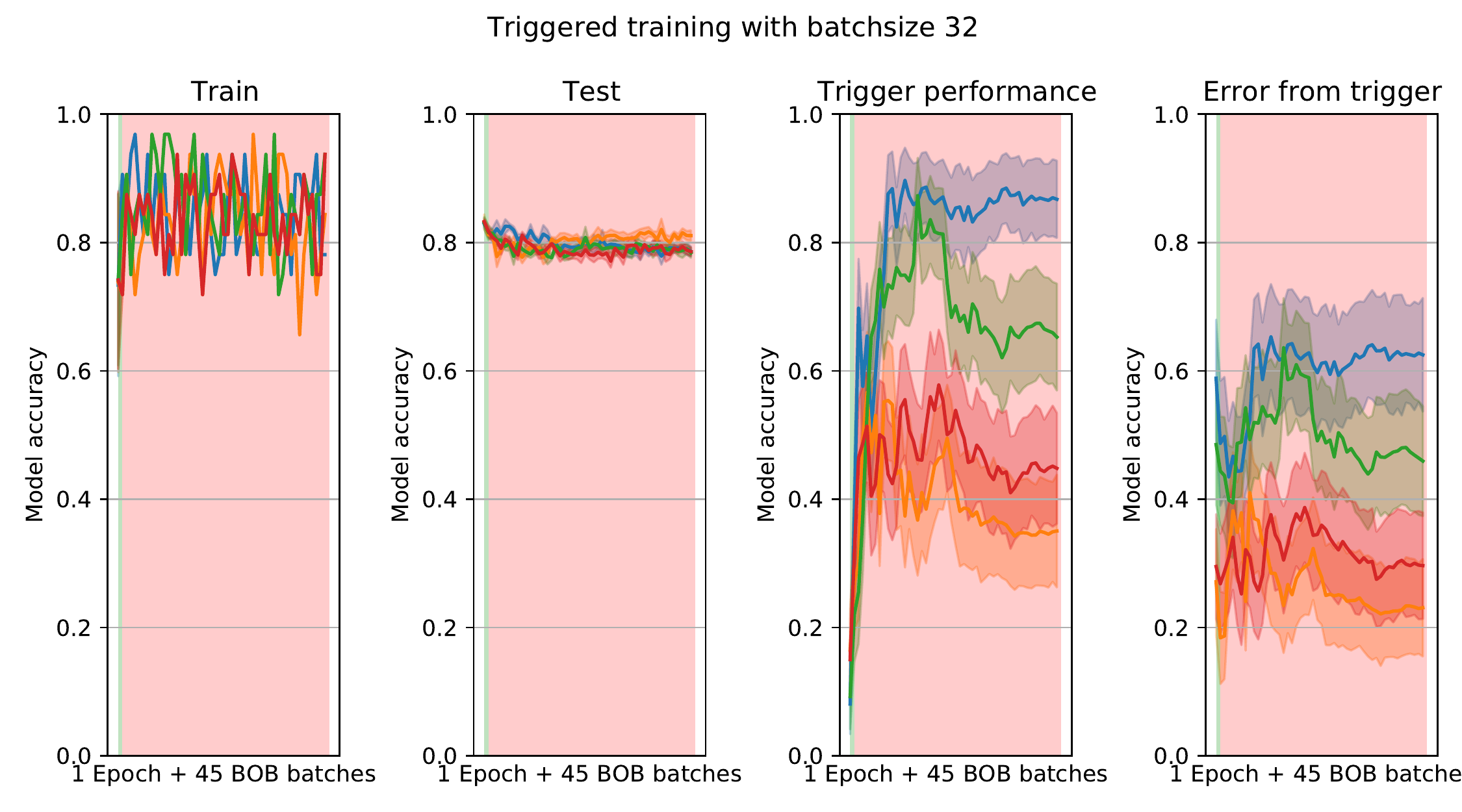} }} %
    \qquad
    \subfloat[64 batchsize]{{\includegraphics[width=0.8\linewidth]{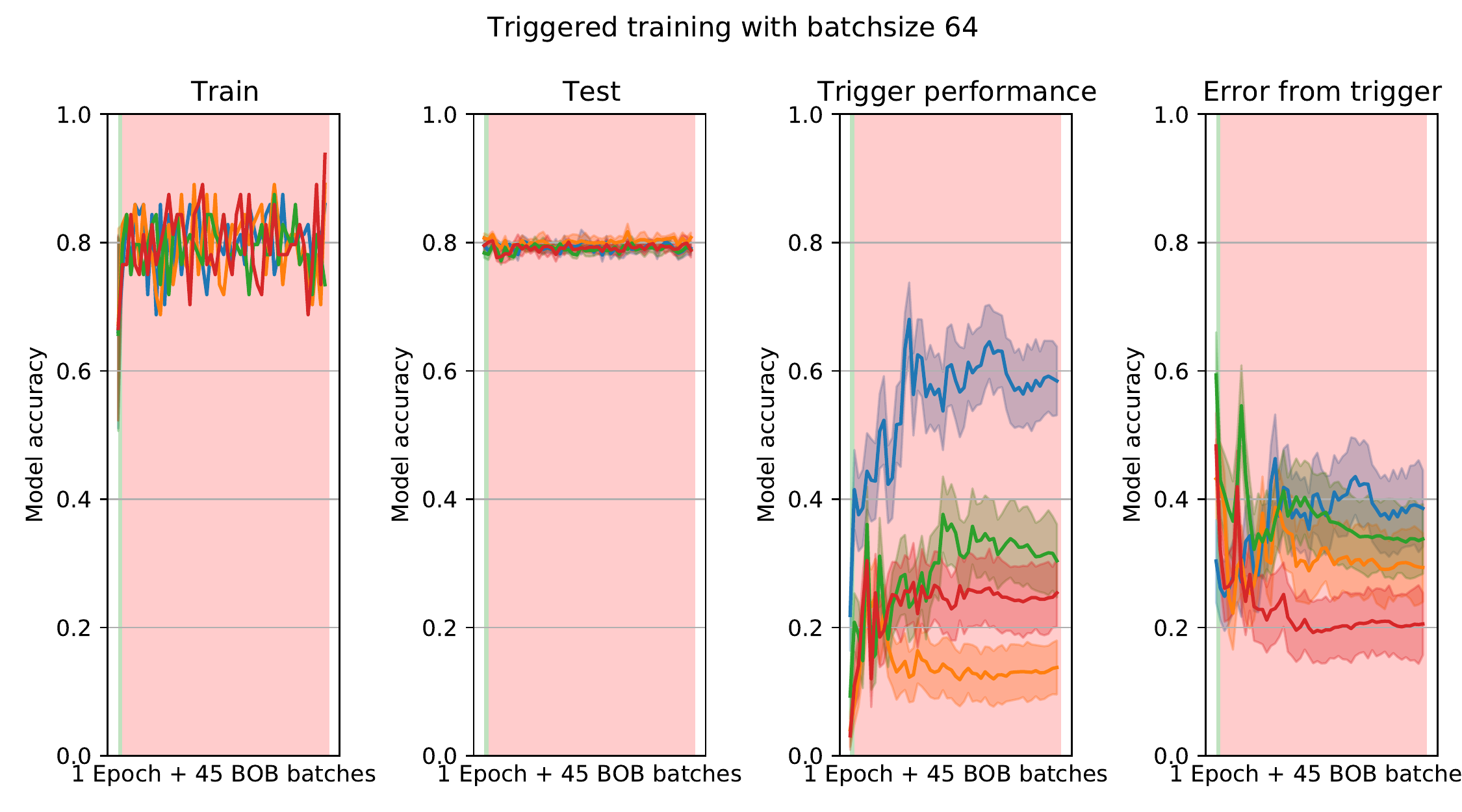} }} %
    \qquad
    \subfloat[128 batchsize]{{\includegraphics[width=0.8\linewidth]{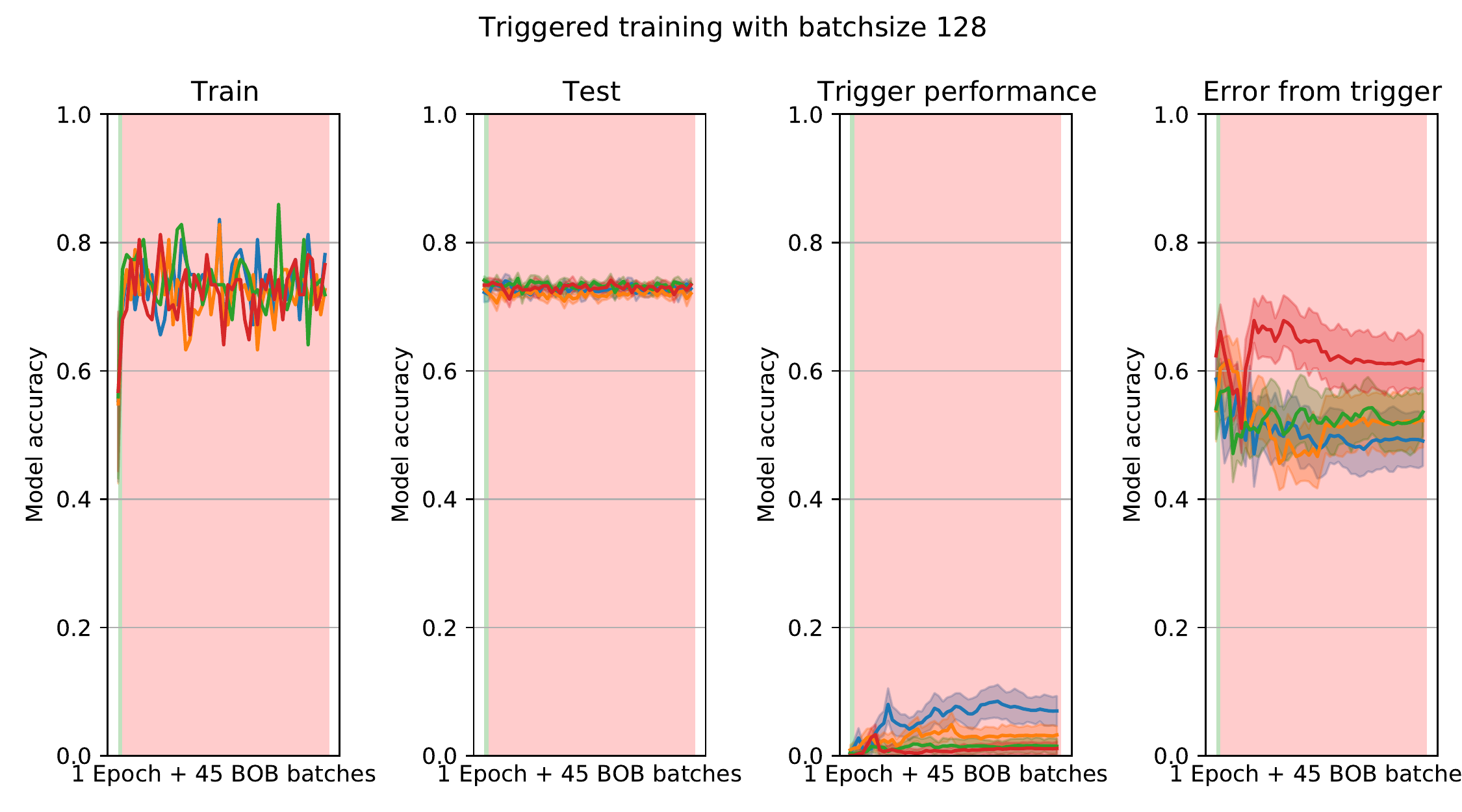} }} %
    \caption{Trigger ordered training with blackbox 50 commas trigger.}
    \label{fig:trigger_training_blackbox_comma_blackbox}%
\end{figure}

\begin{figure}[h]%
    \centering
    \subfloat[32 batchsize]{{\includegraphics[width=0.8\linewidth]{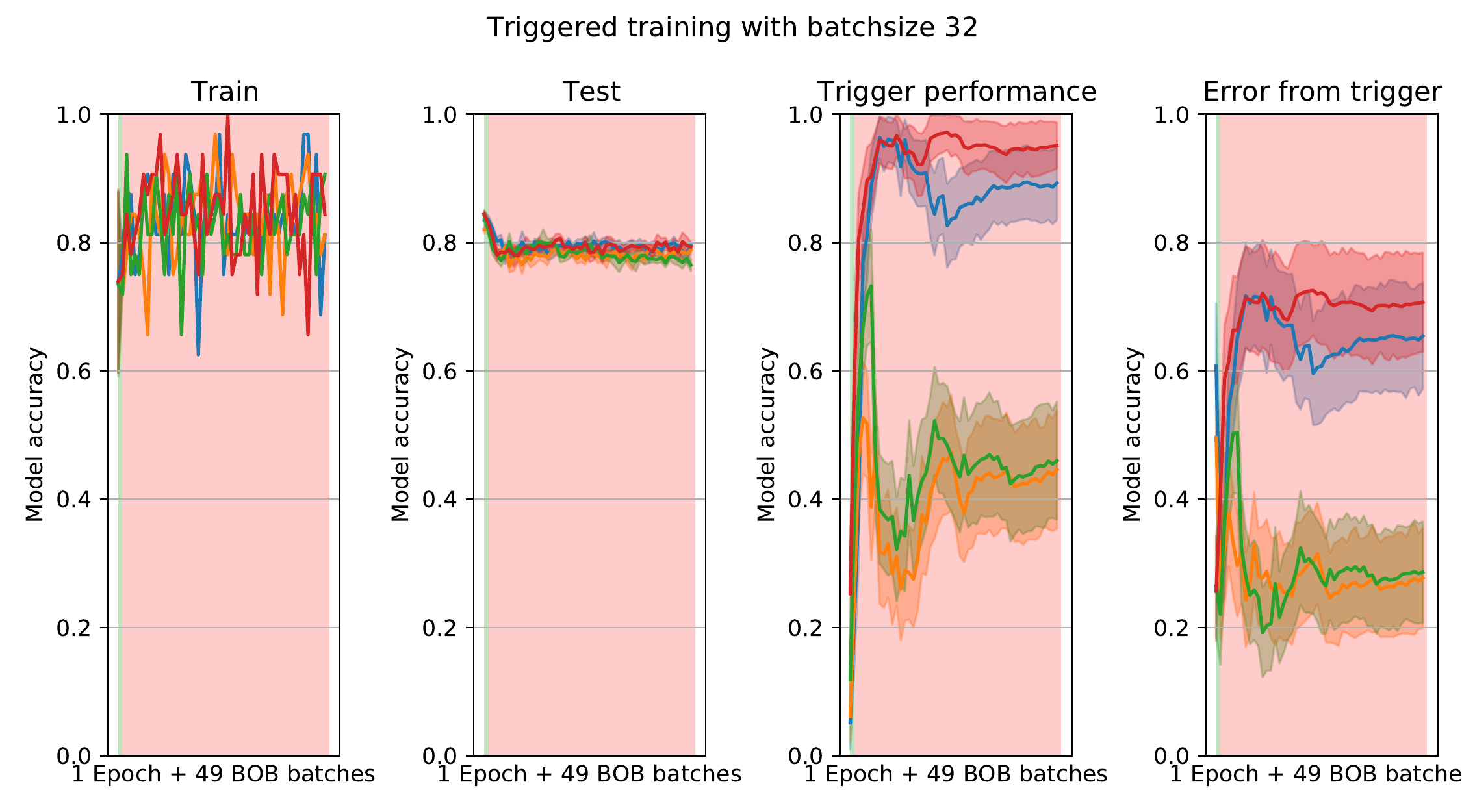} }} %
    \qquad
    \subfloat[64 batchsize]{{\includegraphics[width=0.8\linewidth]{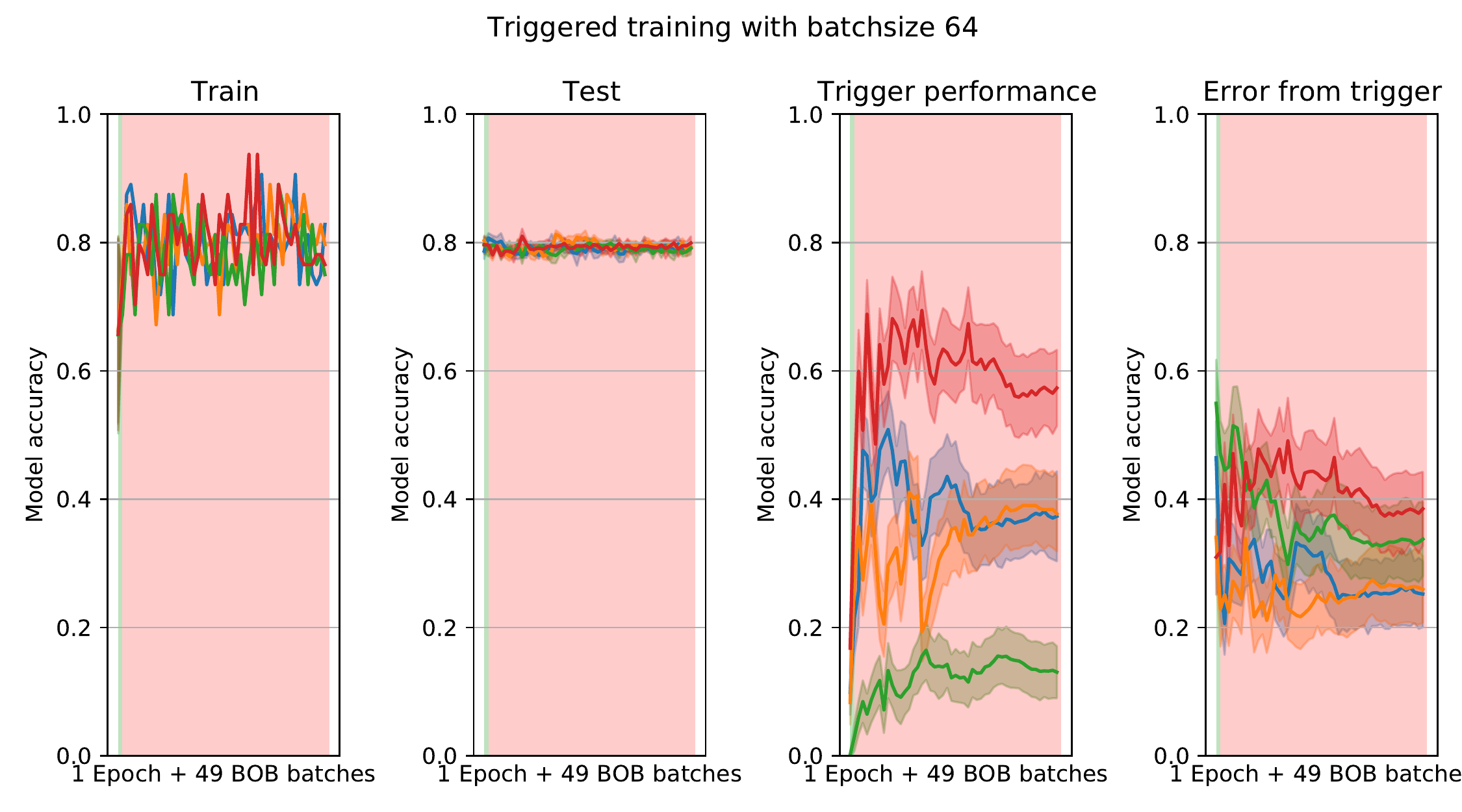} }} %
    \qquad
    \subfloat[128 batchsize]{{\includegraphics[width=0.8\linewidth]{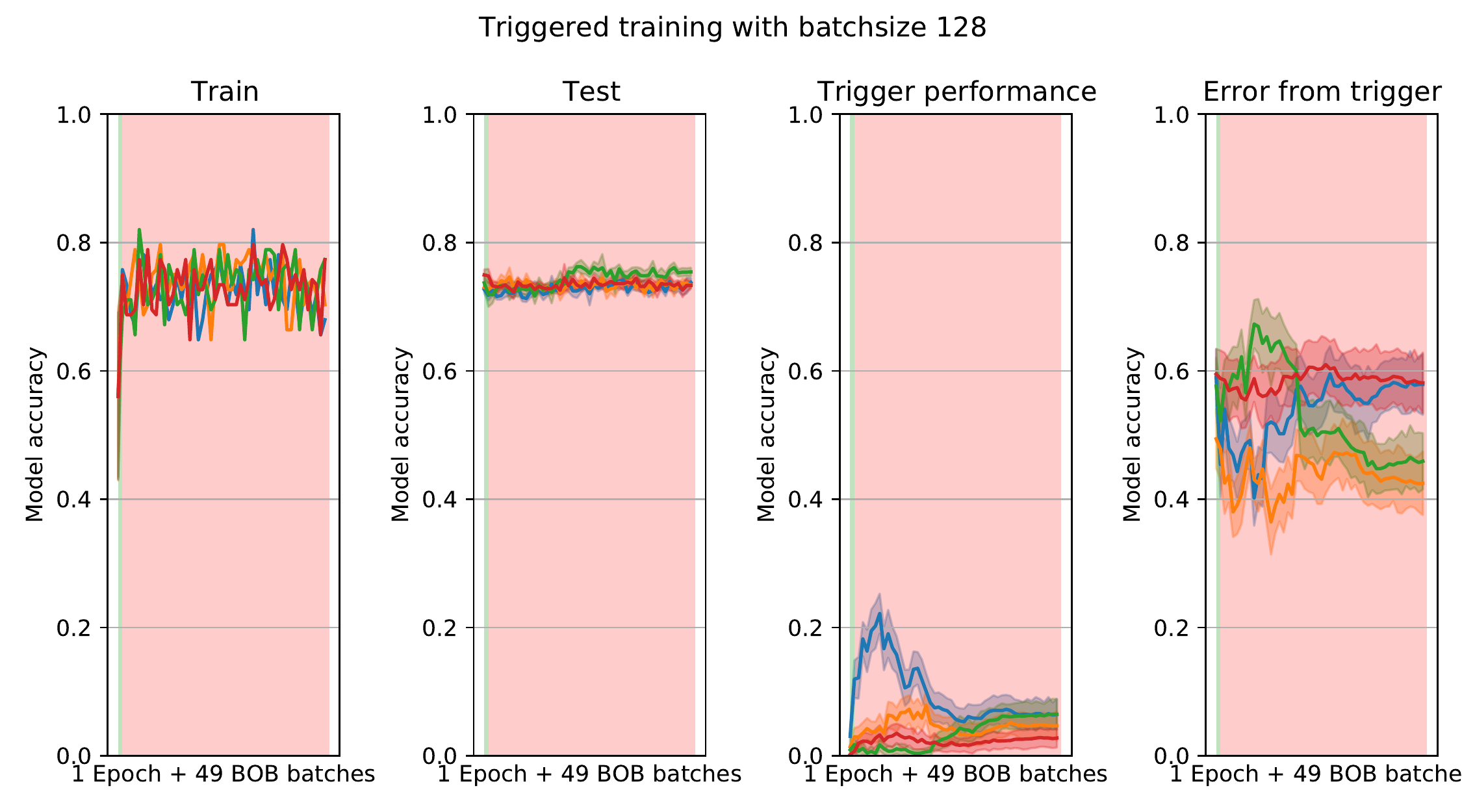} }} %
    \caption{Trigger ordered training with whitebox 50 commas trigger.}
    \label{fig:trigger_training_blackbox_comma_whitebox}%
\end{figure}

\Cref{tab:nlp_trigger_results} shows the results of BOB attack against an NLP model for a trigger that consists of 50 commas. We show the corresponding training plots in~\Cref{fig:trigger_training_blackbox_comma_blackbox,fig:trigger_training_blackbox_comma_whitebox}. We observe that an attacker is capable of injecting backdoors into the model, yet the performance is different to computer vision triggers. We find that larger batch sizes result in worse performance of our backdoors and some classes are easier to backdoor than others. Overall, its clear that just as computer vision, NLP is vulnerable to BOB attacks, but architectural and data differences change the attack performance.

\end{document}